\title{\huge Towards Understanding Mixture of Experts in Deep Learning}
\author
{
Zixiang Chen\thanks{Department of Computer Science, University of California, Los Angeles, CA 90095, USA; e-mail: {\tt chenzx19@cs.ucla.edu}}
~and~
Yihe Deng\thanks{Department of Computer Science, University of California, Los Angeles, CA 90095, USA; e-mail: {\tt yihedeng@cs.ucla.edu}}
~and~
Yue Wu\thanks{Department of Computer Science, University of California, Los Angeles, CA 90095, USA; e-mail: {\tt :
ywu@cs.ucla.edu}}
~and~
Quanquan Gu\thanks{Department of Computer Science, University of California, Los Angeles, CA 90095, USA; e-mail: {\tt qgu@cs.ucla.edu}}
~and~
Yuanzhi Li\thanks{Machine Learning Department, Carnegie Mellon University, Pittsburgh, PA, USA; email: {\tt yuanzhil@andrew.cmu.edu}}
}
\date{}
\def\poly{\mathrm{poly}}
\def\cb{\mathrm{\mathbf{c}}}
\def\polylog{\mathrm{polylog}}
\def\cN{\mathcal{N}}
\newcommand{\la}{\langle}
\newcommand{\ra}{\rangle}
\begin{document}

\maketitle

\begin{abstract}%
The Mixture-of-Experts (MoE) layer, a sparsely-activated model controlled by a router, has achieved great success in deep learning. However, the understanding of such architecture remains elusive. In this paper, we formally study how the MoE layer improves the performance of neural network learning and why the mixture model will not collapse into a single model. Our empirical results suggest that the cluster structure of the underlying problem and the non-linearity of the expert are pivotal to the success of MoE. To further understand this, we consider a challenging classification problem with intrinsic cluster structures, which is hard to learn using a single expert. Yet with the MoE layer, by choosing the experts as two-layer nonlinear convolutional neural networks (CNNs), we show that the problem can be learned successfully. Furthermore, our theory shows that the router can learn the cluster-center features, which helps divide the input complex problem into simpler linear classification sub-problems that individual experts can conquer. To our knowledge, this is the first result towards formally understanding the mechanism of the MoE layer for deep learning.

\end{abstract}

\section{Introduction}
The Mixture-of-Expert (MoE) structure \citep{jacobs1991adaptive,jordan1994hierarchical} is a classic design that substantially scales up the model capacity and only introduces small computation overhead. In recent years, the MoE layer \citep{eigen2013learning,shazeer2017outrageously}, which is an extension of the MoE model to deep neural networks, has achieved remarkable success in deep learning. 
Generally speaking, an MoE layer contains many experts that share the same network architecture and are trained by the same algorithm, with a gating (or routing) function that routes individual inputs to a few experts among all the candidates. 
Through the sparse gating function, the router in the MoE layer can route each input to the top-$K (K \geq 2)$ best experts \citep{shazeer2017outrageously}, or the single ($K=1$) best expert \citep{fedus2021switch}. This routing scheme only costs the computation of $K$ experts for a new input, which enjoys fast inference time.

Despite the great empirical success of the MoE layer, the theoretical understanding of such architecture is still elusive. 
In practice, all experts have the same structure, initialized from the same weight distribution \citep{fedus2021switch} and are trained with the same optimization configuration. The router is also initialized to dispatch the data uniformly. It is unclear why the experts can diverge to different functions that are specialized to make predictions for different inputs, and why the router can automatically learn to dispatch data, especially when they are all trained using simple \emph{local search algorithms} such as gradient descent.
Therefore, we aim to answer the following questions:
\begin{center}
\emph{
Why do the experts in MoE diversify instead of collapsing into a single model? And how can the router learn to dispatch the data to the right expert?
}    
\end{center}




In this paper, in order to answer the above question, we consider the natural ``mixture of classification'' data distribution with cluster structure and theoretically study the behavior and benefit of the MoE layer. We focus on the simplest setting of the mixture of linear classification, where the data distribution has multiple clusters, and each cluster uses separate (linear) feature vectors to represent the labels. In detail, we consider the data generated as a combination of feature patches, cluster patches, and noise patches (See Definition~\ref{def:data_distribution} for more details). We study training an MoE layer based on the data generated from the ``mixture of classification'' distribution using gradient descent, where each expert is chosen to be a two-layer CNN.
The main contributions of this paper are summarized as follows:
\begin{itemize}[leftmargin=*,nosep]
  \item We first prove a negative result (Theorem~\ref{thm:neg}) that any single expert, such as two-layer CNNs with arbitrary activation function, cannot achieve a test accuracy of more than $87.5\%$ on our data distribution. 
    \item Empirically, we found that the mixture of linear experts performs better than the single expert but is still significantly worse than the mixture of non-linear experts. Figure~\ref{fig:Demo_2dl} provides such a result in a special case of our data distribution with four clusters. \emph{Although a mixture of linear models can represent the labeling function of this data distribution with $100\%$ accuracy, it fails to learn so after training}. We can see that the underlying cluster structure cannot be recovered by the mixture of linear experts, and neither the router nor the experts are diversified enough after training. In contrast, the mixture of non-linear experts can correctly recover the cluster structure and diversify. 
    \item Motivated by the negative result and the experiment on the toy data, we study a sparsely-gated MoE model with two-layer CNNs trained by gradient descent. We prove that this MoE model can achieve nearly $100 \%$ test accuracy \emph{efficiently} (Theorem~\ref{thm: MoE}).
   \item Along with the result on the test accuracy, we formally prove that each expert of the sparsely-gated MoE model will be specialized to a specific portion of the data (i.e., at least one cluster), which is determined by the initialization of the weights. In the meantime, the router can learn the cluster-center features and route the input data to the right experts.
    \item Finally, we also conduct extensive experiments on both synthetic and real datasets to corroborate our theory.
\end{itemize}

\noindent\textbf{Notation.}
We use lower case letters, lower case bold face letters, and upper case bold face
letters to denote scalars, vectors, and matrices respectively. We denote a union of disjoint sets $(A_{i}:i \in I)$ by $\sqcup_{i \in I}A_{i}$. For a vector $\xb$, we use $\|\xb\|_2$ to denote its Euclidean norm. For a matrix $\Wb$, we use $\|\Wb\|_F$ to denote its Frobenius norm. Given two sequences $\{x_n\}$ and $\{y_n\}$, we denote $x_n = \cO(y_n)$ if $|x_n|\le C_1 |y_n|$ for some absolute positive constant $C_1$, $x_n = \Omega(y_n)$ if $|x_n|\ge C_2 |y_n|$ for some absolute positive constant $C_2$, and $x_n = \Theta(y_n)$ if $C_3|y_n|\le|x_n|\le C_4 |y_n|$ for some absolute constants $C_3,C_4>0$. We also use $\tilde \cO(\cdot)$ to hide logarithmic factors of $d$  in $\cO(\cdot)$. 
Additionally, we denote $x_n=\poly(y_n)$ if $x_n=\cO( y_n^{D})$ for some positive constant $D$, and $x_n = \polylog(y_n)$ if $x_n= \poly( \log (y_n))$. We also denote by $x_{n}=o(y_{n})$ if $\lim_{n\rightarrow \infty}x_{n}/y_{n} = 0$. Finally we use $[N]$ to denote the index set $\{1, \dots, N\}$. 



\begin{figure}[!t]
    \centering
    \includegraphics[scale=0.5]{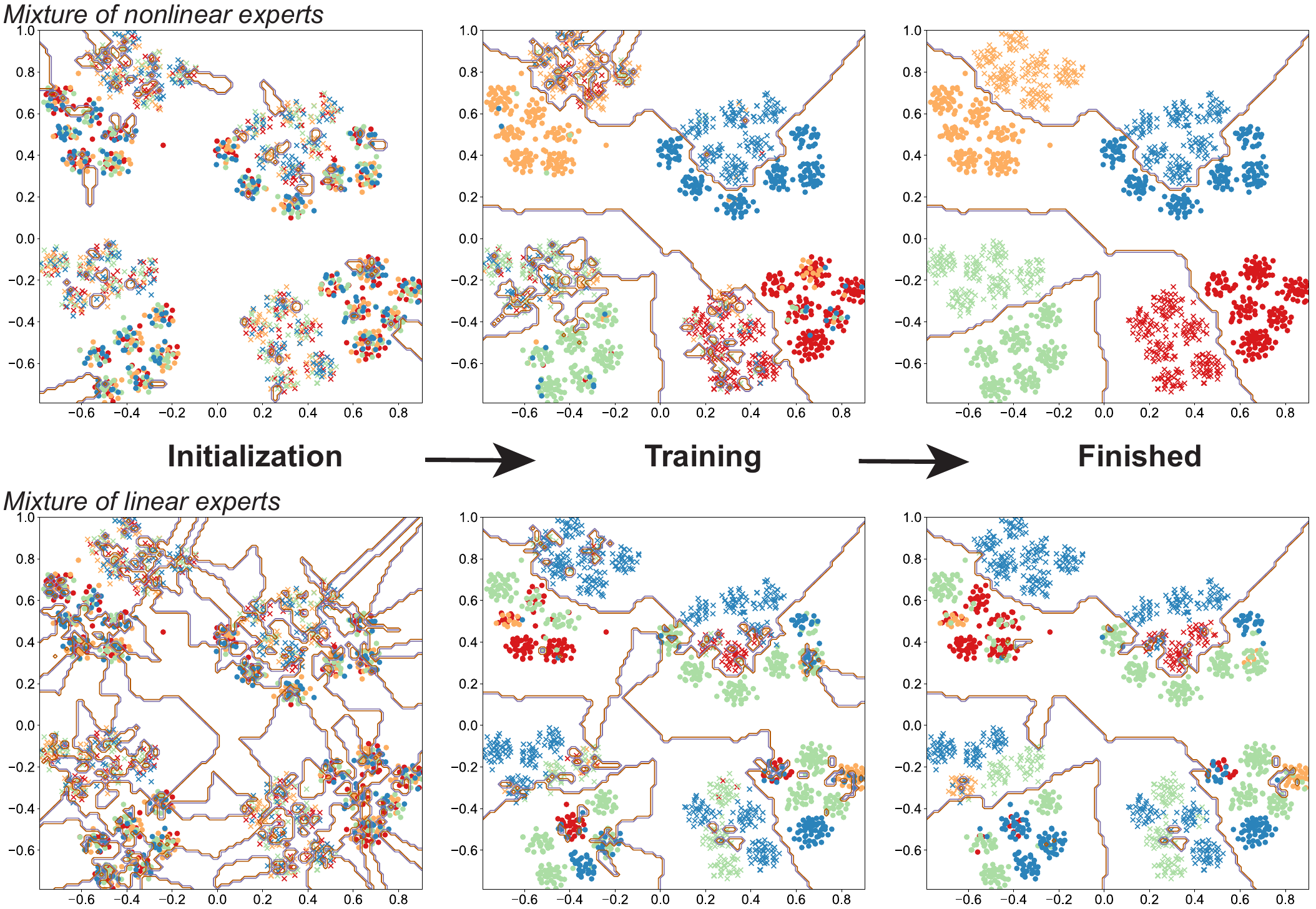}
    \caption{\textbf{Visualization of the training of MoE with nonlinear expert and linear expert}. Different colors denote router's dispatch to different experts. The lines denote the decision boundary of the MoE model. The data points are visualized on 2d space via t-SNE \citep{van2008visualizing}. The MoE architecture follows section \ref{section:problemsetting} where nonlinear experts use activation function $\sigma(z)=z^3$. For this visualization, we let the expert number $M=4$ and cluster number $K=4$. We generate $n=1,600$ data points from the distribution illustrated in Section \ref{section:problemsetting} with $\alpha \in (0.5,2)$, $\beta \in (1,2)$, $\gamma \in (1,2)$, and $\sigma_p = 1$. More details of the visualization are discussed in Appendix~\ref{appendix:experiment}.}
    \label{fig:Demo_2dl}
\end{figure}

\section{Related Work}
\noindent \textbf{Mixture of Experts Model.} The mixture of experts model \citep{jacobs1991adaptive,jordan1994hierarchical} has long been studied in the machine learning community. These MoE models are based on various base expert models such as support vector machine \citep{collobert2002parallel}
, Gaussian processes \citep{tresp2001mixtures}, or hidden Markov models \citep{jordan1997hidden}. In order to increase the model capacity to deal with the complex vision and speech data, \citet{eigen2013learning} extended the MoE structure to the deep neural networks, and proposed a deep MoE model composed of multiple layers of routers and experts.  \citet{shazeer2017outrageously} simplified the MoE layer by making the output of the gating function sparse for each example, which greatly improves the 
training stability and reduces the computational cost. 
Since then, the MoE layer with different base neural network structures \citep{shazeer2017outrageously, dauphin2017language, vaswani2017attention} has been proposed and achieved tremendous successes in a variety of language tasks. 
Very recently, \citet{fedus2021switch} improved the performance of the MoE layer by routing one example to only a single expert instead of $K$ experts, which further reduces the routing computation while preserving the model quality.

\noindent\textbf{Mixture of Linear Regressions/Classifications.} In this paper, we  consider a ``mixture of classification'' model. This type of models can be dated back to \citep{de1989mixtures, jordan1994hierarchical,  faria2010fitting} and has been applied to many tasks including object recognition \mbox{ \citep{quattoni2004conditional} }human action recognition \citep{wang2009max}, and machine translation \citep{liang2006end}.  In order to learn the unknown parameters for mixture of linear regressions/classification model, \citep{anandkumar2012method, hsu2012identifiability, chaganty2013spectral, anandkumar2014tensor, li2018learning} studies the method of moments and tensor factorization.
Another line of work studies specific algorithms such as Expectation-Maximization (EM) algorithm \citep{khalili2007variable, yi2014alternating, balakrishnan2017statistical,wang2015high}.

\noindent\textbf{Theoretical Understanding of Deep Learning.} In recent years, great efforts have been made to establish the theoretical foundation of deep learning. A series of studies have proved the convergence \citep{jacot2018neural, li2018learning, du2018gradient, allen2018convergence, zou2018stochastic} and generalization \citep{allen2018learning,arora2019fine,arora2019exact,cao2019generalizationsgd} guarantees in the so-called ``neural tangent kernel'' (NTK) regime, where the parameters stay close to the initialization, and 
the neural network function is approximately linear in its parameters. A recent line of works \citep{allen2019can,bai2019beyond, allen2020backward, allen2020feature, allen2020towards,li2020learning,cao2022benign, zou2021understanding, wen2021toward} studied the learning dynamic of neural networks
beyond the NTK regime. It is worthwhile to mention that our analysis of the MoE model is also beyond the NTK regime.

\section{Problem Setting and Preliminaries}\label{section:problemsetting}

We consider an MoE layer with each expert being a two-layer CNN trained by gradient descent (GD) over $n$ independent training examples $\{(\xb_{i}, y_{i})\}_{i =1}^{n}$ generated from a data distribution $\cD$. In this section, we will first introduce our data model $\cD$, and then explain our neural network model and the details of the training algorithm. 

\subsection{Data distribution}
We consider a binary classification problem over $P$-patch inputs, where each patch has $d$ dimensions. In particular, each labeled data is represented by $(\xb, y)$, where input $\xb = (\xb^{(1)}, \xb^{(2)}, \ldots, \xb^{(P)})\in (\RR^{d})^P$ is a collection of $P$ patches and $y \in \{\pm 1\}$ is the data label. We consider data generated from $K$ clusters. Each cluster $k \in [K]$ has a label signal vector $\vb_{k}$ and a cluster-center signal vector $\cb_{k}$ with $\|\vb_{k}\|_{2} = \|\cb_{k}\|_{2}=1$. For simplicity, we assume that all the signals $\{\vb_{k}\}_{k \in [K]}\cup \{\cb_{k}\}_{k \in [K]}$ are orthogonal with each other.  
\begin{definition}\label{def:data_distribution}
A data pair $(\xb,y)\in (\RR^{d})^{P} \times \{\pm 1\}$ is generated from the distribution $\cD$ as follows. 
\begin{itemize}[leftmargin=*,nosep]
    \item  Uniformly draw a pair $(k,k')$ with $k \not= k'$ from $\{1,\ldots,K\}$. 
    \item Generate the label $y\in\{\pm 1\}$ uniformly, generate a Rademacher random variable $\epsilon \in \{\pm 1\}$.
    \item Independently generate random variables $\alpha, \beta, \gamma$ from distribution $\cD_{\alpha}, \cD_{\beta}, \cD_{\gamma}$. In this paper, we assume there exists absolute constants $C_1, C_2$ such that almost surely $0 < C_1 \leq \alpha, \beta, \gamma \leq C_2$.
    \item Generate $\xb$ as a collection of $P$ patches: $\xb = (\xb^{(1)},\xb^{(2)}, \ldots, \xb^{(P)})\in (\RR^{d})^{P}$, where
    \begin{itemize}[leftmargin=*,nosep]
        \item \textbf{Feature signal.} One and only one patch is given by $y\alpha \vb_k$.
        \item \textbf{Cluster-center signal.}  One and only one patch is given by  $\beta\cb_k$.
         \item \textbf{Feature noise.} 
        One and only one patch is given by $\epsilon\gamma \vb_{k'}$. 
        \item  \textbf{Random noise.} The rest of the $P-3$ patches are Gaussian noises that are independently drawn from $N(0, (\sigma_{p}^{2}/d)\cdot\Ib_{d})$ where $\sigma_{p}$ is an absolute constant.
    \end{itemize}
\end{itemize}
\end{definition}

\noindent\textbf{How to learn this type of data?} Since the positions of signals and noises are not specified in Definition~\ref{def:data_distribution}, it is natural to use the CNNs structure that applies the same function to each patch. 
We point out that the strength of the feature noises $\gamma$ could be as large as the strength of the feature signals $\alpha$. As we will see later in Theorem~\ref{thm:neg}, this classification problem is hard to learn with a single expert, such as any two-layer CNNs (any activation function with any number of neurons). However, such a classification problem has an intrinsic clustering structure that may be utilized to achieve better performance. Examples can be divided into $K$ clusters $\cup_{k \in [K]}\Omega_{k}$ based on the cluster-center signals: an example $(\xb, y) \in \Omega_{k}$ if and only if at least one patch of $\xb$ aligns with $\cb_{k}$. It is not difficult to show that the binary classification sub-problem over $\Omega_{k}$ can be easily solved by an individual expert. We expect the MoE can learn this data cluster structure from the cluster-center signals.

\noindent\textbf{Significance of our result.} Although this data can be learned by existing works on a mixture of linear classifiers with sophisticated algorithms \citep{anandkumar2012method, hsu2012identifiability, chaganty2013spectral}, the focus of our paper is training a mixture of nonlinear neural networks, a more practical model used in real applications. When an MoE is trained by variants of gradient descent, we show that the experts \emph{automatically learn to specialize on each cluster}, while the router \emph{automatically learns to dispatch the data to the experts according to their specialty}. Although from a representation point of view, it is not hard to see that the concept class can be represented by MoEs, our result is very significant as we prove that gradient descent from random initialization can find a good MoE with non-linear experts efficiently. To make our results even more compelling, we empirically show that MoE with linear experts, despite also being able to represent the concept class, \emph{cannot} be trained to find a good classifier efficiently. 








\subsection{Structure of the MoE layer}\label{sec:MoE111}

An MoE layer consists of a set of $M$ ``expert networks'' $f_{1}, \ldots, f_{M}$, and a gating network which is generally set to be linear \citep{shazeer2017outrageously, fedus2021switch}. Denote by $f_{m}(\xb; \Wb)$ the output of the $m$-th expert network with input $x$ and parameter $\Wb$. Define an $M$-dimensional vector $\hb(\xb;\bTheta) = \sum_{p \in [P]}\bTheta^{\top}\xb^{(p)}$ as the output of the gating network parameterized by $\bTheta = [\btheta_1,\ldots,\btheta_M] \in \RR^{d\times M}$. The output $F$ of the MoE layer can be written as follows: 
\begin{align*}
F(\xb; \bTheta, \Wb) = \textstyle{\sum_{m\in \mathcal{T}_{\xb}}}\pi_m(\xb; \bTheta)f_{m}(\xb;\Wb),    
\end{align*}
where $\mathcal{T}_{\xb}\subseteq [M]$ is a set of selected indices and $\pi_{m}(\xb; \bTheta)$'s are route gate values given by   
\begin{align*}
     \pi_m(\xb; \bTheta) = \frac{\exp(h_{m}(\xb;\bTheta))}{\sum^M_{m'=1} \exp(h_{m'}(\xb;\bTheta))}, \forall m \in [M]. 
\end{align*}
\noindent\textbf{Expert Model.}
In practice, one often uses nonlinear neural networks as experts in the MoE layer. In fact, we found that the non-linearity of the expert is essential for the success of the MoE layer (see Section~\ref{sec:exp}). For $m$-th expert, we consider a convolution neural network as follows:
\begin{align}
    f_{m}(\xb;\Wb) = \textstyle{\sum_{j \in [J]}\sum^{P}_{p=1}} \sigma \big(\langle \wb_{m,j}, \xb^{(p)} \rangle \big), \label{eq:single expert}
\end{align}
where $\wb_{m,j} \in \RR^d$ is the weight vector of the $j$-th filter (i.e., neuron) in the $m$-th expert, $J$ is the number of filters (i.e., neurons). We denote $\Wb_{m} = [\wb_{m,1}, \ldots, \wb_{m,J}] \in \RR^{d\times J}$ as the weight matrix of the $m$-th expert and further let $\Wb = \{\Wb_{m}\}_{m \in [M]}$ as the collection of expert weight matrices. For nonlinear CNN, we consider the cubic activation function $\sigma(z) = z^{3}$, which is one of the simplest nonlinear activation functions \citep{vecci1998learning}. We also include the experiment for other activation functions such as RELU in  Appendix Table~\ref{tab:verification}.


\noindent\textbf{Top-1 Routing Model.} 
A simple choice of the selection set $\mathcal{T}_{\xb}$ would be the whole experts set $\mathcal{T}_{\xb} = [M]$ \citep{jordan1994hierarchical}, which is the case for the so-called soft-routing model. However, it would be time consuming to use soft-routing in deep learning. In this paper, we consider ``switch routing'', which is introduced by \citet{fedus2021switch} to make the gating network sparse and save the computation time.  For each input $\xb$, instead of using all the experts, we only pick one expert from $[M]$, i.e., $|\mathcal{T}_{\xb}| = 1$. In particular, we choose $\mathcal{T}_{\xb} = \argmax_{m} \{h_{m}(\xb;\bTheta)\}$.

\begin{minipage}{0.95\textwidth}
    \begin{minipage}[b]{0.58\textwidth}
    \centering
    \includegraphics[scale=0.22]{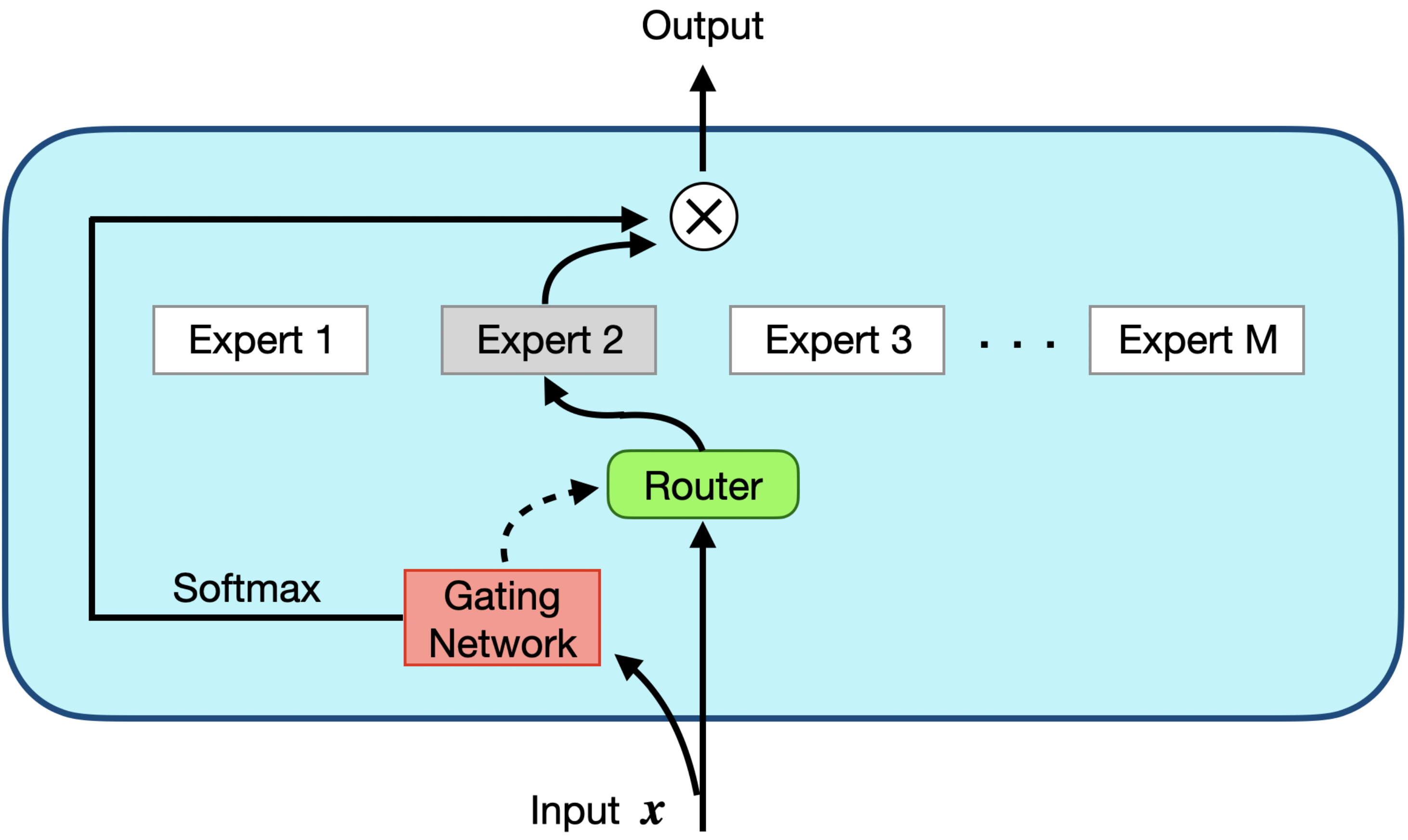}
    \captionof{figure}{\textbf{Illustration of an MoE layer.} For each input $\xb$, the router will only select one expert to perform computations. The choice is based on the output of the gating network (dotted line). The expert layer returns the output of the selected expert (gray box) multiplied by the route gate value (softmax of the gating function output).}
    \end{minipage}
    \hfill    
    \begin{minipage}[b]{0.4\textwidth}
    \centering
    \begin{algorithm}[H]\small 
    \caption{Gradient descent with random initialization}
    \begin{algorithmic}[1]\label{alg:GDrandominit}
    \REQUIRE Number of iterations $T$, expert learning rate $\eta$, router learning rate $\eta_{r}$, initialization scale $\sigma_{0}$, training set $S = \{(\xb_{i},y_{i})\}_{i=1}^{n}$.
    \STATE Generate each entry of $\Wb^{(0)}$ independently from $N(0, \sigma_{0}^{2})$.
    \STATE Initialize each entry of $\bTheta^{(0)}$ as zero.  
    \FOR{$t=0,2,\ldots, T-1$}
    \STATE Generate each entry of $\rb^{(t)}$ independently from Unif[0,1]. 
    \STATE Update $\Wb^{(t+1)}$ as in \eqref{eq:w-update}.
    \STATE Update $\bTheta^{(t+1)}$ as in \eqref{eq:theta-update}.
    \ENDFOR
    \RETURN $(\bTheta^{(T)}, \Wb^{(T)})$.
    \end{algorithmic}
    \end{algorithm}
    \end{minipage}
\end{minipage}

\subsection{Training Algorithm}

Given the training data $S = \{(\xb_i,y_i)\}_{i=1}^{n}$, we train $F$ with gradient descent to minimize the following empirical loss function: 
\begin{align}
   \cL(\bTheta, \Wb) = \frac{1}{n} \textstyle{\sum_{i=1}^n} \ell\big(y_{i}F(\xb_{i}; \bTheta, \Wb)\big),\label{eq:empirical loss}
\end{align}
where $\ell$ is the logistic loss defined as $\ell(z)= \log(1+\exp(-z))$. 
 We initialize $\bTheta^{(0)}$ to be zero and initialize each entry of $\Wb^{(0)}$ by i.i.d $\cN(0, \sigma_{0}^{2})$. Zero initialization of the gating network is widely used in MoE training. As discussed in \citet{shazeer2017outrageously}, it can help avoid out-of-memory errors and initialize the network in a state of approximately equal expert load (see \eqref{eq:Load} for the definition of expert load). 
 
Instead of directly using the gradient of empirical loss \eqref{eq:empirical loss} to update weights, we add perturbation to the router and use the gradient of the perturbed empirical loss to update the weights. In particular, the training example $\xb_{i}$  will be distributed to $\argmax_{m} \{h_{m}(\xb_{i};\bTheta^{(t)}) + r_{m,i}^{(t)}\}$ instead, where $\{r_{m,i}^{(t)}\}_{m \in [M],i\in [n]}$ are random noises. Adding noise term is a widely used training strategy for sparsely-gated MoE layer \citep{shazeer2017outrageously, fedus2021switch}, which can encourage exploration across the experts and stabilize the MoE training. In this paper, we draw $\{r_{m,i}^{(t)}\}_{m \in [M],i\in[n]}$ independently from the uniform distribution $\text{Unif}[0,1]$ and denotes its collection as $\rb^{(t)}$. Therefore, the perturbed empirical loss at iteration $t$ can be written as 
\begin{align}
\cL^{(t)}(\bTheta^{(t)},\Wb^{(t)})
    & = \frac{1}{n} \textstyle{\sum_{i=1}^{n}}\ell\big(y_{i}\pi_{m_{i,t}}(\xb_{i};\bTheta^{(t)})f_{m_{i,t}}(\xb_{i}; \Wb^{(t)})\big),\label{eq:perturbed empirical loss}
\end{align} 
where $m_{i,t} = \argmax_{m} \{h_{m}(\xb_{i};\bTheta^{(t)}) + r_{m,i}^{(t)}\}$. Starting from the initialization $\Wb^{(0)}$, the gradient descent update rule for the experts is 
 \begin{align}
\Wb^{(t+1)}_{m} &= \Wb^{(t)}_{m} - \eta\cdot \nabla_{\Wb_{m}}\cL^{(t)}(\bTheta^{(t)}, \Wb^{(t)})/\|\nabla_{\Wb_{m}}\cL^{(t)}(\bTheta^{(t)}, \Wb^{(t)})\|_{F}\label{eq:w-update}, \forall m \in [M],
 \end{align}
where $\eta>0$ is the expert learning rate. Starting from the initialization $\bTheta^{(0)}$, the gradient update rule for the gating network is 
 \begin{align}
 \btheta^{(t+1)}_{m} &= \btheta_{m}^{(t)} - \eta_{r}\cdot \nabla_{\btheta_{m}}\cL^{(t)}(\bTheta^{(t)}, \Wb^{(t)}), \forall m \in [M], \label{eq:theta-update}
 \end{align}
 where $\eta_{r}>0$ is the router learning rate. In practice, the experts are trained by Adam \citep{kingma2014adam} to make sure they have similar learning speeds. Here we use a normalized gradient which can be viewed as a simpler alternative to Adam \citep{jelassi2021adam}.

\section{Main Results}\label{sec: Thmmain}
In this section, we will present our main results. We first provide a negative result for learning with a single expert.   

\begin{theorem}[Single expert performs poorly]\label{thm:neg}
Suppose $\cD_{\alpha} = \cD_{\gamma}$ in Definition~\ref{def:data_distribution}, then any function with the form $F(\xb) = \sum_{p = 1}^{P}f(\xb^{(p)})$ will get large test error $\mathbb{P}_{(\xb,y)\sim \cD}\big(yF(\xb)\leq 0\big) \geq 1/8$.
\end{theorem}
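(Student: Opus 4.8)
The plan is to exhibit a symmetry in the data distribution that any patch-wise predictor $F(\xb)=\sum_{p=1}^P f(\xb^{(p)})$ is forced to respect, and then show this symmetry prevents $F$ from doing better than random on a constant fraction of the data. The key observation is that when $\cD_\alpha=\cD_\gamma$, the feature signal patch $y\alpha\vb_k$ and the feature noise patch $\epsilon\gamma\vb_{k'}$ are statistically exchangeable in a precise sense: conditioned on the unordered pair $\{k,k'\}$ and on the multiset of patches, swapping the roles of the ``signal'' cluster and the ``noise'' cluster produces an equally-likely sample — but one in which $y$ may be flipped relative to what $\epsilon$ was. First I would set up the relevant coupling: fix a cluster pair $(k,k')$, fix the Gaussian noise patches, fix the cluster-center patch $\beta\cb_k$, and fix the scalars. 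Consider the two events $A$: patches include $y\alpha\vb_k$ and $\epsilon\gamma\vb_{k'}$ with the cluster-center patch pointing along $\cb_k$; and $A'$: the ``mirrored'' configuration where we instead drew the pair $(k',k)$, label $y'=\epsilon$, Rademacher $\epsilon'=y$, same scalars, same noise patches, but now the cluster-center patch points along $\cb_{k'}$.

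The obstruction is that the cluster-center patch breaks the symmetry: in $A$ it is $\beta \cb_k$ while in the mirrored event it would be $\beta\cb_{k'}$, so $F$ can in principle use the cluster-center patch to tell the two situations apart. The way around this — and I expect this to be the crux of the argument — is to restrict attention to the sub-population where the cluster-center signal is ``unhelpful'' by itself, or more cleverly, to pair up four configurations rather than two: given clusters $k\neq k'$, consider the data point with feature patch $y\alpha\vb_k$, noise patch $\epsilon\gamma\vb_{k'}$, center patch $\beta\cb_k$, versus the data point with feature patch $\epsilon\gamma\vb_{k'}$ (relabeled as the signal of cluster $k'$), feature noise $y\alpha\vb_k$, center patch $\beta\cb_{k'}$. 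Because $f$ is applied patchwise and summed, $F$ on the first point equals $f(y\alpha\vb_k)+f(\epsilon\gamma\vb_{k'})+f(\beta\cb_k)+(\text{noise terms})$ and on the second equals $f(\epsilon\gamma\vb_{k'})+f(y\alpha\vb_k)+f(\beta\cb_{k'})+(\text{same noise terms})$ — the feature and feature-noise contributions are identical, and only the center term differs. Summing the correct-classification indicator over a symmetric set of such pairs, and over the choice of which of $\cb_k,\cb_{k'}$ appears, the center-patch contributions cancel in aggregate, forcing the average of $yF(\xb)$ over each symmetry orbit to be a quantity that does not depend on $y$ versus $\epsilon$; hence on at least one of the two relevant label assignments in the orbit, $F$ misclassifies. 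A careful bookkeeping of the orbit sizes then yields that the misclassification probability is at least $1/8$ (the $1/8$ arising as $\tfrac12\cdot\tfrac12\cdot\tfrac12$: one factor $1/2$ for landing in the ``bad'' half of each orbit, and further halving from the independent uniform draws of $y$ and $\epsilon$ and the pair symmetry).

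Concretely, the steps in order: (1) formalize the patchwise decomposition $yF(\xb) = y\,g_F(\alpha,\beta,\gamma,k,k',\epsilon,y) + y\cdot(\text{Gaussian noise contribution})$ where $g_F$ collects the three signal-patch values; (2) construct an explicit measure-preserving involution $\Phi$ on the sample space of $\cD$ that swaps $(k,\alpha,y)\leftrightarrow(k',\gamma,\epsilon)$ in the feature/feature-noise patches while leaving the Gaussian patches fixed, using $\cD_\alpha=\cD_\gamma$ to guarantee measure preservation; (3) observe $\Phi$ changes the label from $y$ to $\epsilon$ and show that $F(\xb)$ changes only through the cluster-center term $f(\beta\cb_k)-f(\beta\cb_{k'})$; (4) average over the further symmetry $k\leftrightarrow k'$ in the choice of which center vector is planted, under which the center-term difference is antisymmetric and integrates to zero; (5) conclude that $\mathbb{E}[\mathbf{1}\{yF(\xb)\le 0\}]$ over each orbit is at least $1/2$ on the relevant sub-event, and that this sub-event — the pairs with $y\neq\epsilon$, which occurs with probability $1/2$ — combined with the factor from the center-symmetry averaging, gives total error at least $1/8$. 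The main obstacle is step (3)–(4): making the cancellation of the cluster-center contribution rigorous without assuming anything about $f$, which is why the argument must average over the planted center vector rather than try to control $f$ pointwise.
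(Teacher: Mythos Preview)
Your approach is correct and essentially identical to the paper's: condition on $\epsilon=-y$, exhibit a four-point orbit on which the signed margins $y_iF(\xb_i)$ sum to zero, and apply pigeonhole to get conditional error $\geq 1/4$, hence unconditional error $\geq 1/8$. The paper's four-point orbit is built from the $y\mapsto -y$ flip together with the $(k,\alpha)\leftrightarrow(k',\gamma)$ swap rather than your $k\leftrightarrow k'$ swap, but both orbits yield the same zero-sum identity; your final accounting $\tfrac12\cdot\tfrac12\cdot\tfrac12$ should be read as $\tfrac14\cdot\tfrac12$ (at least one of four orbit points is misclassified, times $\mathbb{P}(\epsilon\neq y)=\tfrac12$).
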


Theorem~\ref{thm:neg} indicates that if the feature noise has the same strength as the feature signal i.e., $\cD_{\alpha} = \cD_{\gamma}$, any two-layer CNNs with the form $F(\xb) = \sum_{j \in [J]}a_{j}\sum_{p \in [P]}\sigma(\wb_{j}^{\top}\xb^{(p)} + b_{j})$ can't perform well on the classification problem defined in Definition~\ref{def:data_distribution} where $\sigma$ can be any activation function. Theorem 4.1 also shows that a simple ensemble of the experts may not improve the performance because the ensemble of the two-layer CNNs is still in the form of the function defined in Theorem~\ref{thm:neg}. 

As a comparison, the following theorem gives the learning guarantees for training an MoE layer that follows the structure defined in Section~\ref{sec:MoE111} with cubic activation function. 


\begin{theorem}[Nonlinear MoE performs well]\label{thm: MoE}
Suppose the training data size $n = \Omega(d)$. Choose experts number $M = \Theta(K\log K \log\log d)$, filter size $J = \Theta(\log M \log\log d )$, initialization scale $\sigma_{0} \in [d^{-1/3}, d^{-0.01}]$, learning rate $\eta = \tilde{O}(\sigma_{0}), \eta_{r} = \Theta(M^{2})\eta$. Then with probability at least $1 - 
o(1)$, Algorithm~\ref{alg:GDrandominit} is able to output $(\bTheta^{(T)}, \Wb^{(T)})$ within $T = \tilde{O}(\eta^{-1})$ iterations such that the non-linear MoE defined in Section~\ref{sec:MoE111} satisfies
\begin{itemize}[leftmargin=*,nosep]
    \item Training error is zero, i.e.,  $y_{i}F(\xb_{i};\bTheta^{(T)}, \Wb^{(T)}) > 0, \forall i \in [n]$.
    \item Test error is nearly zero, i.e., $ \mathbb{P}_{(\xb,y)\sim \cD}\big(y F(\xb;\bTheta^{(T)}, \Wb^{(T)}) \leq 0\big) = o(1)$.
\end{itemize}

More importantly, the experts can be divided into a disjoint union of $K$ non-empty sets $[M] = \sqcup_{k \in [K]}\cM_{k}$ and 
\begin{itemize}[leftmargin=*,nosep]
\item (Each expert is good on one cluster) Each expert $m \in \cM_{k}$ performs good on the cluster $\Omega_{k}$, $\mathbb{P}_{(\xb,y)\sim \cD}(yf_{m}(\xb;\Wb^{(T)}) \leq 0| (\xb, y)\in \Omega_{k}) = o(1) $.
\item (Router only distributes example to good expert) With probability at least $1 - o(1)$, an example $\xb \in \Omega_{k}$ will be routed to one of the experts in $\cM_{k}$.
\end{itemize}
\end{theorem}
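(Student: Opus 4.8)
The plan is to analyze the coupled gradient-descent dynamics of the expert weights $\Wb^{(t)}$ and the router parameters $\bTheta^{(t)}$ by tracking a small collection of scalar ``signal'' quantities and showing that they pass through three phases. For each expert $m$, filter $j$, and cluster $k$ I would monitor the \emph{label-signal correlation} $\langle\wb_{m,j}^{(t)},\vb_k\rangle$, the \emph{cluster-center correlation} $\langle\wb_{m,j}^{(t)},\cb_k\rangle$, and the \emph{noise correlations} $\langle\wb_{m,j}^{(t)},\bm{\xi}\rangle$ over the Gaussian patches $\bm{\xi}$ of the training set; for the router I would monitor $\langle\btheta_m^{(t)},\cb_k\rangle$ and $\langle\btheta_m^{(t)},\vb_k\rangle$. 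The first step is to record the initialization: all expert correlations are $\tilde\cO(\sigma_0)$ by Gaussian concentration, all router correlations are $0$, and — crucially — with probability $1-o(1)$ every cluster $k$ has at least one expert $m$ whose leading filter correlation $\max_{j\in[J]}\langle\wb_{m,j}^{(0)},\vb_k\rangle$ beats its runner-up cluster by a $1+\Omega(1/\polylog)$ multiplicative edge. This anti-concentration / coupon-collector estimate is exactly where $M=\Theta(K\log K\log\log d)$ and $J=\Theta(\log M\log\log d)$ enter; it furnishes a candidate assignment of experts to clusters, and $\cM_k$ is defined as the set of experts eventually ``won'' by cluster $k$, so that $\cM_k\neq\emptyset$ reduces to this event.

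Next I would expand the per-iteration updates using $\sigma(z)=z^3$. For the chosen expert $m=m_{i,t}$ on a routed example $\xb_i\in\Omega_k$, the induced change in $\langle\wb_{m,j}^{(t)},\vb_k\rangle$ is $\propto-\ell'(\cdot)\,\alpha^3\langle\wb_{m,j}^{(t)},\vb_k\rangle^2>0$, a \emph{consistent} positive drift that is quadratic in the current correlation — this quadratic is the source of winner-take-all separation across clusters and across filters; the change in $\langle\wb_{m,j}^{(t)},\cb_k\rangle$ is $\propto-\ell'(\cdot)\,y_i\,\beta^3\langle\wb_{m,j}^{(t)},\cb_k\rangle^2$, and because the cluster-center patch carries no label the factor $y_i$ turns this into a mean-zero random walk, so $\langle\wb_{m,j}^{(t)},\cb_k\rangle$ stays $\tilde\cO(\sigma_0)$ throughout; the Gaussian-direction components remain at the $\tilde\cO(\sigma_0)$-scale of standard two-layer-CNN noise-memorization bounds and stay negligible because $\sigma_0$ is polynomially small. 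Symmetrically, the router update moves $\btheta_m^{(t)}$ along $\cb_k$ by $\propto-\ell'(\cdot)\,y_i f_m(\xb_i)\,\pi_m(1-\pi_m)\,\beta$, hence grows $\langle\btheta_m^{(t)},\cb_k\rangle$ precisely on the clusters $k$ for which expert $m$ is \emph{currently correct} on the examples it receives (and pushes the other experts' $\btheta$ slightly \emph{away} from $\cb_k$, which is why the router does not collapse onto one expert), whereas along $y_i\alpha\vb_k$ and $\epsilon\gamma\vb_{k'}$ the sign factors wipe out the drift; the choice $\eta_r=\Theta(M^2)\eta$ compensates the small factor $\pi_m(1-\pi_m)=\Theta(1/M)$ so that the router keeps pace with the experts.

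With these two building blocks I would run the three-phase argument. \textbf{Phase 1 (exploration):} while $\bTheta$ is still small, top-$1$ routing with fresh perturbations $r_{m,i}^{(t)}\sim\mathrm{Unif}[0,1]$ keeps the dispatch near-uniform, so each expert sees $\Theta(n/M)$ roughly balanced examples, and by the quadratic drift its leading correlation $\Lambda_{m,k}:=\max_j\langle\wb_{m,j}^{(t)},\vb_k\rangle$ for its best cluster $k$ separates from the rest and reaches a $\polylog$-scale threshold at which the expert correctly classifies the $\Omega_k$-examples it receives and its cluster-center router direction begins to grow. \textbf{Phase 2 (routing):} a bootstrap argument — here I would introduce and maintain the invariant I will call the \rp{} — shows that once $\langle\btheta_m,\cb_k\rangle$ is large for $m\in\cM_k$ while $\langle\btheta_m,\cb_{k'}\rangle$ stays $\tilde\cO(\sigma_0\cdot\polylog)$ for $k'\neq k$, the clean argmax, and with high probability the $\mathrm{Unif}[0,1]$-perturbed argmax too, routes every $\Omega_k$-example to some expert in $\cM_k$; feeding this back, each such expert is thereafter trained on essentially pure cluster-$k$ data. \textbf{Phase 3 (fitting):} on pure $\Omega_k$ data the sub-problem is the easy one described after Definition~\ref{def:data_distribution}, and a single nonlinear CNN expert drives $\langle\wb_{m,j},\vb_k\rangle$ up until its cubic output dominates both the label-independent cluster-center ``bias'' term and all noise terms, giving $y_if_m(\xb_i)>0$ on every training example in $\Omega_k$ and, by a concentration/union bound over fresh Gaussian patches, $o(1)$ test error on $\Omega_k$. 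Combining Phases 2 and 3 yields zero training error, $o(1)$ test error for $F$, and the two structural statements; taking $T=\tilde\cO(\eta^{-1})$ large enough lets all three phases complete.

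The hard part will be the coupling between Phases 1 and 2: proving that no expert prematurely saturates or is starved before the router catches up, that two experts sharing a favorite cluster do not destructively interfere, and — most delicate — that the re-randomized perturbations $r_{m,i}^{(t)}$ neither break the near-uniform dispatch of Phase 1 nor the clean dispatch of Phase 2 while still supplying enough exploration. This forces a single inductive invariant (the \rp{}) that must be propagated simultaneously with all signal-growth bounds over all $T$ iterations, together with a careful treatment of the mild statistical dependence among $\Wb^{(t)}$, $\bTheta^{(t)}$ and the sample. Controlling the Gaussian-noise memorization uniformly in $t$, so that the $\tilde\cO(\cdot)$ error terms never overtake the $\polylog$-scale signals, is the other quantitatively sensitive ingredient, and together with the requirement that the initial $\Theta(\sigma_0)$ correlations be learnable within $T$ steps it is the reason for the two-sided constraint $\sigma_0\in[d^{-1/3},d^{-0.01}]$.
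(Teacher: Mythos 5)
Your overall blueprint matches the paper's: specialization sets $\cM_k$ determined by the initialization argmax (your coupon-collector estimate is the paper's Lemma~\ref{lm: Mkset}), an exploration phase in which the near-zero gating plus Unif$[0,1]$ noise keeps dispatch almost uniform while the cubic activation gives a quadratic, winner-take-all drift in $\langle\wb_{m,j},\vb_{k}\rangle$ (tensor-power argument), cancellation of the label-independent directions via $\sum_i y_i\beta_i^3=\tilde{O}(\sqrt n)$-type balance, a router phase in which $\langle\btheta_m,\cb_k\rangle$ separates experts in $\cM_k$ from the rest with $\eta_r=\Theta(M^2)\eta$ compensating the $\Theta(1/M^2)$ factors, and a final fresh-noise concentration step for generalization.

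The genuine gap is in your Phase 2 invariant. You claim that once the router has learned the cluster-center directions, ``with high probability the Unif$[0,1]$-perturbed argmax too'' routes every $\Omega_k$ example to $\cM_k$, so that each expert is ``thereafter trained on essentially pure cluster-$k$ data.'' Under the stated hyperparameters and the horizon $T=\tilde{O}(\eta^{-1})$ this is not what happens and is not what the paper proves: at the stopping time $T_2=\lfloor\eta^{-1}M^{-2}\rfloor$ the router gap along $\cb_k$ between experts in $\cM_k$ and the others is only $\Omega(K^{-1}M^{-9})$, far below the unit scale of the training perturbation, so the training-time dispatch remains mixed throughout (indeed the maintained invariant is only $\mathbb{P}(m_{i,t}=m)=\Omega(1/M)$ for $m\in\cM_k$, $i\in\Omega_{k}$). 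The paper's experts never see pure cluster data; instead Proposition~\ref{claim:main} together with the cancellation lemmas (Lemmas~\ref{lm:keepstage1}, \ref{lm:noiseinner}) shows feature learning survives the mixed dispatch, and the ``router only distributes to good experts'' claim is established only for the final model at test time, where no noise is added to the gating outputs and the tiny $\Omega(K^{-1}M^{-9})$ gap (plus $\tilde{O}(d^{-0.001})$ perturbations from non-center patches) already decides the noiseless argmax. If you instead insist on driving the gating gap above the unit noise scale during training so that your ``pure data'' bootstrap applies, you run into exactly the obstruction the paper flags: the within-group bias $\Delta_{\bTheta}$ and the self-reinforcing load imbalance are only controlled up to $T_2$ (early stopping is essential), the logistic derivative saturates once $\pi_m f_m$ is of constant order, and a starved expert under normalized gradient descent moves by a full $\eta$ per step in noise directions; none of these is addressed in your plan, and your separate ``Phase 3 fitting on pure data'' is not needed in (and does not occur in) the paper's dynamics, where the margin at time $T_2$ already comes from the feature growth accumulated during the mixed-dispatch phases.
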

Theorem~\ref{thm: MoE} shows that a non-linear MoE performs well on the classification problem in Definition~\ref{def:data_distribution}. In addition, the router will learn the cluster structure and divide the problem into $K$ simpler sub-problems, each of which is associated with one cluster. In particular, each cluster will be classified accurately by a subset of experts. On the other hand, each expert will perform well on at least one cluster. 

Furthermore, together with Theorem \ref{thm:neg}, Theorem~\ref{thm: MoE} suggests that there exist problem instances in Definition~\ref{def:data_distribution} (i.e., $\cD_{\alpha} = \cD_{\gamma}$) such that an MoE provably outperforms a single expert.


\section{Overview of Key Techniques}\label{sec: sketch} 
A successful MoE layer needs to ensure that the router can learn the cluster-center features and divide the
complex problem in Definition~\ref{def:data_distribution} into simpler linear classification sub-problems that individual experts can conquer. Finding such a gating network is difficult because this problem is highly non-convex. In the following, we will introduce the main difficulties in analyzing the MoE layer and the corresponding key techniques to overcome those barriers.


\noindent\textbf{Main Difficulty 1: Discontinuities in Routing.} Compared with the traditional soft-routing model, the sparse routing model saves computation and greatly reduces the inference time. However, this form of sparsity also causes discontinuities in routing \citep{shazeer2017outrageously}. In fact, even a small perturbation of the gating network outputs $\hb(\xb;\bTheta) + \boldsymbol{\delta}$ may change the router behavior drastically if the second largest gating network output is close to the largest gating network output.

\noindent\textbf{Key Technique 1: Stability by Smoothing.} We point out that the noise term added to the gating network output ensures a smooth transition between different routing behavior, which makes the router more stable. This is proved in the following lemma.

\begin{lemma}\label{lm:Msmoothly}
Let $\hb, \hat{\hb} \in \RR^{M}$ to be the output of the gating network and $\{r_{m}\}_{m=1}^{M}$ to be the noise independently drawn from Unif[0,1]. Denote $\pb, \hat{\pb} \in \RR^{M}$ to be the probability that experts get routed, i.e., $p_{m} = \mathbb{P}(\argmax_{m'\in[M]}\{h_{m'} + r_{m'}\} = m )$, $\hat{p}_{m} = \mathbb{P}(\argmax_{m'\in [M]}\{\hat{h}_{m'} + r_{m'}\} =m )$. Then we have that $\|\pb - \hat{\pb}\|_{\infty} \leq M^{2}\|\hb - \hat{\hb}\|_{\infty}$.
\end{lemma}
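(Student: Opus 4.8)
The plan is to bound the difference $|p_m - \hat p_m|$ for each fixed $m \in [M]$ by exhibiting a coupling between the two randomized routing rules and controlling the probability that they disagree. Write $\epsilon := \|\hb - \hat\hb\|_\infty$. The key observation is that $p_m$ is a probability over the i.i.d.\ noise vector $\rb = (r_1,\ldots,r_M)$ with each $r_{m'} \sim \mathrm{Unif}[0,1]$, and the events $\{\argmax_{m'}\{h_{m'}+r_{m'}\} = m\}$ and $\{\argmax_{m'}\{\hat h_{m'}+r_{m'}\} = m\}$ are driven by the \emph{same} $\rb$. So I would first write
\begin{align*}
|p_m - \hat p_m| \le \mathbb{P}\big(\{\textstyle\argmax_{m'}\{h_{m'}+r_{m'}\} = m\} \,\triangle\, \{\argmax_{m'}\{\hat h_{m'}+r_{m'}\} = m\}\big),
\end{align*}
the probability of the symmetric difference, which is the standard way to turn a difference of probabilities into a disagreement probability.

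Next I would show that if the two argmax rules give different answers about whether $m$ wins, then some coordinate is within $2\epsilon$ of the winning value under one of the perturbed vectors; more precisely, there must exist indices $a \ne b$ such that $|(h_a + r_a) - (h_b + r_b)| \le 2\epsilon$, because flipping a coordinate from winner to non-winner (or vice versa) costs at most $2\epsilon$ in the objective when we pass from $\hb$ to $\hat\hb$ (each coordinate moves by at most $\epsilon$). Hence the disagreement event is contained in $\bigcup_{a \ne b} \{|h_a + r_a - h_b - r_b| \le 2\epsilon\}$. Then I would bound $\mathbb{P}(|h_a + r_a - h_b - r_b| \le 2\epsilon)$: conditioning on $r_b$, the variable $r_a$ is uniform on $[0,1]$, so $r_a$ must fall in an interval of length at most $4\epsilon$, giving a bound of $4\epsilon$ for each pair; actually a cleaner route is to note $r_a - r_b$ has a triangular density bounded by $1$ on $[-1,1]$, so the probability that $r_a - r_b$ lands in an interval of length $4\epsilon$ is at most $4\epsilon$. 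A union bound over the at most $\binom{M}{2} \le M^2/2$ pairs then yields $|p_m - \hat p_m| \le 2M^2 \epsilon$; a slightly more careful accounting of the constant (only pairs involving $m$ matter for the event ``$m$'s status flips'', giving at most $M-1$ relevant pairs, and the per-pair interval has length $2\epsilon$) tightens this to $M^2 \epsilon$, and taking the max over $m$ gives the claim $\|\pb - \hat\pb\|_\infty \le M^2\|\hb - \hat\hb\|_\infty$.

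The main obstacle I anticipate is the bookkeeping in the second step: correctly identifying which pairs of coordinates can be responsible for a flip in $m$'s win/lose status, and keeping the constant at $M^2$ rather than something like $2M^2$ or $M^3$. One has to argue carefully that the disagreement event ``$m$ wins under $\hb+\rb$ but not under $\hat\hb+\rb$'' forces either $h_m + r_m$ to be within $2\epsilon$ of $\max_{m'\ne m}(h_{m'}+r_{m'})$, or symmetrically for $\hat h_m$, and then union-bound only over the $M-1$ competitors of $m$ (each contributing an interval of length $2\epsilon$ in an independent-noise argument, or length $4\epsilon$ if one is sloppy about the triangular density). Ties occur with probability zero since the noise is continuous, so $\argmax$ is well defined almost surely and no further care is needed there. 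The rest — the coupling inequality and the density bound on $r_a - r_b$ — is routine.
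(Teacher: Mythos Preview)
Your proposal is correct and follows essentially the same route as the paper: couple the two routing rules through the same noise vector $\rb$, bound $|p_m-\hat p_m|$ by the disagreement probability, and show disagreement forces $r_a-r_b$ to land in a short interval determined by $\hb,\hat\hb$, then union-bound over pairs. The paper carries this out for the \emph{global} event $\{\argmax(h+r)\neq\argmax(\hat h+r)\}$: if the two argmaxes are $m_1\neq m_2$, it centers by $C(m_1,m_2)=\tfrac12\big((\hat h_{m_2}-\hat h_{m_1})+(h_{m_2}-h_{m_1})\big)$ to get $|r_{m_1}-r_{m_2}-C(m_1,m_2)|\le\epsilon$, then union-bounds over all $\binom{M}{2}$ unordered pairs with per-pair probability $\le 2\epsilon$, yielding $M^2\epsilon$. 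Your refinement---restricting to the $M-1$ pairs involving the fixed index $m$ and observing that $r_m-r_j$ must lie between $h_j-h_m$ and $\hat h_j-\hat h_m$ (an interval of length $\le 2\epsilon$)---is a legitimate sharpening of the same idea and in fact gives $2(M-1)\epsilon\le M^2\epsilon$, so the constant works out with room to spare.
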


Lemma~\ref{lm:Msmoothly} implies that when the change of the gating network outputs at iteration $t$ and $t'$ is small, i.e., $\|\hb(\xb;\bTheta^{(t)})- \hb(\xb; \bTheta^{(t')})\|_{\infty}$, the router behavior will be similar. So adding noise provides a smooth transition from time $t$ to $t'$. It is also worth noting that $\bTheta$ is zero initialized. So $\hb(\xb; \bTheta^{(0)}) = 0$ and thus each expert gets routed with the same probability $p_{m} = 1/M$ by symmetric property. 
Therefore, at the early of the training when $\|\hb(\xb;\bTheta^{(t)})- \hb(\xb; \bTheta^{(0)})\|_{\infty}$ is small, router will almost uniformly pick one expert from $[M]$, which helps exploration across experts.

\noindent\textbf{Main Difficulty 2: No ``Real'' Expert.}
At the beginning of the training, the gating network is zero, and the experts are randomly initialized. Thus it is hard for the router to learn the right features because all the experts look the same: they share the same network architecture and are trained by the same algorithm. The only difference would be the initialization. Moreover, if the router makes a mistake at the beginning of the training, the experts may amplify the mistake because the experts will be trained based on mistakenly dispatched data.

\noindent\textbf{Key Technique 2: Experts from Exploration.} Motivated by the key technique 1, we introduce an exploration stage to the analysis of MoE layer during which the router almost uniformly picks one expert from $[M]$. This stage starts at $t=0$ and ends at $T_{1} = \lfloor\eta^{-1}\sigma_{0}^{0.5}\rfloor \ll T = \tilde{O}(\eta^{-1})$ and the gating network remains nearly unchanged $\|\hb(\xb;\bTheta^{(t)})- \hb(\xb; \bTheta^{(0)})\|_{\infty} = O(\sigma_{0}^{1.5})$. 
Because the experts are treated almost equally during exploration stage, we can show that the experts become specialized to some specific task only based on the initialization. In particular, the experts set $[M]$ can be divided into $K$ nonempty disjoint sets $[M] = \sqcup_{k} \cM_{k}$, where $\cM_{k} := \{m|\argmax_{k'\in [K], j\in [J]}\la \vb_{k'}, \wb_{m,j}^{(0)}\ra = k\}$. For nonlinear MoE with cubic activation function, the following lemma further shows that experts in different set $\cM_{k}$ will diverge at the end of the exploration stage. 



\begin{lemma}\label{lm:stage1}
Under the same condition as in Theorem~\ref{thm: MoE}, with probability at least $1 - o(1)$, the following equations hold for all expert $m \in \cM_{k}$, 
\begin{align*}
\mathbb{P}_{(\xb, y)\sim \cD}\big(yf_{m}(\xb; \Wb^{(T_{1})}\big) \leq 0 \big|(\xb, y) \in \Omega_{k}\big) &= o(1),\\
\mathbb{P}_{(\xb, y)\sim \cD}\big(yf_{m}(\xb; \Wb^{(T_{1})}) \leq 0\big|(\xb, y) \in \Omega_{k'}\big) &= \Omega\big(1/K\big), \forall k' \not = k.
\end{align*}
\end{lemma}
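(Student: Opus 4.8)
The plan is to reduce the gradient dynamics during the exploration stage $[0,T_1]$ to a family of essentially decoupled scalar ``power–iteration'' recursions along the signal directions $\{\vb_k\}$ and $\{\cb_k\}$, and then read off both conclusions from the sign of the dominant term of $f_m$. For $m\in\cM_k$ let $j^\star=j^\star(m)$ be the neuron attaining $\max_{k'\in[K],\,j\in[J]}\langle\vb_{k'},\wb_{m,j}^{(0)}\rangle$, which by the definition of $\cM_k$ lies in direction $\vb_k$; the key claim to establish is that $\langle\vb_k,\wb_{m,j^\star}^{(T_1)}\rangle=\Theta(\sigma_0^{1/2})$, whereas every other coordinate $\langle\vb_{k'},\wb_{m,j}^{(T_1)}\rangle$ and every $\langle\cb_{k'},\wb_{m,j}^{(T_1)}\rangle$ stays of order $\widetilde{\cO}(\sigma_0)$.

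\textbf{Step 1 (initialization geometry).} I would first record the high-probability facts about $\Wb^{(0)}$: the $KJ$ numbers $\langle\vb_{k'},\wb_{m,j}^{(0)}\rangle$ are i.i.d.\ $N(0,\sigma_0^2)$, so for every $m$ their maximum is $\Theta(\sigma_0\sqrt{\log(KJ)})$, is positive, and exceeds the runner-up by a relative gap $\Theta(1/\log(KJ))$; all $\langle\cb_{k'},\wb_{m,j}^{(0)}\rangle$ and all inner products of $\wb_{m,j}^{(0)}$ with the Gaussian noise patches are $\widetilde{\cO}(\sigma_0)$ and $\|\wb_{m,j}^{(0)}\|=\Theta(\sigma_0\sqrt d)$; and, since $\mathbb{P}(m\in\cM_k)=1/K$ independently across $m$, the choice $M=\Theta(K\log K\log\log d)$ forces $\cM_k\neq\emptyset$ for all $k$ with probability $1-o(1)$ by a coupon-collector bound (the $\log\log d$ slack is what turns the union bounds here and below into $1-o(1)$ statements).

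\textbf{Step 2 (reduction to a self-normalized power iteration).} Throughout $[0,T_1]$ the router is $\cO(\sigma_0^{1.5})$-close to uniform — this is the exploration-stage estimate $\|\hb(\xb;\bTheta^{(t)})-\hb(\xb;\bTheta^{(0)})\|_\infty=\cO(\sigma_0^{1.5})$ fed through Lemma~\ref{lm:Msmoothly}, which I would verify by a joint induction with the present bounds — and $|f_m|=\widetilde{\cO}(\sigma_0^{1.5})$, so $\ell'(\cdot)=-\tfrac{1}{2}+\widetilde{\cO}(\sigma_0^{1.5})$ on every example. Because $\nabla_{\wb_{m,j}}\sigma(\langle\wb_{m,j},\xb^{(p)}\rangle)=3\langle\wb_{m,j},\xb^{(p)}\rangle^2\xb^{(p)}$ is aligned with the patch $\xb^{(p)}$, projecting the (normalized) update of $\wb_{m,j}$ onto $\vb_k$ picks up only the feature-signal patch of examples whose own cluster is $k$ (contributing $+3\alpha^3\langle\vb_k,\wb_{m,j}^{(t)}\rangle^2$ after cancelling $y^2=1$) and the feature-noise patch of examples whose noise cluster is $k$ (same magnitude but carrying the i.i.d.\ Rademacher sign $y\epsilon$). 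Averaging over the $\Theta(n/(KM))$ routed examples of each type, the feature-noise terms, the random-noise terms, and the $\beta\cb_k$ terms (whose sign is the label $y$, independent of the cluster) all cancel up to $\cO(\sqrt{1/(nKM)})$ relative error, which is $o(1)$ since $n=\Omega(d)$; the common normalizer $\|\nabla_{\Wb_m}\cL^{(t)}\|_F$ has leading order $\tfrac{3}{2}\,\mathbb{E}[\alpha^3]\,(KM^2)^{-1}\big(\sum_{j',k'}\langle\vb_{k'},\wb_{m,j'}^{(t)}\rangle^4\big)^{1/2}$, so the expert-wide constants cancel and, writing $w_{m,j,k}^{(t)}=\langle\vb_k,\wb_{m,j}^{(t)}\rangle$, each pair obeys $w_{m,j,k}^{(t+1)}=w_{m,j,k}^{(t)}+\eta(1+o(1))\,(w_{m,j,k}^{(t)})^2\big(\sum_{j',k'}(w_{m,j',k'}^{(t)})^4\big)^{-1/2}$, while the cluster-center coordinates obey the same recursion with the main term deleted (no systematic drift).

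\textbf{Step 3 (winner-take-all and conclusion).} Since $1/w_{m,j,k}^{(t)}$ decreases by the common amount $\eta\sum_{s<t}\big(\sum_{j',k'}(w_{m,j',k'}^{(s)})^4\big)^{-1/2}$, the trajectories are ordered by their initial values and the relative gaps only widen; the largest coordinate, $w_{m,j^\star,k}$, is the first to ``take off'', after which it alone dominates the normalizer and grows linearly at rate $\eta$, reaching $\Theta(\sigma_0^{1/2})$ at $t=T_1=\lfloor\eta^{-1}\sigma_0^{1/2}\rfloor$, whereas the runner-up's pre-takeoff growth is capped at a $\polylog(d)$ factor by the $\Theta(1/\log(KJ))$ initial gap and the cluster-center coordinates never leave scale $\widetilde{\cO}(\sigma_0)$; the conditions $\sigma_0\in[d^{-1/3},d^{-0.01}]$ and $\eta=\widetilde{\cO}(\sigma_0)$ are exactly what put $T_1$ past this takeoff yet before $|f_m|$ or the $\ell'$-deviation cease to be negligible. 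Plugging $\Wb^{(T_1)}$ into $f_m$: for $(\xb,y)\in\Omega_k$ the feature-signal patch gives $yf_m=\alpha^3(w_{m,j^\star,k}^{(T_1)})^3+\widetilde{\cO}(\sigma_0^3)=\Theta(\sigma_0^{3/2})>0$ with probability $1-o(1)$ (the error collects the subdominant signal, cluster-center, and Gaussian-noise contributions, the last bounded by a Gaussian tail on $\langle\wb_{m,j}^{(T_1)},\xb^{(p)}\rangle$ union-bounded over $j$ and the noise patches of the fresh test point), which is the first equation; for $(\xb,y)\in\Omega_{k'}$ with $k'\neq k$ and noise cluster $k''$, the event $\{k''=k\}$ has conditional probability $1/(K-1)=\Theta(1/K)$ and on it the feature-noise patch $\epsilon\gamma\vb_k$ makes $yf_m=y\epsilon\,\gamma^3(w_{m,j^\star,k}^{(T_1)})^3+\widetilde{\cO}(\sigma_0^3)$, whose sign is the Rademacher variable $y\epsilon$, so $yf_m\le 0$ with probability at least $\tfrac{1}{2}\cdot\Theta(1/K)=\Omega(1/K)$, which is the second equation. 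The main obstacle is the uniform-in-$t$ bookkeeping behind Steps 2--3: one must carry an induction over all $t\le T_1$ showing that the feature-noise, random-noise, $\ell'$, and router errors stay lower order even in the delicate window just before the winning coordinate's blow-up time (where any premature loss of control would be amplified), and confirm that this specific $T_1$ separates $w_{m,j^\star,k}$ from the rest by a polynomial factor despite the merely $\polylog(d)$-size relative gap at initialization.
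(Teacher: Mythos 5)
Your proposal follows essentially the same route as the paper: near-uniform routing and $\ell'\approx -1/2$ during the exploration stage, winner-take-all growth of the single coordinate $\langle \wb_{m,j_m^*},\vb_{k_m^*}\rangle$ to $\Theta(\sigma_0^{1/2})$ at $T_1$ while all other signal, cluster-center, and noise inner products stay $\tilde{\cO}(\sigma_0)$ (the paper does this via the pairwise tensor-power comparison lemma rather than your $1/w$-decrement framing, but the mechanism is the same), followed by evaluating $yf_m$ on a fresh sample with fresh-noise patches controlled by Gaussian tails, and obtaining the $\Omega(1/K)$ error on $\Omega_{k'}$ by restricting to the sub-event where the feature noise is $\epsilon\gamma\vb_k$ with $\epsilon=-y$. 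The only quibble is your Step 1 claim of a uniform $\Theta(1/\log(KJ))$ relative gap over all $M$ experts with probability $1-o(1)$, which is quantitatively too strong (the paper only guarantees, and only needs, a $1/\mathrm{poly}(M,J,K,\log d)$ gap via its Gaussian top-two lemma); this does not change the argument.
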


Lemma~\ref{lm:stage1} implies that, at the end of the exploration stage, the expert $m \in \cM_{k}$ can achieve nearly zero test error on the cluster $\Omega_{k}$ but high test error on the other clusters $\Omega_{k'}, k' \not= k$.

\noindent\textbf{Main Difficulty 3: Expert Load Imbalance.}  Given the training data set $S = \{(\xb_{i},y_{i})\}_{i=1}^{n}$, the load of expert $m$ at iterate $t$ is defined as 
\begin{align}
\text{Load}_{m}^{(t)} = \textstyle{\sum_{i \in [n]}} \mathbb{P}(m_{i,t} = m), \label{eq:Load}
\end{align}
where $\mathbb{P}(m_{i,t} = m)$ is probability that the input $\xb_{i}$ being routed to expert $m$ at iteration $t$.
\citet{eigen2013learning} first described the load imbalance issues in the training of the MoE layer. The gating network may converge to a state where it always produces
large $\text{Load}_{m}^{(t)}$ for the same few experts. This imbalance in expert load is self-reinforcing, as the favored experts
are trained more rapidly and thus are selected even more frequently by the router \citep{shazeer2017outrageously, fedus2021switch}. Expert load imbalance issue not only causes memory and performance
problems in practice, but also impedes the theoretical analysis of the expert training.


\noindent\textbf{Key Technique 3: Normalized Gradient Descent.} 
Lemma~\ref{lm:stage1} shows that the experts will diverge into $\sqcup_{k \in [K]}\cM_{k}$. Normalized gradient descent can help different experts in the same $\cM_{k}$ being trained at the same speed regardless the imbalance load caused by the router. Because the self-reinforcing circle no longer exists, we can prove that the router will treat different experts in the same $\cM_{k}$ almost equally and dispatch almost the same amount of data to them (See Section~\ref{subsection: routerlearning} in Appendix for detail). This Load imbalance issue can be further avoided by adding load balancing loss \citep{eigen2013learning, shazeer2017outrageously, fedus2021switch}, or advanced MoE layer structure such as BASE Layers \citep{lewis2021base, dua2021tricks} and Hash Layers \citep{roller2021hash}.

\noindent\textbf{Road Map:} Here we provide the road map of the proof of Theorem~\ref{thm: MoE} and the full proof is presented in Appendix \ref{appendix:main theory}.  The training process can be decomposed into several stages.  
The first stage is called \emph{Exploration stage}. 
During this stage, the experts will diverge into $K$ professional groups $ \sqcup_{k=1}^{K}\cM_{k} = [M]$. In particular, we will show that $\cM_{k}$ is not empty for all $k \in [K]$. Besides, for all $m \in \cM_{k}$, $f_{m}$ is a good classifier over $\Omega_{k}$. The second stage is called \emph{router learning stage}. During this stage, the router will learn to dispatch $\xb \in \Omega_{k}$ to one of the experts in $\cM_{k}$. Finally, we will give the generalization analysis for the MoEs from the previous two stages.

\section{Experiments}\label{sec:exp}

\begin{minipage}{\textwidth}
\begin{minipage}[b]{0.63\textwidth}
    \centering
    \begin{tabular}{c c c}
        \multicolumn{3}{c}{Setting 1:$\alpha\in (0.5,2)$, $\beta\in (1,2)$, $\gamma\in (0.5,3), \sigma_{p}=1$} \\
    \addlinespace[3pt]
    \toprule
         &  Test accuracy ($\%$) & Dispatch Entropy \\
    \midrule
         Single (linear) & $68.71$ & NA \\
         Single (nonlinear) & $79.48$ & NA\\
         MoE (linear) &  $92.99 \pm 2.11$ &  $1.300 \pm 0.044$\\
         MoE (nonlinear) & $\mathbf{99.46 \pm 0.55}$ & $\mathbf{0.098 \pm 0.087}$ \\
    \bottomrule
    \\
        \multicolumn{3}{c}{Setting 2: $\alpha\in (0.5,2)$, $\beta\in (1,2)$, $\gamma\in (0.5,3)$, $\sigma_{p}  = 2$} \\
    \addlinespace[3pt]
    \toprule
        &  Test accuracy ($\%$) & Dispatch Entropy \\
    \midrule
         Single (linear) & $60.59$ &  NA \\
         Single (nonlinear) & $72.29$ & NA \\
         MoE (linear) &  $88.48 \pm 1.96$ &  $1.294 \pm 0.036$\\
         MoE (nonlinear) & $\mathbf{98.09 \pm 1.27}$ & $\mathbf{0.171 \pm 0.103}$ \\ 
    \bottomrule
    \end{tabular}
    \captionof{table}{\textbf{Comparison between MoE (linear) and MoE (nonlinear)} in our setting. We report results of top-1 gating with noise for both linear and nonlinear models. Over ten random experiments, we report the average value $\pm$ standard deviation for both test accuracy and dispatch entropy.}
    \label{tab:synthetic_exp_results_1}
    \end{minipage}
      \hfill
  \begin{minipage}[b]{0.35\textwidth}
    \includegraphics[scale=0.32]{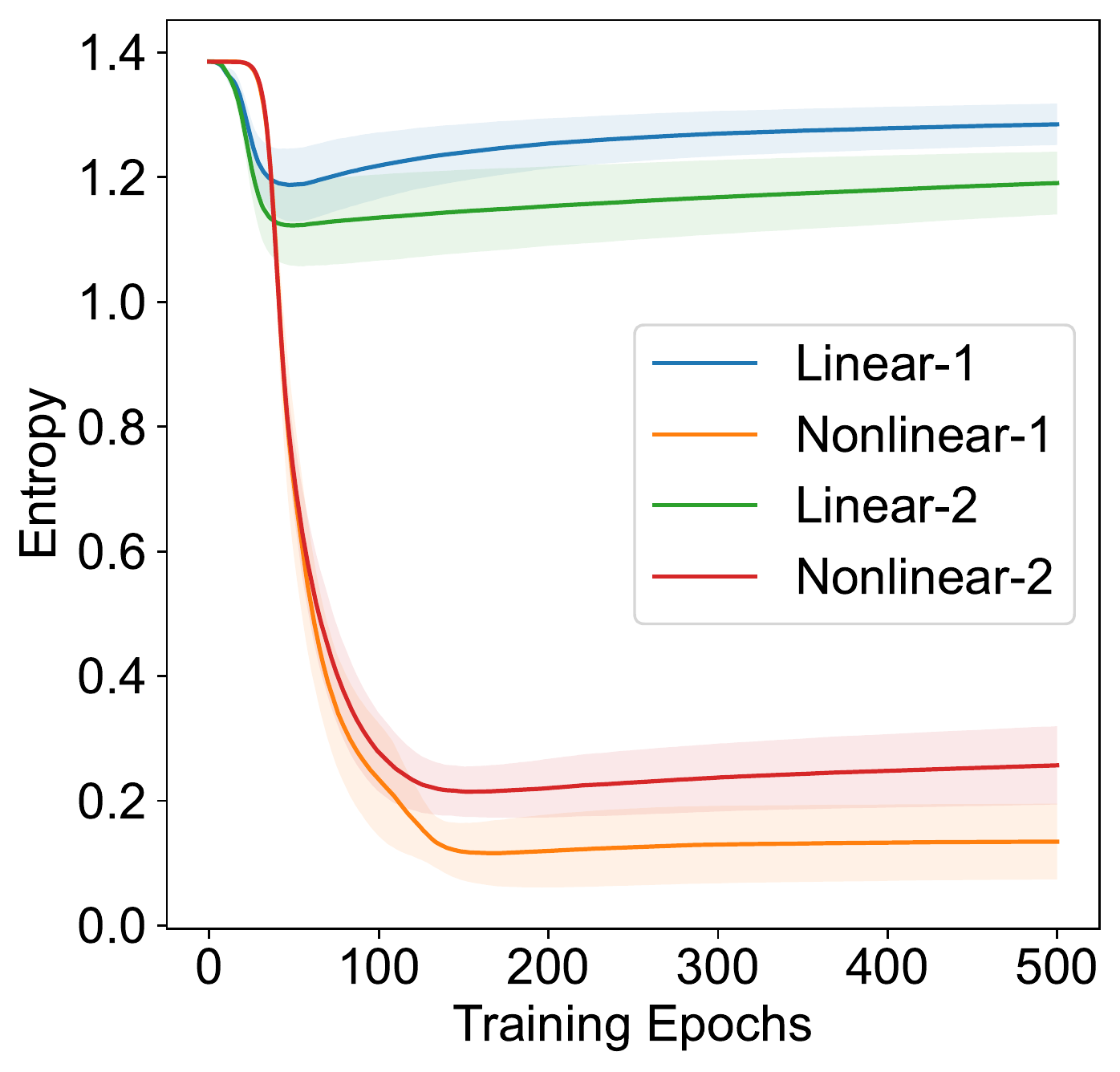}
    \captionof{figure}{\textbf{Illustration of router dispatch entropy.} We demonstrate the change of entropy of MoE during training on the synthetic data. MoE (linear)-1 and MoE (nonlinear)-1 refer to Setting 1 in Table~\ref{tab:synthetic_exp_results_1}. MoE (linear)-2 and MoE (nonlinear)-2 refer to Setting 2 in Table~\ref{tab:synthetic_exp_results_1}.}
    \label{fig:synthetic_exp_results}
  \end{minipage}
\end{minipage}


\subsection{Synthetic-data Experiments}
\noindent\textbf{Datasets.} We generate $16,000$ training examples and $16,000$ test examples from the data distribution defined in Definition~\ref{def:data_distribution} with cluster number $K=4$ , patch number $P = 4$ and dimension $d = 50$.  We randomly shuffle the order of the patches of $\xb$ after we generate data $(\xb, y)$. We consider two parameter settings: 1. $\alpha \sim \text{Uniform}(0.5,2)$, $\beta \sim \text{Uniform}(1,2)$, $\gamma \sim \text{Uniform}(0.5,3)$ and $\sigma_{p} = 1$; 2. $\alpha \sim \text{Uniform}(0.5,2)$, $\beta \sim \text{Uniform}(1,2)$, $\gamma \sim \text{Uniform}(0.5,3)$ and $\sigma_{p}  = 2$. Note that Theorem~\ref{thm:neg} shows that when $\alpha$ and $\gamma$ follow the same distribution, neither single linear expert or single nonlinear expert can give good performance. Here we consider a more general and difficult setting when $\alpha$ and $\gamma$ are from different distributions. 

\noindent\textbf{Models.} We consider the performances of single linear CNN, single nonlinear CNN, linear MoE, and nonlinear MoE. The single nonlinear CNN architecture follows \eqref{eq:single expert} with cubic activation function, while single linear CNN follows \eqref{eq:single expert} with identity activation function. For both linear and nonlinear MoEs, we consider a mixture of $8$ experts with each expert being a single linear CNN or a single nonlinear CNN. Finally, we train single models with gradient descent and train the MoEs with Algorithm~\ref{alg:GDrandominit}. We run $10$ random experiments and report the average accuracy with standard deviation.

\noindent\textbf{Evaluation.} To evaluate how well the router learned the underlying cluster structure of the data, we define the entropy of the router's dispatch as follows. Denote by $n_{k,m}$ the number of data in cluster $K$ that are dispatched to expert $m$. The total number of data dispatched to expert $m$ is $n_m = \sum_{k=1}^K n_{k,m}$ and the total number of data is $n = \sum_{k=1}^K \sum_{m=1}^M {n_{k,m}}$. The dispatch entropy is then defined as
\begin{align}
    \text{entropy} = - \textstyle{\sum_{m=1,n_m\ne 0}^M} \frac{n_m}{n} \sum_{k=1}^K\frac{n_{k,m}}{n_m} \cdot \log\big(\frac{n_{k,m}}{n_m}\big) .
\end{align}
When each expert receives the data from at most one cluster, the dispatch entropy will be zero. And a uniform dispatch will result in the maximum dispatch entropy.

As shown in Table~\ref{tab:synthetic_exp_results_1}, the linear MoE does not perform as well as the nonlinear MoE in Setting $1$, with around $6\%$ less test accuracy and much higher variance. With stronger random noise (Setting $2$), the difference between the nonlinear MoE and linear MoE becomes even more significant. We also observe that the final dispatch entropy of nonlinear MoE is nearly zero while that of the linear MoE is large. In Figure~\ref{fig:synthetic_exp_results}, we further demonstrate the change of dispatch entropy during the training process. The dispatch entropy of nonlinear MoE significantly decreases, while that of linear MoE remains large. Such a phenomenon indicates that the nonlinear MoE can successfully learn the underlying cluster structure of the data while the linear MoE fails to do so.

\subsection{Real-data Experiments}

\begin{table}
    \centering
    \begin{tabular}{c c c c}
    \toprule
        &  & CIFAR-10 ($\%$) & CIFAR-10-Rotate ($\%$) \\
    \midrule
         \multirow{2}{5em}{CNN} & Single & $80.68\pm 0.45$ & $76.78 \pm 1.79$  \\
         & MoE & $80.31 \pm0.62$ & $\mathbf{79.60\pm 1.25}$  \\
    \midrule
        \multirow{2}{5em}{MobileNetV2} & Single & $92.45 \pm 0.25$ & $85.76 \pm 2.91$  \\
         & MoE & $92.23\pm0.72$ & $\mathbf{89.85 \pm 2.54}$ \\
    \midrule
        \multirow{2}{5em}{ResNet18} & Single & $95.51\pm 0.31$ & $88.23 \pm 0.96$  \\
         & MoE & $95.32 \pm 0.68$ & $\mathbf{92.60 \pm 2.01}$ \\
    \bottomrule
    \addlinespace[3pt]
    \end{tabular}
    \caption{Comparison between MoE and single model on CIFAR-10 and CIFAR-10-Rotate datasets. We report the average test accuracy over $10$ random experiments $\pm$ the standard deviation.}
    \label{tab:cifar10rotate}
\end{table}

We further conduct experiments on real image datasets and demonstrate the importance of the clustering data structure to the MoE layer in deep neural networks. 

\noindent\textbf{Datasets.} We consider the \textbf{CIFAR-10} dataset \citep{Krizhevsky09learningmultiple} and the 10-class classification task. Furthermore, we create a \textbf{CIFAR-10-Rotate} dataset that has a strong underlying cluster structure that is independent of its labeling function. Specifically, we rotate the images by $30$ degrees and merge the rotated dataset with the original one. The task is to predict if the image is rotated, which is a binary classification problem. We deem that some of the classes in CIFAR-10 form underlying clusters in CIFAR-10-Rotate. 
In Appendix~\ref{appendix:experiment}, we explain in detail how we generate CIFAR-10-Rotate and present some specific examples. 

\noindent\textbf{Models.} For the MoE, we consider a mixture of $4$ experts with a linear gating network. For the expert/single model architectures, we consider a CNN with $2$ convolutional layers (architecture details are illustrated in Appendix \ref{appendix:experiment}.)
For a more thorough evaluation, we also consider expert/single models with architecture including \textbf{MobileNetV2} \citep{sandler2018mobilenetv2} and \textbf{ResNet18} \citep{he2016deep}. The training process of MoE also follows Algorithm \ref{alg:GDrandominit}.  

The experiment results are shown in Table~\ref{tab:cifar10rotate}, where we compare single and mixture models of different architectures over CIFAR-10 and CIFAR-10-Rotate datasets. We observe that the improvement of MoEs over single models differs largely on the different datasets. On CIFAR-10, the performance of MoEs is very close to the single models. However, on the CIFAR-10-Rotate dataset, we can observe a significant performance improvement from single models to MoEs. Such results indicate the advantage of MoE over single models depends on the task and the cluster structure of the data.

\section{Conclusion and Future Work}

In this work, we formally study the mechanism of the Mixture of Experts (MoE) layer for deep learning. To our knowledge, we provide the first theoretical result toward understanding how the MoE layer works in deep learning. Our empirical evidence reveals that the cluster structure of the data plays an important role in the success of the MoE layer. Motivated by these empirical observations, we study a data distribution with cluster structure and show that Mixture-of-Experts provably improves the test accuracy of a single expert of two-layer CNNs.


There are several important future directions. First, our current results are for CNNs. It is interesting to extend our results to other neural network architectures, such as transformers. Second, our data distribution is motivated by the classification problem of image data. We plan to extend our analysis to other types of data (e.g., natural language data). 

\appendix



\section{Experiment Details}
\label{appendix:experiment}

\subsection{Visualization}
In the visualization of Figure~\ref{fig:Demo_2dl}, MoE (linear) and MoE (nonlinear) are trained according to Algorithm~\ref{alg:GDrandominit} by normalized gradient descent with learning rate $0.001$ and gradient descent with learning rate $0.1$. According to Definition~\ref{def:data_distribution}, we set $K=4$, $P=4$ and $d=50$ and choose $\alpha\in(0.5,2)$, $\beta\in(1,2)$, $\gamma\in(1,2)$ and $\sigma_p = 1$, and generate $3,200$ data examples. We consider mixture of $M=4$ experts for both MoE (linear) and MoE (nonlinear). For each expert, we set the number of neurons/filters $J=16$. We train MoEs on $1,600$ data examples and visualize classification result and decision boundary on the remaining $1,600$ examples. The data examples are visualized via t-SNE \citep{van2008visualizing}. When visualizing the data points and decision boundary on the 2d space, we increase the magnitude of random noise patch by $3$ so that the positive/negative examples and decision boundaries can be better viewed. 


\subsection{Synthetic-data Experiments}

\begin{table}[]
    \centering
    \begin{tabular}{c c c c}
    \multicolumn{4}{c}{Setting 1:$\alpha\in (0.5,2)$, $\beta\in (1,2)$, $\gamma\in (0.5,3), \sigma_{p}=1$} \\
    \addlinespace[3pt]
    \toprule
         &  Test accuracy ($\%$) & Dispatch Entropy & Number of Filters \\
    \midrule
         Single (linear) & $68.71$ & NA & 128 \\
         Single (linear) & $67.63$ & NA & 512 \\
         Single (nonlinear) & $79.48$ & NA & 128 \\
         Single (nonlinear) & $78.18$ & NA & 512 \\
         MoE (linear) &  $92.99 \pm 2.11$ &  $1.300 \pm 0.044$ & 128 (16*8)\\
         MoE (nonlinear) & $\mathbf{99.46 \pm 0.55}$ & $\mathbf{0.098 \pm 0.087}$ & 128 (16*8)\\
    \bottomrule
    \\
        \multicolumn{4}{c}{Setting 2: $\alpha\in (0.5,2)$, $\beta\in (1,2)$, $\gamma\in (0.5,3)$, $\sigma_{p}  = 2$} \\
        \addlinespace[3pt]
    \toprule
        &  Test accuracy ($\%$) & Dispatch Entropy & Number of Filters \\
    \midrule
         Single (linear) & $60.59$ &  NA & 128 \\
         Single (linear) & $63.04$ &  NA & 512 \\
         Single (nonlinear) & $72.29$ & NA & 128 \\
         Single (nonlinear) & $52.09$ & NA & 512 \\
         MoE (linear) &  $88.48 \pm 1.96$ &  $1.294 \pm 0.036$ & 128 (16*8)\\
         MoE (nonlinear) & $\mathbf{98.09 \pm 1.27}$ & $\mathbf{0.171 \pm 0.103}$ & 128 (16*8) \\ 
    \bottomrule
    \\
        \multicolumn{4}{c}{Setting 3:$\alpha\in (0.5,2)$, $\beta\in (1,2)$, $\gamma\in (0.5,2), \sigma_{p}=1$} \\
        \addlinespace[3pt]
    \toprule
         &  Test accuracy ($\%$) & Dispatch Entropy & Number of Filters\\
    \midrule
         Single (linear) & $74.81$ & NA & 128 \\
         Single (linear) & $74.54$ & NA & 512 \\
         Single (nonlinear) & $72.69$ & NA & 128\\
         Single (nonlinear) & $67.78$ & NA & 512 \\
         MoE (linear) &  $95.93 \pm 1.34$ &  $1.160 \pm 0.100$ & 128 (16*8)\\
         MoE (nonlinear) & $\mathbf{99.99 \pm 0.02}$ & $\mathbf{0.008 \pm 0.011}$ & 128 (16*8)\\
    \bottomrule
    \\
        \multicolumn{4}{c}{Setting 4: $\alpha\in (0.5,2)$, $\beta\in (1,2)$, $\gamma\in (0.5,2)$, $\sigma_{p}  = 2$} \\
    \addlinespace[3pt]
    \toprule
        &  Test accuracy ($\%$) & Dispatch Entropy & Number of Filters \\
    \midrule
         Single (linear) & $74.63$ &  NA & 128 \\
         Single (linear) & $72.98$ & NA & 512 \\
         Single (nonlinear) & $68.60$ & NA & 128\\
         Single (nonlinear) & $61.65$ & NA & 512 \\
         MoE (linear) &  $93.30 \pm 1.48$ &  $1.160 \pm 0.155$ & 128 (16*8)\\
         MoE (nonlinear) & $\mathbf{98.92 \pm 1.18}$ & $\mathbf{0.089 \pm 0.120}$ & 128 (16*8)\\ 
    \bottomrule
    \addlinespace[3pt]
    \end{tabular}
    \caption{\textbf{Comparison between MoE (linear) and MoE (nonlinear)} in our setting. We report results of top-1 gating with noise for both linear and nonlinear models. Over ten random experiments, we report the average value $\pm$ standard deviation for both test accuracy and dispatch entropy.}
    \label{tab:synthetic_exp_results_s3}
\end{table}

\noindent\textbf{Synthetic-data experiment setup.} For the experiments on synthetic data, we generate the data according to Definition~\ref{def:data_distribution} with $K=4$, $P=4$ and $d=50$. We consider four parameter settings: 
\begin{itemize}
    \item $\alpha \sim \text{Uniform}(0.5,2)$, $\beta \sim \text{Uniform}(1,2)$, $\gamma \sim \text{Uniform}(0.5,3)$ and $\sigma_{p} = 1$;
    \item $\alpha \sim \text{Uniform}(0.5,2)$, $\beta \sim \text{Uniform}(1,2)$, $\gamma \sim \text{Uniform}(0.5,3)$ and $\sigma_{p}  = 2$;
    \item $\alpha \sim \text{Uniform}(0.5,2)$, $\beta \sim \text{Uniform}(1,2)$, $\gamma \sim \text{Uniform}(0.5,2)$ and $\sigma_p=1$;
    \item $\alpha \sim \text{Uniform}(0.5,2)$, $\beta \sim \text{Uniform}(1,2)$, $\gamma \sim \text{Uniform}(0.5,2)$ and $\sigma_p=2$.
\end{itemize}
We consider mixture of $M=8$ experts for all MoEs and $J=16$ neurons/filters for all experts. For single models, we consider $J=128$ neurons/filters. We train MoEs using Algorithm~\ref{alg:GDrandominit}. Specifically, we train the experts by normalized gradient descent with learning rate $0.001$ and the gating network by gradient descent with learning rate $0.1$. We train single linear/nonlinear models by Adam \citep{kingma2014adam} to achieve the best performance, with learning rate $0.01$ and weight decay 5e-4 for single nonlinear model and learning rate $0.003$ and weight decay $5e-4$ for single linear model.

\noindent\textbf{Synthetic-data experiment results.} In Table~\ref{tab:synthetic_exp_results_s3}, we present the empirical results of single linear CNN, single nonlinear CNN, linear MoE, and nonlinear MoE under settings $3$ and $4$, where $\alpha$ and $\gamma$ follow the same distribution as we assumed in theoretical analysis. Furthermore, we report the total number of filters for both single CNNs and a mixture of CNNs, where the filter size (equal to $50$) is the same for all single models and experts. For linear and nonlinear MoE, there are $16$ filters for each of the $8$ experts, and therefore $128$ filters in total. Note that in the synthetic-data experiment in the main paper, we let the number of filters of single models be the same as MoEs ($128$). Here, we additionally report the performances of single models with $512$ filters, and see if increasing the model size of single models can beat MoE. From Table \ref{tab:synthetic_exp_results_s3}, we observe that: 1. single models perform poorly in all settings; 2. linear MoEs do not perform as well as nonlinear MoEs. Specifically, the final dispatch entropy of nonlinear MoEs is nearly zero while the dispatch entropy of linear MoEs is consistently larger under settings $1$-$4$. This indicates that nonlinear MoEs successfully uncover the underlying cluster structure while linear MoEs fail to do so.
In addition, we can see that even larger single models cannot beat linear MoEs or nonlinear MoEs. This is consistent with Theorem~$\ref{thm:neg}$, where a single model fails under such data distribution regardless of its model size. Notably, by comparing the results in Table \ref{tab:synthetic_exp_results_1} and Table~\ref{tab:synthetic_exp_results_s3}, we can see that a single nonlinear model suffers from overfitting as we increase the number of filters.

\noindent\textbf{Router dispatch examples.} We demonstrate specific examples of router dispatch for MoE (nonlinear) and MoE (linear). The examples of initial and final router dispatch for MoE (nonlinear) are shown in Table~\ref{tab:dispatch_details_nonlinear_toy1} and Table~\ref{tab:dispatch_details_nonlinear_toy2}. Under the dispatch for nonlinear MoE, each expert is given either no data or data that comes from one cluster only. The entropy of such dispatch is thus $0$. The test accuracy of MoE trained under such a dispatch is either $100\%$ or very close to $100\%$, as the expert can be easily trained on the data from one cluster only. An example of the final dispatch for MoE (linear) is shown in 
Table~\ref{tab:dispatch_details_linear_toy}, where clusters are not well separated and an expert gets data from different clusters. The test accuracy under such dispatch is lower ($90.61\%$).

\begin{table}
    \centering
    \begin{tabular}{c c c c c c c c c}
    \toprule
      Expert number  & 1 & 2 & 3 & 4 & 5 & 6 & 7 & 8 \\
    \midrule
         Initial dispatch &  1921& 2032& 1963& 1969& 2075& 1980& 2027& 2033   \\
         Final dispatch   &  0& 3979& 4009&    0&    0& 3971&    0& 4041  \\
    \midrule
         Cluster 1 & 0&    0&    0&       0&    0&       3971&    0&    0  \\
         Cluster 2 & 0&    0&    4009&    0&    0&       0&       0&       0  \\
         Cluster 3 & 0&    0&    0&       0&    0&    0&    0&       4041  \\
         Cluster 4 & 0&    3979&    0&       0& 0&    0&       0&       0  \\
    \bottomrule
    \addlinespace[3pt]
    \end{tabular}
    \caption{Dispatch details of MoE (nonlinear) with test accuracy $100\%$.}
    \label{tab:dispatch_details_nonlinear_toy1}
\end{table}

\begin{table}
    \centering
    \begin{tabular}{c c c c c c c c c}
    \toprule
      Expert number  & 1 & 2 & 3 & 4 & 5 & 6 & 7 & 8 \\
    \midrule
         Initial dispatch & 1978  &  2028 & 2018 &   1968 & 2000 & 2046 & 2000 &   1962   \\
         Final dispatch   & 3987&    4& 3975&    6&    0&  1308& 4009& 2711  \\
    \midrule
         Cluster 1 &    0&    0& 3971&    0&    0&    0&    0&    0  \\
         Cluster 2 &    0&    0&    0&    0&    0&    4& 4005&    0  \\
         Cluster 3 &    8&    4&    4&    6&    0&  1304&    4& 2711  \\
         Cluster 4 & 3979&    0&    0&    0&    0&    0&    0&    0  \\
    \bottomrule
    \addlinespace[3pt]
    \end{tabular}
    \caption{Dispatch details of MoE (nonlinear) with test accuracy $99.95\%$.}
    \label{tab:dispatch_details_nonlinear_toy2}
\end{table}

\begin{table}
    \centering
    \begin{tabular}{c c c c c c c c c}
    \toprule
      Expert number  & 1 & 2 & 3 & 4 & 5 & 6 & 7 & 8 \\
    \midrule
         Initial dispatch & 1969& 2037& 1983& 2007& 1949& 1905& 2053& 2097   \\
         Final dispatch   & 136& 2708& 6969& 5311&   27&   87&    4&  758  \\
    \midrule
         Cluster 1 &  0 &  630& 1629& 1298&   27&   87&    4&  296  \\
         Cluster 2 & 136& 1107& 1884&  651&    0&    0&    0&  231  \\
         Cluster 3 &  0&  594& 1976& 1471&    0&    0&    0&    0  \\
         Cluster 4 & 0 &377& 1480& 1891&    0&    0&    0&  231  \\
    \bottomrule
    \addlinespace[3pt]
    \end{tabular}
    \caption{Dispatch details of MoE (linear) with test accuracy $90.61\%$.}
    \label{tab:dispatch_details_linear_toy}
\end{table}

\noindent\textbf{MoE during training.} We further provide figures that illustrate the growth of the inner products between expert/router weights and feature/center signals during training. Specifically, since each expert has multiple neurons, we plot the max absolute value of the inner product over the neurons of each expert. In Figure~\ref{fig:Nonlinear}, we demonstrate the training process of MoE (nonlinear), and in Figure~\ref{fig:Linear}, we demonstrate the training process of MoE (linear). The data is the same as setting $1$ in Table~\ref{tab:synthetic_exp_results_1}, with $\alpha\in(0.5,2)$, $\beta\in(1,2)$, $\gamma\in(0.5,3)$ and $\sigma_p=1$. We can observe that, in the top left sub-figure of Figure~\ref{fig:Nonlinear} for MoE (nonlinear), the max inner products between expert weight and feature signals exhibit a property that each expert picks up one feature signal quickly. Similarly, as shown in the bottom right sub-figure, the router picks up the corresponding center signal. Meanwhile, the nonlinear experts almost do not learn center signals and the magnitude of the inner products between router weight and feature signals remain small. However, for MoE (linear), as shown in the top two sub-figures of Figure~\ref{fig:Linear}, an expert does not learn a specific feature signal, but instead learns multiple feature and center signals. Moreover, as demonstrated in the bottom sub-figures of Figure~\ref{fig:Linear}, the magnitude of the inner products between router weight and feature signals can be even larger than the inner products between router weight and center signals.   
  
\begin{figure}[!htbp]
    \centering
    \includegraphics[scale=0.6]{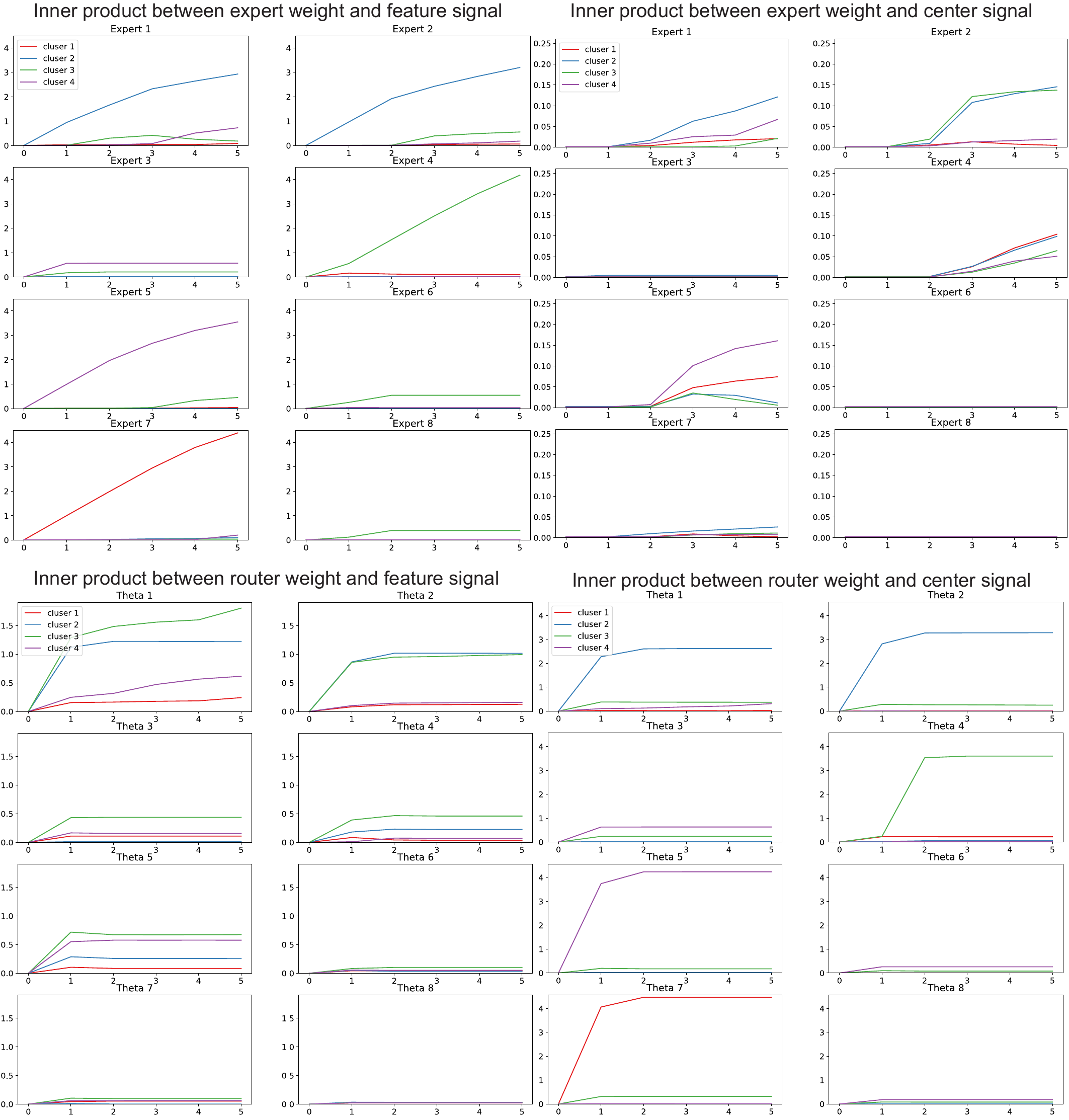}
    \caption{\textbf{Mixture of nonlinear experts.} Growth of inner product between expert/router weight and center/feature vector.}
    \label{fig:Nonlinear}
\end{figure}

\begin{figure}[!htbp]
    \centering
    \includegraphics[scale=0.6]{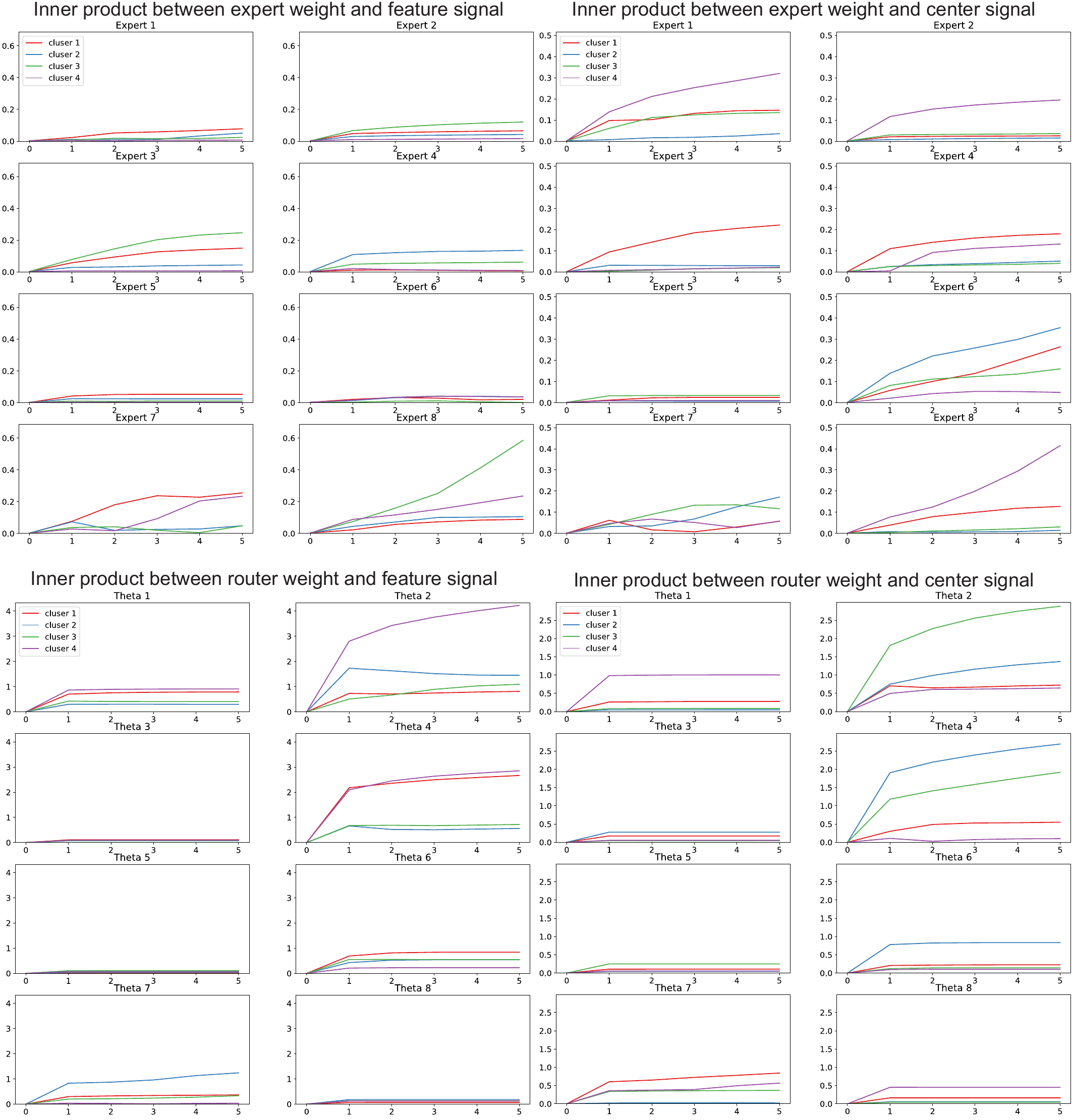}
    \caption{\textbf{Mixture of linear experts.} Growth of inner product between expert/router weight and center/feature vector.}
    \label{fig:Linear}
\end{figure}

\noindent\textbf{Verification of Theorem~\ref{thm:neg}.} In Table~\ref{tab:verification}, we provide the performances of single models with different activation functions under setting $3$, where $\alpha,\gamma \in (1,2)$ follow the same distribution. In Table~\ref{tab:verification_s1}, we further report the performances of single models with different activation functions under setting $1$ and setting $2$. Empirically, even when $\alpha$ and $\gamma$ do not share the same distribution, single models still fail. Note that, for Tables~\ref{tab:verification} and \ref{tab:verification_s1}, the numbers of filters for single models are $128$. 

\begin{table}
    \centering
    \begin{tabular}{c c c}
    \toprule
        Activation  & Optimal Accuracy ($\%$) & Test Accuracy ($\%$) \\
    \midrule
        Linear & $87.50\%$ & $74.81\%$  \\
        Cubic & $87.50\%$ & $72.69\%$  \\
        Relu & $87.50\%$ & $73.45\%$  \\
        Celu & $87.50\%$ & $76.91\%$  \\
        Gelu & $87.50\%$ & $74.01\%$  \\
        Tanh & $87.50\%$ & $74.76\%$  \\
    \bottomrule
    \addlinespace[3pt]
    \end{tabular}
    \caption{\textbf{Verification of Theorem~\ref{thm:neg} (single expert performs poorly)}. Test accuracy of single linear/nonlinear models with different activation functions. Data is generated according to Definition~\ref{def:data_distribution} with $\alpha,\gamma\in(1,2)$, $\beta\in(1,2)$ and $\sigma_p=1$.}
    \label{tab:verification}
\end{table}

\begin{table}
    \centering
    \begin{tabular}{c c c}
    \toprule
        Activation  & Setting $1$ & Setting $2$ \\
    \midrule
        Linear & $68.71 \%$ & $60.59 \%$  \\
        Cubic & $79.48 \%$ & $72.29 \%$  \\
        Relu & $72.28 \%$ & $80.12 \%$  \\
        Celu & $81.75 \%$ & $78.99 \%$  \\
        Gelu & $79.04 \%$ & $82.01 \%$  \\
        Tanh & $81.72 \%$ & $81.03 \%$  \\
    \bottomrule
    \addlinespace[3pt]
    \end{tabular}
    \caption{\textbf{Single expert performs poorly (setting 1\&2).} Test accuracy of single linear/nonlinear models with different activation functions. Data is generated according to Definition~\ref{def:data_distribution} with $\alpha\in (0.5,2)$, $\beta\in (1,2)$, $\gamma\in (0.5,3), \sigma_{p}=1$ for setting $1$. And we have $\alpha\in (0.5,2)$, $\beta\in (1,2)$, $\gamma\in (0.5,3), \sigma_{p}=1$ for setting $2$.}
    \label{tab:verification_s1}
\end{table}

\noindent\textbf{Load balancing loss.} In Table~\ref{tab:linear_load_balancing}, we present the results of linear MoE with load balancing loss and directly compare it with nonlinear MoE without load balancing loss. Load balancing loss guarantees that the experts receive similar amount of data and prevents MoE from activating only one or few experts. However, on the data distribution that we study, load balancing loss is not the key to the success of MoE: the single experts cannot perform well on the entire data distribution and must diverge to learn different labeling functions with respect to each cluster.

\begin{table}
    \centering
    \begin{tabular}{c c c}
    \toprule
        & Linear MoE with Load Balancing & Nonlinear MoE without Load Balancing \\
    \midrule
        Setting $1$ & $ 93.81 \pm 1.02$ & $\mathbf{99.46 \pm 0.55}$  \\
        Setting $2$ & $ 89.20 \pm 2.20$ & $\mathbf{98.09 \pm 1.27}$  \\
        Setting $3$ & $ 95.12\pm 0.58$ & $\mathbf{99.99 \pm 0.02}$  \\
        Setting $4$ & $ 92.50 \pm 1.55$ & $\mathbf{98.92\pm 1.18}$  \\
    \bottomrule
    \addlinespace[3pt]
    \end{tabular}
    \caption{\textbf{Load balancing loss.} We report the results for linear MoE with load balancing loss and compare them with our previous results on nonlinear MoE without load balancing loss. Over ten random experiments, we report the average test accuracy ($\%$) $\pm$ standard deviation. Setting $1$-$4$ follows the data distribution introduced above.}
    \label{tab:linear_load_balancing}
\end{table}

\subsection{Experiments on Image Data}
\begin{figure*}[!htbp]
    \centering
    \includegraphics[scale=0.6]{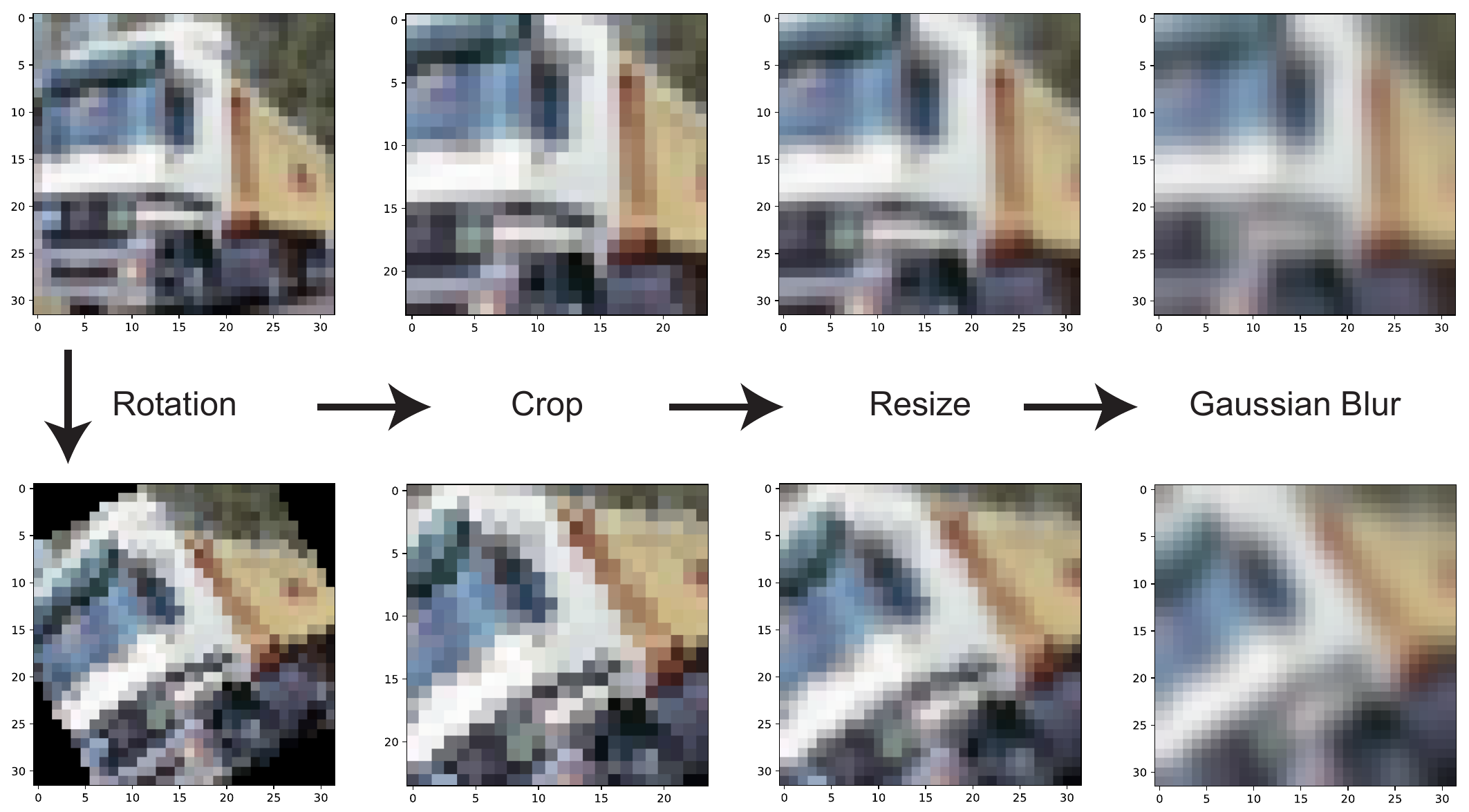}
    \caption{\textbf{Examples of the CIFAR-10-Rotate dataset.} Both the original image and the rotated image are processed in the same way, where we crop the image to $(24,24)$, resize to $(32,32)$ and apply random Gaussian blur.}
    \label{fig:Data}
\end{figure*}

\noindent\textbf{Datasets.} We consider CIFAR-10 \citep{Krizhevsky09learningmultiple} with the $10$-class classification task, which contains $50,000$ training examples and $10,000$ testing examples. For CIFAR-10-Rotate, we design a binary classification task by copying and rotating all images by $30$ degree and let the model predict if an image is rotated. In Figure~\ref{fig:Data}, we demonstrate the positive and negative examples of CIFAR-10-Rotate. Specifically, we crop the rotated images to $(24,24)$, and resize to $(32,32)$ for model architectures that are designed on image size $(32,32)$. And we further apply random Gaussian noise to all images to avoid the models taking advantage of image resolutions.

\noindent\textbf{Models.} For the simple CNN model, we consider CNN with $2$ convolutional layers, both with kernel size $3$ and ReLU activation followed by max pooling with size 2 and a fully connected layer. The number of filters of each convolutional layer is respectively $64$, $128$. 

\noindent\textbf{CIFAR-10 Setup.} For real-data experiments on CIFAR-10, we apply the commonly used transforms on CIFAR-10 before each forward pass: random horizontal flips and random crops (padding the images on all sides with $4$ pixels and randomly cropping to $(32,32)$). And as conventionally, we normalize the data by channel. We train the single CNN model with SGD of learning rate $0.01$, momentum $0.9$ and weight decay 5e-4. And we train single MobileNetV2 and single ResNet18 with SGD of learning rate $0.1$, momentum $0.9$ and weight decay 5e-4 to achieve the best performances. We train MoEs according to Algorithm~\ref{alg:GDrandominit}, with normalized gradient descent on the experts and SGD on the gating networks. Specifically, for MoE (ResNet18) and MoE (MobileNetV2), we use normalized gradient descent of learning rate $0.1$ and SGD of learning rate 1e-4, both with momentum $0.9$ and weight decay of 5e-4. For MoE (CNN), we use normalized gradient descent of learning rate $0.01$ and SGD of learning rate 1e-4, both with momentum $0.9$ and weight decay of 5e-4. We consider top-1 gating with noise and load balancing loss for MoE on both datasets, where the multiplicative coefficient of load balancing loss is set at 1e-3. All models are trained for $200$ epochs to achieve convergence. 

\noindent\textbf{CIFAR-10-Rotate Setup.} For experiments on CIFAR10-Rotate, the data is normalized by channel as the same as in CIFAR-10 before each forward pass. We train the single CNN, single MobileNetV2 and single ResNet18 by SGD with learning rate $0.01$, momentum $0.9$ and weight decay 5e-4 to achieve the best performances. And we train MoEs by Algorithm~\ref{alg:GDrandominit} with normalized gradient descent learning rate $0.01$ on the experts and with SGD of learning rate 1e-4 on the gating networks, both with momentum $0.9$ and weight decay of 5e-4. We consider top-1 gating with noise and load balancing loss for MoE on both datasets, where the multiplicative coefficient for load balancing loss is set at 1e-3. All models are trained for $50$ epochs to achieve convergence.

\noindent\textbf{Visualization.} In Figure~\ref{fig:visual_cifar}, we visualize the latent embedding learned by MoEs (ResNet18) for the 10-class classification task in CIFAR-10 as well as the binary classification task in CIFAR-10-Rotate. We visualize the data with the same label $y$ to see if cluster structures exist within each class. For CIFAR-10, we choose $y=1$ ("car"), and plot the latent embedding of data with $y=1$ using t-SNE on the left subfigure, which does not show an salient cluster structure. For CIFAR-10-Rotate, we choose $y=1$ ("rotated") and visualize the data with $y=1$ in the middle subfigure. Here, we can observe a clear clustering structure even though the class signal is not provided during training. We take a step further to investigate what is in each cluster in the right subfigure. We can observe that most of the examples in the ``frog'' class fall into one cluster, while examples of ``ship'' class mostly fall into the other cluster.

\begin{figure*}[!htbp]
    \begin{center}
    \includegraphics[scale=0.6]{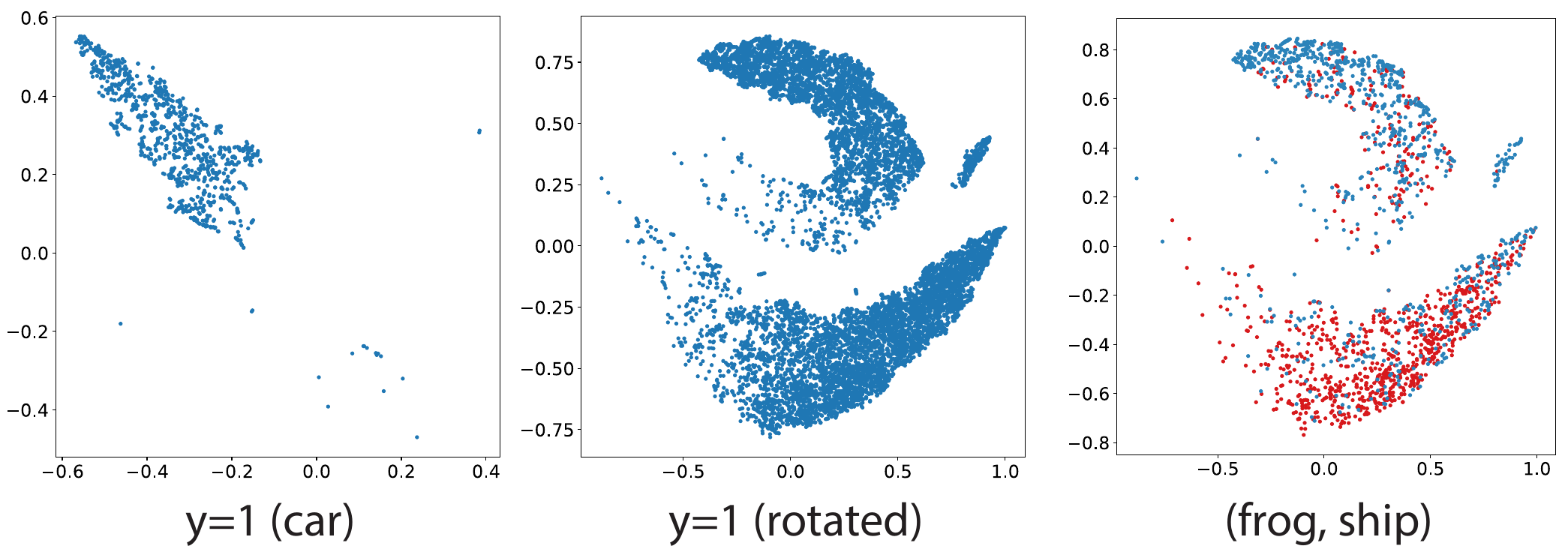}
    \end{center}
    \caption{Visualization of the latent embedding on CIFAR-10 and CIFAR-10-Rotate with fixed label $y$. The left figure denotes the visualization of CIFAR-10 when label $y$ is fixed to be $1$ (car). The central figure represents the visualization of CIFAR-10-Rotate when label $y$ is fixed to be $1$ (rotated). On the right figure, red denotes that the data is from the ship class, and blue denotes that the data is from the frog class.}
    \label{fig:visual_cifar}
\end{figure*}

\begin{figure*}[!htbp]
    \begin{center}
    \includegraphics[scale=0.25]{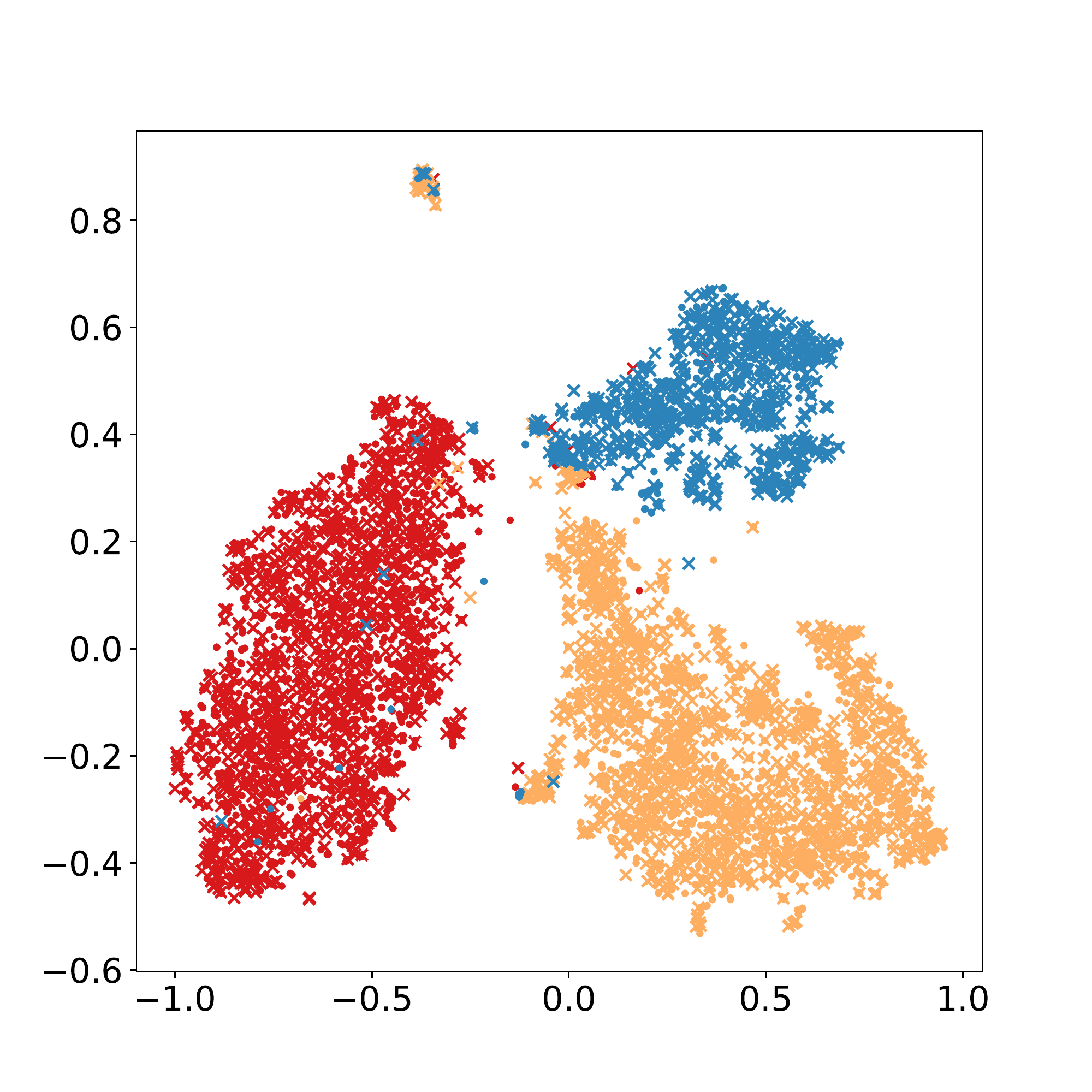}
    \end{center}
    \caption{The distribution of text embedding of the multilingual sentiment analysis dataset. The embedding is generated by the pre-trained BERT multilingual base model and visualized on 2d space using t-SNE. Each color denotes a linguistic source, including English, French, and Russian.}
    \label{fig:multilingual}
\end{figure*}

\subsection{Experiments on Language Data}

\begin{table}[]
    \centering
    \begin{tabular}{c c c}
        \toprule 
            & Single & MoE \\
        \midrule
            Accuracy & $ 74.13\%$ & $76.22\%$ \\
        \bottomrule
        \addlinespace[3pt]
    \end{tabular}
    \caption{The test accuracy of the single classifier vs. MoE classifier.}
    \label{tab:multilingual}
\end{table}

\begin{table}[]
    \centering
    \begin{tabular}{c c c c c}
        \toprule 
            & Expert $1$ & Expert $2$ & Expert $3$ & Expert $4$ \\
        \midrule
            English & $1,374$ & $3,745$ & $2,999$ & $\mathbf{31,882}$ \\
            French & $\mathbf{23,470}$ & $3,335$ & $\mathbf{13,182}$ & $13$ \\
            Russian & $833$ & $\mathbf{9,405}$ & $7,723$ & $39$ \\
        \bottomrule
        \addlinespace[3pt]
    \end{tabular}
    \caption{The final router dispatch details with regard to the linguistic source of the test data.}
    \label{tab:multilingual_dispatch}
\end{table}

Here we provide a simple example of how MoE would work for multilingual tasks. We gather multilingual sentiment analysis data from the source of English (Sentiment140 \citep{go2009twitter}) which is randomly sub-sampled to $200,000$ examples, Russian (RuReviews \citep{smetanin2019sentiment}) which contains $90,000$ examples, and French \citep{blard2020sentiment} which contains $200,000$ examples. We randomly split the dataset into $80\%$ training data and $20\%$ test data. We use a pre-trained BERT multilingual base model \citep{devlin2018bert} to generate text embedding for each text and train $1$-layer neural network with cubic activation as the single model. For MoE, we still let $M=4$ with each expert sharing the same architecture as the single model. In Figure~\ref{fig:multilingual}, we show the visualization of the text embeddings in the $2$d space via t-SNE, where each color denotes a linguistic source, with $\cdot$ representing a positive example and $\times$ representing a negative example. Data from different linguistic sources naturally form different clusters. And within each cluster, positive and negative data exist.

In Table~\ref{tab:multilingual}, we demonstrate the test accuracy of a single classifier and MoE on the multilingual sentiment analysis dataset. And in Table~\ref{tab:multilingual_dispatch}, we show the final router dispatch details of MoE to each expert with regard to the linguistic source of the text. Notably, MoE learned to distribute examples largely according to the original language.

\section{Proof of Theorem~\ref{thm:neg}}



Because we are using CNNs as experts, different ordering of the patches won't affect the value of $F(\xb)$. So for $(\xb ,y)$ drawn from $\cD$ in Definition~\ref{def:data_distribution}, we can assume that the first patch $\xb^{(1)}$ is feature signal, the second patch $\xb^{(2)}$ is cluster-center signal, the third patch $\xb^{(3)}$ is feature noise. The other patches $\xb^{(p)}, p \geq 4$ are random noises. Therefore, we can rewrite $\xb = [\alpha y\vb_{k}, \beta\cb_{k}, \gamma\epsilon\vb_{k'}, \bxi]$, where $\bxi = [\bxi_{4}, \ldots, \bxi_{P}]$ is a Gaussian matrix of size $\RR^{d\times (P-3)}$.

\begin{proof}[Proof of Theorem~\ref{thm:neg}]
Conditioned on the event that $y= -\epsilon$, points $([\alpha y\vb_{k}, \beta\cb_{k}, -\gamma y\vb_{k'}, \bxi], y)$, $\big([-\alpha y\vb_{k}, \beta\cb_{k}, \gamma y\vb_{k'}, \bxi], -y\big)$, $\big([\gamma y\vb_{k'}, \beta\cb_{k'}, -\alpha y\vb_{k}, \bxi], y\big)$, $\big([-\gamma y\vb_{k'}, \beta\cb_{k'}, \alpha y\vb_{k}, \bxi], -y\big)$ 
 follow the same distribution because $\gamma$ and $\alpha$ follow the same distribution, and $y$ and $-y$ follow the same distribution. Therefore, we have
\begin{align*}
&4\mathbb{P}\big(yF(\xb)\leq 0|\epsilon = -y\big)\\
&= \EE\bigg[\underbrace{\ind(yF([\alpha y\vb_{k}, \beta\cb_{k}, -\gamma y\vb_{k'}, \bxi])\leq 0)}_{I_{1}} + \underbrace{\ind(-yF([-\alpha y\vb_{k}, \beta\cb_{k}, \gamma y\vb_{k'}, \bxi])\leq0)}_{I_{2}}\\
&\qquad+ \underbrace{\ind(yF([\gamma y\vb_{k'}, \beta\cb_{k'}, -\alpha y\vb_{k}, \bxi])\leq0)}_{I_{3}} + \underbrace{\ind(-yF([-\gamma y\vb_{k'}, \beta\cb_{k'}, \alpha y\vb_{k}, \bxi])\leq0)\bigg]}_{I_{4}}.
\end{align*}
It is easy to verify the following fact
\begin{align*}
&\Big(yF([\alpha y\vb_{k}, \beta\cb_{k}, -\gamma y\vb_{k'}, \bxi])\Big) + \Big(-yF([-\alpha y\vb_{k}, \beta\cb_{k}, \gamma y\vb_{k'}, \bxi])\Big)\\
&\qquad + \Big(yF([\gamma y\vb_{k'}, \beta\cb_{k'}, -\alpha y\vb_{k}, \bxi])\Big) + \Big(-yF([-\gamma y\vb_{k'}, \beta\cb_{k'}, \alpha y\vb_{k}, \bxi])\Big)\\
&= \bigg(yf(\alpha y\vb_{k}) +  yf(\beta\cb_{k}) + yf(-\gamma y\vb_{k'}) + \sum_{p=4}^{P} yf(\bxi_{p})\bigg)\\
&\qquad + \bigg(-yf(-\alpha y\vb_{k}) -  yf(\beta\cb_{k}) - yf(\gamma y\vb_{k'}) - \sum_{p=4}^{P} yf(\bxi_{p})\bigg)\\
&\qquad + \bigg(yf(\gamma y\vb_{k'}) +  yf(\beta\cb_{k'}) + yf(-\alpha y\vb_{k}) + \sum_{p=4}^{P} yf(\bxi_{p})\bigg)\\
&\qquad + \bigg(-yf(-\gamma y\vb_{k'}) -  yf(\beta\cb_{k'}) - yf(\alpha y\vb_{k}) - \sum_{p=4}^{P} yf(\bxi_{p})\bigg)\\
&=0.
\end{align*}
By pigeonhole principle, at least one of $I_{1}, I_{2}, I_{3}, I_{4}$ is non-zero. This further implies that $4\mathbb{P}\big(yF(\xb)\leq 0|\epsilon = -y\big) \geq 1$. Applying $\mathbb{P}(\epsilon = - y) = 1/2$, we have that
\begin{align*}
 \mathbb{P}\big(yF(\xb)\leq 0\big) \geq \mathbb{P}\big(yF(\xb)\leq 0)|\epsilon = -y\big)\mathbb{P}(\epsilon = -y) \geq 1/8,  
\end{align*}
which completes the proof.
\end{proof}

\section{Smoothed Router}
In this section, we will show that the noise term provides a smooth transition between different routing behavior. All the results in this section is independent from our NN structure and its initialization. We first present a general version of Lemma~\ref{lm:Msmoothly} with its proof.
\begin{lemma}[Extension of Lemma~\ref{lm:Msmoothly}]\label{lm:exMsmooth}
Let $\hb, \hat{\hb} \in \RR^{M}$ to be the output of the gating network and $\{r_{m}\}_{m=1}^{M}$ to be the noise independently drawn from $\cD_{r}$. Denote $\pb, \hat{\pb} \in \RR^{M}$ to be the probability that experts get routed, i.e., $p_{m} = \mathbb{P}(\argmax_{m'\in[M]}\{h_{m'} + r_{m'}\} = m )$, $\hat{p}_{m} = \mathbb{P}(\argmax_{m'\in [M]}\{\hat{h}_{m'} + r_{m'}\} =m )$. Suppose the probability density function of $\cD_{r}$ is bounded by $\kappa$, Then we have that $\|\pb - \hat{\pb}\|_{\infty} \leq (\kappa M^{2})\cdot \|\hb - \hat{\hb}\|_{\infty}$.
\end{lemma}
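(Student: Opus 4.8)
\textbf{Proof proposal for Lemma~\ref{lm:exMsmooth}.}
The plan is to fix an index $m$ and bound $|p_m - \hat p_m|$ by interpolating between $\hb$ and $\hat\hb$ one coordinate at a time, so that only one gating value changes at each step. Write $\hb^{(0)} = \hb$ and let $\hb^{(\ell)}$ be obtained from $\hb^{(\ell-1)}$ by replacing its $\ell$-th coordinate by $\hat h_\ell$, so $\hb^{(M)} = \hat\hb$; let $p_m^{(\ell)}$ denote the corresponding routing probability. By the triangle inequality it suffices to show $|p_m^{(\ell)} - p_m^{(\ell-1)}| \le \kappa M \|\hb - \hat\hb\|_\infty$ for each $\ell$, since summing over the $M$ steps gives the factor $\kappa M^2$.

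For a single-coordinate change, say coordinate $\ell$ moves from $a$ to $a'$ with $|a-a'|\le \|\hb-\hat\hb\|_\infty =: \Delta$, I would condition on all the noise variables $\{r_{m'}: m'\ne \ell\}$ except $r_\ell$. Given those, the event $\{\argmax_{m'}\{h_{m'}+r_{m'}\}=m\}$ is determined by whether $r_\ell$ lies in a certain half-line (an interval of the form $(-\infty, c)$ or $(c,\infty)$ depending on whether $m=\ell$ or $m\ne\ell$), whose endpoint $c$ depends on the other noises and on the gating values of the changed coordinate only through $a$ (resp. $a'$). Changing $a$ to $a'$ shifts this endpoint by at most $\Delta$, so the conditional probability (an integral of the density of $r_\ell$ over that half-line) changes by at most $\kappa\Delta$, using the assumption that the density of $\cD_r$ is bounded by $\kappa$. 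Taking expectation over the remaining noises preserves this bound, giving $|p_m^{(\ell)} - p_m^{(\ell-1)}|\le \kappa\Delta$. Summing the $M$ interpolation steps yields $|p_m-\hat p_m|\le \kappa M\Delta$; but I actually want $\kappa M^2 \Delta$, so I should be less aggressive — in fact the cleaner route is to not even interpolate and instead directly bound using a union bound over ties, which naturally produces the extra factor of $M$.

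Let me restructure: the direct argument is to note that the routing decision for a fixed $m$ is an event depending on the $M$ differences $r_{m'}-r_m + h_{m'} - h_m$; perturbing $\hb$ by $\Delta$ in $\ell_\infty$ perturbs each threshold by at most $2\Delta$, and by a union bound over the $M-1$ comparisons the measure of the symmetric difference between the old and new routing events is at most $\sum_{m'\ne m}\mathbb{P}(|r_{m'}-r_m + (\text{shifted threshold})| \le 2\Delta \cdot \text{const})$. Each such term is bounded by $O(\kappa \Delta)$ because $r_{m'}-r_m$ has a density bounded by $\kappa$ (being a difference/convolution involving a $\kappa$-bounded density — actually its density is bounded by $\min$ of the two marginal sup-norms, hence by $\kappa$). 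Summing over the $\le M$ comparisons and being slightly lossy in constants gives $\le \kappa M^2 \Delta$. I would then specialize to $\cD_r = \mathrm{Unif}[0,1]$, where $\kappa = 1$, to recover Lemma~\ref{lm:Msmoothly}.

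The main obstacle I anticipate is handling the ties and the conditioning cleanly: one must argue that the routing probability is a genuine integral against a product density so that "perturb the gating values, integrate the density over a shifted region" is rigorous, and one must make sure the union bound over which competitor beats $m$ is set up so that the regions whose measures we add up actually cover the symmetric difference of the two routing events. Getting the combinatorial bookkeeping right — so that the final constant is exactly $\kappa M^2$ rather than something like $2\kappa M^2$ or $\kappa M(M-1)$ — is where I'd expect to spend the most care, though for the qualitative statement the precise constant is immaterial.
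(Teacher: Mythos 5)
Your final (union-bound) argument is correct and is essentially the paper's own proof: the paper likewise bounds $|p_m-\hat p_m|$ by the probability that the argmax decision changes, covers that event by pairwise comparisons (there, a union over pairs $m_1<m_2$ on which $|r_{m_1}-r_{m_2}-C(m_1,m_2)|\le\|\hb-\hat{\hb}\|_{\infty}$ for a suitable midpoint $C(m_1,m_2)$), and then uses the $\kappa$-bounded density — conditioning on one noise rather than convolving, which gives the same estimate — to arrive at $\kappa M^{2}\|\hb-\hat{\hb}\|_{\infty}$. One remark: you abandoned the coordinate-by-coordinate interpolation for the wrong reason. That argument is sound and in fact yields the \emph{stronger} bound $\kappa M\|\hb-\hat{\hb}\|_{\infty}$, which trivially implies the stated $\kappa M^{2}$ bound, so ``wanting'' the larger constant is never a reason to discard a sharper estimate; the only care it needs is what you already noted, namely that conditionally on the other noises the routing event is a half-line (possibly empty) in $r_{\ell}$ whose endpoint moves by at most $\|\hb-\hat{\hb}\|_{\infty}$, and that ties occur with probability zero because the noise has a density.
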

\begin{proof}
Given random variable $\{r_{m}\}_{m=1}^{M}$, let us first consider the event that $\argmax_{m}\{h_{m} + r_{m}\} \not= \argmax_{m}\{\hat{h}_{m}+ r_{m}\}$. Let $m _{1} = \argmax_{m}\{h_{m} + r_{m}\}$ and $m _{2} = \argmax_{m}\{\hat{h}_{m} + r_{m}\}$, then we have that
\begin{align}
&h_{m_{1}} + r_{m_{1}} \geq h_{m_{2}} + r_{m_{2}}, \hat{h}_{m_{2}} + r_{m_{2}} \geq \hat{h}_{m_{1}} + r_{m_{1}}, \notag
\end{align}
which implies that
\begin{align}
\hat{h}_{m_{2}} -  \hat{h}_{m_{1}} \geq r_{m_{1}} - r_{m_{2}} \geq  h_{m_{2}} -  h_{m_{1}}. \label{eq:gap}
\end{align}
Define $C(m_{1},m_{2}) = (\hat{h}_{m_{2}} - \hat{h}_{m_{1}} + h_{m_{2}} - h_{m_{1}})/2$, then 
\eqref{eq:gap} implies that 
\begin{align}
|r_{m_{1}} - r_{m_{2}} - C(m_{1},m_{2})| \leq |\hat{h}_{m_{2}} - \hat{h}_{m_{1}} - h_{m_{2}}  + h_{m_{1}}|/2 \leq \|\hat{\hb} - \hb\|_{\infty} \label{smooth10}.    
\end{align}
Therefore, we have that,
\begin{align*}
&\mathbb{P}(\argmax_{m}\{h_{m} + r_{m}\} \not= \argmax_{m}\{\hat{h}_{m}+ r_{m}\} )\\
&\leq \mathbb{P}(\exists m_{1} \not= m_{2} \in [M], \text{ s.t. } |r_{m_{1}} - r_{m_{2}} - C(m_{1}, m_{2})|  \leq  \|\hat{\hb} - \hb\|_{\infty})\\
&\leq \sum_{m_{1} < m_{2}}\mathbb{P}\big(|r_{m_{1}} - r_{m_{2}} - C(m_{1}, m_{2})|  \leq  \|\hat{\hb} - \hb\|_{\infty}\big)\\
&= \sum_{m_{1} < m_{2}}\EE\Big[\mathbb{P}\big( r_{m_{2}} + C(m_{1}, m_{2})-\|\hat{\hb} - \hb\|_{\infty} \leq r_{m_{1}}  \leq   r_{m_{2}} + C(m_{1}, m_{2})+\|\hat{\hb} - \hb\|_{\infty}\big)\Big|r_{m_{2}}\Big]\\
&\leq (\kappa M^{2})\cdot \|\hat{\hb} - \hb\|_{\infty},
\end{align*}
where the first inequality is by \eqref{smooth10}, the second inequality is by union bound and the last inequality is due to the fact that the probability density function of $r_{m_{1}}$ is bounded by $\kappa$.
Then we have that for $i \in [M]$,
\begin{align*}
|p_i - \hat{p}_i|  &\leq \bigg|\EE \bigg[\ind\big(\argmax_{m}\{\hat{h}_{m} + r_{m}\} = i\big) - \ind\big(\argmax_{m}\{h_{m} + r_{m}\} = i\big)  \bigg]\bigg|\\ 
&\leq \EE \bigg|\ind\big(\argmax_{m}\{\hat{h}_{m} + r_{m}\} = i\big) - \ind\big(\argmax_{m}\{h_{m} + r_{m}\} = i\big)  \bigg|\\
&\leq \mathbb{P}\big(\argmax_{m}\{\hat{h}_{m} + r_{m}\} \not= \argmax_{m}\{h_{m}+ r_{m}\} \big)\\
&\leq (\kappa M^{2})\cdot \|\hat{\hb} - \hb\|_{\infty},
\end{align*}
which completes the proof.
\end{proof}
\begin{remark}
A widely used choice of $\cD_{r}$ in Lemma~\ref{lm:exMsmooth} is uniform noise Unif[a, b], in which case the density function can be upper bounded by $1/(b-a)$. Another widely used choice of $\cD_{r}$ is Gaussian noise $\cN(0, \sigma_{r}^{2})$, in which case the density function can be upper bounded by $1/(\sigma_{r}\sqrt{2\pi})$. Increase the range of uniform noise or increase the variance of the Gaussian noise will result in a smaller  density function upper bound and a smoother behavior of routing. In our paper, we consider unif[0,1] for simplicity, in which case the the density function can be upper bounded by $1$ ($\kappa = 1$).
\end{remark}

The following Lemma shows that when two gate network outputs are close, the router will distribute the examples to those corresponding experts with nearly the same probability. 

\begin{lemma}\label{lm:Msmooth2}
Let $\hb \in \RR^{M}$ be the output of the gating network and $\{r_{m}\}_{m=1}^{M}$ be the noise independently drawn from Unif[0,1]. Denote the probability that experts get routed  by $\pb$, i.e., $p_{m} = \mathbb{P}(\argmax_{m'}\{h_{m'} + r_{m'}\} = m )$. Then we have that 
\begin{align*}
|p_{m} - p_{m'}| \leq M^{2}|h_{m} - h_{m'}|.
\end{align*}
\end{lemma}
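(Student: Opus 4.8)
The plan is to deduce Lemma~\ref{lm:Msmooth2} from Lemma~\ref{lm:Msmoothly} via a symmetry argument, rather than redoing the union-bound computation. First I would fix the two indices $m, m' \in [M]$ in question and build a perturbed gating vector $\hat{\hb}$ from $\hb$ by swapping its $m$-th and $m'$-th entries: set $\hat{h}_{m} = h_{m'}$, $\hat{h}_{m'} = h_{m}$, and $\hat{h}_{j} = h_{j}$ for $j \notin \{m, m'\}$. This choice is useful for two reasons: on the one hand $\|\hb - \hat{\hb}\|_{\infty} = |h_{m} - h_{m'}|$, since the only two coordinates that move each move by exactly $|h_{m} - h_{m'}|$; on the other hand, because the noises $\{r_{j}\}$ are i.i.d., relabeling two coordinates of the gating output merely relabels which expert gets selected.

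Second, I would make that last point precise through exchangeability. By definition $\hat{p}_{m}$ is the probability that $h_{m'} + r_{m} \ge h_{m} + r_{m'}$ and $h_{m'} + r_{m} \ge h_{j} + r_{j}$ for all $j \notin \{m, m'\}$ (ties have probability zero since the noise is absolutely continuous, so the $\argmax$ is a.s.\ unique). Swapping the dummy noise variables $r_{m} \leftrightarrow r_{m'}$ — a measure-preserving bijection on the probability space, since the noise coordinates are i.i.d. — turns this event into $\{\argmax_{j}\{h_{j} + r_{j}\} = m'\}$, whose probability is $p_{m'}$. Hence $\hat{p}_{m} = p_{m'}$ (and likewise $\hat{p}_{m'} = p_{m}$, $\hat{p}_{j} = p_{j}$ for the remaining $j$).

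Third, I would just invoke Lemma~\ref{lm:Msmoothly} (the Unif$[0,1]$ case, $\kappa = 1$), which gives $\|\pb - \hat{\pb}\|_{\infty} \le M^{2}\|\hb - \hat{\hb}\|_{\infty}$, and read off
\begin{align*}
|p_{m} - p_{m'}| = |p_{m} - \hat{p}_{m}| \le \|\pb - \hat{\pb}\|_{\infty} \le M^{2}\|\hb - \hat{\hb}\|_{\infty} = M^{2}|h_{m} - h_{m'}|,
\end{align*}
which is the claim.

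There is no genuinely hard step: the only part demanding a little care is the relabeling identity $\hat{p}_{m} = p_{m'}$, i.e.\ correctly checking that swapping two gating coordinates together with swapping the corresponding i.i.d.\ noise coordinates is a joint-law-preserving transformation; everything else is bookkeeping. An alternative route would be to re-run the union bound from the proof of Lemma~\ref{lm:exMsmooth} directly on the single pair $(m, m')$, but routing through the swap construction is cleaner and reuses the already-proved lemma as a black box.
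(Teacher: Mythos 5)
Your proposal is correct and follows essentially the same route as the paper: the paper's proof also constructs $\hat{\hb}$ by swapping the $m$-th and $m'$-th coordinates of $\hb$, notes $\|\hb-\hat{\hb}\|_{\infty}=|h_{m}-h_{m'}|$ and that the routing probabilities are correspondingly permuted, and then applies Lemma~\ref{lm:Msmoothly}. You merely spell out the exchangeability step that the paper treats as obvious, which is fine.
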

\begin{proof}
Construct $\hat{\hb}$ as copy of $\hb$ and permute its $m, m'$-th element. Denote the corresponding probability vector as $\hat{\pb}$. Then it is obviously that 
$|p_{m} - p_{m'}| = \|\pb - \hat{\pb}\|_{\infty}$ and $|h_{m} - h_{m'}|= \|\hat{\hb} - \hb\|_{\infty}$. Applying Lemma~\ref{lm:Msmoothly} completes the proof.
\end{proof}

The following lemma shows that the router won't route examples to the experts with small gating network outputs, which saves computation and improves the performance.

\begin{lemma}\label{lm:keeporder}
Suppose the noise $\{r_{m}\}_{m=1}^{M}$ are independently drawn from Unif[0,1] and $h_{m}(\xb; \bTheta) \leq \max_{m'}h_{m'}(\xb; \bTheta) - 1 $, example $\xb$ will not get routed to expert $m$.
\end{lemma}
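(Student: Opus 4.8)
The plan is to exhibit, for any realization of the noise (up to a probability-zero event), an expert that strictly beats expert $m$ in the perturbed gating comparison, so that $m$ can never be the $\argmax$ that determines the routing. Let $m^{\star} \in \argmax_{m'\in[M]} h_{m'}(\xb;\bTheta)$ be any index attaining the largest unperturbed gating output; by the hypothesis of the lemma, $h_{m}(\xb;\bTheta) \le h_{m^{\star}}(\xb;\bTheta) - 1$.

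The key step is simply to use the support of the noise: since each $r_{m'}$ is drawn from $\mathrm{Unif}[0,1]$, we have $r_{m} \le 1$ and $r_{m^{\star}} \ge 0$ almost surely (in fact $r_{m} < 1$ almost surely). Chaining these bounds with the hypothesis gives
\begin{align*}
h_{m}(\xb;\bTheta) + r_{m} \;\le\; h_{m}(\xb;\bTheta) + 1 \;\le\; h_{m^{\star}}(\xb;\bTheta) \;\le\; h_{m^{\star}}(\xb;\bTheta) + r_{m^{\star}},
\end{align*}
and the first inequality is strict with probability one because $r_{m}<1$ almost surely. Hence almost surely $h_{m}(\xb;\bTheta) + r_{m} < h_{m^{\star}}(\xb;\bTheta) + r_{m^{\star}}$, so $m \neq \argmax_{m'}\{h_{m'}(\xb;\bTheta) + r_{m'}\}$, i.e. $\xb$ is not routed to expert $m$.

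The only mild subtlety to mention is the boundary value $r_{m}=1$, which would yield a tie rather than a strict loss; but this event has probability zero under $\mathrm{Unif}[0,1]$, and the $\argmax$ is almost surely unique since the $r_{m'}$ are continuously distributed, so it has no effect on the routing decision. I do not expect any real obstacle here: the statement is essentially a one-line consequence of the noise being supported on an interval of length $1$, which is precisely why the gap threshold in the hypothesis is taken to be $1$.
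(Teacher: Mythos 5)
Your proposal is correct and follows essentially the same route as the paper's proof: both chain $r_m \le 1$ and $r_{m'} \ge 0$ with the gap hypothesis to get $h_m(\xb;\bTheta) + r_m \le \max_{m'} h_{m'}(\xb;\bTheta) \le \max_{m'}\{h_{m'}(\xb;\bTheta) + r_{m'}\}$. Your extra remark about the measure-zero tie event $r_m = 1$ is a minor refinement the paper glosses over, but it does not change the argument.
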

\begin{proof}
Because $h_{m}(\xb; \bTheta) \leq \max_{m'}h_{m'}(\xb; \bTheta) - 1$ implies that for any Uniform noise $\{r_{m'}\}_{m' \in [M]}$ we have that
\begin{align*}
h_{m}(\xb; \bTheta) + r_{m} \leq \max_{m'}h_{m'}(\xb; \bTheta) \leq \max_{m'}\{h_{m'}(\xb; \bTheta) + r_{m'}\},
\end{align*}
where the first inequality is by $r_{m} \leq 1$, the second inequality is by $r_{m'} \geq 0, \forall m' \in [M]$.
\end{proof}

\section{Initialization of the Model} \label{sec: initial}
Before we look into the detailed proof of Theorem~\ref{thm: MoE}, let us first discuss some basic properties of the data distribution and our MoE model. For simplicity of notation, we simplify $(\xb_{i}, y_{i}) \in \Omega_{k}$ as $i \in \Omega_{k}$.

\noindent\textbf{Training Data Set Property.}
Because we are using CNNs as experts, different ordering of the patches won't affect the value of $F(\xb)$. So for $(\xb ,y)$ drawn from $\cD$ in Definition~\ref{def:data_distribution}, we can assume that the first patch $\xb^{(1)}$ is feature signal, the second patch $\xb^{(2)}$ is cluster-center signal, the third patch $\xb^{(3)}$ is feature noise. The other patches $\xb^{(p)}, p \geq 4$ are random noises. Therefore, we can rewrite $\xb = [\alpha y\vb_{k}, \beta\cb_{k}, \gamma\epsilon\vb_{k'}, \bxi]$, where $\bxi = [\bxi_{4}, \ldots, \bxi_{P}]$ is a Gaussian matrix of size $\RR^{d\times (P-3)}$. According to the type of the feature noise, we further divide $\Omega_{k}$ into $\Omega_{k} = \cup \Omega_{k,k'}$ based on the feature noise, i.e. $\xb \in \Omega_{k,k'}$ if $\xb = [\alpha y\vb_{k},\beta \cb_{k}, \gamma \epsilon \vb_{k'}, \bxi]$. To better characterize the router training, we need to break down $\Omega_{k,k'}$ into $\Omega_{k,k'}^{+}$ and $\Omega_{k,k'}^{-}$. Denote by $\Omega_{k,k'}^{+}$ the set that $\{y_{i}=\epsilon_{i} | i \in \Omega_{k,k'}\}$, by $\Omega_{k,k'}^{-}$ the set that $\{y_{i}=-\epsilon_{i} | i \in \Omega_{k,k'}\}$.
\begin{lemma}\label{lm:data balance}
With probability at least $1 - \delta$, the following properties hold for all $k \in [K]$,
\begin{align}
\sum_{i \in \Omega_{k}}y_{i}\beta_{i}^{3} =  \tilde{O}(\sqrt{n}), \sum_{i \in \Omega_{k}}\alpha_{i}^{3} = \EE[\alpha^{3}] \cdot n/K+ \tilde{O}(\sqrt{n}), \sum_{i \in \Omega_{k}}y_{i}\epsilon_{i}\gamma_{i}^{3}  = \tilde{O}(\sqrt{n}), \label{eq:balance1}
\end{align}
\begin{align}
\sum_{i \in \Omega_{k,k'}^{+}}y_{i}\alpha_{i} = \tilde{O}(\sqrt{n}), \sum_{i \in \Omega_{k,k'}^{-}}y_{i}\alpha_{i} = \tilde{O}(\sqrt{n}), \sum_{i \in \Omega_{k,k'}^{+}}\epsilon_{i}\gamma_{i} = \tilde{O}(\sqrt{n}),  \label{eq:balance2} 
\end{align}
\begin{align}
\sum_{i \in \Omega_{k,k'}^{-}}\epsilon_{i}\gamma_{i}= \tilde{O}(\sqrt{n}), \sum_{i \in \Omega_{k}}\beta_{i} = \EE[\beta] \cdot n/K+ \tilde{O}(\sqrt{n}). \label{eq:balance3}    
\end{align}
\end{lemma}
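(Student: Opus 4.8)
Lemma~\ref{lm:data balance} is a collection of concentration statements for sums of i.i.d.\ bounded random variables indexed by a random partition of the training set into clusters. The plan is to treat each of the eight claimed bounds as an instance of the same template: identify the summand as a bounded (hence sub-Gaussian) random variable, identify its conditional mean, and apply Hoeffding's inequality together with a union bound over $k \in [K]$ (and over pairs $(k,k')$ where relevant). Since $K = \Theta(K \log K \log\log d)$-type quantities are polylogarithmic in $d$ and $n = \Omega(d)$, the union bound only costs logarithmic factors, which are absorbed into the $\tilde O(\cdot)$ notation.

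First I would set up the randomness carefully. For each training example $i$, Definition~\ref{def:data_distribution} draws $(k_i, k_i')$ uniformly among ordered distinct pairs, then $y_i, \epsilon_i$ as independent Rademachers, then $\alpha_i, \beta_i, \gamma_i$ independently from $\cD_\alpha, \cD_\beta, \cD_\gamma$; all of these are mutually independent across $i$. The event $\{i \in \Omega_k\}$ is exactly $\{k_i = k\}$, which has probability $1/K$, and membership in $\Omega_{k,k'}^{\pm}$ further conditions on $k_i' = k'$ and $y_i = \pm\epsilon_i$. I would write each target sum as $\sum_{i=1}^n Z_i$ where $Z_i$ is the summand times the relevant indicator; e.g.\ for the first bound $Z_i = y_i \beta_i^3 \ind(k_i = k)$. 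Each $Z_i$ is bounded by $C_2^3$ in absolute value by the almost-sure bound $C_1 \le \alpha,\beta,\gamma \le C_2$, and the $Z_i$ are independent across $i$.

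Next, compute the conditional expectation. For $\sum_{i\in\Omega_k} y_i\beta_i^3$: since $y_i$ is an independent Rademacher, $\EE[y_i\beta_i^3\ind(k_i=k)] = \EE[y_i]\EE[\beta_i^3]\PP(k_i=k) = 0$, so Hoeffding gives $|\sum_i Z_i| = \tilde O(\sqrt n)$ with high probability; similarly $\sum_{i\in\Omega_k} y_i\epsilon_i\gamma_i^3$ has mean zero because $\EE[y_i\epsilon_i] = 0$. For $\sum_{i\in\Omega_k}\alpha_i^3$ the mean is $\EE[\alpha^3]\cdot \PP(k_i=k)\cdot n = \EE[\alpha^3] n/K$, and Hoeffding gives the deviation $\tilde O(\sqrt n)$, which is exactly the stated form; the $\sum_{i\in\Omega_k}\beta_i$ bound is identical with $\beta$ in place of $\alpha^3$. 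For the four bounds in \eqref{eq:balance2}–\eqref{eq:balance3} the indicator also pins down $k_i'=k'$ and the sign relation between $y_i$ and $\epsilon_i$; I would note that $\PP(i\in\Omega_{k,k'}^{\pm}) = \tfrac{1}{2K(K-1)}$, and that in each case the summand still has conditional mean zero — for $\sum_{\Omega_{k,k'}^+} y_i\alpha_i$, conditioning on $i\in\Omega_{k,k'}^+$ fixes nothing about $y_i$ itself (it only ties $y_i=\epsilon_i$), so $y_i$ remains a symmetric sign independent of $\alpha_i$ and the mean vanishes; likewise for $\sum_{\Omega_{k,k'}^\pm}\epsilon_i\gamma_i$. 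Then Hoeffding plus a union bound over all $k$ and all pairs $(k,k')$ — at most $K^2$ events, i.e.\ $\polylog(d)$ many — yields all bounds simultaneously with probability $1-\delta$ after choosing the deviation radius to be $\Theta(\sqrt{n\log(K^2/\delta)}) = \tilde O(\sqrt n)$.

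The only mild subtlety — and the step I would be most careful about — is the conditioning in \eqref{eq:balance2}: one must verify that restricting to $\Omega_{k,k'}^+$ (respectively $\Omega_{k,k'}^-$) does not bias $y_i$, $\alpha_i$, or $\gamma_i$ in a way that destroys the zero-mean property. The clean way is to observe that $(\alpha_i,\beta_i,\gamma_i)$ are independent of $(k_i,k_i',y_i,\epsilon_i)$, and that on the event $\{k_i=k,k_i'=k',y_i\epsilon_i=s\}$ the pair $(y_i,\epsilon_i)$ is uniform over the two sign patterns consistent with $y_i\epsilon_i = s$, so $\EE[y_i \mid \text{event}] = 0$ and $\EE[\epsilon_i \mid \text{event}] = 0$ and these are independent of $\alpha_i,\gamma_i$. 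Hence every summand in \eqref{eq:balance2}–\eqref{eq:balance3} except $\sum_{\Omega_k}\beta_i$ is a bounded mean-zero independent sequence, and Hoeffding finishes it. I would present this as one lemma with a short per-line verification of the conditional mean, then a single invocation of Hoeffding + union bound.
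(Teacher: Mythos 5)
Your proposal is correct and follows essentially the same route as the paper: write each sum as $\sum_{i=1}^n Z_i$ with $Z_i$ the bounded summand times the cluster-membership indicator, verify the mean is zero (or $\EE[\alpha^3]n/K$, resp.\ $\EE[\beta]n/K$), and apply Hoeffding's inequality with a union bound over $k$ (and $(k,k')$). Your extra care in checking that conditioning on $\Omega_{k,k'}^{\pm}$ leaves $y_i$, $\epsilon_i$ symmetric and independent of $\alpha_i,\gamma_i$ is a detail the paper states only implicitly, but it is the same argument.
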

\begin{proof}
Fix $k \in [K]$, by Hoeffding's inequality we have that with probability at least $1 - \delta/8K$,
\begin{align*}
\sum_{i \in \Omega_{k}}y_{i}\beta_{i}^{3} =  \sum_{i=1}^{n}y_{i}\beta_{i}^{3}\ind\big((\xb_i, y_i)  \in \Omega_{k}\big) = \tilde{O}(\sqrt{n}),
\end{align*}
where the last equality is by the fact that the expectation of $y\beta^{3} \ind\big((\xb, y) \in \Omega_{k}\big)$ is zero. Fix $k \in [K]$, by Hoeffding's inequality we have that with probability at least $1 - \delta/8K$,
\begin{align*}
\sum_{i \in \Omega_{k}}\alpha_{i}^{3} =  \sum_{i=1}^{n}\alpha_{i}^{3}\ind\big((\xb_i, y_i)  \in \Omega_{k}\big) = \frac{n\EE[\alpha^{3}]}{K} + \tilde{O}(\sqrt{n}),
\end{align*}
where the last equality is by the fact that the expectation of $\alpha^{3} \ind\big((\xb, y) \in \Omega_{k}\big)$ is $\EE[\alpha^{3}]/K$. Fix $k \in [K]$, by Hoeffding's inequality we have that with probability at least $1 - \delta/8K$,
\begin{align*}
\sum_{i \in \Omega_{k}}y_{i}\epsilon_{i}\gamma_{i}^{3} =  \sum_{i=1}^{n}y_{i}\epsilon_{i}\gamma_{i}^{3}\ind\big((\xb_i, y_i)  \in \Omega_{k}\big) = \tilde{O}(\sqrt{n}),
\end{align*}
where the last equality is by the fact that the expectation of $y\epsilon\gamma^{3}\ind\big((\xb, y) \in \Omega_{k}\big)$ is zero. Now we have proved the bounds in \eqref{eq:balance1}. We can get other bounds in \eqref{eq:balance2} and \eqref{eq:balance3} similarly. Applying union bound over $[K]$ completes the proof.
\end{proof}

\begin{lemma}\label{lm:gaussian bound}
Suppose that $d = \Omega(\log(4nP/\delta))$, with probability at least $1 - \delta$,  the following inequalities hold for all $i \in [n], k \in [K], p \geq 4$,
\begin{itemize}
\item $\|\bxi_{i,p}\|_{2} = O(1)$,
\item $\la\vb_{k}, \bxi_{i,p}\ra \leq \tilde{O}(d^{-1/2})$, $\la\cb_{k}, \bxi_{i,p}\ra \leq \tilde{O}(d^{-1/2})$, $\la\bxi_{i,p},\bxi_{i',p'}\ra \leq \tilde{O}(d^{-1/2})$, $\forall (i', p') \not= (i, p)$.
\end{itemize}
\end{lemma}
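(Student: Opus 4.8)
The statement to prove is Lemma~\ref{lm:gaussian bound}, a standard concentration result for Gaussian random vectors. The plan is to apply two well-known concentration tools: a chi-squared (norm) concentration bound for each noise patch, and a Gaussian tail bound for inner products of each noise patch against fixed unit vectors and against other independent noise patches, followed by a union bound over all the (polynomially many) events.

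First I would treat the norm bounds. Each $\bxi_{i,p} \sim N(0, (\sigma_p^2/d)\Ib_d)$, so $d\|\bxi_{i,p}\|_2^2/\sigma_p^2$ is a chi-squared random variable with $d$ degrees of freedom. By the standard sub-exponential (Bernstein / Laurent--Massart) tail bound, $\mathbb{P}(\big|\|\bxi_{i,p}\|_2^2 - \sigma_p^2\big| \geq t) \leq 2\exp(-c d \min\{t^2/\sigma_p^4, t/\sigma_p^2\})$; choosing $t$ a constant multiple of $\sigma_p^2$ and using $d = \Omega(\log(nP/\delta))$ gives $\|\bxi_{i,p}\|_2 = O(1)$ with probability at least $1 - \delta/(2nP)$ for each fixed $(i,p)$.

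Next I would handle the inner products. For a fixed unit vector $\vb_k$ (or $\cb_k$), $\la \vb_k, \bxi_{i,p}\ra \sim N(0, \sigma_p^2/d)$, so by the Gaussian tail bound it is at most $\sqrt{2\sigma_p^2 \log(C nPK/\delta)/d} = \tilde{O}(d^{-1/2})$ except with probability $\delta/(C n P K)$. For the cross term $\la \bxi_{i,p}, \bxi_{i',p'}\ra$ with $(i',p')\neq(i,p)$, condition on $\bxi_{i',p'}$: then $\la \bxi_{i,p}, \bxi_{i',p'}\ra \mid \bxi_{i',p'} \sim N(0, \sigma_p^2\|\bxi_{i',p'}\|_2^2/d)$, and on the good event $\|\bxi_{i',p'}\|_2 = O(1)$ this has variance $O(1/d)$, so again the Gaussian tail bound gives $\tilde{O}(d^{-1/2})$ except with probability $\delta/(C (nP)^2)$. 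Then a union bound over all $i \in [n]$, $k \in [K]$, $p \in \{4,\dots,P\}$, and all pairs $(i,p)\neq(i',p')$ — at most $O((nP)^2 + nPK)$ events, and $K \leq$ poly$(d)$, $n,P$ are also polynomial — absorbs all the failure probabilities into $\delta$, with the logarithmic factors hidden inside the $\tilde{O}(\cdot)$ and the hypothesis $d = \Omega(\log(4nP/\delta))$ ensuring the norm bound holds.

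There is no real obstacle here; the only mild subtlety is that the cross-term bound is not unconditionally Gaussian, so one must first establish the norm event and then condition on it (or, equivalently, intersect the events), being careful that the conditioning does not break the union bound — which it does not, since we only ever condition one factor at a time and the bound $O(1)$ on $\|\bxi_{i',p'}\|_2$ is uniform. I would simply state these three concentration inequalities, note the conditioning argument for the cross term, and close with the union bound over the polynomially many events, noting that $\log K, \log n, \log P = \tilde O(1)$ factors are swallowed by the $\tilde{O}$ notation.
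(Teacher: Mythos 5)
Your proposal is correct and follows essentially the same route as the paper: concentration for the squared norm, Gaussian/sub-exponential tail bounds for the inner products, and a union bound over the polynomially many events. The only cosmetic difference is in the cross term $\la\bxi_{i,p},\bxi_{i',p'}\ra$, where the paper applies Bernstein's inequality directly to the sum of products of independent Gaussians while you condition on one patch and use its $O(1)$ norm; both yield the same $\tilde{O}(d^{-1/2})$ bound.
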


\begin{proof}[Proof of Lemma~\ref{lm:gaussian bound}] By Bernstein's inequality, with probability at least $1 - \delta / (2nP)$ we have
\begin{align*}
    \big| \| \bxi_{i,p} \|_2^2 - \sigma_{p}^{2} \big| \leq O(\sigma_{p}^{2}\sqrt{d^{-1} \log(4n P/ \delta)}).
\end{align*}
Therefore, as long as $d = \Omega( \log(4nP / \delta) )$, we have $\| \bxi_{i,p} \|_2^2 \leq 2$. Moreover, clearly $\la \bxi_{i,p}, \bxi_{i',p'} \ra$ has mean zero, $\forall (i,p)\not= (i', p')$. Then by Bernstein's inequality, with probability at least $1 - \delta / (6n^2P^{2})$ we have
\begin{align*}
    | \la \bxi_{i,p}, \bxi_{i',p'} \ra| \leq 2\sigma_{p}^{2} \sqrt{d^{-1} \log(12n^2 P^{2} / \delta)}.
\end{align*}
Similarly, $\la \vb_{k}, \bxi_{i, p}\ra$ and $\la \cb_{k}, \bxi_{i, p}\ra$ have mean zero. Then by Bernstein's inequality, with probability at least $1 - \delta/(3nPK)$ we have 
\begin{align*}
|\la \bxi_{i,p}, \vb_{k}\ra| \leq  2\sigma_{p} \sqrt{d^{-1} \log(6nPK / \delta)}, |\la \bxi_{i,p}, \cb_{k}\ra| \leq  2\sigma_{p} \sqrt{d^{-1} \log(6nPK / \delta)}. 
\end{align*}
Applying a union bound completes the proof.
\end{proof}

\noindent\textbf{MoE Initialization Property.} 

We divide the experts into $K$ sets based on the initialization.
\begin{definition}\label{Def: expert set}
Fix expert $m \in [M]$, denote $(k_{m}^{*}, j_{m}^{*}) = \argmax_{j,k}\la \vb_{k}, \wb_{m,j}^{(0)}
\ra$. Fix cluster $k \in [K]$, denote the profession experts set as $\cM_{k} = \{m| k_{m}^{*} = k\}$. 
\end{definition}

\begin{lemma}\label{lm: Mkset}
For $M \geq \Theta(K\log(K/\delta))$, $J \geq \Theta(\log(M/\delta))$, the following inequalities hold with probability at least $1 - \delta$. 
\begin{itemize}
    \item $\max_{(j,k)\not= (j_{m}^{*},k_{m}^{*})}\la  \wb_{m,j}^{(0)}, \vb_{k}\ra \leq \big(1 -  \delta/\big(3MJ^{2}K^{2})\big)\la  \wb_{m,j_{m}^{*}}^{(0)}, \vb_{k_{m}^{*}}\ra$ for all $m \in [M]$
    \item $\la  \wb_{m,j_{m}^{*}}^{(0)}, \vb_{k_{m}^{*}}\ra \geq 0.01\sigma_{0}$ for all $m \in [M]$.
    \item $|\cM_{k}| \geq 1$ for all $k 
    \in [K]$.
\end{itemize}
\end{lemma}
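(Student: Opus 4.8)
The plan is to treat the $MJK$ Gaussian inner products $g_{m,j,k} := \langle \wb_{m,j}^{(0)}, \vb_k \rangle$ as i.i.d.\ $\cN(0,\sigma_0^2)$ random variables (this uses orthonormality of the $\vb_k$'s and that $\wb_{m,j}^{(0)}$ has i.i.d.\ $\cN(0,\sigma_0^2)$ entries), and then prove each of the three bullets by a fairly standard Gaussian anti-concentration / order-statistics argument combined with union bounds over the (polynomially many in $d$, hence small) indices. Throughout I will work on the event of Lemma~\ref{lm:gaussian bound}-type Gaussian tail bounds specialized to these $MJK$ variables: with probability $1-\delta/4$ every $|g_{m,j,k}| \le \sigma_0\sqrt{2\log(100 MJK/\delta)} =: \sigma_0 B$ and (by a lower tail bound for the max of $J$ Gaussians, applied per expert and per cluster) $\max_{j,k} g_{m,j,k} \ge 0.01\sigma_0$ once $J \ge \Theta(\log(MK/\delta))$.

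For the \emph{second bullet}, note $\langle \wb_{m,j_m^*}^{(0)}, \vb_{k_m^*}\rangle = \max_{j,k} g_{m,j,k} \ge \max_{j} g_{m,j,1}$, so it suffices to show the maximum of $J$ i.i.d.\ $\cN(0,\sigma_0^2)$ variables exceeds $0.01\sigma_0$; since $\mathbb{P}(g \ge 0.01\sigma_0)$ is an absolute constant $p_0 > 0$, we get $\mathbb{P}(\max_j g_{m,j,1} < 0.01\sigma_0) \le (1-p_0)^J \le \delta/(2M)$ when $J \ge \Theta(\log(M/\delta))$; union bound over $m \in [M]$. For the \emph{first bullet}, I would condition on the value $u := g_{m,j_m^*,k_m^*}$ of the top variable; the second-largest of the remaining $MJK-1$ (conditionally) Gaussians, call it $u'$, must satisfy $u - u'$ is bounded below. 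The cleanest route: use Gaussian anti-concentration — for any two of these variables, $\mathbb{P}(|g_{m,j,k} - g_{m,j',k'}| \le \xi) \le \xi/(\sigma_0\sqrt{\pi})$ — so with probability $\ge 1-\delta/4$, after a union bound over the $\le (MJK)^2$ pairs, every pair differs by at least $\xi_0 := \delta\sigma_0/(10 (MJK)^3)$ or so; combined with the upper bound $u \le \sigma_0 B$ this forces $u'/u \le 1 - \xi_0/(\sigma_0 B) \le 1 - \delta/(3MJ^2K^2)$ after absorbing logarithmic factors into the constants (the stated exponents are generous). Since $\max_{(j,k)\neq(j_m^*,k_m^*)} g_{m,j,k} = u' \le$ that bound times $u$, the first bullet follows. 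For the \emph{third bullet}, $\{\cM_k\}_{k\in[K]}$ is a partition of $[M]$ by the ``winning cluster'' $k_m^*$; by symmetry each expert lands in $\cM_k$ with probability $1/K$ independently across $m$, so $\mathbb{P}(\cM_k = \emptyset) = (1-1/K)^M \le \delta/(2K)$ once $M \ge \Theta(K\log(K/\delta))$; union bound over $k \in [K]$.

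I expect the \emph{first bullet} to be the main obstacle, because it requires simultaneously controlling the \emph{gap} between the top two of $MJK$ Gaussians from below (anti-concentration, which degrades with the number of pairs) and the top value from above (concentration), and then packaging the ratio into the specific polynomial form $1 - \delta/(3MJ^2K^2)$; one has to check that the polynomial loss from the $(MJK)^2$-size union bound in the anti-concentration step, divided by the $O(\sigma_0\sqrt{\log(\cdots)})$ upper bound on the max, still lands inside the claimed slack — here the $\polylog(d)$ factors must be shown not to matter, which is exactly why the exponents in the statement are loose. The remaining two bullets are routine Gaussian order-statistics estimates. Finally I would collect the failure probabilities ($\delta/4$ for each of the three sub-events plus the Gaussian tail event) and apply a union bound to conclude the lemma holds with probability at least $1-\delta$.
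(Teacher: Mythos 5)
Your treatments of the second and third bullets are fine and coincide with the paper's: a per-expert lower bound on the max of i.i.d.\ $\cN(0,\sigma_0^2)$ inner products (constant success probability per trial, then $J\ge\Theta(\log(M/\delta))$ and a union bound over $m$), and the coupon-collector/symmetry argument $\mathbb{P}(\cM_k=\emptyset)=(1-1/K)^M$ for the non-emptiness of each $\cM_k$.

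The first bullet, however, is not established by your argument, and the deficit is exactly at the spot you flagged. You lower-bound the additive gap between the top two variables by pairwise anti-concentration over all $(MJK)^2$ pairs, getting a gap $\xi_0=\delta\sigma_0/(10(MJK)^3)$, and then convert to a multiplicative statement by dividing by the upper bound $u\le\sigma_0 B$ with $B=\sqrt{2\log(100MJK/\delta)}$. This yields $u'\le\big(1-\delta/(10(MJK)^3B)\big)u$, which is strictly weaker than the claimed $\big(1-\delta/(3MJ^2K^2)\big)u$: the exponents in the statement are not ``generous'' relative to your bound, they are tighter ($MJ^2K^2$ versus $M^3J^3K^3\sqrt{\log}$). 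Part of the loss is self-inflicted (the lemma only needs the top-two gap among the $JK$ variables of a \emph{fixed} expert $m$, then a union over $m$, so the pair count should be $M\cdot(JK)^2$, not $(MJK)^2$), but even after that repair you are left with a $\sqrt{\log(MJK/\delta)}$ shortfall coming from dividing the additive gap by the high-probability upper bound on the maximum; also note that conditioning on the top value and treating the remaining variables as unconditioned Gaussians is not legitimate, so the unconditional pairwise route is the only clean version of your plan, and it cannot reach the stated constant. The paper avoids this entirely with an order-statistics computation (its Lemma on Gaussian top-two values): writing the joint density of $(a^{(1)},a^{(2)})$, the event $a^{(2)}\ge(1-G)a^{(1)}$ confines $a^{(2)}$ to an interval of length $G\,a^{(1)}$ \emph{proportional to the top value}, and integrating $a^{(1)}\rho(a^{(1)})$ gives $\mathbb{P}\big(a^{(2)}\ge(1-G)a^{(1)}\big)\le G M^2$ with no appearance of $\log$ factors or any upper bound on the max; taking $G=\delta/(3MJ^2K^2)$ over the $JK$ variables of each expert and a union bound over $m$ gives the bullet exactly as stated. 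To close your gap you either need this proportional-interval trick (or an equivalent two-sided order-statistic bound), or you must weaken the lemma's constant by a $\polylog$ factor and re-verify its downstream use.
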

\begin{proof}
Recall that $\wb_{m,j} \sim \cN(0,  \sigma_{0}^{2}I_{d})$. Notice that signals  $\vb_{1},\ldots, \vb_{K}$ are orthogonal. Given fixed $m \in [M]$, we have that $\{\la  \wb_{m,j}^{(0)}, \vb_{k}\ra| j \in [J], k \in [K]\}$ are independent and individually draw from $\cN(0,\sigma_{0}^{2})$ we have that 
\begin{align*}
\mathbb{P}(\la  \wb_{m,j}^{(0)}, \vb_{k}\ra < 0.01 \sigma_{0}) < 0.9.    
\end{align*}
Therefore, we have that 
\begin{align*}
\mathbb{P}(\max_{j,k}\la  \wb_{m,j}^{(0)}, \vb_{k}\ra < 0.01 \sigma_{0}) < 0.9^{KJ}.     
\end{align*}
Therefore, as long as $J \geq \Theta(K^{-1}\log(M/\delta))$, fix $m \in [M]$ we can guarantee that with probability at least $1 - \delta/(3M)$, 
\begin{align*}
\max_{j,k}\la  \wb_{m,j}^{(0)}, \vb_{k}\ra > 0.01 \sigma_{0}.
\end{align*}
Take $G = \delta/(3MJ^{2}K^{2})$, by Lemma~\ref{lm:GaussianTop1} we have that with probability at least $1-\delta/(3M)$, 
\begin{align*}
\max_{(j,k) \not = (j_{m}^{*}, k_{m}^{*})}\la  \wb_{m,j}^{(0)}, \vb_{k}\ra \leq (1-G)\la  \wb_{m,j_{m}^{*}}^{(0)}, \vb_{k_{m}^{*}}\ra.    
\end{align*}
By the symmetric property, we have that for all $k\in[K], m \in [M]$,
\begin{align*}
\mathbb{P}(k = k_{m}^{*}) = K^{-1}.
\end{align*}
Therefore, the probability that $|\cM_{k}|$ at least include one element is as follows,
\begin{align*}
\mathbb{P}(|\cM_k| \geq 1) \geq 1 - (1 - K^{-1})^{M}.     
\end{align*}
By union bound we get that 
\begin{align*}
\mathbb{P}(|\cM_k| \geq 1, \forall k) \geq 1 - K(1 - K^{-1})^{M} \geq 1 - K\exp(-M/K) \geq 1 - \delta/3,
\end{align*}
where the last inequality is by condition $M \geq K\log(3K/\delta)$. Therefore, with probability at least $1 - \delta/3$, $|\cM_k| \geq 1, \forall k$.

Applying Union bound, we have that with probability at least $1 - \delta$,
\begin{align*}
\max_{(j,k)\not= (j_{m}^{*},k_{m}^{*})}\la  \wb_{m,j}^{(0)}, \vb_{k}\ra &\leq \big(1 -  \delta/\big(3MJ^{2}K^{2})\big)\la  \wb_{m,j_{m}^{*}}^{(0)}, \vb_{k_{m}^{*}}\ra, \\
\la  \wb_{m,j_{m}^{*}}^{(0)}, \vb_{k_{m}^{*}}\ra &\geq 0.01\sigma_{0}, \forall m \in [M],\\
|\cM_{k}| &\geq 1, \forall k \in [K].
\end{align*}
\end{proof}

\begin{lemma}\label{lm: initialvinner}
Suppose the conclusions in Lemma~\ref{lm:gaussian bound} hold, then with probability at least $1-\delta$ we have that $|\la \wb_{m,j}^{(0)}, \vb \ra| \leq \tilde{O}(\sigma_{0})$ for all $\vb \in \{\vb_{k}\}_{k\in[K]}\cup \{\cb_{k}\}_{k\in[K]} \cup \{\bxi_{i,p}\}_{i\in [n], p \in [P-3]}, m \in [M], j \in [J]$.
\end{lemma}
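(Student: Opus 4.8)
The plan is to derive Lemma~\ref{lm: initialvinner} from a standard Gaussian tail bound together with a union bound, after fixing a realization of the data for which the conclusions of Lemma~\ref{lm:gaussian bound} hold. The key structural point is that every vector $\vb$ appearing in the statement has Euclidean norm $O(1)$: the signal vectors $\vb_k$ and $\cb_k$ are unit vectors by assumption, and on the event of Lemma~\ref{lm:gaussian bound} each noise patch obeys $\|\bxi_{i,p}\|_2 = O(1)$. Since $\Wb^{(0)}$ is generated independently of the training data, conditioning on this data-measurable event leaves the entries of $\Wb^{(0)}$ i.i.d.\ $\cN(0,\sigma_0^2)$, so each $\bxi_{i,p}$ may be treated as a fixed vector of norm $O(1)$ in what follows.

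First I would fix a target vector $\vb$ from the finite collection $\{\vb_k\}_{k\in[K]}\cup\{\cb_k\}_{k\in[K]}\cup\{\bxi_{i,p}\}_{i\in[n],\,p\in[P-3]}$ and a neuron index $(m,j)\in[M]\times[J]$. Then $\la\wb_{m,j}^{(0)},\vb\ra$ is a mean-zero Gaussian with variance $\sigma_0^2\|\vb\|_2^2 = O(\sigma_0^2)$, so the Gaussian tail bound gives
\[
\mathbb{P}\big(|\la\wb_{m,j}^{(0)},\vb\ra| > t\big)\ \leq\ 2\exp\!\big(-t^2/(2\sigma_0^2\|\vb\|_2^2)\big).
\]
Next I would union-bound over the at most $\big(2K+n(P-3)\big)MJ$ triples $(\vb,m,j)$ and choose the threshold $t = C\sigma_0\sqrt{\log\!\big(nPMJ/\delta\big)}$ for a sufficiently large absolute constant $C$; under the parameter choices of Theorem~\ref{thm: MoE} (where $K,M,J$ are at most polylogarithmic-type quantities and $n=\poly(d)$), this threshold is $\tilde O(\sigma_0)$, which is exactly the asserted bound. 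This yields the statement with probability at least $1-\delta$ over $\Wb^{(0)}$, conditioned on the data event of Lemma~\ref{lm:gaussian bound}.

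I do not expect a genuine obstacle here; the statement is a routine initialization estimate. The only two points that require a little care are (i) making explicit that $\Wb^{(0)}$ is independent of $\{\bxi_{i,p}\}$, so that conditioning on the event of Lemma~\ref{lm:gaussian bound} does not disturb the Gaussianity of $\la\wb_{m,j}^{(0)},\vb\ra$, and (ii) confirming that the $\sqrt{\log(nPMJ/\delta)}$ factor coming from the union bound is absorbed into $\tilde O(\cdot)$ under the stated scalings of $n,M,J,K$. Both are immediate.
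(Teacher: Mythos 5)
Your proposal is correct and follows essentially the same route as the paper's proof: condition on the event of Lemma~\ref{lm:gaussian bound} so each $\vb$ has norm $O(1)$, observe $\la\wb_{m,j}^{(0)},\vb\ra\sim\cN(0,\sigma_0^2\|\vb\|_2^2)$, apply a Gaussian tail bound at level $\tilde O(\sigma_0)$, and union-bound over the $O(nPMJ)$ choices of $(\vb,m,j)$. Your explicit remarks on the independence of $\Wb^{(0)}$ from the data and on absorbing the logarithmic factor into $\tilde O(\cdot)$ are fine but add nothing beyond what the paper's one-line argument already implicitly uses.
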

\begin{proof}
Fix $\vb \in \{\vb_{k}\}_{k\in[K]}\cup \{\cb_{k}\}_{k\in[K]} \cup \{\bxi_{i,p}\}_{i\in [n], p \in [P-3]}, m \in [M], j \in [J]$, we have that $\la \wb_{m,j}^{(0)}, \vb \ra \sim \cN(0, \sigma_{0}^{2}\|\vb\|_{2}^{2})$ and $\|\vb\|_{2} = O(1)$. Therefore, with probability at least $1 - \delta/(nPMJ)$ we have that $|\la \wb_{m,j}^{(0)}, \vb \ra| \leq \tilde{O}(\sigma_{0})$. Applying union bound completes the proof.
\end{proof}

\section{Proof of Theorem~\ref{thm: MoE}}\label{appendix:main theory}
In this section we always assume that the conditions in Theorem~\ref{thm: MoE} holds. It is easy to show that all the conclusions in this section~\ref{sec: initial} hold with probability at least $1 - O(1/\log d)$. The results in this section hold when all the conclusions in Section~\ref{sec: initial} hold. For simplicity of notation, we simplify $(\xb_{i}, y_{i}) \in \Omega_{k,k'}$ as $i \in \Omega_{k,k'}$, and $\ell'(y_{i}\pi_{m_{i,t}}(\xb_{i};\bTheta^{(t)})f_{m_{i,t}}(\xb_{i};  \Wb^{(t)}))$ as $\ell'_{i,t}$.

Recall that at iteration $t$, data $\xb_{i}$ is routed to the expert $m_{i,t}$. Here $m_{i,t}$ should be interpreted as a random variable. The gradient of MoE model at iteration $t$ can thus be computed as follows
\begin{align}
\nabla_{\btheta_{m}} \cL^{(t)}
    & =
    \frac{1}{n} \sum_{i,p}\ind(m_{i,t} =m)\ell'_{i,t}
    \pi_{m_{i,t}}(\xb_i; \bTheta^{(t)})(1-\pi_{m_{i,t}}(\xb_i; \bTheta^{(t)}))y_{i}f_{m_{i,t}}(\xb_{i};  \Wb^{(t)}) \xb_{i}^{(p)}\notag\\
    &\qquad - \frac{1}{n} \sum_{i,p}\ind(m_{i,t} \not=m)\ell'_{i,t}\pi_{m_{i,t}}(\xb_i; \bTheta^{(t)})\pi_{m}(\xb_i; \bTheta^{(t)})y_{i}f_{m_{i,t}}(\xb_{i};  \Wb^{(t)}) \xb_{i}^{(p)}\notag\\
    &= \frac{1}{n} \sum_{i,p}\ind(m_{i,t} =m)\ell'_{i,t}
    \pi_{m_{i,t}}(\xb_i; \bTheta^{(t)})y_{i}f_{m_{i,t}}(\xb_{i};  \Wb^{(t)}) \xb_{i}^{(p)}\notag\\
    &\qquad - \frac{1}{n} \sum_{i,p}\ell'_{i,t}
    \pi_{m_{i,t}}(\xb_i; \bTheta^{(t)})\pi_{m}(\xb_i; \bTheta^{(t)})y_{i}f_{m_{i,t}}(\xb_{i};  \Wb^{(t)}) \xb_{i}^{(p)},\label{gradient-router}\\
   \nabla_{\wb_{m,j}} \cL^{(t)}
    & =
    \frac{1}{n} \sum_{i,p}\ind(m_{i,t} =m)\ell'_{i,t}
    \pi_{m}(\xb_i; \bTheta^{(t)})y_{i}\sigma'(\la \wb_{m,j}^{(t)}, \xb_{i}^{(p)}\ra) \xb_{i}^{(p)}\label{gradient-expert}.
\end{align}

Following lemma shows implicit regularity in the gating network training.
\begin{lemma}\label{lm: zero mean}
For all $t \geq 0$, we have that $\sum_{m=1}^{M}\nabla_{\btheta_{m}}\cL^{(t)} = \zero$ and thus $\sum_{m}\btheta_{m}^{(t)} = \sum_{m}\btheta_{m}^{(0)}$. In particular, when $\bTheta$ is zero initialized, then $\sum_{m}\btheta_{m}^{(t)} = 0$
\end{lemma}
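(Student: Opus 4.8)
The plan is to prove the first identity by directly summing the router-gradient formula \eqref{gradient-router} over the expert index $m$, and then to propagate that identity through the gating update \eqref{eq:theta-update} by a one-line induction. The two structural facts that make everything work are: (i) the soft-max gate values satisfy $\sum_{m=1}^{M}\pi_m(\xb_i;\bTheta^{(t)}) = 1$ for every $i$, and (ii) for each fixed example $i$ exactly one expert is selected at iteration $t$, so $\sum_{m=1}^{M}\ind(m_{i,t}=m)=1$. Everything below is evaluated at the realized dispatch $\{m_{i,t}\}_{i\in[n]}$ and the realized router noise $\rb^{(t)}$, so the argument is an identity for each realization and no expectation over the noise is needed.

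First I would take the second (equivalent) form of \eqref{gradient-router} and sum over $m$:
\begin{align*}
\sum_{m=1}^{M}\nabla_{\btheta_{m}}\cL^{(t)}
&= \frac{1}{n}\sum_{i,p}\Big(\textstyle\sum_{m=1}^{M}\ind(m_{i,t}=m)\Big)\ell'_{i,t}\pi_{m_{i,t}}(\xb_i;\bTheta^{(t)})y_i f_{m_{i,t}}(\xb_i;\Wb^{(t)})\xb_i^{(p)}\\
&\quad - \frac{1}{n}\sum_{i,p}\Big(\textstyle\sum_{m=1}^{M}\pi_{m}(\xb_i;\bTheta^{(t)})\Big)\ell'_{i,t}\pi_{m_{i,t}}(\xb_i;\bTheta^{(t)})y_i f_{m_{i,t}}(\xb_i;\Wb^{(t)})\xb_i^{(p)}.
\end{align*}
Both parenthesized sums equal $1$ by facts (i) and (ii), so the two terms are identical and cancel, giving $\sum_{m=1}^{M}\nabla_{\btheta_{m}}\cL^{(t)}=\zero$. (Using the first form of \eqref{gradient-router} instead, the same conclusion follows from $\sum_{m\ne m_{i,t}}\pi_m = 1-\pi_{m_{i,t}}$.)

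Next I would plug this into the update $\btheta_m^{(t+1)}=\btheta_m^{(t)}-\eta_r\nabla_{\btheta_m}\cL^{(t)}$ from \eqref{eq:theta-update} and sum over $m$, obtaining $\sum_m\btheta_m^{(t+1)}=\sum_m\btheta_m^{(t)}-\eta_r\sum_m\nabla_{\btheta_m}\cL^{(t)}=\sum_m\btheta_m^{(t)}$. A trivial induction on $t$ then yields $\sum_m\btheta_m^{(t)}=\sum_m\btheta_m^{(0)}$ for all $t\ge 0$, and the zero initialization $\bTheta^{(0)}=\zero$ from Algorithm~\ref{alg:GDrandominit} makes the right-hand side $\zero$. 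There is no genuine obstacle here — the only thing worth stating explicitly is that the cancellation holds pointwise in the router randomness, so the invariant is preserved deterministically along the trajectory rather than only in expectation.
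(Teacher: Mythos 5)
Your proposal is correct and matches the paper's own argument: both sum the second form of \eqref{gradient-router} over $m$, use $\sum_{m}\ind(m_{i,t}=m)=1$ and $\sum_{m}\pi_{m}(\xb_i;\bTheta^{(t)})=1$ to cancel the two terms, and then conclude via the update rule \eqref{eq:theta-update} with zero initialization. Your explicit remark that the cancellation holds for each realization of the router noise is a fine clarification but not a departure from the paper's proof.
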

\begin{proof}
We first write out the gradient of $\btheta_{m}$ for all $m \in [M]$, 
\begin{align*}
\nabla_{\btheta_{m}} \cL^{(t)}
    & =
    \frac{1}{n} \sum_{i\in [n],p \in [P]}\ind(m_{i,t} =m)\ell'_{i,t}
    \pi_{m_{i,t}}(\xb_i; \bTheta^{(t)})y_{i}f_{m_{i,t}}(\xb_{i};  \Wb^{(t)}) \xb_{i}^{(p)}\\
    &\qquad - \frac{1}{n} \sum_{i \in [n],p \in [P]}\ell'_{i,t}
    \pi_{m_{i,t}}(\xb_i; \bTheta^{(t)})\pi_{m}(\xb_i; \bTheta^{(t)})y_{i}f_{m_{i,t}}(\xb_{i}; \Wb^{(t)}) \xb_{i}^{(p)}.
\end{align*}
Take summation from $m = 1$ to $m = M$, then we have
\begin{align*}
\sum_{m=1}^{M}\nabla_{\btheta_{m}} \cL^{(t)}&=
    \frac{1}{n}\sum_{i\in [n],p \in [P]}\ell'_{i,t}
    \pi_{m_{i,t}}(\xb_i; \bTheta^{(t)})y_{i}f_{m_{i,t}}(\xb_{i};  \Wb^{(t)}) \xb_{i}^{(p)}\\
    &\qquad - \frac{1}{n} \sum_{i\in [n],p \in [P]}\ell'_{i,t}
    \pi_{m_{i,t}}(\xb_i; \bTheta^{(t)})y_{i}f_{m_{i,t}}(\xb_{i}, \Wb^{(t)}) \xb_{i}^{(p)}\\
    &= 0.
\end{align*}
\end{proof}

Notice that the gradient at iteration $t$ in \eqref{gradient-router} and  \eqref{gradient-expert} is depend on the random variable $m_{i,t}$, the following lemma shows that it can be approximated by its expectation.
\begin{lemma}\label{lm:easy}
With probability at least $1 - 1/d$, for all the vector $\vb \in \{\vb_{k}\}_{k \in [K]} \cup \{\cb_{k}\}_{k \in [K]}$, $m \in [M]$, $j \in [J]$, we have the following equations hold
$|\la \nabla_{\btheta_{m}} \cL^{(t)} , \vb\ra - \EE[\la \nabla_{\btheta_{m}} \cL^{(t)} , \vb\ra]| = \tilde{O}(n^{-1/2}(\sigma_{0}+\eta t)^{3})$, $|\la\nabla_{\wb_{m,j}} \cL^{(t)},\vb\ra - \EE[\la\nabla_{\wb_{m,j}} \cL^{(t)},\vb\ra]| = \tilde{O}(n^{-1/2}(\sigma_{0}+\eta t)^{2})$, for all $t \leq d^{100}$. Here $\EE[\la\nabla_{\wb_{m,j}} \cL^{(t)},\vb\ra]$ and $\EE[\la \nabla_{\btheta_{m}} \cL^{(t)} , \vb\ra]$ can be computed as follows,

\begin{align*}
\EE[\la \nabla_{\btheta_{m}} \cL^{(t)} , \vb\ra] &= \frac{1}{n} \sum_{i,p}\mathbb{P}(m_{i,t} = m)\ell'_{i,t}
    \pi_{m}(\xb_i; \bTheta^{(t)})y_{i}f_{m}(\xb_{i};  \Wb^{(t)}) \la\xb_{i}^{(p)}, \vb \ra\\
    &\qquad - \frac{1}{n} \sum_{i,p,m'}\mathbb{P}(m_{i,t} = m')\ell'_{i,t}
    \pi_{m'}(\xb_i; \bTheta^{(t)})\pi_{m}(\xb_i; \bTheta^{(t)})y_{i}f_{m'}(\xb_{i};  \Wb^{(t)}) \la\xb_{i}^{(p)}, \vb\ra\\
   \EE[\la\nabla_{\wb_{m,j}} \cL^{(t)},\vb\ra] &=  \frac{1}{n} \sum_{i,p}\mathbb{P}(m_{i,t} =m)\ell'_{i,t}
    \pi_{m}(\xb_i; \bTheta^{(t)})y_{i}\sigma'(\la \wb_{m,j}^{(t)}, \xb_{i}^{(p)}\ra) \la\xb_{i}^{(p)}, \vb\ra.
\end{align*}
\end{lemma}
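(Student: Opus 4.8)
The key structural observation is that, once we condition on everything except the current routing noise $\rb^{(t)}$ — i.e.\ on the $\sigma$-algebra $\mathcal{F}_t$ generated by the data $S$, the initialization $\Wb^{(0)}$, and the past noises $\rb^{(0)},\dots,\rb^{(t-1)}$, so that $\bTheta^{(t)},\Wb^{(t)}$ are all $\mathcal{F}_t$-measurable — the only randomness left in $\nabla_{\btheta_m}\cL^{(t)}$ and $\nabla_{\wb_{m,j}}\cL^{(t)}$ lives in $\rb^{(t)}$, whose columns $\{r^{(t)}_{m',i}\}_{m'\in[M]}$ are independent across $i$. Reading off \eqref{gradient-router} and \eqref{gradient-expert}, each of $\la\nabla_{\btheta_m}\cL^{(t)},\vb\ra$ and $\la\nabla_{\wb_{m,j}}\cL^{(t)},\vb\ra$ has the form $\tfrac1n\sum_{i=1}^n Z_i$, where on the event $\{m_{i,t}=m'\}$ the summand $Z_i$ is a deterministic (given $\mathcal{F}_t$) function of $m'$ — in particular $\ell'_{i,t}=\ell'\big(y_i\pi_{m'}(\xb_i;\bTheta^{(t)})f_{m'}(\xb_i;\Wb^{(t)})\big)$ on that event — so $Z_i$ depends only on $\{r^{(t)}_{m',i}\}_{m'}$, and the $Z_i$ are independent across $i$. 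Taking $\EE[\,\cdot\mid\mathcal{F}_t]$ and using $\EE[\ind(m_{i,t}=m')\mid\mathcal{F}_t]=\mathbb{P}(m_{i,t}=m')$ reproduces exactly the stated formulas for $\EE[\la\nabla_{\btheta_m}\cL^{(t)},\vb\ra]$ and $\EE[\la\nabla_{\wb_{m,j}}\cL^{(t)},\vb\ra]$ (with the understanding that $\ell'_{i,t}$ there denotes the value $\ell'(y_i\pi_m(\xb_i;\bTheta^{(t)})f_m(\xb_i;\Wb^{(t)}))$ it takes on $\{m_{i,t}=m\}$, resp.\ with $m'$ in place of $m$ in the second term).

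Next I would establish the deterministic a priori bound $|\la\wb_{m,j}^{(t)},\xb_i^{(p)}\ra|\le\tilde O(\sigma_0+\eta t)$ for all $i\in[n]$, $p\in[P]$, $m\in[M]$, $j\in[J]$. Every patch has $\|\xb_i^{(p)}\|_2=O(1)$ — unit signals $\vb_k,\cb_k$ scaled by the $O(1)$ variables $\alpha,\beta,\gamma$, or a Gaussian patch with $\|\bxi_{i,p}\|_2=O(1)$ by Lemma~\ref{lm:gaussian bound} — and the normalized update \eqref{eq:w-update} satisfies $\|\Wb_m^{(t+1)}-\Wb_m^{(t)}\|_F=\eta$, hence $\|\wb_{m,j}^{(t)}-\wb_{m,j}^{(0)}\|_2\le\eta t$ and $|\la\wb_{m,j}^{(t)}-\wb_{m,j}^{(0)},\xb_i^{(p)}\ra|\le O(\eta t)$; combined with $|\la\wb_{m,j}^{(0)},\xb_i^{(p)}\ra|\le\tilde O(\sigma_0)$ from Lemma~\ref{lm: initialvinner} this gives the claim. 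Consequently $|f_m(\xb_i;\Wb^{(t)})|\le JP\cdot\tilde O((\sigma_0+\eta t)^3)=\tilde O((\sigma_0+\eta t)^3)$ and $|\sigma'(\la\wb_{m,j}^{(t)},\xb_i^{(p)}\ra)|=3\la\wb_{m,j}^{(t)},\xb_i^{(p)}\ra^2=\tilde O((\sigma_0+\eta t)^2)$; together with $|\ell'|\le1$, $\pi_m\le1$, $|\la\xb_i^{(p)},\vb\ra|=O(1)$, and $|P|=O(1)$, this yields $|Z_i|\le\tilde O((\sigma_0+\eta t)^3)$ in the router case and $|Z_i|\le\tilde O((\sigma_0+\eta t)^2)$ in the expert case, uniformly over all indices and over the value of $\mathcal{F}_t$.

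Finally, conditionally on $\mathcal{F}_t$, Hoeffding's inequality applied to $\tfrac1n\sum_i\big(Z_i-\EE[Z_i\mid\mathcal{F}_t]\big)$ with these ranges gives, for any $s>0$, a failure probability $\exp\!\big(-\Omega(ns^2/(\sigma_0+\eta t)^6)\big)$ (router) or $\exp\!\big(-\Omega(ns^2/(\sigma_0+\eta t)^4)\big)$ (expert); since the bound is uniform in $\mathcal{F}_t$ it holds unconditionally with the same probability. Using $n=\Omega(d)$ and choosing $s=\tilde O(n^{-1/2}(\sigma_0+\eta t)^3)$, resp.\ $s=\tilde O(n^{-1/2}(\sigma_0+\eta t)^2)$, with a large enough $\polylog(d)$ prefactor drives each failure probability below $d^{-200}$, and a union bound over $\vb\in\{\vb_k\}_{k\in[K]}\cup\{\cb_k\}_{k\in[K]}$ ($2K$ choices), $m\in[M]$, $j\in[J]$, and $t\le d^{100}$ leaves total failure probability $\le 1/d$. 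The probabilistic content here is a routine conditional Hoeffding estimate; the step that needs the most care is the deterministic a priori bound on $|\la\wb_{m,j}^{(t)},\xb_i^{(p)}\ra|$ (hence on $|f_m|$ and $|\sigma'(\cdot)|$), which is what propagates the $(\sigma_0+\eta t)$ scaling through the gradient and relies crucially on the update \eqref{eq:w-update} being \emph{normalized} so the weights move by at most $\eta$ per step, together with checking that conditioning on $\mathcal{F}_t$ genuinely isolates the independent-across-$i$ noises in $\rb^{(t)}$.
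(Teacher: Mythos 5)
Your proposal is correct and follows essentially the same route as the paper's proof: the deterministic bound $|\la\wb_{m,j}^{(t)},\xb_i^{(p)}\ra|\le\tilde O(\sigma_0+\eta t)$ from normalized gradient descent plus Lemma~\ref{lm: initialvinner}, then viewing the gradient inner product as $\tfrac1n\sum_i A_i$ with summands independent across $i$ (through the fresh routing noise $\rb^{(t)}$), bounded by $\tilde O((\sigma_0+\eta t)^2)$ resp.\ $\tilde O((\sigma_0+\eta t)^3)$, followed by Hoeffding and a union bound over $\vb$, $m$, $j$, $t\le d^{100}$. Your explicit conditioning on $\mathcal{F}_t$ and the remark that the $\EE[\cdot]$ in the statement is the expectation over the current routing noise only are just careful spellings-out of what the paper leaves implicit.
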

\begin{proof}
Because we are using normalized gradient descent, $\|\wb_{m,j}^{(t)} - \wb_{m,j}^{(0)}\|_{2} \leq O(\eta t)$ and thus by Lemma~\ref{lm: initialvinner} we have $|\la \wb_{m,j}^{(t)}, \xb_{i}^{(p)}\ra| \leq \tilde{O}(\sigma_{0} + \eta t)$. Therefore, 
\begin{align*}
\la\nabla_{\wb_{m,j}} \cL^{(t)}, \vb \ra
    & =
    \frac{1}{n} \sum_{i}\underbrace{\sum_{p}\ind(m_{i,t} =m)\ell'_{i,t}
    \pi_{m}(\xb_i; \bTheta^{(t)})y_{i}\sigma'(\la \wb_{m,j}^{(t)}, \xb_{i}^{(p)}\ra) \la\xb_{i}^{(p)}, \vb\ra}_{A_{i}},    
\end{align*}
where $A_{i}$ are independent random variables with $|A_{i}| \leq \tilde{O}\big((\sigma_{0} + \eta t)^{2}\big)$. Applying Hoeffding's inequality gives that with probability at least $1 - 1/(4d^{101}MJK)$ we have that $|\la\nabla_{\wb_{m,j}} \cL^{(t)},\vb\ra - \EE[\la\nabla_{\wb_{m,j}} \cL^{(t)},\vb\ra]| = \tilde{O}(n^{-1/2}(\sigma_{0}+\eta t)^{2})$.  Applying union bound gives that with probability at least $1 - 1/(2d)$,  $|\la\nabla_{\wb_{m,j}} \cL^{(t)},\vb\ra - \EE[\la\nabla_{\wb_{m,j}} \cL^{(t)},\vb\ra]| = \tilde{O}(n^{-1/2}(\sigma_{0}+\eta t)^{2}), \forall m\in[M], j\in[J], t\leq d^{100}$. Similarly, we can prove $|\la \nabla_{\btheta_{m}} \cL^{(t)} , \vb\ra - \EE[\la \nabla_{\btheta_{m}} \cL^{(t)} , \vb\ra]| = \tilde{O}(n^{-1/2}(\sigma_{0}+\eta t)^{3})$.

\end{proof}


\subsection{Exploration Stage}

Denote $T_{1} = \lfloor\eta^{-1}\sigma_{0}^{0.5}\rfloor$.
The first stage ends when $t = T_{1}$. During the first stage training, we can prove that the neural network parameter maintains the following property.
\begin{lemma}\label{lm:stage1fbound}
For all $t\leq T_{1}$, we have the following properties hold,
\begin{itemize}
\item $
\la \wb_{m,j}^{(t)}, \vb_{k}\ra = O(\sigma_{0}^{0.5}),\la \wb_{m,j}^{(t)}, \cb_{k}\ra = O(\sigma_{0}^{0.5}), \la \wb_{m,j}^{(t)}, \bxi_{i,p}\ra = \tilde{O}(\sigma_{0}^{0.5})$,
\item $f_{m}(\xb_{i}; \Wb^{(t)}) = \tilde{O}(\sigma_{0}^{1.5})$,
\item $|\ell'_{i,t} - 1/2 | \leq \tilde{O}(\sigma_{0}^{1.5})$,
\item $\|\btheta^{(t)}_{m}\|_{2} \leq \tilde{O}(\sigma_{0}^{1.5})$, 
\item $\|\hb(\xb_i; \bTheta^{(t)})\|_{\infty} = \tilde{O}(\sigma_{0}^{1.5})$, $\pi_{m}(\xb_i; \bTheta^{(t)}) = M^{-1} + \tilde{O}(\sigma_{0}^{1.5})$,
\end{itemize}
for all $m \in [M], k\in[k], i \in [n], p \geq 4$.
\end{lemma}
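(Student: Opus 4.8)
The plan is to establish the five bullets in the order listed, as a forward cascade in which each bound feeds into the next; the only place a genuine induction on $t$ is needed is to control the accumulation of the router updates in the last two bullets. Throughout this stage only crude norm bounds are required — the refined decomposition of the gradients into signal and noise components, and the concentration of the routing random variables $m_{i,t}$ via Lemma~\ref{lm:easy}, are deferred to the later lemmas of the exploration stage. The first bullet is essentially free from the normalized update \eqref{eq:w-update}: rescaling $\nabla_{\Wb_m}\cL^{(t)}$ to unit Frobenius norm moves every filter by at most $\eta$ per step, so $\|\wb_{m,j}^{(t)}-\wb_{m,j}^{(0)}\|_2\le \eta t\le \eta T_1\le\sigma_0^{0.5}$ for all $t\le T_1$. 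Combining this with the initialization bound $|\la\wb_{m,j}^{(0)},\vb\ra|\le\tilde O(\sigma_0)$ from Lemma~\ref{lm: initialvinner} and the norm bounds $\|\vb_k\|_2=\|\cb_k\|_2=1$, $\|\bxi_{i,p}\|_2=O(1)$ from Lemma~\ref{lm:gaussian bound}, Cauchy--Schwarz and the triangle inequality give $\la\wb_{m,j}^{(t)},\vb_k\ra,\la\wb_{m,j}^{(t)},\cb_k\ra=O(\sigma_0^{0.5})$ and $\la\wb_{m,j}^{(t)},\bxi_{i,p}\ra=\tilde O(\sigma_0^{0.5})$. (One should record along the way that $\nabla_{\Wb_m}\cL^{(t)}\ne\zero$ in this stage, so the normalized update is well defined, which holds because the $\sigma'(\cdot)$ and $\ell'(\cdot)$ factors in \eqref{gradient-expert} do not all vanish.)

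Given the first bullet, the second and third follow immediately. For any patch $\xb_i^{(p)}\in\{y_i\alpha_i\vb_k,\ \beta_i\cb_k,\ \gamma_i\epsilon_i\vb_{k'},\ \bxi_{i,p}\}$ we have $|\la\wb_{m,j}^{(t)},\xb_i^{(p)}\ra|=\tilde O(\sigma_0^{0.5})$ using $\alpha,\beta,\gamma=O(1)$, so each cubic term satisfies $|\sigma(\la\wb_{m,j}^{(t)},\xb_i^{(p)}\ra)|=\tilde O(\sigma_0^{1.5})$, and summing the $JP=\polylog(d)$ such terms yields $f_m(\xb_i;\Wb^{(t)})=\tilde O(\sigma_0^{1.5})$. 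The third bullet then follows because $\ell'$ is $\tfrac14$-Lipschitz with $|\ell'(0)|=\tfrac12$ and $|\pi_{m_{i,t}}(\xb_i;\bTheta^{(t)})|\le 1$, so the argument $y_i\pi_{m_{i,t}}f_{m_{i,t}}$ has magnitude $\tilde O(\sigma_0^{1.5})$ and hence $|\ell'_{i,t}-\tfrac12|=\tilde O(\sigma_0^{1.5})$.

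For the last two bullets I would induct on $t$. Assuming the bounds hold through step $t$, every summand of the router gradient \eqref{gradient-router} carries a factor $f_{m_{i,t}}(\xb_i;\Wb^{(t)})=\tilde O(\sigma_0^{1.5})$ together with $|\ell'_{i,t}|\le 1$, route-gate values in $[0,1]$, and $\|\xb_i^{(p)}\|_2=O(1)$; summing over the $n$ examples and $P$ patches and dividing by $n$ gives $\|\nabla_{\btheta_m}\cL^{(t)}\|_2=\tilde O(\sigma_0^{1.5})$. Telescoping \eqref{eq:theta-update}, $\|\btheta_m^{(t+1)}\|_2\le \eta_r\sum_{s\le t}\|\nabla_{\btheta_m}\cL^{(s)}\|_2\le T_1\,\eta_r\,\tilde O(\sigma_0^{1.5})=\eta^{-1}\sigma_0^{0.5}\cdot\Theta(M^2)\eta\cdot\tilde O(\sigma_0^{1.5})=\Theta(M^2)\,\tilde O(\sigma_0^{2})=\tilde O(\sigma_0^{1.5})$, the last step using $M^2\sigma_0^{0.5}=\tilde O(1)$ since $\sigma_0^{0.5}\le d^{-0.005}$ while $M^2$ is polylogarithmic. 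Finally $h_m(\xb_i;\bTheta^{(t)})=\sum_p\la\btheta_m^{(t)},\xb_i^{(p)}\ra$ has magnitude at most $P\|\btheta_m^{(t)}\|_2\max_p\|\xb_i^{(p)}\|_2=\tilde O(\sigma_0^{1.5})$, so $\|\hb(\xb_i;\bTheta^{(t)})\|_\infty=\tilde O(\sigma_0^{1.5})$, and since all coordinates of $\hb$ lie within $\tilde O(\sigma_0^{1.5})$ of $0$, expanding the softmax gives $\pi_m(\xb_i;\bTheta^{(t)})=M^{-1}+\tilde O(\sigma_0^{1.5})$.

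The main obstacle here is not any individual estimate — each is a one-line triangle-inequality bound — but keeping the bookkeeping honest: one must check that the cascade is well founded (the expert-weight drift bound is unconditional, so the first three bullets need no induction and only the $\btheta$-accumulation does), and one must verify that the $M$, $J$, $P$ and logarithmic factors are genuinely swallowed by the polynomial slack between $\sigma_0^{0.5}$ and the $\tilde O(\cdot)$ error, in particular that the $\Theta(M^2)$ amplification coming from $\eta_r=\Theta(M^2)\eta$ is absorbed. Everything else is mechanical.
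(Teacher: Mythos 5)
Your proposal is correct and follows essentially the same route as the paper's proof: bound the expert-weight drift by $\eta T_1=O(\sigma_0^{0.5})$ via the normalized update, cascade this into the $f_m$ and $\ell'$ bounds, then bound $\|\nabla_{\btheta_m}\cL^{(t)}\|_2=\tilde O(\sigma_0^{1.5})$ and telescope with $\eta_r=\Theta(M^2)\eta$ to control $\btheta_m$, $\hb$, and the softmax. The only cosmetic difference is your induction framing for the router bounds, which is unnecessary (the gradient bound at each step depends only on the unconditional $f_m$ bound, so plain telescoping suffices, as in the paper) but harmless.
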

\begin{proof}
The first property is obvious since $\|\wb_{m,j}^{(t)} - \wb_{m,j}^{(0)}\|_{2} \leq O(\eta T_{1}) = O(\sigma_{0}^{0.5})$ and thus
\begin{align*}
|f_{m}(\xb_{i}; \Wb^{(t)})| \leq \sum_{p \in [P]}\sum_{j \in [J]}|\sigma(\la \wb_{m,j}^{(t)}, \xb_{i}^{(p)}\ra)| = \tilde{O}(\sigma_{0}^{1.5}).    
\end{align*}
Then we show that the loss derivative is close to $1/2$ during this stage.

Let $s = y_{i}\pi_{m_{i,t}}(\xb_{i};\bTheta^{(t)})f_{m_{i,t}}(\xb_{i},\Wb^{(t)})$, then we have that $|s| = \tilde{O}(\sigma_{0}^{1.5})$ and 
\begin{align*}
\bigg|\ell'_{i,t} - \frac{1}{2}\bigg| = \bigg|\frac{1}{e^{s}+1} - 1/2 \bigg| \overset{(i)}{\leq} |s| = \tilde{O}(\sigma_{0}^{1.5}),
\end{align*}
where $(i)$ can be proved by considering $|s|\leq 1$ and $|s| > 1$.

Now we prove the fourth bullet in Lemma~\ref{lm:stage1fbound}. Because $|f_{m}| = \tilde{O}(\sigma_{0}^{1.5})$, we can upper bound the gradient of the gating network by 

\begin{align*}
\|\nabla_{\btheta_{m}} \cL^{(t)}\|_{2}
    & =
    \bigg\|\frac{1}{n} \sum_{i,p}\ind(m_{i,t} =m)\ell'_{i,t}
    \pi_{m_{i,t}}(\xb_i; \bTheta^{(t)})y_{i}f_{m_{i,t}}(\xb_{i};  \Wb^{(t)}) \xb_{i}^{(p)}\\
    &\qquad - \frac{1}{n} \sum_{i,p}\ell'_{i,t}
    \pi_{m_{i,t}}(\xb_i; \bTheta^{(t)})\pi_{m}(\xb_i; \bTheta^{(t)})y_{i}f_{m_{i,t}}(\xb_{i};  \Wb^{(t)}) \xb_{i}^{(p)}\bigg\|_{2}.\\
    &= \tilde{O}(\sigma_{0}^{1.5}),
\end{align*}
where the last inequality is due to $|\ell_{i,t}'| \leq 1$, $\pi_{m}, \pi_{m_{i,t}} \in [0,1]$ and $\|\xb_{i}^{(p)}\|_{2} = O(1)$.
This further implies that 
\begin{align*}
\|\btheta_{m}^{(t)}\|_{2} = \|\btheta_{m}^{(t)} - \btheta_{m}^{(0)}\|_{2}
\leq  \tilde{O}(\sigma_{0}^{1.5} t \eta_{r}) = \tilde{O}(\sigma_{0}^{1.5}),
\end{align*}
where the last inequality is by $\eta_{r} = \Theta(M^{2})\eta$. The proof of $\|\hb(\xb_i; \bTheta^{(t)})\|_{\infty} \leq O(\sigma_{0}^{1.5})$ and  $\pi_{m}(\xb_i; \bTheta^{(t)}) = M^{-1} + O(\sigma_{0}^{1.5})$ are straight forward given $\|\btheta_{m}^{(t)}\|_{2} = \tilde{O}(\sigma_{0}^{1.5})$.
\end{proof}

We will first investigate the property of the router.
\begin{lemma}\label{eq:concentration}
$\max_{m \in [M]}|\mathbb{P}(m_{i,t} = m) - 1/M| = \tilde{O}(\sigma_{0}^{1.5})$ for all $t \leq T_{1}$, $i \in [n]$ and $m \in [M]$.
\end{lemma}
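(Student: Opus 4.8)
The statement should follow almost immediately by combining the smoothing lemma (Lemma~\ref{lm:Msmoothly}) with the a priori control on the gating outputs during the exploration stage (Lemma~\ref{lm:stage1fbound}) and the exact symmetry of the router at initialization. The plan is therefore to (i) observe that at $t=0$ the router is exactly uniform, (ii) invoke Lemma~\ref{lm:stage1fbound} to bound how far the gate outputs have moved by time $t\le T_1$, and (iii) convert that bound into a bound on the change of the routing probabilities via Lemma~\ref{lm:Msmoothly}.

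In detail: work on the good event on which the conclusions of Section~\ref{sec: initial} and of Lemma~\ref{lm:stage1fbound} hold. Since $\bTheta^{(0)}=\zero$ we have $\hb(\xb_i;\bTheta^{(0)})=\zero$, and since the perturbations $\{r_{m,i}^{(t)}\}$ are i.i.d.\ $\mathrm{Unif}[0,1]$, by symmetry $\mathbb{P}(m_{i,0}=m)=\mathbb{P}(\argmax_{m'} r_{m',i}^{(0)}=m)=1/M$ for every $m\in[M]$. Next, fix $t\le T_1$ and $i\in[n]$; Lemma~\ref{lm:stage1fbound} gives $\|\hb(\xb_i;\bTheta^{(t)})\|_\infty=\tilde{O}(\sigma_0^{1.5})$. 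Now regard $\big(\mathbb{P}(m_{i,t}=m)\big)_{m\in[M]}$ as the routing-probability vector $\pb$ associated (through $\mathrm{Unif}[0,1]$ noise) with the gate output $\hb(\xb_i;\bTheta^{(t)})$, and $\big(\mathbb{P}(m_{i,0}=m)\big)_{m}=(1/M)\mathbf{1}$ as the one associated with $\hb(\xb_i;\bTheta^{(0)})=\zero$. Lemma~\ref{lm:Msmoothly} then yields
\[
\max_{m\in[M]}\Big|\mathbb{P}(m_{i,t}=m)-\tfrac1M\Big|\ \le\ M^2\,\big\|\hb(\xb_i;\bTheta^{(t)})-\hb(\xb_i;\bTheta^{(0)})\big\|_\infty\ =\ M^2\cdot\tilde{O}(\sigma_0^{1.5})\ =\ \tilde{O}(\sigma_0^{1.5}),
\]
where in the last step we used that $M=\Theta(K\log K\log\log d)=\polylog(d)$, so the factor $M^2$ is absorbed into $\tilde{O}(\cdot)$. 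Since Lemma~\ref{lm:stage1fbound} already holds uniformly over $i\in[n]$ and $t\le T_1$, no further union bound is needed.

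One point worth spelling out: throughout, $\mathbb{P}(m_{i,t}=m)$ is understood conditionally on the history (equivalently on $\rb^{(0)},\dots,\rb^{(t-1)}$, hence on $\bTheta^{(t)}$), so that the only randomness in $m_{i,t}$ is the fresh noise $\rb^{(t)}$; on the good event the deterministic bound of Lemma~\ref{lm:stage1fbound} applies to the realized $\bTheta^{(t)}$, which is exactly the input Lemma~\ref{lm:Msmoothly} requires. There is essentially no hard step here — all the difficulty has already been discharged in Lemma~\ref{lm:stage1fbound} (the router parameters stay of size $\tilde{O}(\sigma_0^{1.5})$) and Lemma~\ref{lm:Msmoothly} (Lipschitz dependence of the smoothed router on its inputs). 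The only thing to double-check is the absence of circularity: the bound $\|\btheta_m^{(t)}\|_2=\tilde{O}(\sigma_0^{1.5})$ in Lemma~\ref{lm:stage1fbound} comes from the crude estimate $\|\nabla_{\btheta_m}\cL^{(t)}\|_2=\tilde{O}(\sigma_0^{1.5})$, which uses only $|f_m|=\tilde{O}(\sigma_0^{1.5})$, $|\ell'_{i,t}|\le1$, $\pi_m\in[0,1]$ and $\|\xb_i^{(p)}\|_2=O(1)$, and in particular does not presuppose that the routing probabilities are near-uniform; hence the argument is legitimate.
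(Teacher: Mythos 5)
Your proposal is correct and follows essentially the same route as the paper: the paper's proof also invokes Lemma~\ref{lm:stage1fbound} for $\|\hb(\xb_i;\bTheta^{(t)})\|_\infty=\tilde{O}(\sigma_0^{1.5})$ and then applies Lemma~\ref{lm:Msmoothly} against the zero gate output (whose routing probabilities are exactly $1/M$ by symmetry), absorbing the $M^2$ factor into $\tilde{O}(\cdot)$. Your extra remarks on conditioning on the history and on non-circularity are sound clarifications of the same argument.
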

\begin{proof}
By Lemma~\ref{lm:stage1fbound} we have that $\|\hb(\xb_{i}; \bTheta^{(t)})\|_{\infty} \leq  \tilde{O}(\sigma_{0}^{1.5})$. Lemma~\ref{lm:Msmoothly} further implies that 
\begin{align*}
\max_{m \in [M]}|\mathbb{P}(m_{i,t} = m) - 1/M| = \tilde{O}(\sigma_{0}^{1.5}).   
\end{align*}

\end{proof}

 


\begin{lemma}\label{lm:gradientinner}
We have following gradient update rules hold for the experts, 
\begin{align*}
 \la \nabla_{\wb_{m,j}} \cL^{(t)}, \vb_{k} \ra
    &= -\frac{\EE[\alpha^{3}] + \tilde{O}(d^{-0.005})}{2KM^{2}}
   \sigma'(\la \wb_{m,j}^{(t)}, \vb_{k}\ra) + \tilde{O}(\sigma_{0}^{2.5}), \\
   \la \nabla_{\wb_{m,j}} \cL^{(t)}, \cb_{k} \ra
    &= \tilde{O}(d^{-0.005})\sigma'(\la \wb_{m,j}^{(t)}, \cb_{k}\ra) + \tilde{O}(\sigma_{0}^{2.5}),\\
 \la \nabla_{\wb_{m,j}} \cL^{(t)}, \bxi_{i,p} \ra
    &=  \tilde{O}(d^{-0.005})\sigma'(\la \wb_{m,j}^{(t)}, \bxi_{i,p}\ra) + \tilde{O}(\sigma_{0}^{2.5})    
\end{align*}
for all $t \leq T_{1}, j \in [J], k \in [K], m\in [M], p \geq 4$.
Besides, we have the following gradient norm upper bound holds 
\begin{align*}
\|\nabla_{\wb_{m,j}} \cL^{(t)}\|_{2} &\leq \sum_{k \in [K]}  \frac{\EE[\alpha^{3}] + \tilde{O}(d^{-0.005})}{2KM^{2}}
   \sigma'(\la \wb_{m,j}^{(t)}, \vb_{k}\ra) + \sum_{k \in [K]}\tilde{O}(d^{-0.005})\sigma'(\la \wb_{m,j}^{(t)}, \cb_{k}\ra)\\
   &\qquad + \sum_{i \in [n],p \geq 4}\tilde{O}(d^{-0.005})\sigma'(\la \wb_{m,j}^{(t)}, \bxi_{i,p}\ra) + \tilde{O}(\sigma_{0}^{2.5})
\end{align*}
for all $t \leq T_{1}, j \in [J], m\in [M]$.
\end{lemma}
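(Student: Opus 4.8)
The plan is to evaluate each inner product $\la\nabla_{\wb_{m,j}}\cL^{(t)},\vb\ra$ by first replacing it with its expectation over the routing noise (Lemma~\ref{lm:easy}), and then splitting the $(i,p)$-sum in \eqref{gradient-expert} according to which of the four patch types $\xb_i^{(p)}$ is: feature signal $\alpha_iy_i\vb_{k_i}$, cluster-center signal $\beta_i\cb_{k_i}$, feature noise $\gamma_i\epsilon_i\vb_{k_i'}$, or random noise $\bxi_{i,p}$. The single algebraic fact that makes this clean is that for $\sigma(z)=z^3$ we have $\sigma'(cz)=c^2\sigma'(z)$; hence whenever $\xb_i^{(p)}$ is a scalar multiple of a signal vector $\vb$, the factor $\sigma'(\la\wb_{m,j}^{(t)},\xb_i^{(p)}\ra)$ pulls the scalar out squared and leaves exactly $\sigma'(\la\wb_{m,j}^{(t)},\vb\ra)$, which is why the right-hand sides of the lemma are multiples of $\sigma'(\la\wb_{m,j}^{(t)},\cdot\ra)$.

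For $\la\nabla_{\wb_{m,j}}\cL^{(t)},\vb_k\ra$, only the feature-signal patches of examples in $\Omega_k$, the feature-noise patches of examples with $k_i'=k$, and the random-noise patches have nonzero overlap with $\vb_k$ (cluster-center patches are orthogonal to $\vb_k$). Into the expectation formula of Lemma~\ref{lm:easy} I would substitute $\mathbb{P}(m_{i,t}=m)=M^{-1}+\tilde{O}(\sigma_0^{1.5})$ (Lemma~\ref{eq:concentration}), $\pi_m(\xb_i;\bTheta^{(t)})=M^{-1}+\tilde{O}(\sigma_0^{1.5})$ and $\ell'_{i,t}=-\tfrac12+\tilde{O}(\sigma_0^{1.5})$ (Lemma~\ref{lm:stage1fbound}, using $\ell'<0$), so that the common scalar factor becomes $-\tfrac{1}{2M^2}+\tilde{O}(\sigma_0^{1.5})$. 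The feature-signal patches then contribute $-\tfrac{1}{2M^2 n}\big(\sum_{i\in\Omega_k}\alpha_i^3\big)\sigma'(\la\wb_{m,j}^{(t)},\vb_k\ra)$, which via $\sum_{i\in\Omega_k}\alpha_i^3=\EE[\alpha^3]n/K+\tilde{O}(\sqrt{n})$ (Lemma~\ref{lm:data balance}) equals the claimed $-\tfrac{\EE[\alpha^3]}{2KM^2}\sigma'(\la\wb_{m,j}^{(t)},\vb_k\ra)$ up to lower-order terms; the feature-noise patches contribute the zero-mean quantity $\sum_{i:k_i'=k}y_i\epsilon_i\gamma_i^3=\tilde{O}(\sqrt{n})$ (Hoeffding, as in Lemma~\ref{lm:data balance}), negligible after dividing by $n$; and the random-noise patches contribute at most $\tilde{O}(\sigma_0 d^{-1/2})$ since $|\la\bxi_{i,p},\vb_k\ra|=\tilde{O}(d^{-1/2})$ and $\sigma'(\la\wb_{m,j}^{(t)},\bxi_{i,p}\ra)=\tilde{O}(\sigma_0)$ (Lemmas~\ref{lm:gaussian bound}, \ref{lm:stage1fbound}). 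I would run the same argument for $\cb_k$ (only cluster-center patches of $\Omega_k$ survive, with zero-mean coefficient $\sum_{i\in\Omega_k}y_i\beta_i^3=\tilde{O}(\sqrt n)$, so no main term) and for a fixed $\bxi_{i,p}$ (the only non-negligible contribution is the diagonal term with $\la\bxi_{i,p},\bxi_{i,p}\ra=\|\bxi_{i,p}\|_2^2=O(1)$, giving $\tilde{O}(n^{-1})\sigma'(\la\wb_{m,j}^{(t)},\bxi_{i,p}\ra)$ after dividing by $n$; every other patch overlaps $\bxi_{i,p}$ only at scale $\tilde{O}(d^{-1/2})$).

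The remaining work is to check that all error terms fall into the two stated buckets, $\tilde{O}(d^{-0.005})\,\sigma'(\cdot)$ (multiplicative) and $\tilde{O}(\sigma_0^{2.5})$ (additive). Here I would use that $t\le T_1=\lfloor\eta^{-1}\sigma_0^{0.5}\rfloor$ gives $\eta t\le\sigma_0^{0.5}$ and $(\sigma_0+\eta t)^2=O(\sigma_0)$, so Lemma~\ref{lm:easy}'s concentration error is $\tilde{O}(n^{-1/2}\sigma_0)$; together with $n=\Omega(d)$ and $\sigma_0\in[d^{-1/3},d^{-0.01}]$, which give $n^{-1/2}\le d^{-1/2}\le\sigma_0^{1.5}$, one obtains $n^{-1/2}\sigma_0\le\sigma_0^{2.5}$, $\sigma_0 d^{-1/2}\le\sigma_0^{2.5}$, $n^{-1}\le d^{-1}\le d^{-0.005}$, and $\sigma_0^{1.5}\le d^{-0.015}\le d^{-0.005}$; substituting, the three gradient identities follow. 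For the gradient-norm bound I would use that, since each patch is \emph{exactly} a multiple of some $\vb_k$, a multiple of some $\cb_k$, or a $\bxi_{i,p}$, \eqref{gradient-expert} decomposes exactly as $\nabla_{\wb_{m,j}}\cL^{(t)}=\sum_k a_k\vb_k+\sum_k b_k\cb_k+\sum_{i,p\ge4}c_{i,p}\bxi_{i,p}$, so the triangle inequality with $\|\vb_k\|_2=\|\cb_k\|_2=1$, $\|\bxi_{i,p}\|_2=O(1)$ gives $\|\nabla_{\wb_{m,j}}\cL^{(t)}\|_2\le\sum_k|a_k|+\sum_k|b_k|+O(1)\sum_{i,p\ge4}|c_{i,p}|$; since $a_k=\la\nabla_{\wb_{m,j}}\cL^{(t)},\vb_k\ra$ and $b_k=\la\nabla_{\wb_{m,j}}\cL^{(t)},\cb_k\ra$ up to an $\tilde{O}(\sigma_0^{2.5})$ correction coming from the $\tilde{O}(d^{-1/2})$ signal-noise overlaps, the first two sums are controlled by the inner-product estimates above, while $c_{i,p}=\tfrac1n\ind(m_{i,t}=m)\ell'_{i,t}\pi_m(\xb_i;\bTheta^{(t)})y_i\sigma'(\la\wb_{m,j}^{(t)},\bxi_{i,p}\ra)$ obeys $|c_{i,p}|\le\tilde{O}(n^{-1})|\sigma'(\la\wb_{m,j}^{(t)},\bxi_{i,p}\ra)|\le\tilde{O}(d^{-0.005})|\sigma'(\la\wb_{m,j}^{(t)},\bxi_{i,p}\ra)|$.

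I expect the main obstacle to be bookkeeping rather than any single deep step. Two points need care: (i) one must keep $\ind(m_{i,t}=m)$ inside the expectation of Lemma~\ref{lm:easy} and use $\mathbb{P}(m_{i,t}=m)\approx1/M$ — crudely bounding $\ind(m_{i,t}=m)\le1$ would only yield a $1/M$ prefactor, not the required $1/M^2$; and (ii) one must verify, uniformly over all $t\le T_1$ and all $m,j,k$, that every residual — the $\tilde{O}(\sqrt n)$ data-balance fluctuations, the $\tilde{O}(\sigma_0^{1.5})$ slack in $\mathbb{P}(m_{i,t}=m),\pi_m,\ell'_{i,t}$, the $\tilde{O}(d^{-1/2})$ Gaussian overlaps, and the Lemma~\ref{lm:easy} concentration error — really lands inside $\tilde{O}(d^{-0.005})\sigma'(\cdot)$ or $\tilde{O}(\sigma_0^{2.5})$ for the stated parameter ranges.
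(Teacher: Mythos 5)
Your proposal is correct and follows essentially the same route as the paper's proof: approximate each inner product by its expectation via Lemma~\ref{lm:easy}, split the sum by patch type, insert $\mathbb{P}(m_{i,t}=m),\pi_m\approx 1/M$ and $|\ell'_{i,t}|\approx 1/2$ from Lemmas~\ref{eq:concentration} and~\ref{lm:stage1fbound}, invoke the data-balance and Gaussian-overlap bounds (Lemmas~\ref{lm:data balance}, \ref{lm:gaussian bound}), and absorb all residuals using $n=\Omega(d)$ and $\sigma_0\in[d^{-1/3},d^{-0.01}]$ exactly as the paper does. The only cosmetic deviation is in the norm bound, where you expand the gradient exactly in the patch directions and apply the triangle inequality, whereas the paper projects onto the span of $\{\vb_k\}\cup\{\cb_k\}$ and bounds the two components separately; the two arguments are equivalent in substance.
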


\begin{proof}
The experts gradient can be computed as follows,

\begin{align*}
   \nabla_{\wb_{m,j}} \cL^{(t)}
    & =
    \frac{1}{n} \sum_{i \in [n],p \in [P]}\ind(m_{i,t} =m)\ell'_{i,t}f_{m}(\xb_{i};  \Wb^{(t)})
    \pi_{m}(\xb_i; \bTheta^{(t)})y_{i}\sigma'(\la \wb_{m,j}^{(t)}, \xb_{i}^{(p)}\ra) \xb_{i}^{(p)}.
\end{align*}
We first compute the inner product $ \la \nabla_{\wb_{m,j}} \cL^{(t)}, \cb_{k} \ra$. By Lemma~\ref{lm:easy}, we have that $|\la \nabla_{\wb_{m,j}} \cL^{(t)}, \cb_{k} \ra - \EE[\la \nabla_{\wb_{m,j}} \cL^{(t)}, \cb_{k}\ra]| = \tilde{O}(n^{-1/2}\sigma_{0}) \leq  \tilde{O}(\sigma_{0}^{2.5})$.
\begin{align*}
\EE[\la \nabla_{\wb_{m,j}} \cL^{(t)}, \cb_{k}\ra] 
    &= -\frac{1}{n}\sum_{i\in \Omega_{k}}\mathbb{P}(m_{i,t}=m)\ell'_{i,t}\pi_{m}(\xb_i; \bTheta^{(t)})\sigma'(\la \wb_{m,j}^{(t)}, \cb_{k}\ra)y_{i}\beta_{i}^{3}\|\cb_{k}\|_{2}^{2}\\
    &\qquad-\frac{1}{n}\sum_{i \in [n], p\geq 4}\mathbb{P}(m_{i,t}=m)\ell'_{i,t}\pi_{m}(\xb_i; \bTheta^{(t)})\sigma'(\la \wb_{m,j}^{(t)}, \bxi_{i,p}\ra)y_{i}\la \cb_{k}, \bxi_{i,p}\ra \\
    &=  \bigg [-\frac{1}{2nM}\sum_{i \in \Omega_{k}}y_{i}\beta_{i}^{3}\mathbb{P}(m_{i,t}=m) + \tilde{O}(\sigma_{0}^{1.5})\bigg] \sigma'(\la \wb_{m,j}^{(t)}, \cb_{k}\ra) +  \tilde{O}(\sigma_{0}^{2.5})\\
    &= \tilde{O}(n^{-1/2} + \sigma_{0}^{1.5}) \sigma'(\la \wb_{m,j}^{(t)}, \cb_{k}\ra) +  \tilde{O}(\sigma_{0}^{2.5})\\
    &= \tilde{O}(d^{-0.005})\sigma'(\la \wb_{m,j}^{(t)}, \cb_{k}\ra) + \tilde{O}(\sigma_{0}^{2.5})
\end{align*}
where the second equality is due to Lemma~\ref{lm:stage1fbound} and \ref{lm:gaussian bound}, the third equality is due to Lemma~\ref{eq:concentration}, the last equality is by the choice of $n$ and $\sigma_{0}$. Next we compute the inner product $\la \nabla_{\wb_{m,j}}\cL, \vb_{k}\ra$. By Lemma~\ref{lm:easy}, we have that $|\la \nabla_{\wb_{m,j}} \cL^{(t)}, \vb_{k} \ra - \EE[\la \nabla_{\wb_{m,j}} \cL^{(t)}, \vb_{k}\ra]| = \tilde{O}(n^{-1/2}\sigma_{0}) \leq  \tilde{O}(\sigma_{0}^{2.5})$.
\begin{align*}
 \EE[\la \nabla_{\wb_{m,j}} \cL^{(t)}, \vb_{k} \ra]
    &= -\frac{1}{n}\sum_{i\in \Omega_{k}}\mathbb{P}(m_{i,t}=m)\ell'_{i,t}\pi_{m}(\xb_i; \bTheta^{(t)})\sigma'(\la \wb_{m,j}^{(t)}, \vb_{k}\ra)\alpha_{i}^{3}\|\vb_{k}\|_{2}^{2}\\
    &\qquad -\frac{1}{n}\sum_{k'\not= k}\sum_{i\in \Omega_{k',k}}\mathbb{P}(m_{i,t}=m)\ell'_{i,t}
    \pi_{m}(\xb_{i}; \btheta^{(t)})\sigma'(\la \wb_{m,j}^{(t)}, \vb_{k}\ra)\gamma_{i}^{3}y_{i}\epsilon_{i}\|\vb_{k}\|_{2}^{2}\\   
    &\qquad -\frac{1}{n}\sum_{i\in[n],p\geq 4}\mathbb{P}(m_{i,t}=m)\ell'_{i,t}\pi_{m}(\xb_i; \bTheta^{(t)})\sigma'(\la \wb_{m,j}^{(t)}, \bxi_{i,p}\ra)y_{i}\la \vb_{k}, \bxi_{i,p}\ra\\
    &= \bigg[-\frac{1}{2nM}\sum_{i\in \Omega_{k}}\mathbb{P}(m_{i,t} = m)\alpha_{i}^{3} - \frac{1}{2nM}\sum_{i\in \Omega_{k',k}}\mathbb{P}(m_{i,t} = m)\gamma_{i}^{3}y_{i}\epsilon_{i} + O(\sigma_{0}^{1.5})\bigg]\cdot\\
    &\qquad \sigma'(\la \wb_{m,j}^{(t)}, \cb_{k}\ra) + \tilde{O}(\sigma_{0}^{2.5})\\
    &= \big(\EE[\alpha^{3}] + \tilde{O}(n^{-1/2} + \sigma_{0}^{1.5})\big)\sigma'(\la \wb_{m,j}^{(t)}, \vb_{k}\ra)+ \tilde{O}(\sigma_{0}^{2.5})\\
    &=  \bigg(\frac{\EE[\alpha^{3}]}{2KM^{2}} + \tilde{O}(d^{-0.005})\bigg)\sigma'(\la \wb_{m,j}^{(t)}, \vb_{k}\ra) + \tilde{O}(\sigma_{0}^{2.5})
\end{align*}
where the second equality is due to Lemma~\ref{lm:stage1fbound} and \ref{lm:gaussian bound}, the third equality is due to Lemma~\ref{eq:concentration}, the last equality is by the choice of $n$ and $\sigma_{0}$. Finally we compute the inner product $\la \nabla_{\wb_{m,j}}\cL, \bxi_{i,p} \ra$ as follows

\begin{align*}
 \la \nabla_{\wb_{m,j}} \cL^{(t)}, \bxi_{i,p} \ra
    &= -\frac{1}{n}\ind(m_{i,t}=m)\ell'_{i,t}
    \pi_{m}(\xb_{i}; \btheta^{(t)})\sigma'(\la\wb_{m,j}^{(t)}, \bxi_{i,p}\ra)\|\bxi_{i,p}\|_{2}^{2} + \tilde{O}(\sigma_{0}d^{-1/2})\\
    &= \tilde{O}\bigg(\frac{\|\bxi_{i,p}\|_{2}^{2}}{n}\bigg)\sigma'(\la\wb_{m,j}^{(t)}, \bxi_{i,p}\ra) + \tilde{O}(\sigma_{0}d^{-1/2})\\
    &= \tilde{O}(d^{-0.005})\sigma'(\la\wb_{m,j}^{(t)}, \bxi_{i,p}\ra) + \tilde{O}(\sigma_{0}^{2.5}),
\end{align*}
where the first equality is due to Lemma~\ref{lm:gaussian bound}, second equality is due to $|\ell_{i,t}'| \leq 1, \pi_{m} \in [0,1]$ and the third equality is due to Lemma~\ref{lm:gaussian bound} and our choice of $n, \sigma_{0}$. Based on previous results, let $B$ be the projection matrix on the linear space spanned by $\{\vb_{k}\}_{k\in[K]}\cup \{\cb_{k}\}_{k\in[K]}$. We can verify that
\begin{align*}
\|\nabla_{\wb_{m,j}} \cL^{(t)}\|_{2} &\leq \|B\nabla_{\wb_{m,j}} \cL^{(t)}\|_{2} + \|(I - B)\nabla_{\wb_{m,j}} \cL^{(t)}\|_{2}\\
    &\leq \sum_{k\in [K]}  \frac{\EE[\alpha^{3}] + \tilde{O}(d^{-0.005})}{2KM^{2}}
   \sigma'(\la \wb_{m,j}^{(t)}, \vb_{k}\ra) + \sum_{k \in [K]}\tilde{O}(d^{-0.005})\sigma'(\la \wb_{m,j}^{(t)}, \cb_{k}\ra)\\
   &\qquad + \sum_{i \in [n],p\geq 4}\tilde{O}(d^{-0.005})\sigma'(\la \wb_{m,j}^{(t)}, \bxi_{i,p}\ra) + \tilde{O}(\sigma_{0}^{2.5}).
\end{align*}
\end{proof}

 Because we use normalized gradient descent, all the experts get trained at the same speed. Following lemma shows that expert $m$ will focus on the signal $\vb_{k_m^{*}}$.

\begin{lemma}\label{lm:GG}
For all $m \in [M]$ and $t \leq T_{1}$, we have following inequalities hold,
\begin{align*}
\la \wb_{m,j_{m}^{*}}^{(t)}, \vb_{k^{*}_{m}} \ra &= O(\sigma_{0}^{0.5}),\\
\la \wb_{m,j}^{(t)}, \vb_{k}\ra &= \tilde{O}(\sigma_{0}), \forall (j, k)\not= (j_{m}^{*}, k^{*}_{m}),\\
\la \wb_{m,j}^{(t)}, \cb_{k}\ra &= \tilde{O}(\sigma_{0}), \forall j \in [J], k \in [K],\\
\la \wb_{m,j}^{(t)}, \bxi_{i,p} \ra &= \tilde{O}(\sigma_{0}), \forall j \in [J], i\in [n], p \geq 4.
\end{align*}
\end{lemma}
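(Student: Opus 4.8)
The plan is to prove Lemma~\ref{lm:GG} by induction on the iteration count $t\in\{0,1,\dots,T_1\}$. The base case $t=0$ follows directly from Lemma~\ref{lm: Mkset} (which gives $\la\wb_{m,j_m^*}^{(0)},\vb_{k_m^*}\ra = O(\sigma_0)$ and, via the $(1-G)$ multiplicative gap with $G=\delta/(3MJ^2K^2)$, controls every other $\la\wb_{m,j}^{(0)},\vb_k\ra$) and from Lemma~\ref{lm: initialvinner} (which gives $|\la\wb_{m,j}^{(0)},\cb_k\ra|,\ |\la\wb_{m,j}^{(0)},\bxi_{i,p}\ra| = \tilde O(\sigma_0)$). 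For the inductive step, suppose all four displayed bounds hold at every iteration $\leq t$; then the hypotheses of Lemma~\ref{lm:stage1fbound} and Lemma~\ref{lm:gradientinner} are satisfied at those iterations and I may use the gradient expansions there. Write $a^{(t)} := \la\wb_{m,j_m^*}^{(t)},\vb_{k_m^*}\ra$ and $Z^{(t)} := \|\nabla_{\Wb_m}\cL^{(t)}\|_F$, and recall that the normalized update acts on projections by $\la\wb_{m,j}^{(t+1)},\vb\ra = \la\wb_{m,j}^{(t)},\vb\ra - (\eta/Z^{(t)})\la\nabla_{\wb_{m,j}}\cL^{(t)},\vb\ra$ for any unit vector $\vb$.

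The first order of business is to locate $Z^{(t)}$. From Lemma~\ref{lm:gradientinner}, on one hand $Z^{(t)} \geq |\la\nabla_{\wb_{m,j_m^*}}\cL^{(t)},\vb_{k_m^*}\ra| = \Omega\big((a^{(t)})^2/(KM^2)\big)$, where $a^{(t)}\geq a^{(0)}\geq 0.01\sigma_0$ makes the signal term dominate the additive $\tilde O(\sigma_0^{2.5})$ error; on the other hand, the gradient-norm bound of Lemma~\ref{lm:gradientinner} combined with the inductive hypothesis (every $\vb_k$-, $\cb_k$- and $\bxi_{i,p}$-projection of $\wb_{m,j}$ is $\tilde O(\sigma_0)\leq\tilde O(a^{(t)})$) gives $\|\nabla_{\wb_{m,j}}\cL^{(t)}\|_2 = \tilde O\big((a^{(t)})^2\big)$ for every $j$, hence $Z^{(t)} = \tilde O\big((a^{(t)})^2\big)$. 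These two-sided, polylogarithmically tight bounds on $Z^{(t)}$ are what is needed. The upper bound on $a^{(t+1)}$ is then immediate: since $|\la\nabla_{\wb_{m,j}}\cL^{(t)},\vb\ra|\leq Z^{(t)}$ trivially, no projection moves by more than $\eta$ in one step, so $a^{(t+1)}\leq a^{(0)} + \eta(t+1)\leq\tilde O(\sigma_0)+\sigma_0^{0.5} = O(\sigma_0^{0.5})$. Combining the two bounds on $Z^{(t)}$ with the gradient formula also yields $a^{(t+1)}\geq a^{(t)} + \tilde\Omega(\eta)$, so $a^{(t)}$ grows at a steady (polylogarithmic) rate — a fact that will be used both to keep $Z^{(t)}$ from becoming too small and to control accumulated errors.

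The crux is bounding the remaining projections by $\tilde O(\sigma_0)$ in the face of the self-reinforcing cubic dynamics. For $b^{(t)} := \la\wb_{m,j}^{(t)},\vb_k\ra$ with $(j,k)\neq(j_m^*,k_m^*)$, Lemma~\ref{lm:gradientinner} gives exactly the same update law $b^{(t+1)} = b^{(t)} + (\eta/Z^{(t)})\big(3c_1(b^{(t)})^2 + \tilde O(\sigma_0^{2.5})\big)$ with the identical coefficient $c_1 = \EE[\alpha^3]/(2KM^2) = \Theta(1/(KM^2))$ and identical normalization $Z^{(t)}$ that govern $a^{(t)}$. I would track reciprocals: since $1/a^{(t)}$ and $1/b^{(t)}$ each decrease by the common amount $3\eta c_1/Z^{(t)}$ per step (up to error terms), the difference $1/b^{(t)} - 1/a^{(t)}$ is approximately preserved, so it stays at least its initial value $1/b^{(0)}-1/a^{(0)} = (a^{(0)}-b^{(0)})/(a^{(0)}b^{(0)}) \geq G/b^{(0)}$, using the $b^{(0)}\leq(1-G)a^{(0)}$ gap of Lemma~\ref{lm: Mkset} with $1/G$ only polylogarithmic; consequently $1/b^{(t)} \geq G/b^{(0)} - (\text{accumulated error})$ and therefore $b^{(t)} = O(b^{(0)}/G) = \tilde O(\sigma_0)$. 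The $\cb_k$- and $\bxi_{i,p}$-projections are easier: in Lemma~\ref{lm:gradientinner} their growth coefficients are $\tilde O(\sigma_0^{1.5})$ and $\tilde O(1/n)$ respectively rather than $c_1$, so starting from $\tilde O(\sigma_0)$ (Lemma~\ref{lm: initialvinner}) their cumulative drift over the stage is $o(\sigma_0)$; any coordinate that starts at a negative or near-zero value is handled by observing that the positive-definite $(\cdot)^2$ increment can only push it toward $0$, so one simply bounds $|b^{(t)}|$ directly.

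The main obstacle is the error bookkeeping in the reciprocal-tracking step. Each step contributes an error to $1/b^{(t)}$ of order roughly $(\eta/Z^{(t)})\,\tilde O(\sigma_0^{2.5})/(b^{(t)})^2$ together with the second-order term from expanding $1/(b^{(t)}+\delta^{(t)})$ in the per-step increment $\delta^{(t)}$, and one has to verify that these sum, over all $T_1 = \lfloor\eta^{-1}\sigma_0^{0.5}\rfloor$ iterations, to something negligible compared with $G/b^{(0)}$. This is precisely where the quantitative hypotheses — $n=\Omega(d)$, $\sigma_0\geq d^{-1/3}$, $\eta=\tilde O(\sigma_0)$, and $M,J,K$ at most polylogarithmic — enter, and it is also why one must split off the regime in which $b^{(t)}$ has already collapsed to a negligibly small scale (handled by a direct smallness/monotonicity argument) from the regime in which the reciprocal estimate is meaningful. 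Once these increment estimates are in place, re-deriving the four displayed bounds at iteration $t+1$ and checking that they feed consistently back into Lemmas~\ref{lm:stage1fbound} and~\ref{lm:gradientinner} is routine.
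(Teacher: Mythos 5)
Your proposal is correct and follows the same overall skeleton as the paper: both use the initialization gap of Lemma~\ref{lm: Mkset}, the near-uniform routing of the exploration stage through the gradient expansions of Lemma~\ref{lm:gradientinner}, the trivial bound $\la\wb_{m,j_m^*}^{(t)},\vb_{k_m^*}\ra\le \la\wb_{m,j_m^*}^{(0)},\vb_{k_m^*}\ra+\eta T_1=O(\sigma_0^{0.5})$ from normalized GD, and then a comparison of the coupled quadratic dynamics $x_{t+1}=x_t+\eta C_t x_t^2$ along the favored direction versus all other directions. The genuine difference is in how that comparison is executed: the paper invokes the tensor-power lemma (Lemma~\ref{lm: Tensor power update}, Lemma C.19 of \citealt{allen2020towards}) as a black box, with $C_t=(3\EE[\alpha^3]+\tilde O(d^{-0.005}))/(2KM^2\|\nabla_{\Wb_m}\cL^{(t)}\|_F)$ and $G=1/(3\log(d)M^2)$ for the $\vb_k$ directions and $S=\tilde O(d^{-0.01})$, $G=2$ for the $\cb_k$ and $\bxi_{i,p}$ directions, whereas you re-prove the comparison from scratch by tracking reciprocals, using that $1/x$ decreases by essentially the common amount $3c_1\eta/Z^{(t)}$ per step so that the initial reciprocal gap $G/b^{(0)}$ is approximately conserved; you also handle the weakly-driven $\cb_k$ and $\bxi_{i,p}$ directions by a direct drift bound rather than the lemma. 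Your route is more elementary and self-contained; its cost is exactly the error bookkeeping you flag, namely that the accumulated second-order terms $\sum_t\lambda_t^2 a^{(t)}$ and the additive $\tilde O(\sigma_0^{2.5})$ contributions must be $\ll G/\sigma_0$, which forces $\eta\lesssim G^2\sigma_0$ up to logarithms — precisely the step-size condition $\eta\le\min\{G^2x_0/\log(A/x_0),\,G^2y_0/\log(1/G)\}$ that the paper's comparison lemma packages away — together with the routine WLOG reduction to $b^{(t)}=\Omega(\sigma_0)$ (the paper makes the same reduction) to absorb the additive error into a $1+\tilde O(d^{-0.005})$ multiplicative factor and to dispose of negative or near-zero coordinates. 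Two cosmetic inaccuracies do not affect the argument: the $\cb_k$-direction coefficient in Lemma~\ref{lm:gradientinner} is $\tilde O(d^{-0.005})$ (of which $\sigma_0^{1.5}$ is only one contribution), and Lemmas~\ref{lm:stage1fbound} and~\ref{lm:gradientinner} hold for all $t\le T_1$ without needing your induction hypothesis, so your induction is more conservative than necessary.
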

\begin{proof}
For $t\leq T_{1}$, the update rule of every expert could be written as,
\begin{align}
\la \wb_{m,j}^{(t+1)}, \vb_{k}\ra  &= \la \wb_{m,j}^{(t)}, \vb_{k}\ra + \frac{\eta}{\| \nabla_{\Wb_{m}} \cL^{(t)}\|_{F}}\bigg[\frac{3\EE[\alpha^{3}] + \tilde{O}(d^{-0.005})}{2KM^{2}}\la \wb_{m,j}^{(t)}, \vb_{k}\ra^{2}+ \tilde{O}(\sigma_{0}^{2.5})\bigg], \notag\\
\la \wb_{m,j}^{(t+1)}, \bxi_{i,p} \ra  &= \la \wb_{m,j}^{(t)}, \bxi_{i,p} \ra + \frac{\eta}{\| \nabla_{\Wb_{m}} \cL^{(t)}\|_{F}}\big[ \tilde{O}(d^{-0.005})\la \wb_{m,j}^{(t)}, \bxi_{i,p} \ra^{2} + \tilde{O}(\sigma_{0}^{2.5})\big], \notag\\ 
\la \wb_{m,j}^{(t+1)}, \cb_{k}\ra  &=  \la \wb_{m,j}^{(t)}, \cb_{k}\ra + \frac{\eta}{\| \nabla_{\Wb_{m}} \cL^{(t)}\|_{F}}\big[\tilde{O}(d^{-0.005})\la \wb_{m,j}^{(t)}, \cb_{k}\ra^{2} + \tilde{O}(\sigma_{0}^{2.5})\big]. \label{eq:Update}
\end{align}
For $t \leq T_{1}$, we have that $\la\wb_{m,j}^{(t)}, \vb_{k^{*}_{m}}\ra \leq O(\sigma_{0}^{0.5})$. By comparing the update rule of $\la\wb_{m,j}^{(t)}, \vb_{k^{*}_{m}}\ra$ and other inner product presented in \eqref{eq:Update} , We can prove that $\la\wb_{m,j}^{(t)}, \vb_{k^{*}_{m}}\ra$ will grow to $\sigma_{0}^{0.5}$ while other inner product still remain nearly unchanged.

\noindent \textbf{Comparison with $\la \wb_{m,j}^{(t)}, \vb_{k}\ra$}. Consider $k \not= k^{*}_{m}$. We want to get an upper bound of $\la \wb_{m,j}^{(t)}, \vb_{k}\ra$, so without loss of generality we can assume $\la \wb_{m,j}^{(t)}, \vb_{k}\ra = \Omega(\sigma_{0})$. 
Since $\sigma_{0} \leq d^{-0.01}$, we have that $\la \wb_{m,j}^{(t)}, \vb_{k}\ra^{2} + \tilde{O}(\sigma_{0}^{2.5}) = (1+ \tilde{O}(d^{-0.005}))\la \wb_{m,j}^{(t)}, \vb_{k}\ra^{2}$. Therefore, we have that
\begin{align}
\la \wb_{m,j}^{(t+1)}, \vb_{k^{*}_{m}}\ra  &= \la \wb_{m,j}^{(t)}, \vb_{k^{*}_{m}}\ra + \frac{\eta}{\| \nabla_{\Wb_{m}} \cL^{(t)}\|_{F}}\frac{3\EE[\alpha^{3}] + \tilde{O}(d^{-0.005})}{2KM^{2}}\la \wb_{m,j}^{(t)}, \vb_{k^{*}_{m}}\ra^{2},\\
\la \wb_{m,j}^{(t+1)}, \vb_{k}\ra  &= \la \wb_{m,j}^{(t)}, \vb_{k}\ra + \frac{\eta}{\| \nabla_{\Wb_{m}} \cL^{(t)}\|_{F}}\frac{3\EE[\alpha^{3}] + \tilde{O}(d^{-0.005})}{2KM^{2}}\la \wb_{m,j}^{(t)}, \vb_{k}\ra^{2}.    
\end{align}
Applying Lemma~\ref{lm: Tensor power update} by choosing $C_{t} = (3\EE[\alpha^{3}] + \tilde{O}(d^{-0.005}))/(2KM^{2}\| \nabla_{\Wb_{m}} \cL^{(t)}\|_{F})$, $S = 1 + \tilde{O}(d^{-0.005})$, $G = 1/(3\log (d)M^{2})$ and verifying $\la \wb^{(0)}_{m}, \vb_{k^{*}_{m}}\ra \geq S(1+G^{-1})\la \wb^{(0)}_{m}, \vb_{k} \ra$ (events in Section~\ref{sec: initial} hold), we have that $\la \wb_{m,j}^{(t)}, \vb_{k}\ra \leq O(G^{-1}\sigma_{0}) = \tilde{O}(\sigma_{0})$.

\noindent \textbf{Comparison with $\la \wb_{m,j}^{(t)}, \cb_{k}\ra$}.We want to get an upper bound of $\la \wb_{m,j}^{(t)}, \cb_{k}\ra$, so without loss of generality we can assume $\la \wb_{m,j}^{(t)}, \vb_{k}\ra = \Omega(\sigma_{0})$. Because $\sigma_{0} \leq d^{-0.01}$, one can easily show that  
\begin{align*}
\la \wb_{m,j}^{(t+1)}, \vb_{k^{*}_{m}}\ra  &= \la \wb_{m,j}^{(t)}, \vb_{k^{*}_{m}}\ra + \frac{\eta}{\| \nabla_{\Wb_{m}} \cL^{(t)}\|_{F}}\frac{3\EE[\alpha^{3}] + \tilde{O}(d^{-0.005})}{2KM^{2}}\la \wb_{m,j}^{(t)}, \vb_{k^{*}_{m}}\ra^{2},\\
\la \wb_{m,j}^{(t+1)}, \cb_{k}\ra  &\leq  \la \wb_{m,j}^{(t)}, \cb_{k}\ra + \frac{\eta}{\| \nabla_{\Wb_{m}} \cL^{(t)}\|_{F}}\tilde{O}(d^{-0.01})\la \wb_{m,j}^{(t)}, \cb_{k}\ra^{2}.
\end{align*}
Again, applying Lemma~\ref{lm: Tensor power update} by choosing $C_{t} = (3\EE[\alpha^{3}] + \tilde{O}(d^{-0.005}))/(2KM^{2}\| \nabla_{\Wb_{m}} \cL^{(t)}\|_{F})$, $S = \tilde{O}(d^{-0.01})$, $G = 2$ and verifying $\la \wb^{(0)}_{m}, \vb_{k^{*}_{m}}\ra \geq S(1+G^{-1})\la \wb^{(0)}_{m}, \cb_{k} \ra$ (events in Section~\ref{sec: initial} hold), we have that $\la \wb^{(t)}, \vb_{k}\ra \leq O(G^{-1}\sigma_{0}) = \tilde{O}(\sigma_{0})$.

\noindent \textbf{Comparison with $\la \wb_{m,j}^{(t)}, \bxi_{i,p} \ra$}. The proof is exact the same as the one with $\cb_{k}$.
\end{proof}

Denote the iteration $T^{(m)}$ as the first time that $\|\nabla_{\Wb_{m}}\cL^{(t)}\|_{F} \geq \sigma_{0}^{1.8}$. Then Following lemma gives an upper bound of $T^{(m)}$ for all $m\in \cM$.

\begin{lemma}\label{lm:shortphase}
For all $m \in [M]$, we have that $T^{(m)} = \tilde{O}(\eta^{-1}\sigma_{0}^{0.8})$ and thus $T^{(m)} < 0.01T_{1}$. Besides, for all $T_m < t \leq T_{1}$ we have that \begin{align*}
\la \nabla_{\wb_{m,j_{m}^{*}}}\cL^{(t)}, \vb_{k^{*}_{m}}\ra \geq (1 - \sigma_{0}^{0.1})\|\nabla_{\Wb_{m}}\cL^{(t)}\|_{F}.   
\end{align*}
\end{lemma}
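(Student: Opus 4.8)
\noindent The plan is to single out the one coordinate $a_t := \la\wb_{m,j_m^*}^{(t)},\vb_{k_m^*}\ra$, argue that it is the only ``engine'' of the gradient during the exploration stage, and track its growth. By Lemmas~\ref{lm: Mkset} and \ref{lm: initialvinner} we have $a_0\in[0.01\sigma_0,\tilde{O}(\sigma_0)]$, and by Lemma~\ref{lm:GG} every other relevant inner product, namely $\la\wb_{m,j}^{(t)},\vb_k\ra$ for $(j,k)\ne(j_m^*,k_m^*)$, $\la\wb_{m,j}^{(t)},\cb_k\ra$, and $\la\wb_{m,j}^{(t)},\bxi_{i,p}\ra$, stays $\tilde{O}(\sigma_0)$ for all $t\le T_1$ while $a_t=O(\sigma_0^{0.5})$. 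Plugging these into the gradient identities of Lemma~\ref{lm:gradientinner} (with $\sigma'(z)=3z^2$) gives, for every $t\le T_1$: $|\la\nabla_{\wb_{m,j_m^*}}\cL^{(t)},\vb_{k_m^*}\ra|=\tfrac{3\EE[\alpha^3]}{2KM^2}a_t^2(1\pm o(1))$; every other signal-direction component $\la\nabla_{\wb_{m,j}}\cL^{(t)},\vb_k\ra$, $\la\nabla_{\wb_{m,j}}\cL^{(t)},\cb_k\ra$ is $\tilde{O}(\sigma_0^2)$; and, using the near-orthogonality $|\la\bxi_{i,p},\bxi_{i',p'}\ra|=\tilde{O}(d^{-1/2})$ of Lemma~\ref{lm:gaussian bound}, the component of $\nabla_{\wb_{m,j}}\cL^{(t)}$ orthogonal to $\mathrm{span}(\{\vb_k\}\cup\{\cb_k\})$ has squared norm $\tilde{O}(\sigma_0^4 d^{-1/2})$. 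Summing over $j$ this yields $\|\nabla_{\Wb_m}\cL^{(t)}\|_F=\tfrac{3\EE[\alpha^3]}{2KM^2}a_t^2+\tilde{O}(\sigma_0^2)$.

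\noindent\textbf{Bounding $T^{(m)}$.} As long as $t<T^{(m)}$ we have $\|\nabla_{\Wb_m}\cL^{(t)}\|_F<\sigma_0^{1.8}$ by definition, so \eqref{eq:Update} gives $a_{t+1}-a_t\ge\tfrac{\eta}{\sigma_0^{1.8}}\cdot\tfrac{\EE[\alpha^3]}{2KM^2}a_t^2=:c'a_t^2$, the bracket in \eqref{eq:Update} being $\ge\tfrac{\EE[\alpha^3]}{2KM^2}a_t^2$ whenever $a_t=\Omega(\sigma_0)$ (which holds since $a_t\ge a_0\ge0.01\sigma_0$). A geometric doubling argument — passing from level $b$ to level $2b$ costs at most $\lceil1/(c'b)\rceil$ steps because $a_{t+1}-a_t\ge c'b^2$ throughout — shows $a_t$ reaches $\Theta(\sqrt{KM^2/\EE[\alpha^3]}\,\sigma_0^{0.9})$ within $O(\log d)+2/(c'a_0)=\tilde{O}(\eta^{-1}\sigma_0^{0.8})$ iterations; at that level, by the above, $\|\nabla_{\Wb_m}\cL^{(t)}\|_F\ge|\la\nabla_{\wb_{m,j_m^*}}\cL^{(t)},\vb_{k_m^*}\ra|\ge\sigma_0^{1.8}$, so $T^{(m)}$ has already occurred. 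Hence $T^{(m)}=\tilde{O}(\eta^{-1}\sigma_0^{0.8})\ll\lfloor\eta^{-1}\sigma_0^{0.5}\rfloor=T_1$, in particular $T^{(m)}<0.01\,T_1$; the whole window lies inside $t\le T_1$, so Lemmas~\ref{lm:stage1fbound}--\ref{lm:GG} are indeed applicable there.

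\noindent\textbf{Alignment for $T^{(m)}<t\le T_1$.} The increment in \eqref{eq:Update} is positive once $a_t=\Omega(\sigma_0)$, so $a_t$ is non-decreasing; combining the definition of $T^{(m)}$ with $\|\nabla_{\Wb_m}\cL^{(t)}\|_F\le\tfrac{3\EE[\alpha^3]}{2KM^2}a_t^2+\tilde{O}(\sigma_0^2)$ forces $a_t^2\ge\tfrac{KM^2}{3\EE[\alpha^3]}\sigma_0^{1.8}$ for all $t\ge T^{(m)}$, hence $|\la\nabla_{\wb_{m,j_m^*}}\cL^{(t)},\vb_{k_m^*}\ra|^2=\Theta(\sigma_0^{3.6})$. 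On the other hand $\|\nabla_{\Wb_m}\cL^{(t)}\|_F^2-|\la\nabla_{\wb_{m,j_m^*}}\cL^{(t)},\vb_{k_m^*}\ra|^2$ is precisely the sum of squares of all remaining gradient components, which by the estimates above is $\tilde{O}(\sigma_0^4)$ (polylog-many signal components of size $\tilde{O}(\sigma_0^2)$ each, plus the $\tilde{O}(\sigma_0^4 d^{-1/2})$ orthogonal part). Dividing by $\Theta(\sigma_0^{3.6})$ gives a ratio $\tilde{O}(\sigma_0^{0.4})\le2\sigma_0^{0.1}\le(1-\sigma_0^{0.1})^{-2}-1$ for $\sigma_0\le d^{-0.01}$, i.e. $\|\nabla_{\Wb_m}\cL^{(t)}\|_F^2\le(1-\sigma_0^{0.1})^{-2}|\la\nabla_{\wb_{m,j_m^*}}\cL^{(t)},\vb_{k_m^*}\ra|^2$, which is the claimed bound (the sign being as in \eqref{eq:Update}).

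\noindent\textbf{Main obstacle.} The one genuinely delicate estimate is the noise part of the gradient: there are $\Theta(nP)=\Theta(d)$ patches $\bxi_{i,p}$, each contributing of order $\tilde{O}(\sigma_0^2)$ to a coordinate of $\nabla_{\wb_{m,j}}\cL^{(t)}$ together with a per-coordinate overlap error $\tilde{O}(\sigma_0 d^{-1/2})$, so a naive triangle inequality over the $n$ patches produces a term of order $\tilde{O}(\sigma_0 d^{1/2})$ that swamps the $\Theta(\sigma_0^{1.8})$ signal. Instead one must bound the orthogonal component through its Gram expansion $\tfrac{1}{n^2}\sum_{i,p,i',p'}c_{i,p}c_{i',p'}\la\bxi_{i,p},\bxi_{i',p'}\ra$ using $|c_{i,p}|=\tilde{O}(\sigma_0^2)$, $\|\bxi_{i,p}\|_2^2=O(1)$, and $|\la\bxi_{i,p},\bxi_{i',p'}\ra|=\tilde{O}(d^{-1/2})$ (Lemma~\ref{lm:gaussian bound}), which collapses the noise contribution to $\tilde{O}(\sigma_0^4 d^{-1/2})$. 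Everything else is routine bookkeeping layered on the recursion already established in Lemma~\ref{lm:GG}.
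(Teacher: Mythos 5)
Your proposal is correct and follows the same skeleton as the paper's proof: isolate $a_t=\la \wb_{m,j_m^*}^{(t)},\vb_{k_m^*}\ra$, show every other gradient component is $\tilde O(\sigma_0^2)$ so that the $(j_m^*,\vb_{k_m^*})$ direction dominates once $\|\nabla_{\Wb_m}\cL^{(t)}\|_F\geq \sigma_0^{1.8}$, and bound $T^{(m)}$ from the recursion in \eqref{eq:Update} with the normalized step at least $\eta a_t^2/(KM^2\sigma_0^{1.8})$ (your level-doubling count and the paper's exponential-growth count give the same $\tilde O(\eta^{-1}\sigma_0^{0.8})$). The one place you genuinely diverge is the noise part of the gradient, and there your ``main obstacle'' is illusory: the gradient carries a $1/n$ prefactor, so the paper simply projects out $\vb_{k_m^*}$ (with $B=\vb_{k_m^*}\vb_{k_m^*}^{\top}$) and applies the triangle inequality termwise over $\tfrac1n\sum_{i,p}$ --- each surviving term has $\sigma'(\la\wb_{m,j}^{(t)},\xb_i^{(p)}\ra)=\tilde O(\sigma_0^2)$ and $\|\xb_i^{(p)}\|_2=O(1)$, giving $\tilde O(\sigma_0^2)$ directly; no $\tilde O(\sigma_0 d^{1/2})$ blow-up occurs, and nothing as fine as your Gram-expansion bound $\tilde O(\sigma_0^4 d^{-1/2})$ is needed (though it is valid and slightly tighter). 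Two cosmetic points: your ``$|\la\nabla_{\wb_{m,j_m^*}}\cL^{(t)},\vb_{k_m^*}\ra|^2=\Theta(\sigma_0^{3.6})$'' should be a lower bound $\Omega(\sigma_0^{3.6})$ (the component keeps growing after $T^{(m)}$), which is all the ratio argument uses; and your explicit monotonicity argument for why the dominant component stays above $\sigma_0^{1.8}$ for all $T^{(m)}<t\le T_1$ fills in a step the paper leaves implicit.
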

\begin{proof}
Let projection matrix $B = \vb_{k^{*}_{m}}\vb_{k^{*}_{m}}^{\top} \in \RR^{d\times d}$, then we can divide the gradient into two orthogonal part

\begin{align*}
\|\nabla_{\wb_{m,j_{m}^{*}}}\cL^{(t)}\|_{2} &=  \|B\nabla_{\wb_{m,j_{m}^{*}}}\cL^{(t)} + (I-B)\nabla_{\wb_{m,j_{m}^{*}}}\cL^{(t)}\|_{2} \\
&\leq \|B\nabla_{\wb_{m,j_{m}^{*}}}\cL^{(t)}\|_{2} + \|(I-B)\nabla_{\wb_{m,j_{m}^{*}}}\cL^{(t)}\|_{2}
\end{align*}
Recall that 
\begin{align*}
   \nabla_{\wb_{m,j_{m}^{*}}} \cL^{(t)}
    & =
    \frac{1}{n} \sum_{i,p}\ind(m_{i,t} =m)\ell'_{i,t}
    \pi_{m}(\xb_i; \bTheta^{(t)})y_{i}\sigma'(\la \wb_{m,j_{m}^{*}}^{(t)}, \xb_{i}^{(p)}\ra) \xb_{i}^{(p)},
\end{align*}
So we have that 
\begin{align*}
   \|(I-B)\nabla_{\wb_{m,j_{m}^{*}}} \cL^{(t)}\|_{2} & =
    \bigg\|\frac{1}{n} \sum_{i,p}\ind(m_{i,t} =m)\ell'_{i,t}
    \pi_{m}(\xb_i; \bTheta^{(t)})y_{i}\sigma'(\la \wb_{m,j_{m}^{*}}^{(t)}, \xb_{i}^{(p)}\ra) (I-B)\xb_{i}^{(p)}\bigg\|_{2}\\
    &\leq \frac{1}{n} \sum_{i,p}\bigg\|\sigma'(\la \wb_{m,j_{m}^{*}}^{(t)}, \xb_{i}^{(p)}\ra) (I-B)\xb_{i}^{(p)}\bigg\|_{2}\\
    &\leq \tilde{O}(\sigma_{0}^{2}),
\end{align*}
where the first inequality is by $|\ell_{i,t}'| \leq 1, \pi_{m} \in [0,1]$ and the second equality is because 
\begin{enumerate}
\item when $\xb_{i}^{(p)}$ align with $\vb_{k_{m}^{*}}$, $(I-B)\xb_{i}^{(p)} = \zero$.
\item when $\xb_{i}^{(p)}$ doesn't align with $\vb_{k_{m}^{*}}$, $\la \wb_{m,j_{m}^{*}}^{(t)}, \xb_{i}^{(p)}\ra = \tilde{O}(\sigma_{0})$.
\end{enumerate}
Therefore, we have that 
\begin{align*}
\|\nabla_{\wb_{m,j_{m}^{*}}}\cL^{(t)}\|_{2} \leq \|B\nabla_{\wb_{m,j_{m}^{*}}}\cL^{(t)}\|_{2} +\tilde{O}(\sigma_{0}^{2}) = \la \nabla_{\wb_{m,j_{m}^{*}}}\cL^{(t)}, \vb_{k^{*}_{m}}\ra +\tilde{O}(\sigma_{0}^{2}).
\end{align*}
We next compute the gradient of the neuron $\wb_{m,j}, j\not= j_{m}^{*}$,
\begin{align}
   \|\nabla_{\wb_{m,j}} \cL^{(t)}\|_{2}
    & =
    \bigg\|\frac{1}{n} \sum_{i,p}\ind(m_{i,t} =m)\ell'_{i,t}
    \pi_{m}(\xb_i; \bTheta^{(t)})y_{i}\sigma'(\la \wb_{m,j}^{(t)}, \xb_{i}^{(p)}\ra) \xb_{i}^{(p)}\bigg\|_{2} = \tilde{O}(\sigma_{0}^{2}), \label{eq:g1}
\end{align}
where the inequality is by
 $\la \wb_{m,j}^{(t)}, \xb_{i}^{(p)}\ra = \tilde{O}(\sigma_{0}), \forall j \not= j_{m}^{*}$ which is due to Lemma~\ref{lm:GG}.
Now we can upper bound the gradient norm,
\begin{align}
\|\nabla_{\Wb_{m}}\cL^{(t)}\|_{F} \leq \sum_{j \in [J]}\|\nabla_{\wb_{m,j}}\cL^{(t)}\|_{2} \leq \|\nabla_{\wb_{m,j_{m}^{*}}}\cL^{(t)}\|_{2} + \tilde{O}(\sigma_{0}^{2}). \label{eq:g2}
\end{align}
When $\|\nabla_{\Wb_{m}}\cL^{(t)}\|_{F} \geq \sigma_{0}^{1.8}$, it is obviously that 
\begin{align*}
 \la \nabla_{\wb_{m,j_{m}^{*}}}\cL, \vb_{k^{*}_{m}}\ra \geq \|\nabla_{\wb_{m,j_{m}^{*}}}\cL^{(t)}\|_{2}  - \tilde{O}(\sigma_{0}^{2}) \geq \|\nabla_{\Wb_{m}}\cL^{(t)}\|_{F}  - \tilde{O}(\sigma_{0}^{2})\geq (1-\sigma_{0}^{0.1})\|\nabla_{\Wb_{m}}\cL^{(t)}\|_{F},
\end{align*}
where the first inequality is by \eqref{eq:g1} and the second inequality is by \eqref{eq:g2}. Now let us give an upper bound for $T^{(m)}$. During the period $t \leq T^{(m)}$, $\|\nabla_{\Wb_{m}}\cL^{(t)}\|_{F} < \sigma_{0}^{1.8}$. On the one hand,  by Lemma~\ref{lm:gradientinner} we have that
\begin{align*}
\|\nabla_{\Wb_{m}} \cL^{(t)}\|_{2} \geq -\la  \nabla_{\wb_{m,j}} \cL^{(t)}, \vb_{k_{m}^{*}} \ra
    = \frac{3\EE[\alpha^{3}] - \tilde{O}(d^{-0.005})}{2KM^{2}}
   [\la \wb_{m,j_{m}^{*}}^{(t)}, \vb_{k_{m}^{*}}\ra]^{2} - \tilde{O}(\sigma_{0}^{2.5})    
\end{align*}
which implies that the inner product $\la \wb_{m,j_{m}^{*}}^{(t)}, \vb_{k^{*}_{m}}\ra \leq \tilde{O}(\sigma_{0}^{0.9})$. On the other hand, by Lemma~\ref{lm:GG} we have that
\begin{align*}
\la \wb_{m,j_{m}^{*}}^{(t+1)}, \vb_{k^{*}_{m}}\ra  &\geq \la \wb_{m,j_{m}^{*}}^{(t)}, \vb_{k^{*}_{m}}\ra + \frac{\eta}{\| \nabla_{\Wb_{m}} \cL^{(t)}\|_{F}}\Theta(\frac{1}{KM^{2}})\la \wb_{m,j_{m}^{*}}^{(t)}, \vb_{k^{*}_{m}}\ra^{2} \\
&\geq \la \wb_{m,j_{m}^{*}}^{(t)}, \vb_{k^{*}_{m}}\ra + \Theta\Big(\frac{\eta}{KM^{2}\sigma_{0}^{1.8}}\Big)\la \wb_{m,j_{m}^{*}}^{(t)}, \vb_{k^{*}_{m}}\ra^{2}\\
&\geq \la \wb_{m,j_{m}^{*}}^{(t)}, \vb_{k^{*}_{m}}\ra + \Theta\Big(\frac{\eta}{KM^{2}\sigma_{0}^{0.8}}\Big)\la \wb_{m,j_{m}^{*}}^{(t)}, \vb_{k^{*}_{m}}\ra,
\end{align*}
where last inequality is by $\la \wb_{m,j_{m}^{*}}^{(t)}, \vb_{k^{*}_{m}}\ra \geq 0.1 \sigma_{0} $. Therefore, we have that the inner product $\la \wb_{m,j}^{(t)}, \vb_{k^{*}_{m}}\ra$ grows exponentially and will reach $\tilde{O}(\sigma_{0}^{0.9})$ within $\tilde{O}(\eta^{-1}\sigma_{0}^{0.8})$ iterations.

\end{proof}

Recall that  $T_{1} = \lfloor \eta^{-1}\sigma_{0}^{0.5}\rfloor$, following Lemma shows that the expert $m \in [M]$ only learns one feature during the first stage,
\begin{lemma}\label{lm:shortphase2}
For all $t \leq T_{1}, m \in [M]$, we have that
\begin{align*}
\la \wb_{m,j_{m}^{*}}^{(t)}, \vb_{k_{m}^{*}}\ra &= O(\sigma_{0}^{0.5}),\\
\la \wb_{m,j}^{(t)}, \vb_{k}\ra &= \tilde{O}(\sigma_{0}), \forall (j, k) \not= (j_{m}^{*}, k_{m}^{*}),\\
\la \wb_{m,j}^{(t)}, \cb_{k}\ra &= \tilde{O}(\sigma_{0}), \forall j \in [J], k \in [K],\\
\la \wb_{m,j}^{(t)}, \bxi_{i,p} \ra &= \tilde{O}(\sigma_{0}), \forall j \in [J], i \in [n], p \geq 4.
\end{align*}
Besides $\la \wb_{m,j_{m}^{*}}^{(t)}, \vb_{k_{m}^{*}}\ra \geq  (1 - \sigma_{0}^{0.1}) \eta t $, for all $t \geq T_{1}/2$.
\end{lemma}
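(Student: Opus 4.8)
The first four displayed bounds are word-for-word the conclusion of Lemma~\ref{lm:GG}, so there is nothing new to establish for them; the only genuinely new content is the linear lower bound $\la \wb_{m,j_{m}^{*}}^{(t)}, \vb_{k_{m}^{*}}\ra \ge (1-\sigma_{0}^{0.1})\eta t$ for $t \ge T_{1}/2$, which upgrades the $O(\sigma_{0}^{0.5})$ upper bound into a matching near-linear growth rate. The plan is to telescope the normalized gradient-descent updates of the single neuron $\wb_{m,j_{m}^{*}}$, using the almost-perfect gradient alignment after the crossing time $T^{(m)}$ that was proved in Lemma~\ref{lm:shortphase}.

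First I would pin down the sign and size of one update step. By Lemma~\ref{lm:gradientinner}, $\la \nabla_{\wb_{m,j_{m}^{*}}}\cL^{(t)}, \vb_{k_{m}^{*}}\ra = -\tfrac{\EE[\alpha^{3}] + \tilde{O}(d^{-0.005})}{2KM^{2}}\,\sigma'(\la \wb_{m,j_{m}^{*}}^{(t)}, \vb_{k_{m}^{*}}\ra) + \tilde{O}(\sigma_{0}^{2.5})$; since $\sigma'(z)=3z^{2}$ and, by Lemma~\ref{lm: Mkset} together with the positive-increment monotonicity already contained in the proof of Lemma~\ref{lm:GG}, $\la \wb_{m,j_{m}^{*}}^{(t)}, \vb_{k_{m}^{*}}\ra \ge 0.01\sigma_{0}$, the leading term dominates the $\tilde O(\sigma_{0}^{2.5})$ error and this gradient component is strictly negative. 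Hence the normalized update \eqref{eq:w-update} moves $\wb_{m,j_{m}^{*}}$ in the $+\vb_{k_{m}^{*}}$ direction, and for $T^{(m)} < t \le T_{1}$, Lemma~\ref{lm:shortphase} (i.e. $|\la \nabla_{\wb_{m,j_{m}^{*}}}\cL^{(t)}, \vb_{k_{m}^{*}}\ra| \ge (1-\sigma_{0}^{0.1})\|\nabla_{\Wb_{m}}\cL^{(t)}\|_{F}$) gives
\begin{align*}
\la \wb_{m,j_{m}^{*}}^{(t+1)}, \vb_{k_{m}^{*}}\ra
= \la \wb_{m,j_{m}^{*}}^{(t)}, \vb_{k_{m}^{*}}\ra + \eta\,\frac{\big|\la \nabla_{\wb_{m,j_{m}^{*}}}\cL^{(t)}, \vb_{k_{m}^{*}}\ra\big|}{\|\nabla_{\Wb_{m}}\cL^{(t)}\|_{F}}
\ge \la \wb_{m,j_{m}^{*}}^{(t)}, \vb_{k_{m}^{*}}\ra + (1-\sigma_{0}^{0.1})\eta .
\end{align*}
Telescoping this from $t=T^{(m)}$ to an arbitrary $t\le T_{1}$ and using $\la \wb_{m,j_{m}^{*}}^{(T^{(m)})}, \vb_{k_{m}^{*}}\ra \ge 0.01\sigma_{0}>0$ yields $\la \wb_{m,j_{m}^{*}}^{(t)}, \vb_{k_{m}^{*}}\ra \ge (1-\sigma_{0}^{0.1})\eta\,(t-T^{(m)})$.

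It remains to replace $t-T^{(m)}$ by $t$. For $t\ge T_{1}/2$ we have $1/t \le 2/T_{1}$, and by Lemma~\ref{lm:shortphase} together with $T_{1}=\lfloor\eta^{-1}\sigma_{0}^{0.5}\rfloor$, $T^{(m)}/T_{1} = \tilde O(\sigma_{0}^{0.3})$, so $t-T^{(m)} \ge (1-\tilde O(\sigma_{0}^{0.3}))\,t$; combining with the previous display and absorbing the lower-order $\tilde O(\sigma_{0}^{0.3})$ slack into the $\sigma_{0}^{0.1}$ margin (which is legitimate since $\sigma_{0}\le d^{-0.01}$, so $\tilde O(\sigma_{0}^{0.3})$ is eventually smaller than, say, $\tfrac12\sigma_{0}^{0.1}$) gives the claimed bound. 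The argument is essentially bookkeeping on top of Lemmas~\ref{lm:GG} and~\ref{lm:shortphase}; the one point requiring care — and the genuinely quantitative input — is that the pre-threshold window $[0,T^{(m)}]$, during which the $j_{m}^{*}$-neuron gradient need not yet be aligned with $\vb_{k_{m}^{*}}$, is a vanishing fraction of $[0,T_{1}]$, which is exactly the crossing-time estimate $T^{(m)}=\tilde O(\eta^{-1}\sigma_{0}^{0.8})$ that was the hard part of Lemma~\ref{lm:shortphase}.
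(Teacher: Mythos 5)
Your proposal is correct and follows essentially the same route as the paper's own proof: the first four bounds are inherited from Lemma~\ref{lm:GG}, and the linear growth is obtained by telescoping the normalized-GD update using the alignment bound and the crossing-time estimate $T^{(m)}=\tilde O(\eta^{-1}\sigma_0^{0.8})$ from Lemma~\ref{lm:shortphase}, with the pre-threshold window absorbed since it is a vanishing fraction of $T_1$. Your explicit sign check via Lemma~\ref{lm:gradientinner} is a slightly more careful rendering of a step the paper leaves implicit, but the argument is the same.
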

\begin{proof}
By Lemma~\ref{lm:shortphase}, we have $T^{(m)} = \tilde{O}(\eta^{-1}\sigma_{0}^{0.8}) < \sigma_{0}^{0.2}\cdot T_1$.  Notice that $\la \nabla_{\wb_{m,j_{m}^{*}}}\cL^{(t)}, \vb_{k^{*}}\ra \geq ( 1- \sigma_{0}^{0.1})\|\nabla_{\Wb_{m}}\cL^{(t)}\|_{F}$, for all $T_{m}\leq t \leq T_{1}$. Therefore, we have that 
\begin{align*}
\la \wb_{m,j_{m}^{*}}^{(t+1)}, \vb_{k^{*}_{m}}\ra \geq   \la \wb_{m,j_{m}^{*}}^{(t)}, \vb_{k^{*}_{m}}\ra + (1 - \sigma_{0}^{0.1})\eta, \forall  T_{m}\leq t \leq T_{1}, 
\end{align*}
which implies $\la \wb_{m,j_{m}^{*}}^{(t)}, \vb_{k^{*}_{m}}\ra \geq (1 - O(\sigma_{0}^{0.1}))\eta t, \forall t \geq T_{1}/2$. Finally, applying Lemma~\ref{lm:GG} completes the proof.

\end{proof}

\subsection{Router Learning Stage}
\label{subsection: routerlearning}

Denote $T_{2} = \lfloor \eta^{-1}M^{-2}\rfloor$, 
The second stage ends when $t = T_{2}$. Given $\xb = [\alpha y\vb_{k}, \beta\cb_{k},  \gamma \epsilon\vb_{k'}, \bxi]$, we denote by $\bar{x} = [\zero, \beta\cb_{k},  \zero, \ldots, \zero]$ the one only keeps cluster-center signal and denote by $\hat{x} = [\alpha y \vb_{k}, \zero , \gamma \epsilon \vb_{k'}, \zero]$ the one that only keeps feature signal and feature noise.

For all $T_{1} \leq t \leq T_{2}$, we will show that the router only focuses on the cluster-center signals and the experts only focus on the feature signals, i.e., we will prove that $|f_{m}(\xb_{i}; \Wb^{(t)}) - f_{m}(\hat{\xb}_{i}; \Wb^{(t)})|$ and $\|\hb(\xb_i; \bTheta^{(t)}) - \hb(\bar{\xb}_{i},\bTheta^{(t)})\|_{\infty}$ are small. In particular, We claim that for all $T_{1} \leq t \leq T_{2}$, following proposition holds.

\begin{proposition}\label{claim:main}
For all $T_{1} \leq t \leq T_{2}$, following inequalities hold, 
\begin{align}
&|f_{m}(\xb_{i}; \Wb^{(t)}) - f_{m}(\hat{\xb}_{i}; \Wb^{(t)})| \leq O(d^{-0.001}), \forall m \in [M], i \in [n] \label{eq: induction1},\\ 
&\|\hb(\xb_i; \bTheta^{(t)}) - \hb(\bar{\xb}_{i};\bTheta^{(t)})\|_{\infty} \leq O(d^{-0.001}),
\forall  i \in [n],\label{eq:induction2}\\
&\mathbb{P}(m_{i,t} = m ), \pi_{m}(\xb_{i}; \bTheta^{(t)}) = \Omega(1/M), \forall  m \in [M], i \in \Omega_{k_{m}^{*}}\label{eq:induction3}.
\end{align}
\end{proposition}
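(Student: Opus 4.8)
The plan is to prove Proposition~\ref{claim:main} by a single induction on $t$ running from $T_1$ to $T_2$, in which \eqref{eq: induction1}, \eqref{eq:induction2} and \eqref{eq:induction3} are established simultaneously, together with two auxiliary invariants that are carried along: that $\|\nabla_{\Wb_m}\cL^{(\tau)}\|_F = \Omega(1/(KM^2))$ stays bounded away from zero (driven by the dominant $\vb_{k_m^*}$ direction, as in Lemma~\ref{lm:gradientinner}), and that $\la\wb_{m,j_m^*}^{(\tau)},\vb_{k_m^*}\ra$ keeps growing like $\Omega(\eta\tau)$ (as in Lemma~\ref{lm:shortphase2}). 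The three claims are genuinely coupled: \eqref{eq:induction3} is what makes the routing weights appearing in the gradient formulas \eqref{gradient-router}--\eqref{gradient-expert} tractable, the resulting smallness of those gradients in the ``wrong'' directions keeps the parameters in the regime where \eqref{eq: induction1} and \eqref{eq:induction2} hold, and \eqref{eq: induction1}--\eqref{eq:induction2} combined with the smoothing lemmas keep the routing close to one that depends only on $\bar{\xb}_i$, which re-derives \eqref{eq:induction3}. The base case $t=T_1$ is immediate: Lemma~\ref{lm:shortphase2} gives $\la\wb_{m,j}^{(T_1)},\cb_k\ra,\la\wb_{m,j}^{(T_1)},\bxi_{i,p}\ra=\tilde O(\sigma_0)$, so the cubic activation yields $|f_m(\xb_i;\Wb^{(T_1)})-f_m(\hat{\xb}_i;\Wb^{(T_1)})|=\tilde O(\sigma_0^3)=O(d^{-0.001})$ since $\sigma_0\le d^{-0.01}$; Lemma~\ref{lm:stage1fbound} gives $\|\btheta_m^{(T_1)}\|_2=\tilde O(\sigma_0^{1.5})$, hence \eqref{eq:induction2}; and Lemma~\ref{eq:concentration} together with the softmax formula gives the $\Omega(1/M)$ lower bounds in \eqref{eq:induction3}.

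For the inductive step, assume \eqref{eq: induction1}--\eqref{eq:induction3} hold for all $\tau$ with $T_1\le\tau\le t$. The workhorse is a near-parity property: because \eqref{eq: induction1} holds up to time $t$, the only label-dependent part of $f_m(\xb_i;\Wb^{(\tau)})$ over a fixed cluster $\Omega_k$ is the small feature-noise term $\propto\la\wb_{m,j}^{(\tau)},\vb_{k'}\ra^3$, so $\ell'_{i,\tau}$, $\pi_m(\xb_i;\bTheta^{(\tau)})$ and $\mathbb P(m_{i,\tau}=m)$ are, up to $O(d^{-0.001})$, even functions of $y_i$ on $\Omega_k$. Feeding this into \eqref{gradient-expert} and the concentration bound Lemma~\ref{lm:easy}, the $\cb_k$-component of the expert gradient collapses to a term proportional to $\tfrac1n\sum_{i\in\Omega_k}y_i\beta_i^3\cdot\sigma'(\la\wb_{m,j}^{(\tau)},\cb_k\ra)$ plus lower-order terms, and $\tfrac1n\sum_{i\in\Omega_k}y_i\beta_i^3=\tilde O(n^{-1/2})$ by Lemma~\ref{lm:data balance}; the noise-direction component is $\tilde O(d^{-1/2})\sigma'(\la\wb_{m,j}^{(\tau)},\bxi_{i,p}\ra)$ plus lower order by Lemma~\ref{lm:gaussian bound}. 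Inserting these into the normalized update \eqref{eq:w-update}, dividing by the auxiliary lower bound on $\|\nabla_{\Wb_m}\cL^{(\tau)}\|_F$, and running a tensor-power comparison argument (Lemma~\ref{lm: Tensor power update}) against the growing $\vb_{k_m^*}$ direction shows $|\la\wb_{m,j}^{(\tau)},\cb_k\ra|,|\la\wb_{m,j}^{(\tau)},\bxi_{i,p}\ra|=\tilde O(d^{-\Omega(1)})$ through $\tau=t+1$; cubing and summing over $j$ and $p$ gives \eqref{eq: induction1} at $t+1$ with a polynomial safety margin (and in particular confirms that $f_m$ stays governed by the feature patches, so the parity property propagates).

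The router step applies the same parity principle to \eqref{gradient-router}: the $\vb_k$-component of $\nabla_{\btheta_m}\cL^{(\tau)}$ is controlled by label-balanced sums of the types $\tfrac1n\sum y_i\alpha_i(\cdots)$ and $\tfrac1n\sum\epsilon_i\gamma_i(\cdots)$ over the relevant cluster subsets, all $\tilde O(n^{-1/2})$ by Lemma~\ref{lm:data balance}, whereas the $\cb_k$-component does \emph{not} cancel (there $\la\xb_i^{(p)},\cb_k\ra=\beta_i$ is independent of $y_i$ while the remaining factor is even in $y_i$), so it is the dominant, non-negligible term --- this is exactly the cluster-center signal the router learns. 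Summing over the $T_2-T_1=O(\eta^{-1}M^{-2})$ steps at router rate $\eta_r=\Theta(M^2)\eta$, the router weights move by only $\tilde O(d^{-\Omega(1)})$ in every $\vb_k$ direction, and by $O(1)$ in the $\cb_k$ directions (harmless since $\xb_i-\bar{\xb}_i$ is orthogonal to all $\cb_k$), giving $\|\hb(\xb_i;\bTheta^{(t+1)})-\hb(\bar{\xb}_i;\bTheta^{(t+1)})\|_\infty=O(d^{-0.001})$, i.e.\ \eqref{eq:induction2} at $t+1$. For \eqref{eq:induction3}, Lemma~\ref{lm:Msmoothly} reduces $\mathbb P(m_{i,t+1}=m)$ for $i\in\Omega_{k_m^*}$ to the routing probability computed from $\bar{\xb}_i$ alone, up to an $M^2\cdot O(d^{-0.001})$ perturbation, where $h_{m'}(\bar{\xb}_i)=\beta_i\la\btheta_{m'}^{(t+1)},\cb_{k_m^*}\ra$; one then argues that the experts inside $\cM_{k_m^*}$ receive statistically identical gradients along $\cb_{k_m^*}$ (by the symmetry fixed at initialization and preserved by normalized GD), so their inner products with $\cb_{k_m^*}$ stay within $O(d^{-0.001})$ of one another, and that --- since $t\le T_2$ --- these inner products and the gap to the remaining experts are both $O(1)$, so every $m\in\cM_{k_m^*}$ wins the $\mathrm{Unif}[0,1]$-perturbed argmax with probability $\Omega(1/M)$; the matching lower bound on $\pi_m(\xb_i;\bTheta^{(t+1)})$ then follows from \eqref{eq:induction2} and the explicit softmax, using Lemma~\ref{lm:keeporder} to discard experts with far-smaller gating outputs.

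The main obstacle is the last part of \eqref{eq:induction3}: ruling out load imbalance \emph{within} a group $\cM_k$, i.e.\ showing data from $\Omega_k$ is split roughly evenly over all of $\cM_k$ rather than collapsing onto one favored expert. This is precisely where normalized gradient descent is indispensable --- it equalizes the learning speeds of the experts in $\cM_k$, so their outputs, hence the gradients driving $\la\btheta_m,\cb_k\ra$, stay matched to within $O(d^{-0.001})$ --- and it is the step that most delicately exploits the choice $T_2=\lfloor\eta^{-1}M^{-2}\rfloor$: long enough for the router to separate $\cM_k$ from the other experts, yet short enough that the residual $O(1)$ gaps remain bridgeable by the bounded noise so that each intended expert keeps an $\Omega(1/M)$ share. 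A secondary technical point is propagating the auxiliary invariants --- the $\Omega(1/(KM^2))$ lower bound on $\|\nabla_{\Wb_m}\cL^{(\tau)}\|_F$ and the growth of $\la\wb_{m,j_m^*},\vb_{k_m^*}\ra$ --- which keep the normalized update well-defined and guarantee the off-signal directions are genuinely suppressed.
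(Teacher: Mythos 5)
Your overall architecture matches the paper's: a simultaneous induction over $[T_1,T_2]$ in which the stage-one expert structure is propagated by a tensor-power comparison (giving \eqref{eq: induction1}, as in Lemma~\ref{lm:keepstage1} and Lemma~\ref{lm:app1}), the router's deviation $\|\hb(\xb_i;\bTheta)-\hb(\bar{\xb}_i;\bTheta)\|_\infty$ is controlled through label-balanced sums and the small total router movement $\eta_r T_2=\Theta(1)$ (giving \eqref{eq:induction2}, as in Lemma~\ref{lm:noiseinner}), and within-group symmetry under normalized GD keeps experts in the same $\cM_k$ nearly interchangeable (the analogue of Lemmas~\ref{lm:D15} and~\ref{lm:expclose}). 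One caveat on \eqref{eq:induction2}: the cancellation you invoke is only approximate, with a per-step error that feeds back a term proportional to the current deviation itself, so the correct bookkeeping is the paper's recursion $\delta_{\bTheta^{(t+1)}}\leq(1+O(\eta_r M^2))\delta_{\bTheta^{(t)}}+\tilde{O}(\eta_r d^{-0.005})$, which yields the strictly stronger bound $\tilde{O}(d^{-0.005})$ and hence a non-degrading induction; your sketch gestures at this margin but does not set up the recursion.

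The genuine gap is in your derivation of \eqref{eq:induction3}. You establish only that the gating outputs of experts inside $\cM_k$ are within $O(d^{-0.001})$ of one another and that the gap to the remaining experts is ``$O(1)$'', and you conclude an $\Omega(1/M)$ routing probability. That inference fails: if several experts outside $\cM_k$ sat a constant $c\in(0,1)$ \emph{above} the $\cM_k$ experts, the probability that a fixed $m\in\cM_k$ wins the $\mathrm{Unif}[0,1]$-perturbed argmax is of order $(1-c)^{M}/M$, which is $o(1/M)$ since $M=\Theta(K\log K\log\log d)$ grows, and it is exactly zero once the gap reaches $1$ (Lemma~\ref{lm:keeporder}); an $O(1)$ bound does not even certify the gap stays below $1$. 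What is missing is the argument that, for inputs in $\Omega_k$, the experts outside $\cM_k$ never approach the top gating value: this is the content of Lemma~\ref{lm: ckinnerdecrease}, which shows that whenever $\la\btheta_m,\cb_k\ra$ for $m\notin\cM_k$ comes within $1$ of the maximum, the gradient component $\la\nabla_{\btheta_m}\cL^{(t)},\cb_k\ra=\Omega(\eta^3t^3/(KM^3))$ is positive (driven by the large positive margins $y_if_{m'}(\xb_i)=\Omega(\eta^3t^3)$ of the specialized experts $m'\in\cM_k$), so such an expert's gating value keeps decreasing; combined with the zero-sum property $\sum_m\btheta_m^{(t)}=0$ (Lemma~\ref{lm: zero mean}), the maximum is nonnegative and attained inside $\cM_k$ up to $\Delta_{\bTheta^{(t)}}=\tilde{O}(d^{-0.001})$, and only then does Lemma~\ref{lm:Msmooth2} give $\mathbb{P}(m_{i,t}=m)\geq 1/M-o(1/M)$ and $\pi_m(\xb_i;\bTheta^{(t)})=\Omega(1/M)$. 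Your proposal never proves this ordering between $\cM_k$ and its complement, so the induction for \eqref{eq:induction3} does not close.
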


Proposition~\ref{claim:main} implies that expert will only focus on the label signal and router will only focus on the cluster-center signal.  We will prove Proposition~\ref{claim:main} by induction. Before we move into the detailed proof of Proposition~\ref{claim:main}, we will first prove some important lemmas. 

\begin{lemma}\label{lm:fpibound}
For all $T_{1} \leq t \leq T_{2}$, the neural network parameter maintains following property.

\begin{itemize}
\item $|f_{m}(\xb_{i}; \Wb^{(t)})| = O(1), \forall m \in [M]$,
\item $\pi_{m_{i,t}}(\xb_{i};\bTheta^{(t)})= \Omega(1/M)$, $\forall i\in [n]$.
\end{itemize}
\end{lemma}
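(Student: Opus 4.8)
The plan is to establish the two bullets directly from facts already in hand; no induction on $t$ is needed here, and the high-probability events of Section~\ref{sec: initial} are all we rely on. The engine for the first bullet is that normalized gradient descent keeps each filter close to its initialization on the whole interval. Indeed the update rule gives $\|\Wb_m^{(t+1)}-\Wb_m^{(t)}\|_F=\eta$, so by the triangle inequality $\|\wb_{m,j}^{(t)}-\wb_{m,j}^{(0)}\|_2\le \|\Wb_m^{(t)}-\Wb_m^{(0)}\|_F\le \eta t\le \eta T_2\le M^{-2}$ for every $t\le T_2$. Combining this with Lemma~\ref{lm: initialvinner} (which bounds $|\langle\wb_{m,j}^{(0)},\vb\rangle|$ by $\tilde O(\sigma_0)$ for every $\vb\in\{\vb_k\}_k\cup\{\cb_k\}_k\cup\{\bxi_{i,p}\}_{i,p}$) and with the patch-norm bounds from Lemma~\ref{lm:gaussian bound} ($\|\bxi_{i,p}\|_2=O(1)$; the three structured patches have norm at most $C_2$ since $\alpha,\beta,\gamma\in[C_1,C_2]$ and the signal vectors are unit), I get for all $i\in[n]$, $p\in[P]$, $j\in[J]$ that $|\langle\wb_{m,j}^{(t)},\xb_i^{(p)}\rangle|\le |\langle\wb_{m,j}^{(0)},\xb_i^{(p)}\rangle| + \|\wb_{m,j}^{(t)}-\wb_{m,j}^{(0)}\|_2\|\xb_i^{(p)}\|_2 = \tilde O(\sigma_0)+O(M^{-2})=o(1)$.

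With $\sigma(z)=z^3$ this gives $|\sigma(\langle\wb_{m,j}^{(t)},\xb_i^{(p)}\rangle)| = \tilde O(\sigma_0^3+M^{-6})$, and summing the $JP=\tilde O(1)$ terms in \eqref{eq:single expert},
\[
|f_m(\xb_i;\Wb^{(t)})|\le \sum_{j\in[J]}\sum_{p\in[P]}\big|\sigma(\langle\wb_{m,j}^{(t)},\xb_i^{(p)}\rangle)\big| = \tilde O(\sigma_0^3+M^{-6}) = o(1),
\]
which is in particular $O(1)$. For the second bullet, recall $m_{i,t}=\argmax_m\{h_m(\xb_i;\bTheta^{(t)})+r_{m,i}^{(t)}\}$ with $r_{m,i}^{(t)}\in[0,1]$. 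Writing $m^\star=\argmax_m h_m(\xb_i;\bTheta^{(t)})$, maximality of $m_{i,t}$ gives $h_{m_{i,t}}(\xb_i;\bTheta^{(t)})+r_{m_{i,t},i}^{(t)}\ge h_{m^\star}(\xb_i;\bTheta^{(t)})+r_{m^\star,i}^{(t)}\ge h_{m^\star}(\xb_i;\bTheta^{(t)})$, and since $r_{m_{i,t},i}^{(t)}\le 1$ this forces $h_{m_{i,t}}(\xb_i;\bTheta^{(t)})\ge \max_{m'}h_{m'}(\xb_i;\bTheta^{(t)})-1$ — precisely the mechanism behind Lemma~\ref{lm:keeporder}. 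Hence
\[
\pi_{m_{i,t}}(\xb_i;\bTheta^{(t)}) = \frac{\exp\!\big(h_{m_{i,t}}(\xb_i;\bTheta^{(t)})\big)}{\sum_{m'=1}^M \exp\!\big(h_{m'}(\xb_i;\bTheta^{(t)})\big)} \ge \frac{\exp\!\big(\max_{m'}h_{m'}(\xb_i;\bTheta^{(t)})-1\big)}{M\exp\!\big(\max_{m'}h_{m'}(\xb_i;\bTheta^{(t)})\big)} = \frac{1}{eM} = \Omega(1/M),
\]
and this holds for every realization of the router noise, not merely with high probability.

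There is no genuine obstacle in this lemma; the only points needing care are bookkeeping ones. First, the drift budget $\eta T_2\le M^{-2}$ must dominate every initialization-scale quantity (in particular $\sigma_0^{0.5}$, the size reached by the ``signal'' inner product $\langle\wb_{m,j_m^*}^{(t)},\vb_{k_m^*}\rangle$ at the end of the exploration stage, cf.\ Lemma~\ref{lm:shortphase2}), so that no inner product — not even the signal one — can leave the $o(1)$ regime before $T_2$; this is what lets the crude bound above suffice and is why all of $f_m$ stays tiny throughout the router-learning stage. Second, the lower bound $1/(eM)$ on the route gate of the chosen expert is exactly the place where the choice of a unit-range perturbation $\mathrm{Unif}[0,1]$ matters: it is the ``$1$'' in $h_{m_{i,t}}\ge \max_{m'}h_{m'}-1$. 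Both conclusions therefore follow unconditionally once the initialization events of Section~\ref{sec: initial} are granted.
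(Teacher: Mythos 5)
Your proof is correct and follows essentially the same route as the paper: the first bullet via the normalized-gradient-descent drift bound $\|\wb_{m,j}^{(t)}-\wb_{m,j}^{(0)}\|_2\le \eta T_2=O(M^{-2})$ together with $\|\xb_i^{(p)}\|_2=O(1)$, and the second bullet via the unit-range noise argument of Lemma~\ref{lm:keeporder} giving $\pi_{m_{i,t}}\ge 1/(eM)$. Your version is just slightly more explicit (bounding each inner product and noting the resulting $o(1)$ size of $f_m$), which is harmless and consistent with the paper's $O(1)$ claim.
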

\begin{proof}
Because we use normalized gradient descent, the first bullet would be quite straight forward.
\begin{align*}
|f_{m}(\xb_{i}, \Wb^{(t)})| = \sum_{j \in [J]}\sum_{p \in [P]}\sigma(\la\wb_{m,j}^{(t)}, \xb_{i}^{(p)}\ra) \overset{(i)}{=}  O(1),    
\end{align*}
where (i) is by  $\|\wb_{m,j}^{(t)} - \wb_{m,j}^{(0)}\|_{2} = O(\eta T_{2}) = O(M^{-2})$ and $\xb_{i}^{(p)} = O(1)$.

Now we prove the second bullet. By Lemma~\ref{lm:keeporder}, we have that $h_{m_{i,t}}(\xb; \bTheta) \geq \max_{m}h_{m}(\xb; \bTheta) - 1$, which implies that 
\begin{align*}
\pi_{m_{i,t}}(\xb_{i};\bTheta^{(t)}) = \frac{\exp(h_{m_{i,t}}(\xb_i; \bTheta^{(t)}))}{\sum_{m}\exp(h_{m}(\xb; \bTheta^{(t)}))} \geq    \frac{\exp(h_{m_{i,t}}(\xb_i; \bTheta^{(t)}))}{M\max_{m}\exp(h_{m}(\xb; \bTheta^{(t)}))} \geq \frac{1}{eM}.
\end{align*}
\end{proof}

\begin{lemma}\label{lemma:pi}
Denote $\delta_{\bTheta} = \max_{i}\|\hb(\bar{\xb}_{i};\bTheta) - \hb(\xb_{i}; \bTheta)\|_{\infty}$ and let the random variable $\bar{m}_{i,t}$ be expert that get routed if we use the gating network output $\hb(\bar{\xb}_{i}; \bTheta^{(t)})$ instead. Then we have following inequalities, 
\begin{align}
&|\pi_{m}(\xb_{i};  \bTheta) - \pi_{m}(\bar{\xb}_{i}; \bTheta)| = O(\delta_{\bTheta}), \forall m \in [M], i \in [n],\label{eq:pi1}. \\
&|\mathbb{P}(m_{i,t} = m) - \mathbb{P}(\bar{m}_{i,t} = m)| = O(M^{2}\delta_{\bTheta}),  \forall m \in [M], i \in [n] \label{eq:pi2}. 
\end{align}
\end{lemma}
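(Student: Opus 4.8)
The plan is to handle the two bounds separately: \eqref{eq:pi1} is a Lipschitz-continuity statement for the softmax map with respect to $\|\cdot\|_\infty$, while \eqref{eq:pi2} follows immediately from the smoothing lemma (Lemma~\ref{lm:Msmoothly}) that was already established for the noisy-$\argmax$ router. Neither step is deep; the work is essentially bookkeeping.

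For \eqref{eq:pi1}, I would regard $\pi_m$ as a function of the gating vector $\hb\in\RR^M$ through the softmax and bound the $\ell_1$-norm of its gradient. Using $\partial\pi_m/\partial h_{m'} = \pi_m(\ind(m=m') - \pi_{m'})$, one gets $\sum_{m'}|\partial\pi_m/\partial h_{m'}| = \pi_m(1-\pi_m) + \sum_{m'\neq m}\pi_m\pi_{m'} = 2\pi_m(1-\pi_m)\le 1/2$. Integrating this gradient bound along the line segment joining $\hb(\xb_i;\bTheta)$ and $\hb(\bar{\xb}_i;\bTheta)$ (the mean value theorem applied to $t\mapsto \pi_m(\hb(\bar{\xb}_i;\bTheta)+t(\hb(\xb_i;\bTheta)-\hb(\bar{\xb}_i;\bTheta)))$) yields $|\pi_m(\xb_i;\bTheta)-\pi_m(\bar{\xb}_i;\bTheta)|\le \frac12\|\hb(\xb_i;\bTheta)-\hb(\bar{\xb}_i;\bTheta)\|_\infty\le \frac12\delta_{\bTheta}$, which is the claimed $O(\delta_{\bTheta})$ bound. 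The argument is uniform over $m\in[M]$ and $i\in[n]$ since it only used $\pi_m\in[0,1]$.

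For \eqref{eq:pi2}, I would observe that, for a fixed realization of the noise $\{r_{m,i}^{(t)}\}$, the indices $m_{i,t}=\argmax_m\{h_m(\xb_i;\bTheta)+r_{m,i}^{(t)}\}$ and $\bar{m}_{i,t}=\argmax_m\{h_m(\bar{\xb}_i;\bTheta)+r_{m,i}^{(t)}\}$ are exactly the two noisy-$\argmax$ outputs compared in Lemma~\ref{lm:Msmoothly}, with $\hb=\hb(\xb_i;\bTheta)$ and $\hat{\hb}=\hb(\bar{\xb}_i;\bTheta)$. Hence $\mathbb{P}(m_{i,t}=m)=p_m$ and $\mathbb{P}(\bar{m}_{i,t}=m)=\hat{p}_m$ in the notation of that lemma, so $|\mathbb{P}(m_{i,t}=m)-\mathbb{P}(\bar{m}_{i,t}=m)|\le M^2\|\hb(\xb_i;\bTheta)-\hb(\bar{\xb}_i;\bTheta)\|_\infty\le M^2\delta_{\bTheta}$, again uniformly over $m$ and $i$; equivalently one can invoke Lemma~\ref{lm:exMsmooth} with density bound $\kappa=1$ for $\mathrm{Unif}[0,1]$ noise.

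I do not expect a genuine obstacle here. The only two points that need care are: (i) obtaining the softmax Lipschitz constant with respect to $\|\cdot\|_\infty$ rather than $\|\cdot\|_2$ — this is what the $\ell_1$ gradient bound above is for; and (ii) making sure $\bar{m}_{i,t}$ is defined using the \emph{same} noise draw as $m_{i,t}$, so that the coupling underlying Lemma~\ref{lm:Msmoothly} applies verbatim and the factor $M^2$ (rather than something larger) is exactly what appears on the right-hand side of \eqref{eq:pi2}.
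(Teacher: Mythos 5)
Your proposal is correct. For \eqref{eq:pi2} it coincides with the paper's proof: the paper also just notes $\|\hb(\xb_i;\bTheta^{(t)})-\hb(\bar{\xb}_i;\bTheta^{(t)})\|_\infty\le\delta_{\bTheta}$ and invokes Lemma~\ref{lm:Msmoothly} (equivalently Lemma~\ref{lm:exMsmooth} with $\kappa=1$), exactly as you do; your remark about using the same noise draw is harmless but not even needed, since both sides are marginal routing probabilities and the lemma compares those directly. For \eqref{eq:pi1} your route differs from the paper's: the paper rewrites $\pi_m(\xb_i;\bTheta)$ as a ratio involving $\pi_{m'}(\bar{\xb}_i;\bTheta)$ and the multiplicative factors $\delta_{m'}=\exp\bigl(h_{m'}(\xb_i;\bTheta)-h_{m'}(\bar{\xb}_i;\bTheta)\bigr)=1+O(\delta_{\bTheta})$, and then bounds the resulting quotient, which implicitly requires $\delta_{\bTheta}$ small enough that $\delta_{m'}\ge 0.5$ (true in the regime where the lemma is applied). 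You instead bound the softmax Jacobian in $\ell_1$ norm, $\sum_{m'}\bigl|\partial\pi_m/\partial h_{m'}\bigr|=2\pi_m(1-\pi_m)\le 1/2$, and combine the mean value theorem with the $\ell_1$--$\ell_\infty$ H\"older pairing to get $|\pi_m(\xb_i;\bTheta)-\pi_m(\bar{\xb}_i;\bTheta)|\le\tfrac12\delta_{\bTheta}$. This buys you an explicit Lipschitz constant and a bound valid for all $\delta_{\bTheta}$ (no smallness assumption), at the cost of a slightly less self-contained computation; the paper's expansion is more pedestrian but mirrors the Taylor-expansion style it reuses in later lemmas (e.g.\ Lemma~\ref{lm:D15}). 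Either argument suffices for the downstream use, where $\delta_{\bTheta}=\tilde{O}(d^{-0.005})$.
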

\begin{proof}
By definition of $\delta_{\bTheta}$, we have that $\|\hb(\xb_{i}; \bTheta^{(t)}) - \hb(\bar{\xb}_{i}; \bTheta^{(t)})\|_{\infty} \leq \delta_{\bTheta}$. Then applying Lemma~\ref{lm:Msmoothly} gives $|\mathbb{P}(m_{i,t} = m) - \mathbb{P}(\bar{m}_{k,t} = m)| = \tilde{O}(\delta_{\bTheta}),  \forall m \in [M], i \in [n]$, which completes the proof for \eqref{eq:pi2}.

Next we prove \eqref{eq:pi1}, which needs more effort. For all $i \in [n]$, we have
\begin{align*}
\pi_{m}(\xb_{i}; \bTheta) &= \frac{\pi_{m}(\bar{\xb}_{i}; \bTheta)\exp(h_{m}(\xb_{i}; \bTheta) - h_{m}(\bar{\xb}_{i}; \bTheta))}{\sum_{m'}\pi_{m'}(\bar{\xb}_{i}; \bTheta)\exp(h_{m'}(\xb_{i}; \bTheta) - h_{m'}(\bar{\xb}_{i}; \bTheta))}.
\end{align*}
Let $\delta_{m'} = \exp(h_{m'}(\xb_{i}; \bTheta) - h_{m'}(\bar{\xb}_{i}; \bTheta)) = 1 + O(\delta_{\bTheta})$. Then for sufficiently small $\delta_{\bTheta}$, we have that $\delta_{ m'} \geq 0.5$ . Then we can further compute
\begin{align*}
|\pi_{m}(\xb_i; \bTheta^{(t)}) - \pi_{m}(\bar{\xb}_{i}; \bTheta)| &= \pi_{m}(\bar{\xb}_{i}; \bTheta)\bigg|\frac{\delta_{m}}{\sum_{m'}\pi_{m'}(\bar{\xb}_{i}; \bTheta)\delta_{m'}} - 1\bigg| \\
&= \pi_{m}(\bar{\xb}_{i}; \bTheta)\frac{|\sum_{m'}\pi_{m'}(\bar{\xb}_{i}; \bTheta)(\delta_{m'}-\delta_{m})|}{\sum_{m'}\pi_{m'}(\bar{\xb}_{i}; \bTheta)\delta_{m'}}\\
&\leq \pi_{m}(\bar{\xb}_{i}; \bTheta)\frac{\sum_{m'}\pi_{m'}(\bar{\xb}_{i}; \bTheta)|\delta_{m'}-\delta_{m}|}{\sum_{m'}\pi_{m'}(\bar{\xb}_{i}; \bTheta)\delta_{m'}}\\
&\leq O(\delta_{\bTheta}),
\end{align*}
where the last inequality is by $|\delta_{m'}-\delta_{m}| \leq O(\delta_{\bTheta})$, $\pi_{m}(\bar{\xb}_{i}; \bTheta) \leq 1$ and $\sum_{m'}\pi_{m'}(\bar{\xb}_{i}; \bTheta)\delta_{m'}\geq [\sum_{m'}\pi_{m'}(\bar{\xb}_{i}; \bTheta)]/2 = 0.5$.
\end{proof}

Following Lemma implies that the pattern learned by experts during the first stage won't change in the second stage.

\begin{lemma}\label{lm:keepstage1} 
Suppose \eqref{eq: induction1}, \eqref{eq:induction2}, \eqref{eq:induction3} hold for all $t \in [T_{1}, T] \subseteq [T_{1}, T_{2}-1]$, then we have following inequalities hold for all $t \in [T_{1}, T+1]$,
\begin{align*}
&\la \wb_{m,j_{m}^{*}}^{(t)}, \vb_{k_{m}^{*}} \ra \geq (1 - O(\sigma_{0}^{0.1}))\eta t, \\
&\la \wb_{m,j}^{(t)}, \vb_{k} \ra = \tilde{O}(\sigma_{0}),\forall (j,k)\not= (j_{m}^{*}, k_{m}^{*}),\\
&\la \wb_{m,j}^{(t)}, \cb_{k} \ra= \tilde{O}(\sigma_{0}),  \forall j \in [J], k \in [K],\\
&\la \wb_{m,j}^{(t)}, \bxi_{i,p}\ra = \tilde{O}(\sigma_{0}), \forall j \in [J], k \in [K], i \in [n], p \geq 4.
\end{align*}
\end{lemma}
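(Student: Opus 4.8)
The plan is to run an induction on $t \in [T_1, T+1]$, treating the four displayed bounds as the induction hypothesis together with the already-assumed Proposition~\ref{claim:main}. The base case $t = T_1$ is exactly the conclusion of Lemma~\ref{lm:shortphase2}, so there is nothing to prove there. For the inductive step, suppose the bounds hold at some $t \in [T_1, T]$; I would then examine the gradient update $\wb_{m,j}^{(t+1)} = \wb_{m,j}^{(t)} - \eta \nabla_{\wb_{m,j}}\cL^{(t)}/\|\nabla_{\Wb_m}\cL^{(t)}\|_F$ and project onto each of the relevant directions $\vb_{k_m^*}$, $\vb_k$ ($k\ne k_m^*$ or $j\ne j_m^*$), $\cb_k$, and $\bxi_{i,p}$.

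The key computation is to re-derive the analogue of Lemma~\ref{lm:gradientinner} in this regime. First I would use Proposition~\ref{claim:main}: equation~\eqref{eq: induction1} lets me replace $f_m(\xb_i;\Wb^{(t)})$ by $f_m(\hat\xb_i;\Wb^{(t)})$ up to $O(d^{-0.001})$ error (so the expert output depends essentially only on feature signal/noise patches), equation~\eqref{eq:induction2} plays the analogous role for the router, and equation~\eqref{eq:induction3} gives $\mathbb{P}(m_{i,t}=m), \pi_m(\xb_i;\bTheta^{(t)}) = \Omega(1/M)$ for $i \in \Omega_{k_m^*}$. Combined with Lemma~\ref{lm:easy} (concentration of the gradient around its expectation) and the data-balance bounds of Lemma~\ref{lm:data balance}, I expect to obtain, for $i\in\Omega_{k_m^*}$ patches aligned with $\vb_{k_m^*}$, a signal term of order $\Theta(1/(KM^2))\cdot\sigma'(\la\wb_{m,j_m^*}^{(t)},\vb_{k_m^*}\ra)$ driving $\la\wb_{m,j_m^*}^{(t)},\vb_{k_m^*}\ra$ upward, while all the other inner products ($\vb_k$ with $k\ne k_m^*$, $\cb_k$, $\bxi_{i,p}$) receive updates of order at most $\tilde O(d^{-0.005})\sigma'(\cdot) + \tilde O(\sigma_0^{2.5})$ — exactly the same structure as in the exploration stage, because the cluster-center and noise directions are orthogonal to the feature directions and the feature-noise terms in $\Omega_{k',k_m^*}$ average out by Lemma~\ref{lm:data balance}. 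The one genuinely new point is controlling the denominator $\|\nabla_{\Wb_m}\cL^{(t)}\|_F$ and the dominant-neuron property; here I would invoke the argument of Lemma~\ref{lm:shortphase} (showing $\la\nabla_{\wb_{m,j_m^*}}\cL^{(t)},\vb_{k_m^*}\ra \ge (1-\sigma_0^{0.1})\|\nabla_{\Wb_m}\cL^{(t)}\|_F$ once the gradient norm is large enough), which continues to hold because $\la\wb_{m,j_m^*}^{(t)},\vb_{k_m^*}\ra \ge (1-O(\sigma_0^{0.1}))\eta T_1 \gg \sigma_0$ is already much larger than every competing inner product, so the $\sigma'=3z^2$ nonlinearity makes its contribution dominate. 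This yields $\la\wb_{m,j_m^*}^{(t+1)},\vb_{k_m^*}\ra \ge \la\wb_{m,j_m^*}^{(t)},\vb_{k_m^*}\ra + (1-O(\sigma_0^{0.1}))\eta$, and summing from $T_1$ (using the base case) gives the first claimed bound.

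For the remaining three bounds, I would apply the tensor-power comparison lemma (Lemma~\ref{lm: Tensor power update}, used already in Lemma~\ref{lm:GG}) to each pair (growing coordinate $\la\wb_{m,j_m^*}^{(t)},\vb_{k_m^*}\ra$ vs. the slowly-moving coordinate), with gap parameter and scale chosen so that the initial separation guaranteed by the events of Section~\ref{sec: initial} (Lemmas~\ref{lm: Mkset}, \ref{lm: initialvinner}) forces the slow coordinate to stay $\tilde O(\sigma_0)$ throughout $[T_1, T_2]$; the length of the window $T_2 = \lfloor\eta^{-1}M^{-2}\rfloor$ is still short enough that the small per-step increments $\tilde O(\sigma_0^{2.5}\cdot\eta/\|\nabla\|_F)$ accumulate to only $\tilde O(\sigma_0)$. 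The main obstacle I anticipate is bookkeeping the error propagation: the $O(d^{-0.001})$ slack coming from Proposition~\ref{claim:main} must be shown not to contaminate the $\tilde O(d^{-0.005})$-level estimates on the off-directions — i.e., I need the feature-signal part of $f_m$ to itself be $O(1)$ (Lemma~\ref{lm:fpibound}) and the $O(d^{-0.001})$ discrepancy to enter only multiplicatively against already-tiny $\sigma'(\tilde O(\sigma_0))$ factors, so it is absorbed into the $\tilde O(\sigma_0^{2.5})$ additive term. Once that accounting is set up carefully, the induction closes and the lemma follows.
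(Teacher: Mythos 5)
Your proposal is correct and follows essentially the same route as the paper: approximate $\pi_m f_m$ by $\pi_m(\bar\xb_i)f_m(\hat\xb_i)$ via Proposition~\ref{claim:main}, re-derive the gradient inner products as in Lemma~\ref{lm:gradientinner} using the $\Omega(1/M)$ routing probability from \eqref{eq:induction3}, control the off-directions with the tensor-power comparison Lemma~\ref{lm: Tensor power update}, and recover the linear growth of $\la \wb_{m,j_m^*}^{(t)},\vb_{k_m^*}\ra$ by the dominant-neuron argument of Lemmas~\ref{lm:shortphase} and \ref{lm:shortphase2}. No substantive gap.
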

\begin{proof}
Most of the proof exactly follows the proof in the first stage, so we only list some key steps here. Recall that 
\begin{align*}
   \nabla_{\wb_{m,j}} \cL^{(t)}
    & =
    \frac{1}{n} \sum_{i,p}\ind(m_{i,t} =m)\ell'_{i,t}
    \pi_{m}(\xb_i; \bTheta^{(t)})y_{i}\sigma'(\la \wb_{m,j}^{(t)}, \xb_{i}^{(p)}\ra) \xb_{i}^{(p)}.
\end{align*}
In the proof of Lemma~\ref{lm:gradientinner}, we do Taylor expansion at the zero point. Now we will do Taylor expansion at $f_{m}(\hat{\xb}_{i}; \Wb)$ and $\pi(\bar{\xb}_{i}; \bTheta)$ as follows,
\begin{align*}
&|\pi_{m}(\xb_i; \bTheta^{(t)})f_{m}(\xb_{i};\Wb^{(t)}) -  \pi_{m}(\bar{\xb}_{i};\bTheta^{(t)})f_{m}(\hat{\xb}_{i};\Wb^{(t)})|\\
&\leq |\pi_{m}(\bar{\xb}_i; \bTheta^{(t)})[f_{m}(\xb_{i};\Wb^{(t)}) - f_{m}(\hat{\xb}_{i};\Wb^{(t)})]| + |[\pi_{m}(\xb_i; \bTheta^{(t)}) - \pi_{m}(\bar{\xb}_{i};\bTheta^{(t)}) ]f_{m}(\xb_{i};\Wb^{(t)})| \\
&\leq |f_{m}(\xb_{i};\Wb^{(t)}) - f_{m}(\hat{\xb}_{i};\Wb^{(t)})| + O(|\pi_{m}(\xb_i; \bTheta^{(t)}) - \pi_{m}(\bar{\xb}_{i};\bTheta^{(t)})|) \\
&\leq O(d^{-0.001}),   
\end{align*}
where the first inequality is by triangle inequality, the second inequality is by $\pi_{m}(\bar{\xb}_i; \bTheta^{(t)}) \leq 1$ and $|f_{m}(\xb_{i};\Wb^{(t)})| = O(1)$ in Lemma~\ref{lm:fpibound}, the third inequality is by \eqref{eq: induction1}, \eqref{eq:induction2} and \eqref{eq:pi1}. 

Then follow the proof of Lemma~\ref{lm:gradientinner}, we have that
\begin{align*}
\EE[\la \nabla_{\wb_{m,j}} \cL^{(t)}, \vb_{k_{m}^{*}} \ra]  &= -\frac{1}{n}\sum_{i\in \Omega_{k_{m}^{*}}}\mathbb{P}(m_{i,t}=m)\ell'_{i,t}\pi_{m}(\xb_i; \bTheta^{(t)})\sigma'(\la \wb_{m,j}^{(t)}, \vb_{k_{m}^{*}}\ra)\alpha_{i}^{3}\|\vb_{k_{m}^{*}}\|_{2}^{2}\\
    &\qquad -\frac{1}{n}\sum_{i\in \Omega_{k',k_{m}^{*}}}\mathbb{P}(m_{i,t}=m)\ell'_{i,t}
    \pi_{m}(\xb_{i}; \bTheta^{(t)})\sigma'(\la \wb_{m,j}^{(t)}, \vb_{k_{m}^{*}}\ra)\gamma_{i}^{3}y_{i}\epsilon_{i}\|\vb_{k_{m}^{*}}\|_{2}^{2}\\   
    &\qquad -\frac{1}{n}\sum_{i,p}\mathbb{P}(m_{i,t}=m)\ell'_{i,t}\pi_{m}(\xb_i; \bTheta^{(t)})\sigma'(\la \wb_{m,j}^{(t)}, \bxi_{i,p}\ra)y_{i}\la \vb_{k_{m}^{*}}, \bxi_{i,p}\ra\\
    &= \bigg[-\tilde{\Theta}\Big(\frac{1}{n}\Big)\sum_{i\in \Omega_{k_{m}^{*}}}\mathbb{P}(m_{i,t} = m)\alpha_{i}^{3} - \tilde{\Theta}\Big(\frac{1}{n}\Big)\sum_{i\in \Omega_{k',k_{m}^{*}}}\mathbb{P}(m_{i,t} = m)\gamma_{i}^{3}y_{i}\epsilon_{i}\\
    &\qquad + O(d^{-0.001})\bigg]\cdot \sigma'(\la \wb_{m,j}^{(t)}, \vb_{k_{m}^{*}}\ra) + \tilde{O}(d^{-1/2})\\
    &\overset{(i)}{=}  -\tilde{\Theta}(1)\sigma'(\la \wb_{m,j}^{(t)}, \vb_{k_{m}^{*}}\ra),
\end{align*}
where (i) is due to \eqref{eq:induction3}: $\mathbb{P}(m_{i,t} = m ) \geq \Theta(1/M)$, $\forall i \in \Omega_{k_{m}^{*}}, m \in [M]$. Again follow Lemma~\ref{lm:gradientinner} and Lemma~\ref{lm:GG}, we further have that 
\begin{align*}
    \la \nabla_{\wb_{m,j}} \cL^{(t)}, \vb_{k} \ra
    &=   -\tilde{\Theta}(1)
   [\la \wb_{m,j}^{(t)}, \vb_{k}\ra]^{2} , \\
   \la \nabla_{\wb_{m,j}} \cL^{(t)}, \cb_{k} \ra
    &=  \tilde{O}(1)
   [\la \wb_{m,j}^{(t)}, \cb_{k}\ra]^{2},\\
 \la \nabla_{\wb_{m,j}} \cL^{(t)}, \bxi_{i,p} \ra
    &= \tilde{O}(1)[\la\wb_{m,j}^{(t)}, \bxi_{i,p}\ra]^{2}.    
\end{align*}
Thus for all $T_{1} \leq t \leq T$, the update rule of every expert could be written as,
\begin{align*}
\la \wb_{m,j}^{(t+1)}, \vb_{k^{*}_{m}}\ra  &= \la \wb_{m,j}^{(t)}, \vb_{k^{*}_{m}}\ra + \tilde{\Theta}(1)\frac{\eta}{\| \nabla_{\Wb_{m}} \cL^{(t)}\|_{F}}\la \wb_{m,j}^{(t)}, \vb_{k^{*}_{m}}\ra^{2}\\
\la \wb_{m,j}^{(t+1)}, \vb_{k}\ra  &= \la \wb_{m,j}^{(t)}, \vb_{k}\ra + \tilde{O}(1)\frac{\eta}{\| \nabla_{\Wb_{m}} \cL^{(t)}\|_{F}}\la \wb_{m,j}^{(t)}, \vb_{k}\ra^{2}\\
\la \wb^{(t+1)}, \bxi_{i,p} \ra  &= \la \wb^{(t)}, \bxi_{i,p} \ra + \tilde{O}(1)\frac{\eta}{\| \nabla_{\Wb_{m}} \cL^{(t)}\|_{F}}\la \wb^{(t)}, \bxi_{i,p} \ra^{2}\\ 
\la \wb_{m,j}^{(t+1)}, \cb_{k}\ra  &=  \la \wb_{m,j}^{(t)}, \cb_{k}\ra +\tilde{O}(1)\frac{\eta}{\| \nabla_{\Wb_{m}} \cL^{(t)}\|_{F}}\la \wb_{m,j}^{(t)}, \cb_{k}\ra^{2}.
\end{align*}
By the first stage of training we have that 
$\la \wb_{m,j}^{(T_{1})}, \vb_{k^{*}_{m}}\ra  = \Theta(\sigma_{0}^{0.5})$, while others remains $\tilde{O}(\sigma_{0})$. Then we can use Lemma~\ref{lm: Tensor power update}, by choosing $S = \tilde{\Theta}(1)$ and $G = 2$, then we have that 
\begin{align*}
\la \wb_{m,j}^{(t)}, \vb_{k^{*}_{m}}\ra &= O(1).\\
\la \wb_{m,j}^{(t)}, \vb_{k}\ra &= \tilde{O}(\sigma_{0}), \forall k\not= k^{*}_{m}.\\
\la \wb_{m,j}^{(t)}, \cb_{k}\ra &= \tilde{O}(\sigma_{0}).\\
\la \wb^{(t)}, \bxi_{i,p} \ra &= \tilde{O}(\sigma_{0}).
\end{align*}
Then following Lemma~\ref{lm:shortphase} and \ref{lm:shortphase2}, we can prove that for all $T_{1} \leq t \leq T+1$, $m \in [M]$,
\begin{align*}
\la \wb_{m,j_{m}^{*}}^{(t)}, \vb_{k^{*}_{m}}\ra &\geq (1 - O(\sigma_{0}^{0.1}))\eta t,\\
\la \wb_{m,j}^{(t)}, \vb_{k}\ra &= \tilde{O}(\sigma_{0}), \forall (j, k) \not= (j_{m}^{*}, k_{m}^{*}),\\
\la \wb_{m,j}^{(t)}, \cb_{k}\ra &= \tilde{O}(\sigma_{0}), \forall j \in [J], k \in [K],\\
\la \wb_{m,j}^{(t)}, \bxi_{i,p} \ra &= \tilde{O}(\sigma_{0}), \forall j \in [J], i \in [n], p \geq 4.
\end{align*}
\end{proof}


 By the result of expert training we have following results

\begin{lemma}\label{lm:app1}
Suppose \eqref{eq: induction1}, \eqref{eq:induction2}, \eqref{eq:induction3} hold for all $t \in [T_{1}, T] \subseteq [T_{1}, T_{2}-1]$, then we have that $|f_{m}(\xb_{i}; \Wb^{(t)}) - f_{m}(\hat{\xb}_{i}; \Wb^{(t)}) | = \tilde{O}(\sigma_{0}^{3})$ for all $m \in [M]$ and $i \in [n]$, $t \in [T_{1}, T+1]$. Besides, 
\begin{align*}
y_{i}f_{m}(\hat{\xb}_{i}; \Wb^{(t)}) &= \sum_{j \in [J]}\Big[\alpha_{i}^{3}\sigma(\la \wb_{m,j}^{(t)}, \vb_{k}\ra) + \gamma_{i}^{3}\sigma(\la \wb_{m,j}^{(t)}, \vb_{k'} \ra)\Big],   \forall i \in \Omega_{k,k'}^{+}, m \in [M], \\
y_{i}f_{m}(\hat{\xb}_{i}; \Wb^{(t)}) &= \sum_{j \in [J]} \Big[\alpha_{i}^{3}\sigma(\la \wb_{m,j}^{(t)}, \vb_{k}\ra) - \gamma_{i}^{3}\sigma(\la \wb_{m,j}^{(t)}, \vb_{k'} \ra)\Big],  \forall   i \in \Omega_{k,k'}^{-}, m \in [M].
\end{align*}
\end{lemma}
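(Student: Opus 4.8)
The plan is to expand both $f_{m}(\xb_{i};\Wb^{(t)})$ and $f_{m}(\hat{\xb}_{i};\Wb^{(t)})$ directly from the convolutional expert form \eqref{eq:single expert} with the cubic activation, and to exploit the fact that $\hat{\xb}_{i}$ is obtained from $\xb_{i}$ by zeroing out the cluster-center patch and the $P-3$ random-noise patches. Since $\sigma(0)=0$, for $i\in\Omega_{k,k'}$ the difference $f_{m}(\xb_{i};\Wb^{(t)})-f_{m}(\hat{\xb}_{i};\Wb^{(t)})$ is exactly $\sum_{j\in[J]}\big[\sigma(\la\wb_{m,j}^{(t)},\beta_{i}\cb_{k}\ra)+\sum_{p\geq 4}\sigma(\la\wb_{m,j}^{(t)},\bxi_{i,p}\ra)\big]$; the feature-signal and feature-noise patches are common to $\xb_i$ and $\hat\xb_i$ and cancel.

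First I would bound this quantity. Since \eqref{eq: induction1}, \eqref{eq:induction2}, \eqref{eq:induction3} are assumed to hold for $t\in[T_{1},T]$, Lemma~\ref{lm:keepstage1} applies and gives $\la\wb_{m,j}^{(t)},\cb_{k}\ra=\tilde{O}(\sigma_{0})$ and $\la\wb_{m,j}^{(t)},\bxi_{i,p}\ra=\tilde{O}(\sigma_{0})$ for all $j\in[J]$, $k\in[K]$, $i\in[n]$, $p\geq 4$ and all $t\in[T_{1},T+1]$. Because $\beta_{i}=\Theta(1)$ and $\sigma(z)=z^{3}$, each summand is $\tilde{O}(\sigma_{0}^{3})$; summing over the $J$ filters and the $P-3$ noise patches (the number of summands being polylogarithmic in $d$, resp.\ constant, hence absorbed into $\tilde{O}(\cdot)$) yields $|f_{m}(\xb_{i};\Wb^{(t)})-f_{m}(\hat{\xb}_{i};\Wb^{(t)})|=\tilde{O}(\sigma_{0}^{3})$, which is the first claim.

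For the exact identity I would use the homogeneity of the cube, $\sigma(cz)=c^{3}\sigma(z)$, together with $y_{i}^{3}=y_{i}$ and $\epsilon_{i}^{3}=\epsilon_{i}$ since $y_{i},\epsilon_{i}\in\{\pm1\}$. Writing $\hat{\xb}_{i}=[\alpha_{i}y_{i}\vb_{k},\zero,\gamma_{i}\epsilon_{i}\vb_{k'},\zero]$ for $i\in\Omega_{k,k'}$ and evaluating the two nonzero patches, $f_{m}(\hat{\xb}_{i};\Wb^{(t)})=\sum_{j\in[J]}\big[\alpha_{i}^{3}y_{i}\sigma(\la\wb_{m,j}^{(t)},\vb_{k}\ra)+\gamma_{i}^{3}\epsilon_{i}\sigma(\la\wb_{m,j}^{(t)},\vb_{k'}\ra)\big]$. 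Multiplying by $y_{i}$ and using $y_{i}^{2}=1$ leaves a factor $y_{i}\epsilon_{i}$ on the feature-noise term, which equals $+1$ on $\Omega_{k,k'}^{+}$ (where $y_{i}=\epsilon_{i}$) and $-1$ on $\Omega_{k,k'}^{-}$ (where $y_{i}=-\epsilon_{i}$); this gives precisely the two displayed formulas.

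The argument is essentially bookkeeping: the only substantive input is Lemma~\ref{lm:keepstage1}, whose hypotheses coincide with the induction hypotheses assumed here, together with the cubic homogeneity that makes the feature/feature-noise decomposition exact rather than approximate. The one point requiring a moment's care is that the number of summands ($J$ times $P-3$) is small enough to be swallowed by the $\tilde{O}(\cdot)$ notation; beyond that there is no genuine obstacle.
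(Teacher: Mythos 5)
Your proposal is correct and follows essentially the same route as the paper: the difference $f_{m}(\xb_{i};\Wb^{(t)})-f_{m}(\hat{\xb}_{i};\Wb^{(t)})$ reduces to the cluster-center and random-noise patch contributions, which are bounded by $\tilde{O}(\sigma_{0}^{3})$ via the inner-product estimates of Lemma~\ref{lm:keepstage1} (valid on $[T_{1},T+1]$ under the same induction hypotheses), exactly as in the paper's proof. Your explicit derivation of the displayed identities from cubic homogeneity and the signs of $y_{i}\epsilon_{i}$ on $\Omega_{k,k'}^{\pm}$ is the routine computation the paper leaves implicit.
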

\begin{proof}
For all $i \in \Omega_{k}$, we have that 
\begin{align*}
\big|f_{m}(\xb_{i}; \Wb^{(t)}) - f_{m}(\hat{\xb}_{i}; \Wb^{(t)}) \big| &\leq  \big|\sum_{j \in [J]} \sigma(\la \wb_{m,j}^{(t)}, \cb_{k}\ra)\big| + \big|\sum_{j \in [J], p \geq 4}\sigma(\la\wb_{m,j}^{(t)}, \bxi_{i,p}\ra)\big|\\
&\leq O(J)\cdot\max_{k,j}\sigma(\la \wb_{m,j}^{(t)}, \cb_{k}\ra) + O(J)\cdot\max_{i, j, p}|\sigma(\la\wb_{m,j}^{(t)}, \bxi_{i,p}\ra)|\\
&= \tilde{O}(\sigma_{0}^{3}),
\end{align*}
where the first inequality is by triangle inequality and the last equality is by Lemma~\ref{lm:keepstage1}.
\end{proof}

Next we will show that router only focus on the cluster-center signal rather than the label signal during the router training.

\begin{lemma}\label{lm:noiseinner}
Suppose \eqref{eq: induction1}, \eqref{eq:induction2}, \eqref{eq:induction3} hold for all $t \in [T_{1}, T] \subseteq [T_{1}, T_{2}-1]$, then we have that 
$\|\hb(\bar{\xb}_{i},\bTheta^{(t)}) - \hb(\xb_i; \bTheta^{(t)})\|_{\infty} = \tilde{O}(d^{-0.005})$ hold for all $i \in [n]$ and  $t \in [T_{1}, T+1]$. Besides, we have that $\max_{m,k}|\la \btheta_{m}^{(t)}, \vb_{k} \ra|, \max_{m,i,p}|\la \btheta_{m}^{(t)}, \bxi_{i,p} \ra| = \tilde{O}(d^{-0.005})$ for all $t \in [T_{1}, T+1]$.
\end{lemma}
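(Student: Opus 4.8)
The plan is to prove the two claims ($\|\hb(\bar{\xb}_i;\bTheta^{(t)}) - \hb(\xb_i;\bTheta^{(t)})\|_\infty = \tilde O(d^{-0.005})$ and the bounds on $\la\btheta_m^{(t)},\vb_k\ra$, $\la\btheta_m^{(t)},\bxi_{i,p}\ra$) jointly by induction on $t\in[T_1,T+1]$, since the router output difference $\hb(\bar{\xb}_i;\bTheta^{(t)}) - \hb(\xb_i;\bTheta^{(t)}) = \sum_{p\in[P]}\bTheta^{(t)\top}(\xb_i^{(p)} - \bar\xb_i^{(p)})$ depends exactly on the projections of the $\btheta_m^{(t)}$ onto the feature signals $\vb_k$, $\vb_{k'}$ and onto the noise patches $\bxi_{i,p}$. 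Concretely, $\xb_i - \bar\xb_i$ keeps only the feature-signal patch $\alpha_i y_i\vb_k$, the feature-noise patch $\gamma_i\epsilon_i\vb_{k'}$, and the $P-3$ Gaussian patches, so bounding $\|\hb(\bar\xb_i;\bTheta^{(t)}) - \hb(\xb_i;\bTheta^{(t)})\|_\infty$ reduces to bounding $|\la\btheta_m^{(t)},\vb_k\ra|$, $|\la\btheta_m^{(t)},\vb_{k'}\ra|$, and $|\la\btheta_m^{(t)},\bxi_{i,p}\ra|$ uniformly — that is the content of the ``besides'' clause, so the whole lemma really is one induction.

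First I would set up the induction hypothesis: at step $t$, $\max_{m,k}|\la\btheta_m^{(t)},\vb_k\ra| \le \tilde O(d^{-0.005})$ and $\max_{m,i,p}|\la\btheta_m^{(t)},\bxi_{i,p}\ra| \le \tilde O(d^{-0.005})$, which by the reduction above gives $\|\hb(\bar\xb_i;\bTheta^{(t)}) - \hb(\xb_i;\bTheta^{(t)})\|_\infty = \tilde O(d^{-0.005})$. The base case $t = T_1$ follows from Lemma~\ref{lm:stage1fbound} (which gives $\|\btheta_m^{(T_1)}\|_2 = \tilde O(\sigma_0^{1.5}) \le \tilde O(d^{-0.015})$, comfortably inside the target). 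For the inductive step I would project the router gradient formula \eqref{gradient-router} onto $\vb_k$ and onto $\bxi_{i,p}$. Using Lemma~\ref{lm:easy} to replace the empirical gradient by its expectation up to $\tilde O(n^{-1/2}(\sigma_0+\eta t)^3) = \tilde O(d^{-1/2})$ error, then Lemma~\ref{lm:app1} to write $y_i f_m(\xb_i;\Wb^{(t)}) = y_i f_m(\hat\xb_i;\Wb^{(t)}) + \tilde O(\sigma_0^3)$ as a sum over $j$ of $\alpha_i^3\sigma(\la\wb_{m,j}^{(t)},\vb_k\ra) \pm \gamma_i^3\sigma(\la\wb_{m,j}^{(t)},\vb_{k'}\ra)$, and Lemma~\ref{lm:keepstage1} to bound all the cubic activations $\sigma(\la\wb_{m,j}^{(t)},\cdot\ra) = O(1)$ — the key point is that each term $\la\nabla_{\btheta_m}\cL^{(t)},\vb_k\ra$ picks up, via the coefficient $\la\xb_i^{(p)},\vb_k\ra$, a factor that is $O(1)$ only for the $O(n/K)$ examples whose feature or feature-noise patch aligns with $\vb_k$, and $\tilde O(d^{-1/2})$ for everything else (by Lemma~\ref{lm:gaussian bound}). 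The crucial cancellation is that, over examples in $\Omega_{k,k'}^+$ and $\Omega_{k,k'}^-$, the signed sums $\sum y_i\alpha_i$, $\sum y_i\alpha_i$, $\sum\epsilon_i\gamma_i$, $\sum\epsilon_i\gamma_i$ are all $\tilde O(\sqrt n)$ by Lemma~\ref{lm:data balance}, and the routing probabilities $\mathbb P(m_{i,t}=m)$ and the gate values $\pi_m$ are, up to $\tilde O(d^{-0.005})$, functions of $\bar\xb_i$ only (by \eqref{eq:induction2}, \eqref{eq:pi1}, \eqref{eq:pi2}) — hence they do not correlate with the feature-signal sign $y_i$ or the feature-noise sign $\epsilon_i$, so the leading contributions average to $\tilde O(\sqrt n / n) = \tilde O(d^{-1/2})$. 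This yields $|\la\nabla_{\btheta_m}\cL^{(t)},\vb_k\ra| = \tilde O(d^{-0.005})$, and similarly $|\la\nabla_{\btheta_m}\cL^{(t)},\bxi_{i,p}\ra| = \tilde O(d^{-0.005})$ using $\|\bxi_{i,p}\|_2 = O(1)$ appearing only once in the sum together with $1/n$. Summing the per-step increment $\eta_r\cdot\tilde O(d^{-0.005})$ over $t \le T_2 = \lfloor\eta^{-1}M^{-2}\rfloor$ with $\eta_r = \Theta(M^2)\eta$ gives total growth $\tilde O(d^{-0.005})$, closing the induction.

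The main obstacle is making the cancellation argument rigorous: I must show that the ``weight'' $w_i := \mathbb P(m_{i,t}=m)\,\ell'_{i,t}\,\pi_m(\xb_i;\bTheta^{(t)}) f_m(\xb_i;\Wb^{(t)})$ multiplying $\la\xb_i^{(p)},\vb_k\ra$ is, up to $\tilde O(d^{-0.005})$, independent of the signs $(y_i,\epsilon_i)$ for examples whose patch aligns with $\vb_k$, so that $\frac1n\sum_i w_i\,\la\xb_i^{(p)},\vb_k\ra$ is controlled by the $\tilde O(\sqrt n)$ concentration bounds of Lemma~\ref{lm:data balance} rather than being $\Omega(1)$. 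This requires splitting $w_i$ into a sign-independent part (governed by $\bar\xb_i$, using \eqref{eq:induction2} and the Lipschitz bounds \eqref{eq:pi1}–\eqref{eq:pi2}, plus the fact that $\ell'_{i,t}$ and $f_m(\hat\xb_i;\Wb^{(t)})$ in the symmetric $\Omega_{k,k'}^+$/$\Omega_{k,k'}^-$ pairing differ only through the $\pm\gamma_i^3\sigma(\cdot)$ term) and a sign-dependent remainder that is pointwise $\tilde O(d^{-0.005})$; the bookkeeping of which error terms are $\tilde O(d^{-1/2})$ versus $\tilde O(\sigma_0^3)$ versus $\tilde O(d^{-0.005})$, and checking the worst of these is still $\tilde O(d^{-0.005})$ after the $\sum_t\eta_r$ accumulation, is where care is needed. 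I would present this pairing/decomposition as a short sub-lemma before the main induction to keep the argument clean.
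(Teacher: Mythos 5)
Your overall route is the same as the paper's: project the router gradient onto $\vb_k$ and $\bxi_{i,p}$, use Lemma~\ref{lm:easy}, Lemma~\ref{lm:app1} and Lemma~\ref{lm:keepstage1} to reduce to sign-weighted sums, kill the leading contribution through the $\Omega_{k,k'}^{+}/\Omega_{k,k'}^{-}$ pairing and the concentration bounds of Lemma~\ref{lm:data balance}, bound the $\bxi_{i,p}$-projections by $\tilde{O}(d^{-1/2})$, and accumulate over at most $T_2$ steps using $\eta_r T_2=\Theta(1)$; the base case at $T_1$ via Lemma~\ref{lm:stage1fbound} is also right. The genuine gap is in how you close the estimate. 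The per-step gradient projection is not an absolute $\tilde{O}(d^{-0.005})$: replacing $\mathbb{P}(m_{i,t}=m)$ and $\pi_m(\xb_i;\bTheta^{(t)})$ by their $\bar{\xb}_i$-versions costs $O(M^2\delta_{\bTheta^{(t)}})$ by \eqref{eq:pi1}--\eqref{eq:pi2} (i.e.\ Lemma~\ref{lm:Msmoothly}), where $\delta_{\bTheta^{(t)}}=\max_i\|\hb(\bar{\xb}_i;\bTheta^{(t)})-\hb(\xb_i;\bTheta^{(t)})\|_\infty$ is exactly the quantity being bounded. If you control this term by the lemma's hypothesis \eqref{eq:induction2}, you only get $O(M^2 d^{-0.001})$ per step and hence $O(M^2 d^{-0.001})$ in total, which is no improvement: the whole point of the lemma is to strengthen \eqref{eq:induction2} to $\tilde{O}(d^{-0.005})$ so that Lemma~\ref{lm:stage2end} can re-establish \eqref{eq:induction2} at $t=T+1$ and Lemma~\ref{lm:expclose} gets its $\tilde{O}(d^{-0.001})$ bound on $\Delta_{\bTheta}$. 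If instead you control it by your own fixed-threshold induction hypothesis $\delta_{\bTheta^{(t)}}=\tilde{O}(d^{-0.005})$, the induction does not close as written: the accumulated self-coupling is $\eta_r T_2\cdot O(M^2)\cdot\sup_{s\le t}\delta_{\bTheta^{(s)}}=O(M^2)\sup_{s\le t}\delta_{\bTheta^{(s)}}$, a coefficient that is not below one, so the bound you recover at $t+1$ exceeds the threshold you assumed. ``Summing the per-step increment $\eta_r\cdot\tilde{O}(d^{-0.005})$'' therefore hides a circularity rather than resolving it.

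The paper avoids this by not fixing a threshold: it keeps the self-referential term explicit, derives the recursion $\delta_{\bTheta^{(t+1)}}\le\delta_{\bTheta^{(t)}}+C_1\eta_r\delta_{\bTheta^{(t)}}+C_2\eta_r$ with $C_1=O(M^2)$ and $C_2=\tilde{O}(d^{-0.005})$, and solves it by a discrete Gronwall step, $\delta_{\bTheta^{(t)}}\le\exp(C_1\eta_r t)\big(\delta_{\bTheta^{(T_1)}}+C_2/C_1\big)$, using $\eta_r t\le\eta_r T_2=\Theta(1)$ and the polylogarithmic size of $M$ to keep the amplification harmless. To repair your argument you should either adopt this recursion-plus-Gronwall treatment, or make the self-coupling coefficient genuinely smaller than one by tracking the smallness you currently discard (the $O(M^2\delta_{\bTheta^{(t)}})$ error is multiplied by $|f_m(\xb_i;\Wb^{(t)})|=O(J\eta^3t^3)$ and by $|\Omega_k|/n=O(1/K)$, which would give a contractive coefficient); the rest of your sketch, including the proposed pairing sub-lemma, then goes through essentially as in the paper.
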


\begin{proof}

Recall the definition of $\delta_{\bTheta}$ in Lemma~\ref{lemma:pi}, we need to show that $\delta_{\bTheta^{(t)}} = \tilde{O}(d^{-0.005})$ for all $t \in [T_{1}, T+1]$. We first prove following router parameter update rules,
\begin{align}
      \langle\nabla_{\btheta_{m}} \cL^{(t)}, 
    {\vb_k}
     \rangle
    = O(\delta_{\bTheta^{(t)}}K^{2}) + \tilde{O}(d^{-0.005}),
     \langle\nabla_{\btheta_{m}} \cL^{(t)}, 
    \bxi_{i,p}
     \rangle = \tilde{O}(d^{-0.005}), \label{lemma:gradient}
\end{align}
for all $T_{1} \leq t \leq T$, $m \in [M]$, $k \in [K]$, $i \in [n]$ and $p \geq 4$.

Consider the inner product of the router gradient and the feature vector and we have
\begin{align}
&\EE[\big \langle\nabla_{\btheta_{m}} \cL^{(t)}, 
    {\vb_k}
    \big \rangle]\notag\\ &=\underbrace{\frac{1}{n} \sum_{i \in \Omega_{k}}\mathbb{P}(m_{i,t} =m)\ell'_{i,t}
    y_{i}\pi_{m}(\xb_i; \bTheta^{(t)})f_{m}(\xb_{i};  \Wb^{(t)})y_{i}\alpha_{i}}_{I_{1}}\notag\\
    &\qquad + \underbrace{\frac{1}{n} \sum_{i \in \Omega_{k',k}}\mathbb{P}(m_{i,t} =m)\ell'_{i,t}
   y_{i} \pi_{m}(\xb_i; \bTheta^{(t)})f_{m}(\xb_{i};  \Wb^{(t)})\epsilon_{i}\gamma_{i}}_{I_{2}}\notag\\
    &\qquad - \underbrace{\frac{1}{n} \sum_{i\in \Omega_{k},m'\in[M]}\mathbb{P}(m_{i,t}=m')\ell'_{i,t}y_{i}
    \pi_{m'}(\xb_i; \bTheta^{(t)})\pi_{m}(\xb_i; \bTheta^{(t)})f_{m'}(\xb_{i}, \Wb^{(t)})y_{i}\alpha_{i}}_{I_{3}}\notag\\
    &\qquad - \underbrace{\frac{1}{n} \sum_{i\in \Omega_{k',k}, m'\in [M]}\mathbb{P}(m_{i,t} = m')\ell'_{i,t}
    \pi_{m'}(\xb_i; \bTheta^{(t)})y_{i}\pi_{m}(\xb_i; \bTheta^{(t)})f_{m'}(\xb_{i}, \Wb^{(t)})\epsilon_{i}\gamma_{i}}_{I_{4}} \notag\\
    &\qquad+\underbrace{\frac{1}{n} \sum_{i\in [n],p\geq 4}\mathbb{P}(m_{i,t} =m)\ell'_{i,t}
    y_{i}\pi_{m}(\xb_i; \bTheta^{(t)})f_{m}(\xb_{i};  \Wb^{(t)}) \la\xb_{i}^{(p)}, \vb_{k}\ra}_{I_{5}} \notag\\
    &\qquad - \underbrace{\frac{1}{n} \sum_{i \in [n],p \geq 4, m'\in [M]}\mathbb{P}(m_{i,t} = m')\ell'_{i,t}y_{i}\pi_{m'}(\xb_i; \bTheta^{(t)})\pi_{m}(\xb_i; \bTheta^{(t)})f_{m'}(\xb_{i};  \Wb^{(t)}) \la\xb_{i}^{(p)}, \vb_{k}\ra}_{I_{6}}.
    \label{eq:I1234}
\end{align}
Denote $y_{i}\pi_{m}(\bar{\xb}_{i}; \bTheta^{(t)})f_{m}(\hat{\xb}_{i}; \Wb^{(t)}), \forall i \in \Omega_{k,k'}^{+}$ by $\bar{F}_{k,k'}^{+}$. We next show that the output of the MoE multiplied by label:  $y_{i}\pi_{m}(\xb_{i}; \bTheta^{(t)})f_{m}(\xb_{i}; \Wb), \forall i \in \Omega_{k,k'}^{+}$ can be approximated by $\bar{F}_{k,k'}^{+}$.
\begin{align*}
&|\pi_{m}(\xb_i; \bTheta^{(t)})f_{m}(\xb_{i};\Wb^{(t)}) -  \pi_{m}(\bar{\xb}_{i};\bTheta^{(t)})f_{m}(\hat{\xb}_{i};\Wb^{(t)})|\\
&\leq |[\pi_{m}(\xb_i; \bTheta^{(t)}) - \pi_{m}(\bar{\xb}_{i};\bTheta^{(t)}) ]f_{m}(\xb_{i};\Wb^{(t)})| + |\pi_{m}(\bar{\xb}_i; \bTheta^{(t)})[f_{m}(\xb_{i};\Wb^{(t)}) - f_{m}(\hat{\xb}_{i};\Wb^{(t)})]| \\
&\leq O(|\pi_{m}(\xb_i; \bTheta^{(t)}) - \pi_{m}(\bar{\xb}_{i};\bTheta^{(t)})|) + |f_{m}(\xb_{i};\Wb^{(t)}) - f_{m}(\hat{\xb}_{i};\Wb^{(t)})|\\
&\leq O(\delta_{\bTheta^{(t)}})  + \tilde{O}(\sigma_{0}^{3}),
\end{align*}
where the first inequality is by triangle inequality, the second inequality is by $\pi_{m}(\bar{\xb}_i; \bTheta^{(t)}) \leq 1$ and $|f_{m}(\xb_{i};\Wb^{(t)})| = O(1)$ in Lemma~\ref{lm:fpibound}, the third inequality is by \eqref{eq:pi1} and  Lemma~\ref{lm:app1}. 

Similarly, denote $y_{i}\pi_{m}(\bar{\xb}_{i}; \bTheta^{(t)})f_{m}(\hat{\xb}_{i}; \Wb^{(t)}), i \in \Omega_{k,k'}^{-}$ by $\bar{F}_{k,k'}^{-}$ and we can show that value $y_{i}\pi_{m}(\xb_{i}; \bTheta^{(t)})f_{m}(\xb_{i}; \Wb^{(t)}), \forall i \in \Omega_{k,k'}^{-}$ can be approximated by $\bar{F}_{k,k'}^{-}$. Now we can bound $I_{1}$ as follows, 
\begin{align*}
I_{1} &= \sum_{k'\not = k}\frac{\ell'(\bar{F}_{k,k'^{+}})\bar{F}_{k,k'}^{+}}{n} \sum_{i\in \Omega_{k,k'}^{+}}\big[\mathbb{P}(m_{i,t} = m)y_{i}\alpha_{i} + O(\delta_{\bTheta^{(t)}})\big] + \tilde{O}(\sigma_{0}^{3})\\ 
&\qquad+ \sum_{k'\not = k}\frac{\ell'(\bar{F}_{k,k'^{-}})\bar{F}_{k,k'}^{-}}{n} \sum_{i\in \Omega_{k,k'}^{-}}\big[\mathbb{P}(m_{i,t} = m)y_{i}\alpha_{i} + O(\delta_{\bTheta^{(t)}})\big] + \tilde{O}(\sigma_{0}^{3})\\
&\overset{(i)}{=}\sum_{k'\not = k}\frac{\ell'(\bar{F}_{k,k'^{+}})\bar{F}_{k,k'}^{+}}{n} \sum_{i\in \Omega_{k,k'}^{+}}\big[\mathbb{P}(\bar{m}_{i,t} = m)y_{i}\alpha_{i} + O(M^{2}\delta_{\bTheta^{(t)}})\big] + \tilde{O}(\sigma_{0}^{3})\\ 
&\qquad+ \sum_{k'\not = k}\frac{\ell'(\bar{F}_{k,k'^{-}})\bar{F}_{k,k'}^{-}}{n} \sum_{i\in \Omega_{k,k'}^{-}}\big[\mathbb{P}(\bar{m}_{i,t} = m)y_{i}\alpha_{i} + O(M^{2}\delta_{\bTheta^{(t)}})\big] + \tilde{O}(\sigma_{0}^{3})\\
&\overset{(ii)}{=} O(M^{2}\delta_{\bTheta^{(t)}})+ \tilde{O}(n^{-1/2} + \sigma_{0}^{3})\\
&= O(M^{2}\delta_{\bTheta^{(t)}})+ \tilde{O}(d^{-0.005})
\end{align*}
where (i) is due to \eqref{eq:pi2} and (ii) is by $\sum_{i\in \Omega_{k,k'}^{+}}y_{i}\alpha = \tilde{O}(\sqrt{n})$ and $\sum_{i\in \Omega_{k,k'}^{-}}y_{i}\alpha = \tilde{O}(\sqrt{n})$ in  Lemma~\ref{lm:data balance}. Similarly we can prove that $I_{2}, I_{3}, I_{4} = O(M^{2}\delta_{\bTheta^{(t)}})+ \tilde{O}(d^{-0.005})$. Since $\la \xb_{i}^{(p)},\vb_{i} \ra = \tilde{O}(d^{-1/2}), \forall p \geq 4$, $\pi_{m},\pi_{m_{i,t}} \leq 1$ and $ f_{m_{i,t}} = O(1)$, we can upper bound $I_{5}, I_{6}$ by $\tilde{O}(d^{-1/2})$. Plugging those bounds into the gradient computation \eqref{eq:I1234} gives
\begin{align*}
\EE[\big \langle\nabla_{\btheta_{m}} \cL^{(t)}, {\vb_k} \big \rangle]  = O(M^{2}\delta_{\bTheta^{(t)}})+ \tilde{O}(d^{-0.005}).
\end{align*}
We finally consider the alignment between router gradient and noise
\begin{align*}
\big \langle\nabla_{\btheta_{m}} \cL^{(t)}, 
    \bxi_{i',p'}
    \big \rangle
    & = \frac{1}{n} \sum_{i\in [n],p\geq 4}\ind(m_{i,t} =m)\ell'_{i,t}
    y_{i}\pi_{m_{i,t}}(\xb_i; \bTheta^{(t)})f_{m_{i,t}}(\xb_{i};  \Wb^{(t)}) \la\xb_{i}^{(p)}, \bxi_{i',p'}\ra\\
    &\qquad - \frac{1}{n} \sum_{i \in [n],p \geq 4}\ell'_{i,t}y_{i}\pi_{m_{i,t}}(\xb_i; \bTheta^{(t)})\pi_{m}(\xb_i; \bTheta^{(t)})f_{m_{i,t}}(\xb_{i};  \Wb^{(t)}) \la\xb_{i}^{(p)}, \bxi_{i',p'}\ra.\\
     &\overset{(i)}{=}\tilde{O}\bigg(\frac{1}{n}\bigg) + \tilde{O}(d^{-1/2})\\
     &\overset{(ii)}{=} \tilde{O}(d^{-1/2}),
\end{align*}
where the (i) is by considering the cases $(i',p') = \bxi_{i,p}$ and $\bxi_{i',p'} \not= \bxi_{i,p}$ respectively and (ii) is due to our choice of $n$. Now, we have completed the proof of \eqref{lemma:gradient}. 

Plugging the gradient estimation \eqref{lemma:gradient} in to the gradient update rule for the gating network \eqref{eq:theta-update} gives  
\begin{align}
\max_{m, k}|\la \btheta_{m}^{(t+1)}, \vb_{k}\ra| &\leq  \max_{m, k}|\la \btheta_{m}^{(t)}, \vb_{k}\ra| + O(\eta_{r}M^{2}\delta_{\bTheta^{(t)}}) + \tilde{O}(\eta_{r}d^{-0.005})\label{eq:lb1}\\
\max_{m, i , p}|\la \btheta_{m}^{(t+1)}, \bxi_{i,p} \ra| &\leq \max_{m,i,p}|\la \btheta_{m}^{(t)}, \bxi_{i,p} \ra| + \tilde{O}(\eta_{r}d^{-0.005})\label{eq:lb2}
\end{align}
Combining \eqref{eq:lb1} and \eqref{eq:lb2},  we have that there exist $C_{1} = O(M^{2})$ and $C_{2} = \tilde{O}(d^{-0.005})$ such that $\delta_{\bTheta^{(t+1)}}\leq \delta_{\bTheta^{(t)}} + C_{1}\eta_{r} \delta_{\bTheta^{(t)}}  + C_{2}\eta_{r}$. Therefore, we have that
\begin{align*}
\delta_{\bTheta^{(t+1)}} + C_{1}^{-1}C_{2}&\leq (1+ C_{1}\eta_{r})[\delta_{\bTheta^{(t)}}+C_{1}^{-1}C_{2}]\\
&\leq \exp(C_{1}\eta_{r})[\delta_{\bTheta^{(t)}} + C_{1}^{-1}C_{2}],  
\end{align*}
where the last inequality is due to $\exp(z) \geq 1 + z$ for all $z \in \RR$. Then we further have that 
\begin{align*}
\delta_{\bTheta^{(t)}} 
&\leq \exp(C_{1}\eta_{r}t)[\delta_{\bTheta^{(0)}} + C_{1}^{-1}C_{2}]  \leq \exp(C_{1}\eta_{r}\eta^{-1}M^{-2})[\delta_{\bTheta^{(0)}} + C_{1}^{-1}C_{2}] = \tilde{O}(d^{-0.005}),
\end{align*}
where the last equality is by $\eta_{r} = \Theta(M^{2})\eta$.
\end{proof}

Define $\Delta_{\bTheta} := \max_{k \in [K]}\max_{m, m' \in \cM_{k}}\max_{(\xb_{i}, y_{i}) \in \Omega_{k}} |h_{m}(\xb_{i}; \bTheta) - h_{m'}(\xb_{i}; \bTheta)|$, which measures the bias of the router towards different experts in the same $\cM_{k}$. Following Lemma shows that the router will treats professional experts equally when $\Delta_{\btheta}$ is small.

\begin{lemma}\label{lm:D15}
 For all $t \geq 0$, we have that following inequality holds,
\begin{align*}
&\max_{k \in [K]}\max_{m, m' \in \cM_{k}}\max_{(\xb_{i}, y_{i}) \in \Omega_{k}}|\pi_{m'}(\xb_{i}; \bTheta^{(t)}) - \pi_{m}(\xb_{i}; \bTheta^{(t)})|  \leq 2\Delta_{\bTheta^{(t)}}, \\
&\max_{k \in [K]}\max_{m, m' \in \cM_{k}}\max_{(\xb_{i}, y_{i}) \in \Omega_{k}}|\mathbb{P}(m_{i,t} = m) - \mathbb{P}(m_{i,t} = m')| = O(M^{2})\Delta_{\bTheta^{(t)}}.
\end{align*}
\end{lemma}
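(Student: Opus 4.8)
The plan is to derive both inequalities by elementary manipulation, using only Lemma~\ref{lm:Msmooth2} and the definition of $\pi_m$; note that nothing about the dynamics of Algorithm~\ref{alg:GDrandominit} is needed, since the claim is a pointwise statement about an arbitrary gating matrix $\bTheta^{(t)}$. Throughout, fix $k \in [K]$, two experts $m, m' \in \cM_k$, and a sample $(\xb_i, y_i) \in \Omega_k$, and abbreviate $h_\ell := h_\ell(\xb_i; \bTheta^{(t)})$, so that by definition of $\Delta_{\bTheta^{(t)}}$ we have $|h_m - h_{m'}| \le \Delta_{\bTheta^{(t)}}$.

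For the second inequality I would invoke Lemma~\ref{lm:Msmooth2} with the gating output $\hb(\xb_i;\bTheta^{(t)})$: since $m_{i,t} = \argmax_{\ell}\{h_\ell + r_{\ell,i}^{(t)}\}$ with $r_{\ell,i}^{(t)} \sim \mathrm{Unif}[0,1]$ i.i.d., we have $\mathbb{P}(m_{i,t}=m) = p_m$ and $\mathbb{P}(m_{i,t}=m') = p_{m'}$ in the notation of that lemma, hence $|\mathbb{P}(m_{i,t}=m) - \mathbb{P}(m_{i,t}=m')| \le M^2|h_m - h_{m'}| \le M^2 \Delta_{\bTheta^{(t)}}$. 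Taking the maximum over $k$, over $m,m' \in \cM_k$, and over $(\xb_i,y_i) \in \Omega_k$ gives the claimed $O(M^2)\Delta_{\bTheta^{(t)}}$ bound.

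For the first inequality, $\pi_m$ is the softmax gate value rather than a smoothed routing probability, so Lemma~\ref{lm:Msmooth2} does not apply verbatim and I would argue directly. Assuming without loss of generality $h_m \ge h_{m'}$,
\[
\pi_m - \pi_{m'} = \frac{\exp(h_m) - \exp(h_{m'})}{\sum_{\ell} \exp(h_\ell)} = \frac{\exp(h_{m'})}{\sum_{\ell}\exp(h_\ell)}\bigl(\exp(h_m - h_{m'}) - 1\bigr).
\]
The first factor is at most $1$. For the second: if $\Delta_{\bTheta^{(t)}} \le 1$ then $0 \le h_m - h_{m'} \le 1$, and convexity of $z \mapsto e^z - 1$ gives $e^{z} - 1 \le (e-1)z \le 2z$ on $[0,1]$, so $\exp(h_m - h_{m'}) - 1 \le 2(h_m - h_{m'}) \le 2\Delta_{\bTheta^{(t)}}$; if instead $\Delta_{\bTheta^{(t)}} > 1$, then $2\Delta_{\bTheta^{(t)}} > 2 > 1 \ge |\pi_m - \pi_{m'}|$ trivially, since $\pi_m, \pi_{m'} \in [0,1]$. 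In either case $|\pi_m - \pi_{m'}| \le 2\Delta_{\bTheta^{(t)}}$, and taking the maximum over the relevant indices completes the proof.

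I do not expect a genuine obstacle here: the argument is a routine combination of Lemma~\ref{lm:Msmooth2} with a one-line softmax estimate. The only points requiring a little care are restricting the elementary bound $e^z - 1 \le 2z$ to the range $z \in [0,1]$ where it is valid, and disposing of the degenerate regime $\Delta_{\bTheta^{(t)}} > 1$ by the trivial probability bound rather than by the softmax computation.
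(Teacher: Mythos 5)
Your proposal is correct and follows essentially the same route as the paper: the probability bound is obtained by invoking Lemma~\ref{lm:Msmooth2} exactly as you do, and the gate-value bound is proved by the same factorization $\pi_m - \pi_{m'} = \frac{\exp(h_{m'})}{\sum_{m''}\exp(h_{m''})}\bigl(\exp(h_m - h_{m'}) - 1\bigr)$ with the elementary estimate $|e^{z}-1|\le 2|z|$ on $|z|\le 1$, handling the large-gap regime by the trivial bound $|\pi_m - \pi_{m'}|\le 1$. The only cosmetic difference is that you split on $\Delta_{\bTheta^{(t)}}\le 1$ versus $>1$ while the paper splits on $|h_m-h_{m'}|\le 1$ versus $\ge 1$, which changes nothing of substance.
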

\begin{proof}
By Lemma~\ref{lm:Msmooth2}, we directly have that
\begin{align*}
 |\mathbb{P}(m_{i,t} = m) - \mathbb{P}(m_{i,t} = m')| \leq O(M^{2})|h_{m}(\xb_i; \bTheta^{(t)}) - h_{m'}(\xb_i; \bTheta^{(t)})|.
\end{align*}

Then, we prove that 
\begin{align}
|\pi_{m'}(\xb_{i}; \bTheta) - \pi_{m}(\xb_{i}; \bTheta)| \leq  2|h_{m}(\xb_{i}; \bTheta^{(t)}) - h_{m'}(\xb_{i}; \bTheta^{(t)})|. \label{eq: pibound}    
\end{align}
When $|h_{m}(\xb_{i}; \bTheta^{(t)})- h_{m'}(\xb_{i}; \bTheta^{(t)})| \geq 1$, it is obvious that \eqref{eq: pibound} is true. When $|h_{m}(\xb_{i}; \bTheta^{(t)}) - h_{m'}(\xb_{i}; \bTheta^{(t)})| \leq 1$ we have that
\begin{align*}
|\pi_{m'}(\xb_{i}; \bTheta) - \pi_{m}(\xb_{i}; \bTheta)| &= \bigg|\frac{\exp(h_{m}(\xb_{i}; \bTheta^{(t)})) - \exp(h_{m'}(\xb_{i}; \bTheta^{(t)}))}{\sum_{m''}\exp\big(h_{m''}(\xb_{i}; \bTheta^{(t)})\big)}\bigg|\\
&= \bigg|\frac{\exp(h_{m'}(\xb_{i}; \bTheta^{(t)}))}{\sum_{m''}\exp\big(h_{m''}(\xb_{i}; \bTheta^{(t)})\big)}\bigg|\cdot |\exp(h_{m}(\xb_{i}; \bTheta^{(t)}) - h_{m'}(\xb_{i}; \bTheta^{(t)})) - 1|\\
&\leq  2|h_{m}(\xb_{i}; \bTheta^{(t)}) - h_{m'}(\xb_{i}; \bTheta^{(t)})|,
\end{align*}
which completes the proof of \eqref{eq: pibound}.
\end{proof}

Notice that the gating network is initialized to be zero, so we have $\Delta_{\bTheta}=0$ at initialization. We can further show that $\Delta_{\bTheta} = O\big(1/\text{poly}(d)\big)$ during the training up to time $T = \tilde{O}(\eta^{-1})$. 

\begin{lemma}\label{lm:expclose}
Suppose \eqref{eq: induction1}, \eqref{eq:induction2}, \eqref{eq:induction3} hold for all $t \in [T_{1}, T] \subseteq [T_{1}, T_{2}-1]$, then we have that 
$\Delta_{\bTheta^{(t)}} \leq \tilde{O}(d^{-0.001})$ holds for all $t \in [T_{1}, T+1]$.
\end{lemma}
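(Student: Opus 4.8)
The plan is to imitate, almost verbatim, the argument used for $\delta_{\bTheta^{(t)}}$ in Lemma~\ref{lm:noiseinner}: establish a one-step recursion $\Delta_{\bTheta^{(t+1)}} \le (1+C_1\eta_r)\Delta_{\bTheta^{(t)}} + C_2\eta_r$ with $C_1 = O(M^2)$ and $C_2 = \tilde O(d^{-0.001})$, and then unroll it over the window $[T_1,T+1]\subseteq[T_1,T_2]$ using $\exp(z)\ge 1+z$, the scaling $\eta_r = \Theta(M^2)\eta$, $T_2 = \lfloor\eta^{-1}M^{-2}\rfloor$, and the (unconditional) first-stage bound $\Delta_{\bTheta^{(T_1)}} = \tilde O(\sigma_0^{1.5})$ that follows from $\|\btheta_m^{(T_1)}\|_2 = \tilde O(\sigma_0^{1.5})$ in Lemma~\ref{lm:stage1fbound}. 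To get the recursion, for $m,m'\in\cM_k$ and $i\in\Omega_k$ I would expand along the update \eqref{eq:theta-update}: $h_m^{(t+1)}(\xb_i)-h_{m'}^{(t+1)}(\xb_i) = \big(h_m^{(t)}(\xb_i)-h_{m'}^{(t)}(\xb_i)\big) - \eta_r\big\langle\nabla_{\btheta_m}\cL^{(t)}-\nabla_{\btheta_{m'}}\cL^{(t)},\ \textstyle\sum_p\xb_i^{(p)}\big\rangle$, and $\sum_p\xb_i^{(p)}$ is a bounded combination of $\vb_k$, $\cb_k$, $\vb_{k'}$ and the $O(1)$ noise patches $\bxi_{i,p}$. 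By the gradient estimate \eqref{lemma:gradient} of Lemma~\ref{lm:noiseinner} together with $\delta_{\bTheta^{(t)}} = \tilde O(d^{-0.005})$ (also from that lemma), the $\vb_k$-, $\vb_{k'}$- and $\bxi_{i,p}$-components of $\nabla_{\btheta_m}\cL^{(t)}$ are each $\tilde O(d^{-0.005})$, so they contribute only $\tilde O(d^{-0.005})\eta_r$. Hence everything reduces to bounding $\big|\big\langle\nabla_{\btheta_m}\cL^{(t)}-\nabla_{\btheta_{m'}}\cL^{(t)},\cb_k\big\rangle\big|$.

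The crux is this $\cb_k$-component. Projecting the router gradient \eqref{gradient-router} onto $\cb_k$, orthogonality of $\{\vb_k\}\cup\{\cb_k\}$ kills every contribution except those from $i\in\Omega_k$ (whose cluster-center patch is $\beta_i\cb_k$), up to a $\tilde O(d^{-1/2})$ error from noise patches, and Lemma~\ref{lm:easy} replaces the random indicators $\ind(m_{i,t}=\cdot)$ by routing probabilities at additional cost $\tilde O(n^{-1/2}) = \tilde O(d^{-1/2})$. What survives is a sum over $i\in\Omega_k$ of a "selection" term $\tfrac1n\beta_i y_i\big[\mathbb{P}(m_{i,t}=m)\ell'(y_i\pi_m f_m)\pi_m f_m - (\text{same with }m')\big]$ and a "router-mixing" term whose only dependence on $m,m'$ is through $\pi_m(\xb_i)-\pi_{m'}(\xb_i)$; the latter is $O(\Delta_{\bTheta^{(t)}})$ by Lemma~\ref{lm:D15} (and $|f_{m'}|,\ell'$ bounded, Lemma~\ref{lm:fpibound}). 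For the selection term I would use $\mathbb{P}(m_{i,t}=m)=\Theta(1/M)$ and $|\mathbb{P}(m_{i,t}=m)-\mathbb{P}(m_{i,t}=m')| = O(M^2)\Delta_{\bTheta^{(t)}}$, $|\pi_m(\xb_i)-\pi_{m'}(\xb_i)|\le 2\Delta_{\bTheta^{(t)}}$ (Lemma~\ref{lm:D15}), that $z\mapsto\ell'(y_iz)z$ is $O(1)$-Lipschitz on $|z| = O(1)$ (Lemma~\ref{lm:fpibound}), and — the key geometric input — that $|f_m(\xb_i;\Wb^{(t)}) - f_{m'}(\xb_i;\Wb^{(t)})| = \tilde O(\sigma_0^{0.1}) = \tilde O(d^{-0.001})$ for $m,m'\in\cM_k$, $i\in\Omega_k$. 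This last fact I would derive from Lemma~\ref{lm:app1}: there $y_i f_m(\hat{\xb}_i;\Wb^{(t)})$ collapses to $\alpha_i^3\sigma(\langle\wb_{m,j_m^*}^{(t)},\vb_k\rangle)$ up to $\tilde O(\sigma_0^3)$ (every other inner product being $\tilde O(\sigma_0)$ by Lemma~\ref{lm:keepstage1}), and normalized gradient descent gives an increment $\eta(1\pm O(\sigma_0^{0.1}))$ per step to $\langle\wb_{m,j_m^*}^{(t)},\vb_k\rangle$ uniformly over $m\in\cM_k$ (same analysis as Lemmas~\ref{lm:shortphase}–\ref{lm:keepstage1}), so $\langle\wb_{m,j_m^*}^{(t)},\vb_k\rangle = \eta t\,(1\pm O(\sigma_0^{0.1}))$ and, $\sigma(z)=z^3$ being $O(1)$-Lipschitz on $[0,O(1)]$ with $\eta t\le M^{-2}\le 1$, the two $f$-values agree to $\tilde O(\sigma_0^{0.1})$. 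Collecting everything yields $\big|\big\langle\nabla_{\btheta_m}\cL^{(t)}-\nabla_{\btheta_{m'}}\cL^{(t)},\cb_k\big\rangle\big| = O(M^2)\Delta_{\bTheta^{(t)}} + \tilde O(d^{-0.001})$.

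Feeding this back into the expansion of the first paragraph (with $\beta_i\le C_2 = O(1)$) gives $\Delta_{\bTheta^{(t+1)}} \le (1+O(M^2)\eta_r)\Delta_{\bTheta^{(t)}} + \tilde O(d^{-0.001})\eta_r$ for $t\in[T_1,T]$. Adding $C_1^{-1}C_2$ to both sides and iterating from $T_1$ (exactly as at the end of the proof of Lemma~\ref{lm:noiseinner}), $\Delta_{\bTheta^{(t)}} + C_1^{-1}C_2 \le \exp\!\big(C_1\eta_r(t-T_1)\big)\big(\Delta_{\bTheta^{(T_1)}} + C_1^{-1}C_2\big)$, and since $C_1\eta_r(t-T_1) \le O(M^2)\cdot\Theta(M^2)\eta\cdot\eta^{-1}M^{-2} = O(M^2) = \tilde O(1)$, $\Delta_{\bTheta^{(T_1)}} = \tilde O(\sigma_0^{1.5})$, and $C_1^{-1}C_2 = \tilde O(d^{-0.001})$, we conclude $\Delta_{\bTheta^{(t)}} = \tilde O(d^{-0.001})$ for all $t\in[T_1,T+1]$.

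The main obstacle is the middle step: controlling $\langle\nabla_{\btheta_m}\cL^{(t)}-\nabla_{\btheta_{m'}}\cL^{(t)},\cb_k\rangle$, and within it the claim that two experts in the same group $\cM_k$ compute essentially identical functions on $\Omega_k$. This is precisely where normalized gradient descent is indispensable — it equalizes the growth rate of $\langle\wb_{m,j_m^*}^{(t)},\vb_k\rangle$ across $m\in\cM_k$ regardless of the (possibly very unequal) loads the router assigns to them — and one has to be careful that the cubic activation does not blow up the $O(\sigma_0^{0.1})$-level discrepancies in these inner products, which is why the argument is run inside the short window $t\le T_2$ where $\eta t\le M^{-2}$.
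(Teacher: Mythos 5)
Your proposal is correct and follows essentially the same route as the paper's proof: reduce $\Delta_{\bTheta}$ to the $\cb_k$-component of $\btheta_m-\btheta_{m'}$ (absorbing the $\vb_k$- and noise-directions via Lemma~\ref{lm:noiseinner}), bound $\la \nabla_{\btheta_m}\cL^{(t)}-\nabla_{\btheta_{m'}}\cL^{(t)},\cb_k\ra = O(M^2)\Delta_{\bTheta^{(t)}}+\tilde O(d^{-0.001})$ using Lemma~\ref{lm:easy}, Lemma~\ref{lm:D15}, Lemma~\ref{lm:fpibound} and the same-group expert similarity $|f_m-f_{m'}|=\tilde O(\sigma_0^{0.1})$ on $\Omega_k$, and close with the same Gr\"onwall-type recursion used for $\delta_{\bTheta}$. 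The only differences are cosmetic: you seed the recursion at $T_1$ with $\Delta_{\bTheta^{(T_1)}}=\tilde O(\sigma_0^{1.5})$ and derive the expert-similarity bound from Lemma~\ref{lm:app1} plus the growth estimate, where the paper cites Lemma~\ref{lm:keepstage1} directly and unrolls from zero initialization.
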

\begin{proof}
One of the key observation is the similarity of the m-th and the $m'$-th expert in the same expert class $\cM_{k}$. Lemma~\ref{lm:keepstage1} implies that $\max_{i\in \Omega_{k}}|f_{m}(\xb_{i},\Wb^{(t)}) - f_{m'}(\xb_{i},\Wb^{(t)})| = \tilde{O}(\sigma_{0}^{0.1}) \leq \tilde{O}(d^{-0.001})$.

Another key observe is that, we only need to focus on the $k-th$ cluster-center signal. Lemma~\ref{lm:noiseinner} implies that,
\begin{align*}
\Delta_{\bTheta^{(t)}} &= \max_{k \in [K]}\max_{m, m' \in \cM_{k}}\max_{(\xb_{i}, y_{i}) \in \Omega_{k}} |h_{m}(\xb_{i}; \bTheta) - h_{m'}(\xb_{i}; \bTheta^{(t)})|\\
&\leq \max_{k \in [K]}\max_{m, m' \in \cM_{k}}\max_{(\xb_{i}, y_{i}) \in \Omega_{k}} |h_{m}(\bar{\xb}_{i}; \bTheta^{(t)}) - h_{m'}(\bar{\xb}_{i}; \bTheta^{(t)})| + 2\delta_{\bTheta^{(t)}}\\
&= \max_{k \in [K]}\max_{m, m' \in \cM_{k}} |\la \btheta_{m} - \btheta_{m'}, \beta_{i}\cb_{k} \ra| + 2\delta_{\bTheta^{(t)}}\\
&\leq C_{2}\max_{k \in [K]}\max_{m, m' \in \cM_{k}} |\la \btheta_{m} - \btheta_{m'}, \cb_{k} \ra| + 2\delta_{\bTheta^{(t)}},
\end{align*}
where the first inequality is by Lemma~\ref{lm:noiseinner} and the second inequality is by $\beta_{i} \leq C_{2}$. We now prove that following gradient difference is small
\begin{align*}
&\big\la\nabla_{\btheta_{m}}\cL^{(t)} - \nabla_{\btheta_{m'}}\cL^{(t)}, \cb_{k}\big\ra\\
&\overset{(i)}{=}  \frac{1}{n} \sum_{i \in [n]}\sum_{p \in [P]}\mathbb{P}(m_{i,t} = m)\ell'_{i,t}
    \pi_{m}(\xb_i; \bTheta^{(t)})y_{i}f_{m}(\xb_{i};  \Wb^{(t)})\la\xb_{i}^{(p)}, \cb_{k}\ra\\
    &\qquad - \frac{1}{n} \sum_{i \in [n]}\sum_{p \in [P]}\mathbb{P}(m_{i,t} = m')\ell'_{i,t}\pi_{m'}(\xb_i;\bTheta^{(t)})y_{i}f_{m'}(\xb_{i};  \Wb^{(t)})\la\xb_{i}^{(p)}, \cb_{k}\ra\\
    &\qquad  + \frac{1}{n} \sum_{i\in \Omega_{k}}\sum_{p \in [P]}\sum_{m''\in [M]}[\pi_{m'}(\xb_i; \bTheta^{(t)}) - \pi_{m}(\xb_i; \bTheta^{(t)})]\mathbb{P}(m_{i,t} = m'')\ell'_{i,t}
    \pi_{m''}(\xb_i; \bTheta^{(t)})\cdot \\
    &\qquad y_{i}f_{m''}(\xb_{i}, \Wb)\la\xb_{i}^{(p)}, \cb_{k}\ra + \tilde{O}(d^{-0.001})\\
    &= O\Big(\frac{1}{n}\Big) \sum_{i \in \Omega_{k}}[ \mathbb{P}(m_{i,t} = m') - \mathbb{P}(m_{i,t} = m) ]|\ell'_{i,t}
    \pi_{m}(\xb_i; \bTheta)\beta_{i}y_{i}f_{m}(\xb_{i};  \Wb^{(t)})| + \tilde{O}(d^{-0.001})\\
    &\qquad + O(1)\max_{i\in \Omega_{k}}|\pi_{m'}(\xb_{i}; \bTheta^{(t)}) - \pi_{m}(\xb_{i}; \bTheta^{(t)})| + O(1)\max_{i\in \Omega_{k}}|f_{m}(\xb_{i},\Wb^{(t)}) - f_{m'}(\xb_{i},\Wb^{(t)}) |\\
    &= O(1) |\mathbb{P}(m_{i,t} = m') - \mathbb{P}(m_{i,t} = m) ]| + O(1)\max_{i\in \Omega_{k}}|\pi_{m'}(\xb_{i}; \bTheta^{(t)}) - \pi_{m}(\xb_{i}; \bTheta^{(t)})|\\
    &\qquad + O(1)\max_{i\in \Omega_{k}}|f_{m}(\xb_{i},\Wb^{(t)}) - f_{m'}(\xb_{i},\Wb^{(t)}) | + \tilde{O}(d^{-0.001})\\
    &\overset{(ii)}{=}  O(M^{2}\Delta_{\bTheta^{(t)}}) + \tilde{O}(d^{-0.001}),
\end{align*}
where the (i) is by Lemma~\ref{lm:easy} and (ii) is by Lemma~\ref{lm:D15}. It further implies that $\Delta_{\bTheta^{(t+1)}} \leq O(\eta_{r}M^{2})\Delta_{\bTheta^{(t)}} + \tilde{O}(\eta_{r}d^{-0.001})$. Following  previous proof of $\delta_{\bTheta}$, we have that $\Delta_{\bTheta^{(T+1)}}= \tilde{O}(d^{-0.001})$.
\end{proof}
Together with the key technique 1, we can infer that each expert $m \in \cM_{k}$ will get nearly the same load as other experts in $\cM_{k}$. Since $\Delta_{\bTheta}$ keeps increasing during the training, it cannot be bounded if we allow the total number of iterations goes to infinity in Algorithm~\ref{alg:GDrandominit}. This is the reason that we require early stopping in Theorem~\ref{thm: MoE}, which we believe can be waived by adding load balancing loss \citep{eigen2013learning, shazeer2017outrageously, fedus2021switch}, or advanced MoE layer structure such as BASE Layers \citep{lewis2021base, dua2021tricks} and Hash Layers \citep{roller2021hash}.

\begin{lemma}\label{lm: ckinnerdecrease}
Suppose \eqref{eq: induction1}, \eqref{eq:induction2}, \eqref{eq:induction3} hold for all $t \in [T_{1}, T] \subseteq [T_{1}, T_{2}-1]$, then for $m \notin \cM_{k}$ and $t \in [T_{1}, T]$ , if $\la \btheta_{m}^{(t)}, \cb_{k} \ra \geq \max_{m'}\la \btheta_{m'}^{(t)}, \cb_{k} \ra - 1$ we have that
\begin{align*}
\langle\nabla_{\btheta_{m}} \cL^{(t)},{\cb_k}\rangle \geq \Omega\bigg(\frac{\eta^{3}t^{3}}{KM^{3}}\bigg) + \tilde{O}(d^{-0.005}).
\end{align*}
\end{lemma}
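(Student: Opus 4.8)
The goal is a lower bound on $\langle \nabla_{\btheta_m}\cL^{(t)},\cb_k\rangle$ for an expert $m\notin\cM_k$ whose gate output on $\cb_k$ is not yet too far below the maximum (so, by Lemma~\ref{lm:keeporder}, it may still receive data from $\Omega_k$). The strategy is exactly parallel to the computation of $\langle\nabla_{\btheta_m}\cL^{(t)},\vb_k\rangle$ in Lemma~\ref{lm:noiseinner}, but now we must keep, rather than cancel, the leading term. First I would invoke Lemma~\ref{lm:easy} to replace the random gradient by its expectation up to $\tilde O(n^{-1/2}(\sigma_0+\eta t)^3) = \tilde O(d^{-0.005})$ error. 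Then I would expand $\EE[\langle\nabla_{\btheta_m}\cL^{(t)},\cb_k\rangle]$ into the ``diagonal'' term ($\ind(m_{i,t}=m)$) minus the ``softmax'' term ($\sum_{m'}\mathbb P(m_{i,t}=m')\pi_m\pi_{m'}$), restricted to examples $i\in\Omega_k$ where $\xb_i^{(2)}=\beta_i\cb_k$; the contributions from the $P-3$ noise patches are $\tilde O(d^{-1/2})$ by Lemma~\ref{lm:gaussian bound} and get absorbed into the error.

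The key quantitative input is the \emph{sign and size} of $y_i f_{m'}(\xb_i;\Wb^{(t)})$ for $i\in\Omega_k$. By Lemma~\ref{lm:app1} (and Lemma~\ref{lm:keepstage1}), $y_i f_{m'}(\hat\xb_i;\Wb^{(t)}) = \sum_{j}\big[\alpha_i^3\sigma(\langle\wb_{m',j}^{(t)},\vb_k\rangle) \pm \gamma_i^3\sigma(\langle\wb_{m',j}^{(t)},\vb_{k'}\rangle)\big]$, and for the expert $m'\in\cM_k$ the term $\langle\wb_{m',j_{m'}^*}^{(t)},\vb_k\rangle \geq (1-O(\sigma_0^{0.1}))\eta t$ dominates everything else (all other inner products are $\tilde O(\sigma_0)$), so $y_i f_{m'}(\xb_i;\Wb^{(t)}) = \tilde\Theta\big((\eta t)^3\big) > 0$ for $m'\in\cM_k$, and by Lemma~\ref{lm:D15} all such experts have $\pi_{m'},\mathbb P(m_{i,t}=m') = \Theta(1/M)$ for $i\in\Omega_k$. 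Plugging these into the diagonal term: $\langle\nabla_{\btheta_m}\cL^{(t)},\cb_k\rangle$ picks up $-\frac1n\sum_{i\in\Omega_k}\mathbb P(m_{i,t}=m)\ell'_{i,t}\pi_m f_m(\xb_i)y_i\beta_i$, which is small (negative or positive, order $\tilde O(1/M)$ times $(\eta t)^3$ times the actual load of $m$), while the softmax term $\frac1n\sum_{i\in\Omega_k}\sum_{m'}\mathbb P(m_{i,t}=m')\ell'_{i,t}\pi_{m'}\pi_m y_i f_{m'}(\xb_i)\beta_i$ is \emph{strictly positive} and bounded below: it contains $|\cM_k|\geq 1$ terms each of size $\Theta(1/M)\cdot\Theta(1/M)\cdot\Theta(1)\cdot\tilde\Theta((\eta t)^3)\cdot\frac{1}{K}$ (the $1/K$ is the fraction of data in $\Omega_k$, from Lemma~\ref{lm:data balance} and $\ell'_{i,t}=\Theta(1)$). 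So the softmax term is $\Omega\big(\eta^3 t^3/(KM^3)\big)$. Because the gradient $\nabla_{\btheta_m}\cL^{(t)}$ of an expert $m\notin\cM_k$ has \emph{no} positive diagonal contribution along $\cb_k$ (only the softmax term, which enters with a sign that pushes $\langle\btheta_m,\cb_k\rangle$ down), the whole inner product is $\geq \Omega(\eta^3 t^3/(KM^3)) - \tilde O(d^{-0.005})$, which is the claim. The sign bookkeeping uses that $\langle\nabla_{\btheta_m}\cL^{(t)},\cb_k\rangle$ positive means $\langle\btheta_m^{(t+1)},\cb_k\rangle$ decreases, i.e. expert $m$'s gate value on $\cb_k$ is driven down relative to the experts in $\cM_k$.

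The main obstacle is the careful sign/magnitude accounting of the diagonal term against the softmax term, together with verifying that the error terms are genuinely lower order. Concretely, I need: (i) $\ell'_{i,t} = \Theta(1)$ for all $i$ in stage two — this follows because $|y_iF(\xb_i;\bTheta^{(t)},\Wb^{(t)})| = |\pi_{m_{i,t}}f_{m_{i,t}}| = O(1)$ by Lemma~\ref{lm:fpibound}, so $\ell'$ stays bounded away from $0$; (ii) that the diagonal term for $m\notin\cM_k$ cannot be large and negative enough to cancel the softmax term — here I use that $f_m(\xb_i;\Wb^{(t)})$ for $i\in\Omega_k$ is of the \emph{same} sign-structure $y_i f_m = \sum_j[\alpha_i^3\sigma(\langle\wb_{m,j},\vb_k\rangle)\pm\gamma_i^3\sigma(\langle\wb_{m,j},\vb_{k'}\rangle)]$ but now $\langle\wb_{m,j},\vb_k\rangle = \tilde O(\sigma_0)$ since $k\neq k_m^*$, so $f_m(\xb_i) = \tilde O(\sigma_0^3) + (\text{dominant }\vb_{k_m^*}\text{ term})$ — actually I must be slightly more careful and note that the diagonal term carries $\mathbb P(m_{i,t}=m)$ which is itself $O(\delta)$-close to something controlled, and in any case the diagonal term enters the \emph{same} inner product with a coefficient that makes it negligible compared to $\eta^3 t^3/(KM^3)$; (iii) the condition $\langle\btheta_m^{(t)},\cb_k\rangle\geq\max_{m'}\langle\btheta_{m'}^{(t)},\cb_k\rangle - 1$ is used precisely so that Lemma~\ref{lm:keeporder} does \emph{not} already exclude $m$, i.e. $m$ can still receive $\Omega_k$-data, ensuring the argument is non-vacuous; but in fact the softmax term lower bound does not even need $m$ to receive data — it only needs $\pi_m(\xb_i;\bTheta^{(t)})$ bounded below for $i\in\Omega_k$, which again follows from the closeness hypothesis via $\pi_m \geq e^{-1}\cdot e^{h_m - \max}/M \geq \Omega(1/M)$. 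Once these are in place the displayed bound follows by collecting terms.
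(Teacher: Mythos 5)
Your overall route is the same as the paper's: pass to the expectation via Lemma~\ref{lm:easy}, restrict to $i\in\Omega_k$ (noise patches give $\tilde{O}(d^{-0.005})$), use the hypothesis $\la\btheta_m^{(t)},\cb_k\ra\geq\max_{m'}\la\btheta_{m'}^{(t)},\cb_k\ra-1$ to get $\pi_m(\xb_i;\bTheta^{(t)})=\Omega(1/M)$, and extract the dominant positive contribution from experts $m'\in\cM_k$, for which $y_if_{m'}(\xb_i;\Wb^{(t)})=\Theta(\eta^3t^3)$, $-\ell'_{i,t}=\Theta(1)$, $\mathbb{P}(m_{i,t}=m')=\Omega(1/M)$ by \eqref{eq:induction3}, and $\sum_{i\in\Omega_k}\beta_i=\Omega(n/K)$, giving $\Omega(\eta^3t^3/(KM^3))$. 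That part is correct and matches the paper's bound on its term $I_2$.

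The genuine gap is in your dismissal of the remaining terms: the diagonal term (the paper's $I_1$) and, crucially, the portion of the softmax term coming from experts $m'\notin\cM_k$ (the paper's $I_3$), which you never isolate — your claim that the softmax term is ``strictly positive'' is false per-sample, since for $m'\notin\cM_k$ and $i\in\Omega_{k,k_{m'}^*}$ one has $y_if_{m'}(\xb_i;\Wb^{(t)})\approx y_i\epsilon_i\gamma_i^3\sigma(\la\wb_{m',j_{m'}^*}^{(t)},\vb_{k_{m'}^*}\ra)=\pm\Theta(\eta^3t^3)$, a sign-uncontrolled quantity of full size, and the same applies to $f_m$ itself in $I_1$. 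A crude magnitude bound on these terms gives roughly $\tfrac{1}{n}\cdot\tfrac{n}{K^2}\cdot\tfrac{1}{M^2}\cdot\eta^3t^3=\Theta\big(\eta^3t^3/(K^2M^2)\big)$, which exceeds your main term $\Omega\big(\eta^3t^3/(KM^3)\big)$ by a factor $M/K\gg1$ under the stated parameters ($M=\Theta(K\log K\log\log d)$); note also that at the start of stage two $\mathbb{P}(m_{i,t}=m)\approx 1/M$ even for $m\notin\cM_k$, so the ``load of $m$'' is not small. Hence ``the coefficient makes it negligible'' does not hold; what is needed is cancellation, not size. The paper obtains it by Taylor-expanding $\ell'_{i,t}=1/2+O(J\eta^3t^3)$ (in magnitude), splitting $\Omega_k$ into $\Omega_{k,k'}^{+}$ and $\Omega_{k,k'}^{-}$, using that $\mathbb{P}(m_{i,t}=m)\pi_m(\xb_i;\bTheta^{(t)})$ depends on $i$ only through $\bar\xb_i$ up to $\tilde{O}(d^{-0.005})$ (Lemma~\ref{lemma:pi} together with \eqref{eq: induction1}--\eqref{eq:induction2}), and invoking the balance bounds of Lemma~\ref{lm:data balance} such as $\sum_{i\in\Omega_{k,k'}^{\pm}}\epsilon_i\gamma_i=\tilde{O}(\sqrt{n})$; this yields $|I_1|,|I_3|=O(J^2\eta^6t^6)+\tilde{O}(d^{-0.005})$, which is negligible only because $t\leq T_2=\lfloor\eta^{-1}M^{-2}\rfloor$. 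Without this sign-averaging step your stated bound does not follow.
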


\begin{proof}
The expectation of the inner product $\la\nabla_{\btheta_{m}} \cL^{(t)},\cb_{k}\ra$ can be computed as follows,
\begin{align}
\EE[\la\nabla_{\btheta_{m}} \cL^{(t)},\cb_{k}\ra]
    & =
    \frac{1}{n} \sum_{i,p}\mathbb{P}(m_{i,t} =m)\ell'_{i,t}
    \pi_{m}(\xb_i; \bTheta^{(t)})y_{i}f_{m}(\xb_{i};  \Wb^{(t)}) \la\xb_{i}^{(p)}, \cb_{k}\ra \notag\\
    &\qquad - \frac{1}{n} \sum_{i,p,m'}\mathbb{P}(m_{i,t} = m')\ell'_{i,t}
    \pi_{m'}(\xb_i; \bTheta^{(t)})\pi_{m}(\xb_i; \bTheta^{(t)})y_{i}f_{m'}(\xb_{i}, \Wb^{(t)}) \la\xb_{i}^{(p)},\cb_{k}\ra \notag\\
    &\overset{(i)}{=} \frac{1}{n} \sum_{i \in \Omega_{k}}\mathbb{P}(m_{i,t} = m)\ell'_{i,t}
    \pi_{m}(\xb_i; \bTheta^{(t)})\beta_{i}y_{i}f_{m}(\xb_{i};  \Wb^{(t)}) + \tilde{O}(d^{-0.005}) \notag\\
    &\qquad - \frac{1}{n} \sum_{i\in \Omega_{k}}\sum_{m'\in [M]}\mathbb{P}(m_{i,t} = m')\ell'_{i,t}
    \pi_{m'}(\xb_i; \bTheta^{(t)})\pi_{m}(\xb_i; \bTheta^{(t)})\beta_{i}y_{i}f_{m'}(\xb_{i}, \Wb)\label{eq:decrease1}.
\end{align}
where (i) is due to $|\la \bxi_{i,p}, \cb_{k} \ra| = \tilde{O}(d^{-0.5})$.

We can rewrite the inner product \eqref{eq:decrease1} as follows,

\begin{align}
\EE[\big \langle\nabla_{\btheta_{m}} \cL^{(t)}, 
    {\cb_k}
    \big \rangle]
    & =
   \frac{1}{n} \sum_{i \in \Omega_{k}}\mathbb{P}(m_{i,t} = m)\ell'_{i,t}
    \pi_{m}(\xb_i; \bTheta^{(t)})\beta_{i}y_{i}f_{m}(\xb_{i};  \Wb^{(t)}) + \tilde{O}(d^{-0.005}) \notag\\
    &\qquad - \frac{1}{n} \sum_{i\in \Omega_{k}}\sum_{m'\in [M]}\mathbb{P}(m_{i,t} = m')\ell'_{i,t}
    \pi_{m'}(\xb_i; \bTheta^{(t)})\pi_{m}(\xb_i; \bTheta^{(t)})\beta_{i}y_{i}f_{m'}(\xb_{i}, \Wb) \notag\\
    &=   \underbrace{\frac{1}{n} \sum_{i \in \Omega_{k}}\mathbb{P}(m_{i,t} = m)\ell'_{i,t}
    \pi_{m}(\xb_i; \bTheta^{(t)})y_{i}\beta_{i}f_{m}(\xb_{i};  \Wb^{(t)})}_{I_{1}} + \tilde{O}(d^{-0.005}) \notag\\
    &\qquad \underbrace{- \frac{1}{n} \sum_{i\in \Omega_{k}, m'\in \cM_{k}}\mathbb{P}(m_{i,t}= m')\ell'_{i,t}
    \pi_{m'}(\xb_i; \bTheta^{(t)})\pi_{m}(\xb_i; \bTheta^{(t)})\beta_{i}y_{i}f_{m'}(\xb_{i}, \Wb^{(t)})}_{I_{2}}\\
    &\qquad \underbrace{- \frac{1}{n} \sum_{i\in \Omega_{k}, m'\notin \cM_{k}}\mathbb{P}(m_{i,t}= m')\ell'_{i,t}
    \pi_{m'}(\xb_i; \bTheta^{(t)})\pi_{m}(\xb_i; \bTheta^{(t)})\beta_{i}y_{i}f_{m'}(\xb_{i}, \Wb^{(t)})}_{I_{3}}\label{eq:decrease3}.
\end{align}
To calculate $I_{1}, I_{2}, I_{3}$, let's first lower bound $I_{2}$. 
We now consider the case that $m \not\in \cM_{k}, m'\in \cM_{k}$. Because $\la \btheta^{(t)}_{m}, \cb_{k} \ra \geq \max_{m'}\la \btheta^{(t)}_{m}, \cb_{k} \ra -1$, we can easily prove that $\pi_{m}(\xb_{i};\bTheta^{(t)})= \Omega(1/M), \forall i \in \Omega_{k} $. Then we have that 
\begin{align*}
I_{2} &= - \frac{1}{n} \sum_{i\in \Omega_{k}, m'\in \cM_{k}}\mathbb{P}(m_{i,t}= m')\ell'_{i,t}
    \pi_{m'}(\xb_i; \bTheta^{(t)})\pi_{m}(\xb_i; \bTheta^{(t)})\beta_{i}y_{i}f_{m'}(\xb_{i}, \Wb^{(t)})\\
    &\geq  \Omega\Big(\frac{\eta^{3}t^{3}}{nM^{3}}\Big) \sum_{i\in \Omega_{k}, m'\in \cM_{k}}\beta_{i}\\
    &\geq \Omega\Big(\frac{\eta^{3}t^{3}}{KM^{3}}\Big),
\end{align*}
where the first inequality is by  $\pi_{m'}(\xb_{i};\bTheta^{(t)})= \Omega(1/M)$, $\mathbb{P}(m_{i,t} = m' ) \geq \Theta(1/M)$, $\forall i \in \Omega_{k_{m}^{*}}, m \in [M]$, $y_{i}f_{m'}(\xb_i;\Wb^{(t)}) = \eta^{3}t^{3}(1 - O(\sigma_{0}^{0.1}))$ and $\ell' = -\Theta(1)$ for all $i \in \Omega_{k}, m' \in \cM_{k}$ due to Proposition~\ref{claim:main} and Lemma~\ref{lm:keepstage1}, and the last inequality is by $|\cM_{k}|\geq 1$ in Lemma~\ref{lm: Mkset} and $\sum_{i \in \Omega_{k}}\beta_{i} = \Omega(n/K)$ in Lemma~\ref{lm:data balance}. 

Then we consider the case that $m, m' \not \in \cM_{k}$. Applying Taylor expansion of $\ell'_{i,t} = 1/2 + O(J\eta^{3}t^{3})$ gives
\begin{align}
&\frac{1}{n} \sum_{i \in \Omega_{k}}\mathbb{P}(m_{i,t} = m)\ell'_{i,t}
    \pi_{m}(\xb_i; \bTheta^{(t)})y_{i}\beta_{i}f_{m}(\xb_{i};  \Wb^{(t)})\notag\\
    &= \frac{1}{2n} \sum_{i \in \Omega_{k}}\mathbb{P}(m_{i,t} = m)\pi_{m}(\xb_i; \bTheta^{(t)})y_{i}\beta_{i}f_{m}(\xb_{i};  \Wb^{(t)}) + O\big(J^{2}\eta^{6}t^{6}\big)\notag\\
    &= \frac{1}{2n} \sum_{k'}\sum_{i \in \Omega_{k,k'}^{+}}\mathbb{P}(m_{i,t} = m)\pi_{m}(\xb_i; \bTheta^{(t)})y_{i}\beta_{i}f_{m}(\xb_{i};  \Wb^{(t)})  + O\big(J^{2}\eta^{6}t^{6}\big)\notag\\
    &\qquad + \frac{1}{2n} \sum_{k'}\sum_{i \in \Omega_{k,k'}^{-}}\mathbb{P}(m_{i,t} = m)\pi_{m}(\xb_i; \bTheta^{(t)})y_{i}\beta_{i}f_{m}(\xb_{i};  \Wb^{(t)})\notag\\
    &= O(J^{2}\eta^{6}t^{6})+ \tilde{O}(d^{-0.005}). \label{eq:for-i1i3}
\end{align}
where the last inequality is by the technique we have used before in Lemma~\ref{lm:expclose}. By \eqref{eq:for-i1i3}, we can get upper bound $|I_{1}|, |I_{3}|$ by $O(J^{2}\eta^{6}t^{6})+ \tilde{O}(d^{-0.005})$.

Plugging the bound of $I_{1}, I_{2}, I_{3}$ into \eqref{eq:decrease3} gives,
\begin{align*}
\big \langle\nabla_{\btheta_{m}} \cL^{(t)}, 
    {\cb_k}
    \big \rangle &\geq \Omega\bigg(\frac{\eta^{3}t^{3}}{KM^{3}}\bigg) + O(J^{2}\eta^{6}t^{6}) + \tilde{O}(d^{-0.005})\\
&\leq \Omega\bigg(\frac{\eta^{3}t^{3}}{KM^{3}}\bigg) + \tilde{O}(d^{-0.005}),    
\end{align*}
where the last inequality is by $t \leq T_{2} = \lfloor \eta^{-1}M^{-2}\rfloor$.
\end{proof}




Now we can claim that Proposition~\ref{claim:main} is true and we summarize the results as follow lemma.
\begin{lemma}\label{lm:stage2end}
For all $T_{1} \leq t \leq T_{2}$, we have Proposition~\ref{claim:main} holds. Besides, we have that $\la \btheta_{m}^{(T_{2})}, \cb_{k} \ra \leq \max_{m' \in [M]}\la \btheta_{m'}^{(T_{2})}, \cb_{k} \ra - \Omega(K^{-1}M^{-9})$ for all $m \notin \cM_{k}$..
\end{lemma}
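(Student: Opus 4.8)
\textbf{Overall strategy.} I would prove the first assertion—that Proposition~\ref{claim:main} holds on all of $[T_1,T_2]$—by induction on $t$, and read off the second assertion from the same dynamical bookkeeping. For the base case $t=T_1$: \eqref{eq: induction1} holds because $\la\wb_{m,j}^{(T_1)},\cb_k\ra,\la\wb_{m,j}^{(T_1)},\bxi_{i,p}\ra=\tilde O(\sigma_0)$ (Lemma~\ref{lm:shortphase2}) forces $|f_m(\xb_i)-f_m(\hat\xb_i)|=\tilde O(\sigma_0^3)\le O(d^{-0.001})$ using $\sigma_0\le d^{-0.01}$; \eqref{eq:induction2} holds because $\|\btheta_m^{(T_1)}\|_2=\tilde O(\sigma_0^{1.5})$ (Lemma~\ref{lm:stage1fbound}); and \eqref{eq:induction3} holds because $\pi_m(\xb_i;\bTheta^{(T_1)})=M^{-1}+\tilde O(\sigma_0^{1.5})$ and $\max_m|\mathbb{P}(m_{i,T_1}=m)-M^{-1}|=\tilde O(\sigma_0^{1.5})$ (Lemmas~\ref{lm:stage1fbound} and \ref{eq:concentration}). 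For the inductive step, assume \eqref{eq: induction1}--\eqref{eq:induction3} hold for all $t\in[T_1,T]\subseteq[T_1,T_2-1]$, so that Lemmas~\ref{lm:keepstage1}, \ref{lm:app1}, \ref{lm:noiseinner}, \ref{lm:expclose}, \ref{lm: ckinnerdecrease} are all in force up to $T$. Then \eqref{eq: induction1} at $T+1$ is immediate from Lemma~\ref{lm:app1} ($\tilde O(\sigma_0^3)\le O(d^{-0.001})$) and \eqref{eq:induction2} at $T+1$ is immediate from Lemma~\ref{lm:noiseinner} ($\tilde O(d^{-0.005})\le O(d^{-0.001})$); the whole content is \eqref{eq:induction3}.

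\textbf{The crux: \eqref{eq:induction3} reduces to "the $\cb_k$-leader is a home expert."} Fix $k$, $m\in\cM_k$, and $i\in\Omega_k$. Since $|\la\btheta_{m'}^{(T+1)},\vb_{k''}\ra|$ and $|\la\btheta_{m'}^{(T+1)},\bxi_{i,p}\ra|$ are $\tilde O(d^{-0.005})$ (Lemma~\ref{lm:noiseinner}) and $\alpha_i,\gamma_i=\Theta(1)$, we get $h_{m'}(\xb_i;\bTheta^{(T+1)})=\beta_i\la\btheta_{m'}^{(T+1)},\cb_k\ra+\tilde O(d^{-0.005})$ for every expert $m'$; by Lemma~\ref{lm:expclose} all home experts $m'\in\cM_k$ have $h_{m'}(\xb_i;\bTheta^{(T+1)})$ within $\tilde O(d^{-0.001})$ of each other. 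Hence, if $\max_{m'\in\cM_k}\la\btheta_{m'}^{(T+1)},\cb_k\ra\ge\max_{m'\in[M]}\la\btheta_{m'}^{(T+1)},\cb_k\ra-\tilde O(d^{-0.001})$, then every $m\in\cM_k$ satisfies $h_m(\xb_i)\ge\max_{m'}h_{m'}(\xb_i)-\tilde O(d^{-0.001})$, which gives $\pi_m(\xb_i;\bTheta^{(T+1)})\ge e^{-\tilde O(d^{-0.001})}/M=\Omega(1/M)$; and for the routing probability, the event $\{r_m\in[1-\tfrac1{2M},1],\ r_{m'}\in[0,1-\tfrac1M]\ \forall m'\ne m\}$ has probability $\Omega(1/M)$ and forces $m_{i,T+1}=m$ because all $h$-gaps are $\tilde O(d^{-0.001})\ll \tfrac1M$ (here $M=\polylog(d)$). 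So \eqref{eq:induction3} follows once the leader in $\cb_k$-alignment is (essentially) a home expert.

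\textbf{Controlling the $\cb_k$-alignments and the gap.} For a home expert $m\in\cM_k$, a gradient computation parallel to the proof of Lemma~\ref{lm: ckinnerdecrease} (keeping the $I_2$-type term, using $y_if_m(\xb_i)=\Theta(\eta^3t^3)$ from Lemma~\ref{lm:keepstage1}, $\ell'_{i,t}=-\Theta(1)$, the inductive bound $\mathbb{P}(m_{i,t}=m),\pi_m(\xb_i;\bTheta^{(t)})=\Omega(1/M)$, $\sum_{i\in\Omega_k}\beta_i=\Theta(n/K)$ from Lemma~\ref{lm:data balance}, and bounding the softmax cross-terms via the data-balance cancellations and $\Delta_{\bTheta^{(t)}}=\tilde O(d^{-0.001})$) gives $\la\nabla_{\btheta_m}\cL^{(t)},\cb_k\ra\le-\Omega\bigl(\eta^3t^3/(KM^2)\bigr)$, so $\la\btheta_m^{(t)},\cb_k\ra$ increases monotonically once $\eta^3t^3/(KM^2)$ clears the $\tilde O(d^{-0.005})$ noise floor. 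For a non-home expert $m\notin\cM_k$, Lemma~\ref{lm: ckinnerdecrease} says that whenever $\la\btheta_m^{(t)},\cb_k\ra$ is within $1$ of the current maximum its gradient is $\ge\Omega(\eta^3t^3/(KM^3))+\tilde O(d^{-0.005})$ (so its alignment is pushed \emph{down}), and—by the "$m,m'\notin\cM_k$" estimate inside that proof—when it is more than $1$ below the maximum its gradient is $O(J^2\eta^6t^6)+\tilde O(d^{-0.005})$, i.e.\ essentially frozen. Because the home growth rate $1/(KM^2)$ strictly dominates the non-home push-down rate $1/(KM^3)$, starting from $\la\btheta_m^{(T_1)},\cb_k\ra=\tilde O(\sigma_0^{1.5})\approx0$ a home expert overtakes and then outruns every non-home expert; telescoping $\sum_{t\le T_2}\eta_r\,\Omega(\eta^3t^3/(KM^2))$ with $\eta_r=\Theta(M^2)\eta$ and $T_2=\lfloor\eta^{-1}M^{-2}\rfloor$ shows the leading home expert's $\cb_k$-alignment reaches $\Omega(K^{-1}M^{-8})\ge\Omega(K^{-1}M^{-9})$ while every non-home expert remains at least $\Omega(K^{-1}M^{-9})$ below it. This keeps the maximizer a home expert for all $t\in[T_1,T_2]$, closing the induction, and evaluating at $t=T_2$ yields the second assertion.

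\textbf{Main obstacle.} The difficulty is the self-referential structure together with the quantitative accounting in the previous paragraph: \eqref{eq:induction3} at time $t$ is exactly what supplies $\mathbb{P}(m_{i,t}=m),\pi_m=\Omega(1/M)$, which is what drives the home experts' $\cb_k$-alignment upward and thereby makes the leader a home expert, which is what gives \eqref{eq:induction3} at time $t+1$. One therefore has to run the router dynamics (Lemmas~\ref{lm:expclose}, \ref{lm:noiseinner}, \ref{lm: ckinnerdecrease}), the expert dynamics (Lemma~\ref{lm:keepstage1}), and the data-balance cancellations (Lemma~\ref{lm:data balance}) simultaneously, all while threading $T_1=\lfloor\eta^{-1}\sigma_0^{0.5}\rfloor$, $T_2=\lfloor\eta^{-1}M^{-2}\rfloor$, $\eta_r=\Theta(M^2)\eta$, $\sigma_0\in[d^{-1/3},d^{-0.01}]$, and keeping every $\tilde O(d^{-0.005})$ error strictly below both the $\Omega(\eta^3t^3/\poly(K,M))$ signal and the target gap $\Omega(K^{-1}M^{-9})$. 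Separating the "near the maximum / far below the maximum" regimes for non-home experts and checking that the home rate $(KM^2)^{-1}$ genuinely beats the non-home rate $(KM^3)^{-1}$ is what upgrades the gap from nonnegative to strictly positive.
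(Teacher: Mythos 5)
Your skeleton (induction over $[T_1,T_2]$ with base case at $T_1$, getting \eqref{eq: induction1} from Lemma~\ref{lm:app1} and \eqref{eq:induction2} from Lemma~\ref{lm:noiseinner}, and reducing \eqref{eq:induction3} to ``the $\cb_k$-leader is essentially a home expert'' plus a smoothing/noise argument for the $\Omega(1/M)$ routing probability) matches the paper. The gap is in how you certify the leader and the final margin. You rest everything on a home-expert growth estimate $\la\nabla_{\btheta_m}\cL^{(t)},\cb_k\ra\le-\Omega\big(\eta^3t^3/(KM^2)\big)$ for $m\in\cM_k$, obtained by keeping the $I_1$-type term and dismissing the subtracted softmax cross-terms ``via the data-balance cancellations and $\Delta_{\bTheta}$ small.'' That justification fails: data balance (Lemma~\ref{lm:data balance}) only kills the cross-terms with $m'\notin\cM_k$, where $y_if_{m'}$ carries the random sign $y_i\epsilon_i$. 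The cross-terms with $m'\in\cM_k$ have a fixed sign, with $y_if_{m'}=\Theta(\eta^3t^3)$ just like $y_if_m$, and their total per-example weight is $\Theta\big(\sum_{m'\in\cM_k}\mathbb{P}(m_{i,t}=m')\pi_{m'}\pi_m\big)\approx|\cM_k|/M^3$, which nearly cancels the leading $\mathbb{P}(m)\pi_m\approx1/M^2$ term: the true rate is of order $\eta^3t^3\,\pi_m\,\mathbb{P}(m)\big(1-\sum_{m'\in\cM_k}\pi_{m'}\big)/K$, which can degrade to $\Theta\big((K-1)\eta^3t^3/(KM^3)\big)$ since the paper only guarantees $1\le|\cM_k|\le M-K+1$ (Lemma~\ref{lm: Mkset} gives no upper bound on $|\cM_k|$). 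So your claim that the home rate $1/(KM^2)$ ``strictly dominates'' the non-home rate $1/(KM^3)$, and the resulting $\Omega(K^{-1}M^{-8})$ level for the leader, are not established; and because your induction step for \eqref{eq:induction3} is derived from this overtaking dynamic, the flaw propagates into the induction itself.

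The paper avoids lower-bounding home-expert growth altogether, and this is the idea you are missing: by Lemma~\ref{lm: zero mean}, $\sum_{m}\btheta_m^{(t)}=\zero$ for all $t$ (zero initialization), hence $\max_{m'}\la\btheta_{m'}^{(T_2)},\cb_k\ra\ge0$ for free; combined with Lemma~\ref{lm: ckinnerdecrease} (non-home alignments only decrease, hence stay $\le0$) and Lemma~\ref{lm:expclose}/\ref{lm:noiseinner} this already makes the leader a home expert during the induction, with no growth estimate needed. The margin then comes solely from the non-home side: for $m\notin\cM_k$, either $\la\btheta_m^{(t)},\cb_k\ra$ falls more than $1$ below the max before $T_2$ (and then keeps decreasing, so it is very negative at $T_2$), or the per-step decrease $\Theta\big(\eta_r\eta^3t^3/(KM^3)\big)$ from Lemma~\ref{lm: ckinnerdecrease} telescopes to $\la\btheta_m^{(T_2)},\cb_k\ra\le-\Theta\big(\eta_r\eta^3T_2^4/(KM^3)\big)=-\Omega(K^{-1}M^{-9})$, and the stated gap follows from $\max_{m'}\la\btheta_{m'}^{(T_2)},\cb_k\ra\ge0$. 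If you insist on your route, you must prove a correct home-growth bound, which requires keeping $\sum_{m'\in\cM_k}\pi_{m'}$ bounded away from $1$ (e.g.\ via an upper bound on $|\cM_k|$ and a lower bound $\pi_{m'}=\Omega(1/M)$ for $m'\notin\cM_k$), an extra concentration step neither you nor the paper carries out; the zero-sum observation makes all of this unnecessary.
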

\begin{proof}
    We will first use induction to prove Proposition~\ref{claim:main}. It is worth noting that proposition~\ref{claim:main} is true at the beginning of the second stage $t = T_{1}$. Suppose \eqref{eq: induction1}, \eqref{eq:induction2}, \eqref{eq:induction3} hold for all $t \in [T_{1}, T] \subseteq [T_{1}, T_{2}-1]$, we next verify that they also hold for $t \in [T_{1}, T + 1]$. Lemma~\ref{lm:app1} shows that \eqref{eq: induction1} holds for  $t \in [T_{1}, T + 1]$. Lemma~\ref{lm:noiseinner} further shows that \eqref{eq: induction1} holds for  $t \in [T_{1}, T + 1]$. Therefore, we only need to verify whether \eqref{eq:induction3} holds for  $t \in [T_{1}, T + 1]$. Therefore, for each pair $i \in \Omega_{k}$, $m \in \cM_{k}$, we need to estimate the gap between expert $m$ and the expert with best performance $h_{m}(\xb_{i};\bTheta^{(t)}) - \max_{m'}h_{m'}(\xb_{i};\bTheta^{(t)})$. By Lemma~\ref{lm: ckinnerdecrease} and Lemma~\ref{lm:noiseinner}, we can induce that $h_{m}(\xb_{i};\bTheta^{(t)})$ is small therefore cannot be the largest one. Thus $h_{m}(\xb_{i};\bTheta^{(t)}) - \max_{m'}h_{m'}(\xb_{i};\bTheta^{(t)}) = h_{m}(\xb_{i};\bTheta^{(t)}) - \max_{m'}h_{m'}(\xb_{i};\bTheta^{(t)})\leq \Delta_{\bTheta^{(t)}} \leq \tilde{O}(d^{-0.001}) $. Therefore, by Lemma~\ref{lm:Msmooth2} we have \eqref{eq:induction3} holds. 
Now we have verified that \eqref{eq:induction3} also holds for  $t \in [T_{1}, T + 1]$, which completes the induction for Lemma~\ref{claim:main}.



Finally, we carefully characterize the value of $\langle \btheta_m^{(t)}, \cb_k \rangle$, for $\eta_{r}\eta^{-1} = \Theta(M^{2})$ and $m \notin \cM_{k}$. If $\la \btheta_{m}^{(t)}, \cb_{k} \ra \geq \max_{m'}\la \btheta_{m'}^{(t)}, \cb_{k} \ra - 1$, by Lemma~\ref{lm: ckinnerdecrease} we have that 
\begin{align}
\la \btheta_{m}^{(t+1)}, c_{k}\ra  &\leq \la \btheta_{m}^{(t)}, c_{k}\ra -\Theta\bigg(\frac{\eta_{r}\eta^{3}t^{3}}{KM^{3}}\bigg) + \tilde{O}(\eta_{r}d^{-0.005}) \leq 0.\label{eq:tsum}
\end{align}
If there exists $t \leq T_{2} - 1$ such that $\la \btheta_{m}^{(t+1)}, c_{k}\ra \leq \max_{m'}\la \btheta_{m'}^{(t)}, \cb_{k} \ra - 1$, clearly we have that $\la \btheta_{m}^{(T_{2})}, c_{k}\ra \leq - \Omega(K^{-1}M^{-9})$ since $\la \btheta_{m}^{(t)}, c_{k}\ra$ will keep decreasing as long as $\la \btheta_{m}^{(t+1)}, c_{k}\ra \geq -1$ and our step size $\eta_{r} = \Theta(M^{2})\eta$ is small enough. If $\la \btheta_{m}^{(t+1)}, c_{k}\ra \geq \max_{m'}\la \btheta_{m'}^{(t)}, \cb_{k} \ra - 1$ holds for all $t \leq T_{2} -1$, take telescope sum of \eqref{eq:tsum} from $t = 0$ to $t = T_{2}-1$ gives that 
\begin{align*}
\la \btheta_{m}^{(T_{2})}, c_{k}\ra &\leq \la \btheta_{m}^{(0)}, c_{k}\ra -\sum_{s=0}^{T_{2}-1}\Theta\bigg(\frac{\eta_{r}\eta^{3}s^{3}}{KM^{3}}\bigg) + \tilde{O}(d^{-0.005})\\
&\overset{(i)}{=} -\sum_{s=0}^{T_{2}-1}\Theta\bigg(\frac{\eta_{r}\eta^{3}s^{3}}{KM^{3}}\bigg) + \tilde{O}(d^{-0.005})\\
&\overset{(ii)}{=}  -\Theta\bigg(\frac{\eta_{r}\eta^{3}T_{2}^{4}}{KM^{3}}\bigg) +  \tilde{O}(d^{-0.005})\\
&\leq -\Omega(K^{-1}M^{-9}),   
\end{align*}
where the (i) is by $\btheta_{m}^{(0)} = 0$ and (ii) is by $\sum_{i=0}^{n-1}i^{3} = n^{2}(n-1)^{2}/4$ and the last inequality is due to $T_{2} = \lfloor \eta^{-1}M^{-2}\rfloor$ and $\eta_{r} = \Theta(M^{2})\eta$. Now we have proved that  $\la \btheta_{m}^{(T_{2})}, c_{k}\ra \leq -\Omega(K^{-1}M^{-9})$ for all $m \notin \cM_{k}$. Finally, by Lemma~\ref{lm: zero mean} we have that
\begin{align*}
\max_{m' \in [M]}\la \btheta_{m'}^{(T_{2})}, \cb_{k} \ra \geq \frac{1}{m}\sum_{m' \in [M]}\la \btheta_{m'}^{(T_{2})}, \cb_{k} \ra = 0.
\end{align*}
Therefore, we have that $\la \btheta_{m}^{(T_{2})}, \cb_{k} \ra \leq - \Omega(K^{-1}M^{-9}) \leq \max_{m' \in [M]}\la \btheta_{m'}^{(T_{2})}, \cb_{k} \ra - \Omega(K^{-1}M^{-9})$, which completes the proof.



\end{proof}


\subsection{Generalization Results}
In this section, we will present the detailed proof of Lemma~\ref{lm:stage1} and Theorem~\ref{thm: MoE} based on analysis in the previous stages.

\begin{proof}[Proof of Lemma~\ref{lm:stage1}]
We consider the $m$-th expert in the MoE layer, suppose that $m \in \cM_{k}$. Then if we draw a new sample $(\xb, y) \in \Omega_{k}$. Without loss of generality, we assume $\xb= [\alpha y\vb_{k}, \beta \cb_{k}, \gamma\epsilon\vb_{k'}, \bxi]$. By Lemma~\ref{lm:shortphase2}, we have already get the bound for inner product between weights and feature signal, cluster-center signal and feature noise. However, we need to recalculate the bound of the inner product between weights and random noises because we have fresh random noises i.i.d drawn from $\cN(0, (\sigma_{p}^{2}/d) \cdot I_{d})$. Notice that we use normalized gradient descent for expert with step size $\eta$, so we have that 
\begin{align*}
\|\wb_{m,j}^{(T_{1})} - \wb_{m,j}^{(0)}\|_{2} \leq \eta T_{1} = O(\sigma_{0}^{0.5}).     
\end{align*}
Therefore, by triangle inequality we have that $\|\wb_{m,j}^{(T_{1})}\|_{2} \leq \|\wb_{m,j}^{(0)}\|_{2}  + O(\sigma_{0}^{0.5}) \leq \tilde{O}(\sigma_{0}\sqrt{d})$. Because the inner product $\la \wb_{m,j}^{(t)}, \bxi_{p}\ra$ follows the distribution $\cN(0, (\sigma_{p}^{2}/d)\cdot \|\wb_{m,j}^{(T_{1})}\|_{2}^{2})$, we have that with probability at least $1 - 1/(dPMJ)$, 
\begin{align*}
|\la \wb_{m,j}^{(T_{1})}, \bxi_{p}\ra| = O(\sigma_{p}d^{-1/2}\|\wb_{m,j}^{(t)}\|_{2}\log(dPMJ)) \leq \tilde{O}(\sigma_{0}). 
\end{align*}
Applying Union bound for $m \in [M], j \in [J], p \geq 4$ gives that, with probability at least $1 - 1/d$, 
\begin{align}
|\la \wb_{m,j}^{(T_{1})}, \bxi_{p}\ra| = \tilde{O}(\sigma_{0}), \forall m \in [M], j \in [J], p \geq 4.\label{eq:event}
\end{align}

Now under the event that \eqref{eq:event} holds, we have that 
\begin{align*}
yf_{m}(\xb, \Wb^{(t)}) &= y\sum_{j\in [J]}\sum_{p\in[P]}\sigma(\la\wb_{m,j}, \xb^{(p)}\ra)\\
&= y\sigma(\la\wb_{m,j_{m}^{*}}, \alpha y\vb_{k}\ra) + y\sum_{(j,p)\not = (j_{m}^{*}, 1)}\sigma(\la\wb_{m,j}, \xb^{(p)}\ra)\\
&\geq C_{1}^{3}(1-\sigma_{0}^{0.1})^{3}\sigma_{0}^{1.5} -  \tilde{O}(\sigma_{0}^{3})\\
&\geq \Omega(\sigma_{0}^{1.5}),
\end{align*}
where the first inequality is due to \eqref{lm:stage1fbound}.
Because \eqref{eq:event} holds holds with probability at least $1 - 1/d$, so we have prove that 
\begin{align*}
\mathbb{P}_{(\xb, y)\sim \cD}\big(yf_{m}(\xb; \Wb^{(T_{1})}\big) \leq 0 \big|(\xb, y) \in \Omega_{k}\big) &\leq 1/d.  
\end{align*}

On the other hand, if we draw a new sample $(\xb,y) \in \Omega_{k'}, k' \not = k$. Then we consider the special set $\Omega_{k',k}^{-} \subseteq \Omega_{k'}$ where feature noise is $\vb_{k}$ and the sign of the feature noise $\epsilon$ is not equal to the label $y$. Without loss of generality, we assume it as $\xb = [\alpha y \vb_{k'}, \beta \cb_{k'}, -\gamma y  \vb_{k}, \bxi]$. Then under the event that \eqref{eq:event} holds, we have that 
\begin{align*}
yf_{m}(\xb, \Wb^{(t)}) &= y\sum_{j\in [J]}\sum_{p\in[P]}\sigma(\la\wb_{m,j}, \xb^{(p)}\ra)\\
&= y\sigma(\la\wb_{m,j_{m}^{*}}, -\gamma y\vb_{k}\ra) + y\sum_{(j,p)\not = (j_{m}^{*}, 3)}\sigma(\la\wb_{m,j}, \xb^{(p)}\ra)\\
&\leq - C_{1}^{3}(1-\sigma_{0}^{0.1})^{3}\sigma_{0}^{1.5} +  \tilde{O}(\sigma_{0}^{3})\\
&\leq -\Omega(\sigma_{0}^{1.5}),
\end{align*}
where the first inequality is due to \eqref{lm:stage1fbound}.
Because \eqref{eq:event} holds holds with probability at least $1 - 1/d$, so we have prove that 
\begin{align*}
\mathbb{P}_{(\xb, y)\sim \cD}\big(yf_{m}(\xb; \Wb^{(T_{1})}\big) \leq 0 \big|(\xb, y) \in \Omega_{k',k}^{-}\big) \geq 1 - 1/d.  
\end{align*}
Then we further have that 
\begin{align*}
&\mathbb{P}_{(\xb, y)\sim \cD}\big(yf_{m}(\xb; \Wb^{(T_{1})}\big) \leq 0 \big|(\xb, y) \in \Omega_{k'}\big) \\
&\geq \mathbb{P}_{(\xb, y)\sim \cD}\big(yf_{m}(\xb; \Wb^{(T_{1})}\big) \leq 0 \big|(\xb, y) \in \Omega_{k',k}^{-}\big)\cdot \mathbb{P}_{(\xb, y)\sim \cD}\big((\xb, y) \in \Omega_{k',k}^{-} \big|(\xb, y) \in \Omega_{k'}\big) \\
&\geq \Omega(1/K),
\end{align*}
which completes the proof.

\end{proof}

\begin{proof}[Proof of Theorem~\ref{thm: MoE}]
We will give the prove for $T = T_{2}$, i.e., at the end of the second stage.

\noindent\textbf{Test Error is small.} We first prove the following result for the experts. For all expert $m \in \cM_{k}$, we have that
\begin{align}
\mathbb{P}_{(\xb, y)\sim \cD}\big(yf_{m}(\xb; \Wb^{(T)}\big) \leq 0 \big|(\xb, y) \in \Omega_{k}\big) &= o(1). \label{eq:mainmain11}  
\end{align}
The proof of \label{eq:mainmain11} is similar to the proof of Lemma~\ref{lm:stage1}. We consider the $m$-th expert in the MoE layer, suppose that $m \in \cM_{k}$. Then if we draw a new sample $(\xb, y) \in \Omega_{k}$. Without loss of generality, we assume $\xb= [\alpha y\vb_{k}, \beta \cb_{k}, \gamma\epsilon\vb_{k'}, \bxi]$. By Lemma~\ref{lm:shortphase2}, we have already get the bound for inner product between weights and feature signal, cluster-center signal and feature noise. However, we need to recalculate the bound of the inner product between weights and random noises because we have fresh random noises i.i.d drawn from $\cN(0, (\sigma_{p}^{2}/d) \cdot I_{d})$. Notice that we use normalized gradient descent with step size $\eta$, so we have that 
\begin{align*}
\|\wb_{m,j}^{(T)} - \wb_{m,j}^{(0)}\|_{2} \leq \eta T = \tilde{O}(1).     
\end{align*}
Therefore, by triangle inequality we have that $\|\wb_{m,j}^{(T)}\|_{2} \leq \|\wb_{m,j}^{(0)}\|_{2}  + \tilde{O}(1) \leq \tilde{O}(\sigma_{0}\sqrt{d})$. Because the inner product $\la \wb_{m,j}^{(t)}, \bxi_{p}\ra$ follows the distribution $\cN(0, (\sigma_{p}^{2}/d)\cdot \|\wb_{m,j}^{(T)}\|_{2}^{2})$, with probability at least $1 - 1/(dPMJ)$ we have that , 
\begin{align*}
|\la \wb_{m,j}^{(T)}, \bxi_{p}\ra| = O(\sigma_{p}d^{-1/2}\|\wb_{m,j}^{(t)}\|_{2}\log(dPMJ)) \leq \tilde{O}(\sigma_{0}). 
\end{align*}
Applying Union bound for $m \in [M], j \in [J], p \geq 4$ gives that, with probability at least $1 - 1/d$, 
\begin{align}
|\la \wb_{m,j}^{(T)}, \bxi_{p}\ra| = \tilde{O}(\sigma_{0}), \forall m \in [M], j \in [J], p \geq 4.\label{eq:event2}
\end{align}

Now, under the event that \eqref{eq:event2} holds, we have that 
\begin{align*}
yf_{m}(\xb, \Wb^{(T)}) &= y\sum_{j\in [J]}\sum_{p\in[P]}\sigma(\la\wb_{m,j}^{(T)}, \xb^{(p)}\ra)\\
&= y\sigma(\la\wb^{(T)}_{m,j_{m}^{*}}, \alpha y\vb_{k}\ra) + y\sum_{(j,p)\not = (j_{m}^{*}, 1)}\sigma(\la\wb^{(T)}_{m,j}, \xb^{(p)}\ra)\\
&\geq C_{1}^{3}(1-\sigma_{0}^{0.1})^{3}M^{-4} -  \tilde{O}(\sigma_{0}^{3})\\
&= \tilde{\Omega}(1),
\end{align*}
where the first inequality is by Lemma~\ref{lm:keepstage1}.
Because \eqref{eq:event2} holds with probability at least $1 - 1/d$, so we have prove that 
\begin{align*}
\mathbb{P}_{(\xb, y)\sim \cD}\big(yf_{m}(\xb; \Wb^{(T)}\big) \leq 0 \big|(\xb, y) \in \Omega_{k}\big) &\leq 1/d.  
\end{align*}

We then prove that, with probability at least $1 - o(1)$, an example $\xb \in \Omega_{k}$ will be routed to one of the experts in $\cM_{k}$. For $\xb= [\alpha y\vb_{k}, \beta \cb_{k}, \gamma\epsilon\vb_{k'}, \bxi]$, we need to check that $h_{m}(\xb; \bTheta^{(T)}) < \max_{m'}h_{m'}(\xb; \bTheta^{(T)}), \forall m \not \in \cM_{k}$. By Lemma~\ref{lm:stage2end}, we know that $\la \btheta_{m}^{(T)}, \cb_{k} \ra \leq \max_{m'}\la \btheta_{m'}^{(T)}, \cb_{k} \ra - - \Omega(K^{-1}M^{-9})$. Further by Lemma~\ref{lm:noiseinner}, we have that $\max_{m,k}|\la \btheta_{m}^{(T)}, \vb_{k} \ra| = O(d^{-0.001})$. Again to calculate test error, we need to give an upper bound $\la \btheta_{m}^{(T)}, \bxi_{p} \ra$, where $\bxi_{p}$ is a fresh noise drawn from $\cN(0, (\sigma_{p}^{2}/d) \cdot I_{d})$. We can upper bound the gradient of the gating network by 

\begin{align*}
\|\nabla_{\btheta_{m}} \cL^{(t)}\|_{2}
    & =
    \bigg\|\frac{1}{n} \sum_{i,p}\ind(m_{i,t} =m)\ell'_{i,t}
    \pi_{m_{i,t}}(\xb_i; \bTheta^{(t)})y_{i}f_{m_{i,t}}(\xb_{i};  \Wb^{(t)}) \xb_{i}^{(p)}\\
    &\qquad - \frac{1}{n} \sum_{i,p}\ell'_{i,t}
    \pi_{m_{i,t}}(\xb_i; \bTheta^{(t)})\pi_{m}(\xb_i; \bTheta^{(t)})y_{i}f_{m_{i,t}}(\xb_{i};  \Wb^{(t)}) \xb_{i}^{(p)}\bigg\|_{2}.\\
    &= \tilde{O}(1),
\end{align*}
where the last inequality is due to $|\ell_{i,t}'| \leq 1$, $\pi_{m}, \pi_{m_{i,t}} \in [0,1]$ and $\|\xb_{i}^{(p)}\|_{2} = O(1)$.
This further implies that 
\begin{align*}
\|\btheta_{m}^{(T)}\|_{2} = \|\btheta_{m}^{(T)} - \btheta_{m}^{(0)}\|_{2}
\leq  \tilde{O}(t \eta_{r}) \leq  \tilde{O}(\eta^{-1} \eta_{r}) = \tilde{O}(1),
\end{align*}
where the last inequality is by $\eta_{r} = \Theta(M^{2})\eta$. Because the inner product $\la \btheta_{m}^{(T)}, \bxi_{p}\ra$ follows the distribution $\cN(0, (\sigma_{p}^{2}/d)\cdot \|\btheta_{m}^{(T)}\|_{2}^{2})$, we have that with probability at least $1 - 1/(dPM)$, 
\begin{align*}
|\la\btheta_{m}^{(T)}, \bxi_{p}\ra| = O(\sigma_{p}d^{-1/2}\|\btheta_{m}^{(T)}\|_{2}\log(dPM)) \leq \tilde{O}(d^{-1/2}). 
\end{align*}
Applying Union bound for $m \in [M], p \geq 4$ gives that, with probability at least $1 - 1/d$, 
\begin{align}
|\la \btheta_{m}^{(T)}, \bxi_{p}\ra| = \tilde{O}(d^{-1/2}), \forall m \in [M], p \geq 4.\label{eq:event3}
\end{align}

Now, under the event that \eqref{eq:event3} holds, we have that 
\begin{align*}
&h_{m}(\xb; \bTheta^{(T)}) - \max_{m'}h_{m'}(\xb; \bTheta^{(T)})\\
&\leq \la \btheta_{m}^{(T)}, \cb_{k}\ra -  \max_{m'}\la \btheta_{m'}^{(T)}, \cb_{k} \ra + 4\max_{m,k}|\la \btheta_{m}^{(T)}, \vb_{k} \ra| + 4P\max_{m,p}|\la \btheta_{m}^{(T)}, \bxi_{p}\ra|\\
&\leq - \Omega(K^{-1}M^{-9}) + \tilde{O}(d^{-0.001})\\
&< 0.
\end{align*}
Because \eqref{eq:event3} holds holds with probability at least $1 - 1/d$, so we have prove that with probability at least $1 - 1/d$, an example $\xb \in \Omega_{k}$ will be routed to one of the experts in $\cM_{k}$.

\noindent\textbf{Training Error is zero.}
The prove for training error is much easier, because we no longer need to deal with the fresh noises and we no longer need to use high probability bound for those inner products with fresh noises. That's the reason we can get exactly zero training error. We first prove the following result for the experts. For all expert $m \in \cM_{k}$, we have that
\begin{align*}
y_{i}f_{m}(\xb_{i}; \Wb^{(T)}\big) \leq 0, \forall i \in \Omega_{k}.
\end{align*}

Without loss of generality, we assume that the feature patch appears in $\xb_{i}^{(1)}$. By Lemma~\ref{lm:keepstage1}, we have that for all $i \in \Omega_{k}$ 
\begin{align*}
y_{i}f_{m}(\xb_{i}, \Wb^{(T)}) &= y_{i}\sum_{j\in [J]}\sum_{p\in[P]}\sigma(\la\wb_{m,j}^{(T)}, \xb_{i}^{(p)}\ra)\\
&= y_{i}\sigma(\la\wb^{(T)}_{m,j_{m}^{*}}, \alpha y_{i}\vb_{k}\ra) + y_{i}\sum_{(j,p)\not = (j_{m}^{*}, 1)}\sigma(\la\wb^{(T)}_{m,j}, \xb^{(p)}\ra)\\
&\geq C_{1}^{3}(1-\sigma_{0}^{0.1})^{3}M^{-4} -  \tilde{O}(\sigma_{0}^{3})\\
&>0,
\end{align*}
where the first inequality is Lemma~\ref{lm:keepstage1}.
We then prove that, and example $(\xb_{i}, y_{i}) \in \Omega$ will be routed to one of the experts in $\cM_{k}$. Suppose the $m$-th expert is not in $\cM_{k}$.  We only need to check the value of $h_{m}(\xb_{i}; \bTheta^{(T)}) < \max_{m'}h_{m'}(\xb_{i}; \bTheta^{(T)}) $, which is straight forward by Lemma~\ref{lm:stage2end} and Lemma~\ref{lm:noiseinner}.

\end{proof}

\section{Auxiliary Lemmas}
\begin{lemma}\label{lm:GaussianTop1}
Let $\{a_{m}\}_{m=1}^{M}$ are the random variable i.i.d. drawn from $\cN(0,1)$. Define the non-increasing sequence of $\{a_{m}\}_{m=1}^{M}$ as $a^{(1)}\geq \ldots \geq a^{(M)}$. Then we have that 
\begin{align*}
\mathbb{P}(a^{(2)} \geq (1 - G)a^{(1)}) \leq GM^{2}    
\end{align*}

\end{lemma}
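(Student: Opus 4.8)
The plan is to reduce the statement to a union bound over ordered pairs of indices, after two easy preliminary reductions. First, I would dispose of the case $G \ge 1$: there $GM^{2} \ge 1$ since $M \ge 1$, so the bound holds vacuously, and hence we may assume $G \in (0,1)$. Second, I would observe that the event $\{a^{(2)} \ge (1-G)a^{(1)}\}$ forces $a^{(1)} > 0$: if $a^{(1)} \le 0$ then $(1-G)a^{(1)} \ge a^{(1)} \ge a^{(2)}$, and the first inequality is strict unless $a^{(1)} = 0$ (a probability‑zero event), so $a^{(2)} < (1-G)a^{(1)}$ and the event cannot occur. Consequently, letting $i^{\star}$ be the index attaining $a^{(1)}$ and $j^{\star}$ an index attaining $a^{(2)}$, on the event we have $a_{i^{\star}} > 0$ and $(1-G)a_{i^{\star}} \le a_{j^{\star}} \le a_{i^{\star}}$, so
$$\big\{a^{(2)} \ge (1-G)a^{(1)}\big\} \subseteq \bigcup_{i \ne j} \big\{a_i > 0,\ (1-G)a_i \le a_j \le a_i\big\},$$
the union being over the $M(M-1)$ ordered pairs $(i,j)$ with $i \ne j$; dropping the constraints $a_j \ge a_k$ for the remaining $k$ only enlarges the right‑hand side.

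Next I would bound each term in the union. Fix $i \ne j$ and write $\phi$, $\Phi$ for the standard normal density and distribution function. Conditioning on $a_i = x$ and using independence of $a_j$,
$$\mathbb{P}\big(a_i > 0,\ (1-G)a_i \le a_j \le a_i\big) = \int_0^{\infty} \phi(x)\,\big(\Phi(x) - \Phi((1-G)x)\big)\,dx.$$
For $x > 0$ one has $\Phi(x) - \Phi((1-G)x) = \int_{(1-G)x}^{x} \phi(t)\,dt \le Gx \sup_t \phi(t) = Gx/\sqrt{2\pi}$, so the integral is at most $\tfrac{G}{\sqrt{2\pi}}\int_0^{\infty} x\phi(x)\,dx = \tfrac{G}{\sqrt{2\pi}}\,\phi(0) = \tfrac{G}{2\pi}$. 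Since there are $M(M-1) \le M^{2}$ ordered pairs, the union bound gives $\mathbb{P}\big(a^{(2)} \ge (1-G)a^{(1)}\big) \le M(M-1)\cdot\tfrac{G}{2\pi} \le GM^{2}$, with room to spare.

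I do not expect a genuine obstacle here; the argument is a short direct computation. The only points needing a little care are the reduction to $a^{(1)} > 0$ — which is what turns the \emph{relative}‑gap event into an \emph{absolute}‑gap event $a_j \in [(1-G)a_i, a_i]$ of width exactly $Ga_i$ — and the conditioning/independence bookkeeping when integrating out $a_i$. One could instead phrase the pairwise estimate via the mean value theorem applied to $\Phi$, but bounding the density by $\phi(0)$ as above avoids any case analysis on the location of the intermediate point and keeps the constant explicit.
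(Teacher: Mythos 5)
Your proof is correct and follows essentially the same route as the paper: your union bound over ordered pairs $(i,j)$ plays exactly the role of the factor $M(M-1)$ together with the bound $\Psi(a^{(2)})^{M-2}\le 1$ in the paper's order-statistics density, and the remaining estimate --- window of width $G\,a^{(1)}$, density of the second coordinate bounded by $1/\sqrt{2\pi}$, and $\int_0^\infty x\phi(x)\,dx = 1/\sqrt{2\pi}$ --- is identical, yielding the same constant $GM(M-1)/(2\pi)\le GM^2$. Your explicit reductions to $G<1$ and to $a^{(1)}>0$ merely make precise what the paper leaves implicit when it restricts its final integral to $a^{(1)}\ge 0$.
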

\begin{proof}

Let $\Psi$ be the CDF of $\cN(0, 1)$ and let $\rho$ be the PDF of $\cN(0, \sigma_{0}^{2})$. Then we have that,

\begin{align*}
&\mathbb{P}(a^{(2)} \geq (1 - G)a^{(1)})\\
&= \int_{a^{(1)} \geq \ldots \geq a^{(M)} } \ind(a^{(2)} \geq (1 - G)a^{(1)})M! \Pi_{m}\rho(a^{(m)})d\ab\\
&=\int_{a^{(1)} \geq a^{(2)} } \ind(a^{(2)} \geq (1 - G)a^{(1)})M(M-1) \rho(a^{(1)})\rho(a^{(2)})\Psi(a^{(2)})^{M-2}da^{(1)}da^{(2)}\\
&\leq \int_{a^{(1)} \geq a^{(2)} } \ind(a^{(2)} \geq (1 - G)a^{(1)})M(M-1) \rho(a^{(1)})\frac{1}{\sqrt{2\pi}}da^{(1)}da^{(2)}\\
&= \int_{a^{(1)}\geq 0} \frac{GM(M-1)}{\sqrt{2\pi}} a^{(1)}\rho(a^{(1)}) da^{(1)}\\
&\leq  GM^{2}.
\end{align*}
\end{proof}

For normalized gradient descent we have following lemma,
\begin{lemma}[Lemma C.19 \citealt{allen2020towards}]\label{lm: Tensor power update}
Let $\{x_{t},y_{t}\}_{t=1,..}$ be two positive sequences that satisfy
\begin{align*}
    x_{t+1}&\geq x_{t}+ \eta \cdot C_{t} x_{t}^{2}\\
    y_{t+1}&\leq y_{t}+ S\eta\cdot C_{t} y_{t}^{2},
\end{align*}
and $|x_{t+1} - x_{t}|^{2} + |y_{t+1} - y_{t}|^{2} \leq \eta^{2}$. Suppose $x_{0}, y_{0} = o(1), x_{0} \geq y_{0}S(1+G)$, 
$$\eta \leq \min\{\frac{G^{2}x_{0}}{\log(A/x_{0})}, \frac{G^{2}y_{0}}{\log(1/G)}\}.$$
 Then we have for all $A > x_{0}$, let $T_{x}$ be the first iteration such that $x_{t}\geq A$, then we have $y_{T_{x}}\leq O(y_{0}G^{-1})$.

\end{lemma}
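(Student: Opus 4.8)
The plan is to pass to the reciprocal sequences $u_t := 1/x_t$ and $v_t := 1/y_t$, which turns the quadratic growth recursions into almost affine ones, and then to track how much $v_t$ can decrease before $x_t$ reaches $A$ — the point being that $v_t$ loses only a total of $\le S\eta\sum_t C_t$, which the $x$-recursion caps at roughly $S/x_0$, and the initial condition $x_0\ge Sy_0(1+G)$ makes this loss a fraction $\le 1/(1+G)$ of $v_0$. A naive attempt to track only the ratio $Sy_t/x_t$ fails: it stays $\le 1/(1+G)$, but that only gives $y_{T_x}=O(A)$, not $O(y_0/G)$, so one really must use the reciprocal/telescoping argument.

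First I would record one-step estimates. Since $C_t\ge 0$, $x_t$ is nondecreasing, so $x_t\ge x_0$; and the step-size hypothesis $|x_{t+1}-x_t|\le\eta$ forces $\eta C_tx_t^2\le\eta$, i.e. $z_t:=\eta C_tx_t\le \eta/x_t\le \eta u_0$ and also $C_t^2x_t\le C_t/x_t\le C_t u_0$. From $x_{t+1}\ge x_t(1+z_t)$ and $\tfrac1{1+z}\le 1-z+z^2$ for $z\ge0$ one gets $u_{t+1}\le u_t-\eta C_t+\eta^2C_t^2x_t\le u_t-(1-\eta u_0)\eta C_t$. From $y_{t+1}\le y_t(1+S\eta C_ty_t)$ and $\tfrac1{1+w}\ge 1-w$ one gets $v_{t+1}\ge v_t-S\eta C_ty_tv_t=v_t-S\eta C_t$.

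Next I would sum these from $t=0$ to $t=T_x-1$. The $u$-inequality gives $(1-\eta u_0)\,\eta\sum_{t<T_x}C_t\le u_0-u_{T_x}$, and since $x_{T_x-1}<A$ forces $x_{T_x}<A+\eta$ and hence $u_{T_x}>1/(A+\eta)$, this yields $\eta\sum_{t<T_x}C_t\le \tfrac{u_0-1/(A+\eta)}{1-\eta u_0}\le \tfrac{u_0}{1-\eta u_0}$; the hypothesis on $\eta$ (the $\log(A/x_0)$ and $\log(1/G)$ factors) is exactly what ensures $\eta u_0\le \tfrac{G}{2(1+G)}<1$ and also that the argument is non-vacuous, i.e. that $x_t$ really climbs to $A$ in finitely many steps. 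Plugging into the $v$-inequality, $v_{T_x}\ge v_0-S\eta\sum_{t<T_x}C_t\ge v_0-\tfrac{Su_0}{1-\eta u_0}$. Now the initial-condition hypothesis $x_0\ge Sy_0(1+G)$ reads $Su_0\le v_0/(1+G)$, so $v_{T_x}\ge v_0\bigl(1-\tfrac1{(1+G)(1-\eta u_0)}\bigr)$, and with $\eta u_0\le \tfrac{G}{2(1+G)}$ the denominator is $\ge 1+\tfrac G2$, giving $v_{T_x}\ge v_0\cdot\tfrac{G/2}{1+G/2}=\Omega(Gv_0)$. Therefore $y_{T_x}=1/v_{T_x}=O(y_0/G)$, as claimed.

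The main obstacle I expect is not the coupling algebra above (which is short) but the careful bookkeeping of the quadratic discretization error $\eta^2\sum_{t<T_x}C_t^2x_t$: one must use $C_t^2x_t\le C_tu_0$ together with the bound on $\eta\sum C_t$ itself in a slightly circular way (absorb the $\eta^2\sum C_t^2x_t$ term back into $(1-\eta u_0)\eta\sum C_t$), and then check that all the accumulated $O(\eta u_0)$ errors remain $o(Gv_0)$ — which is precisely where the hypothesis $\eta\le \min\{G^2x_0/\log(A/x_0),\,G^2y_0/\log(1/G)\}$ is consumed. Everything else is the elementary inequalities $\tfrac1{1+z}\le 1-z+z^2$ and $\tfrac1{1+w}\ge 1-w$ plus telescoping.
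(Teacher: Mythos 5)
Your proof is correct (at the same level of constant-factor bookkeeping the paper itself uses), but it takes a genuinely different route from the paper's. The paper reproduces the Allen-Zhu--Li epoch argument: it partitions time into intervals on which $x_t$ grows by a factor $(1+\delta)$, bounds $\sum\eta C_t$ over each interval by comparing with the increment $\delta(1+\delta)^g x_0+O(\eta)$, sums the geometric series to get $\sum_{t\le T_x}\eta C_t\le \frac{1+\delta}{x_0}+\frac{O(\eta)\log(A/x_0)}{\delta x_0^2}$, derives the analogous lower bound $\sum_{t\le T_y}\eta S C_t\ge \frac{1-O(\delta+G)}{y_0}-\frac{O(\eta)\log(1/G)}{\delta y_0^2}$ for the first time $T_y$ at which $y_t\ge G^{-1}y_0$, and concludes $T_y>T_x$ by comparing the two displays with $\delta=G$ and $x_0\ge Sy_0(1+G)$. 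You instead pass to the reciprocals $u_t=1/x_t$, $v_t=1/y_t$, use the increment bound to get $C_tx_t^2\le 1$ (the same fact the paper invokes from normalized gradient descent) to control the second-order term, telescope to obtain $\eta\sum_{t<T_x}C_t\le u_0/(1-\eta u_0)$ and $v_{T_x}\ge v_0-S\eta\sum_{t<T_x}C_t$, and finish directly from $Su_0\le v_0/(1+G)$. Your version avoids the auxiliary threshold $G^{-1}y_0$, the stopping time $T_y$, and the per-epoch bookkeeping, and makes the constants explicit; the paper's epoch decomposition is the form that transfers verbatim from Lemma C.19 of \citet{allen2020towards} and extends mechanically to higher-degree recursions. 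One caveat, of exactly the kind the paper leaves implicit in its terse final ``compare \eqref{eq:xseq} and \eqref{eq:yseq}'' step: your assertion that the step-size hypothesis yields $\eta u_0\le \frac{G}{2(1+G)}$ rests on $\eta\le G^2x_0/\log(A/x_0)$ and therefore needs $\log(A/x_0)\gtrsim G$, i.e.\ $A$ not arbitrarily close to $x_0$ (and, like the paper, you implicitly take $C_t\ge 0$ so that $x_t$ is nondecreasing); in the regime where the lemma is actually invoked ($A/x_0$ large, $G=O(1)$) this is harmless.
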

\begin{proof}
We only need to replace $O(\eta A^{q-1})$ in the proof of Lemma C.19 by $O(\eta)$, because we use normalized gradient descent, i.e, $C_{t}\xb_{t}^{2} \leq 1$. For completeness, we present the whole poof here.

for all $g = 0,1,2, \ldots, $, let $\cT_{g}$ be the first iteration such that $x_{t} \geq (1+\delta)^{g}x_{0}$, let $b$ be the smallest integer such that $(1+\delta)^{b}x_{0}\geq A$. For simplicity of notation, we replace $x_{t}$ with $A$ whenever $x_{t} \geq A$. Then by the definition of $\cT_{g}$, we have that
\begin{align*}
\sum_{t \in [\cT_{g}, \cT_{g+1})} \eta C_{t} [(1+\delta)^{g}x_{0}]^{2} \leq x_{\cT_{g+1}} - x_{\cT_{g}} \leq \delta(1+\delta)^{g}x_{0} + O(\eta),    
\end{align*}
where the last inequality holds because we are using normalized gradient descent, i.e., $\max_{t}|x_{t+1} - x_{t}| \leq \eta$.
This implies that 
\begin{align*}
\sum_{t \in [\cT_{g}, \cT_{g+1})} \eta C_{t} \leq \frac{\delta}{(1+\delta)^{g}}\frac{1}{x_{0}} + \frac{O(\eta)}{x_{0}^{2}}.
\end{align*}
Recall that $b$ is the smallest integer such that $(1+\delta)^{b}x_{0}\geq A$, so we can calculate
\begin{align*}
\sum_{t \geq 0, x_{t} \leq A}\eta C_{t} \leq \bigg[\sum_{g=0}^{b-1}\frac{\delta}{(1+\delta)^{g}}\frac{1}{x_{0}}\bigg] + \frac{O(\eta)}{x_{0}^{2}}b
= \frac{1+\delta}{x_{0}} + \frac{O(\eta)b}{x_{0}^{2}} \leq \frac{1+\delta}{x_{0}} + \frac{O(\eta)\log(A/x_{0})}{x_{0}^{2}\log(1+\delta)}
\end{align*}
Let $T_{x}$ be the first iteration $t$ in which $x_{t} \geq A$.  Then we have that 
\begin{align}
\sum_{t=0}^{T_{x}}\eta C_{t} \leq \frac{1+\delta}{x_{0}} + \frac{O(\eta)\log(A/x_{0})}{\delta x_{0}^{2}}. \label{eq:xseq}  
\end{align}
On the other hand, let $A' = G^{-1}y_{0}$ and b' be the smallest integer such that $(1+\delta)b' x_{0} \geq A'$. For simplicity of notation, we replace $y_{t}$ with $A'$ when $y_{t}\geq A'$. Then let $\cT_{g}'$ be the first iteration such that $y_{t} \geq (1+\delta)^{g}y_{0}$, then we have that 
\begin{align*}
\sum_{t \in [\cT'_{g}, \cT'_{g+1})} \eta S C_{t}[(1+\delta)^{g+1}x_{0}]^{(q-1)} \geq y_{\cT'_{g+1}} - y_{\cT'_{g}} \geq \delta(1+\delta)^{g}y_{0} - O(\eta).  
\end{align*}
Therefore, we have that
\begin{align*}
\sum_{t\in[\cT_{g}',\cT_{g+1}')} S\eta C_{t} \geq \frac{\delta}{(1+\delta)^{g}(1+\delta)^{2}}\frac{1}{y_{0}} - \frac{O(\eta)}{y_{0}^{2}}.  
\end{align*}
Recall that $b'$ is the smallest integer such that $(1+\delta)^{b'}y_{0} \geq A'$. wo we have that 
\begin{align*}
\sum_{t\geq 0, x_{t}\leq A}\eta S C_{t} \geq \sum_{g=0}^{b'-2} \frac{\delta}{(1+\delta)^{g}(1+\delta)^{2}}\frac{1}{y_{0}} - \frac{O(\eta)b'}{y_{0}^{2}}
\end{align*}
Let $T_{y}$ be the first iteration $t$ in which $y_{t} \geq A'$, so we can calculate
\begin{align}
\sum_{t=0}^{T_{y}}\eta S C_{t} \geq \frac{1 - O(\delta + G)}{y_{0}} - \frac{O(\eta)\log(A'/y_{0})}{y_{0}^{2}\delta}. \label{eq:yseq}
\end{align}
Compare \eqref{eq:xseq} and \eqref{eq:yseq}. Choosing $\delta = G$ and $\eta \leq \min\{\frac{G^{2}x_{0}}{\log(A/x_{0})}, \frac{G^{2}y_{0}}{\log(1/G)}\}$, together with $x_{0}\geq y_{0}S(1+G)$

\end{proof}

\bibliography{deeplearningreference}
\bibliographystyle{ims}

\end{document}